\documentclass[nohyperref]{article}
\usepackage[final]{neurips_2024}

\usepackage{microtype}
\usepackage{graphicx}
\usepackage{subfigure}
\usepackage{multirow}
\usepackage{booktabs} 
\usepackage{natbib}
\setcitestyle{numbers}
\setcitestyle{square}
\setcitestyle{comma}

\usepackage{setspace}
\usepackage{longtable}
\usepackage{array}
\usepackage{tabularx}
\usepackage{lipsum} 
\usepackage{wrapfig} 
\usepackage{booktabs} 
\usepackage{bbm}
\usepackage{amsmath}
\usepackage{amssymb}
\usepackage{mathtools}
\usepackage{amsthm}
\usepackage{makecell}
\usepackage{xcolor}         
\usepackage{colortbl}
\usepackage{amsmath}
\usepackage{amssymb}
\usepackage{mathtools}
\usepackage{amsthm}
\usepackage{algorithm}
\usepackage{algorithmic}
\usepackage{amsfonts}
\usepackage[utf8]{inputenc} 
\usepackage[T1]{fontenc}    
\usepackage[hidelinks]{hyperref}
\definecolor{citeColor}{RGB}{0,20,115}
\hypersetup{colorlinks,linkcolor={citeColor},citecolor={citeColor},urlcolor={citeColor}}  
\usepackage{url}            
\usepackage{booktabs}       
\usepackage{amsfonts}       
\usepackage{nicefrac}       
\usepackage{microtype}      

\usepackage{amsmath}
\usepackage{array}
\usepackage{amsthm}
\usepackage{amsfonts}
\usepackage{enumerate}
\usepackage{subfigure}
\usepackage{bm}
\usepackage{tabularx}
\usepackage{enumitem}
\usepackage{multirow}
\usepackage{xcolor}
\usepackage{nicefrac}
\usepackage{booktabs}
\usepackage{bbding}
\usepackage{caption}
\usepackage{graphicx}
\usepackage{bbding}
\usepackage{wrapfig}
\usepackage{lipsum}
\usepackage{amssymb}
\usepackage{multibib}
\usepackage{tcolorbox}
\usepackage[normalem]{ulem}
\usepackage{varwidth}
\usepackage{tikz}
\usetikzlibrary{calc}
\usetikzlibrary{positioning, shapes, fit, backgrounds, decorations.pathreplacing}

\usepackage[capitalize,noabbrev]{cleveref}
\usepackage[textsize=tiny]{todonotes}

\usepackage{etoc}
\etocdepthtag.toc{mtchapter}
\etocsettagdepth{mtchapter}{subsection}
\etocsettagdepth{mtappendix}{none}

\theoremstyle{plain}
\newtheorem{theorem}{Theorem}[section]

\newtheorem{lemma}[theorem]{Lemma}

\theoremstyle{definition}

\newtheorem{condition}[theorem]{Condition}
\theoremstyle{remark}
\newtheorem{remark}[theorem]{Remark}
\newtheorem{observation}[theorem]{Observation}

\newcolumntype{L}[1]{>{\raggedright\let\newline\\\arraybackslash\hspace{0pt}}m{#1}}
\newcolumntype{C}[1]{>{\centering\let\newline  \\\arraybackslash\hspace{0pt}}m{#1}}%
\newcolumntype{R}[1]{>{\raggedleft\let\newline \\\arraybackslash\hspace{0pt}}m{#1}}

\title{Can Language Models Perform Robust Reasoning 
\\ in Chain-of-thought Prompting with Noisy Rationales?}

\makeatletter
\renewcommand*{\@fnsymbol}[1]{\ensuremath{\ifcase#1\or \dagger\or \ddagger\or
		\mathsection\or \mathparagraph\or \|\or **\or \dagger\dagger
		\or \ddagger\ddagger \else\@ctrerr\fi}}
\makeatother

\author{
	\textbf{Zhanke Zhou}$^{1}$ \quad
	\textbf{Rong Tao}$^{1}$ \quad
	\textbf{Jianing Zhu}$^{1}$ \quad
        \textbf{Yiwen Luo}$^{2}$ \quad
        \textbf{Zengmao Wang}$^{2}$ \quad
	\textbf{Bo Han}$^{1}\thanks{
		Correspondence to Bo Han (bhanml@comp.hkbu.edu.hk).}$ \\
	$^{1}$TMLR Group, Hong Kong Baptist University \quad
	$^{2}$Wuhan University \\
	\texttt{\{cszkzhou, csrongtao, csjnzhu, bhanml\}@comp.hkbu.edu.hk} \\
	\texttt{\{luoyiwen, wangzengmao\}@whu.edu.cn}
}

\definecolor{mygray}{gray}{.92}
\newcommand{\gray}{\cellcolor{mygray}}
\definecolor{baselinecolor}{rgb}{1, 1, 1}
\newcommand{\baseline}{\cellcolor{baselinecolor}}
\definecolor{ourmethodcolor}{rgb}{0.94, 0.97, 1}
\newcommand{\ours}{\cellcolor{ourmethodcolor}}

\newcommand{\noisy}[1]{\textcolor[rgb]{0.70, 0.29, 0.27}{#1}}

\begin{document}
\maketitle

\vspace{-20pt}
\begin{abstract}
\vspace{-8pt}

This paper investigates an under-explored challenge in large language models (LLMs): chain-of-thought prompting with \textit{noisy rationales}, which include irrelevant or inaccurate reasoning thoughts within examples used for in-context learning. 
We construct NoRa dataset that is tailored to evaluate the robustness of reasoning in the presence of noisy rationales. 
Our findings on NoRa dataset reveal a prevalent vulnerability to such noise among current LLMs, with existing robust methods like self-correction and self-consistency showing limited efficacy. 
Notably, compared to prompting with clean rationales, GPT-3.5 drops by $1.4\%$-$19.8\%$ in accuracy with irrelevant thoughts and more drastically by $2.2\%$-$40.4\%$ with inaccurate thoughts.

\vspace{+2pt}

Addressing this challenge necessitates external supervision that should be accessible in practice.
Here, we propose the method of contrastive denoising with noisy chain-of-thought (CD-CoT). It enhances LLMs' denoising-reasoning capabilities by contrasting noisy rationales with \textit{only one clean rationale}, which can be the minimal requirement for denoising-purpose prompting.
This method follows a principle of exploration and exploitation: (1) rephrasing and selecting rationales in the input space to achieve explicit denoising and (2) exploring diverse reasoning paths and voting on answers in the output space. 
Empirically, CD-CoT demonstrates an average improvement of $17.8\%$ in accuracy over the base model and shows significantly stronger denoising capabilities than baseline methods.
The source code is publicly available at:
\url{https://github.com/tmlr-group/NoisyRationales}.

\end{abstract}

\begin{figure}[h]
    \vspace{-12pt}
    \centering
    \includegraphics[width=0.85\textwidth]{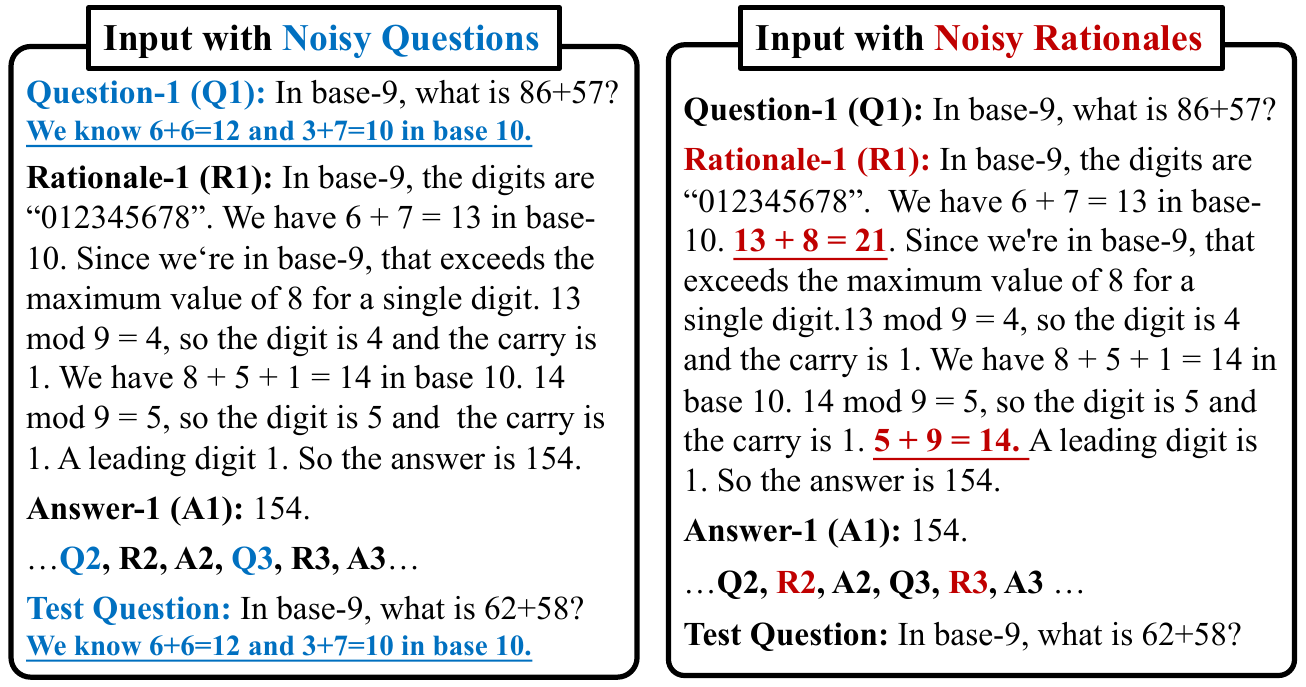}
    \vspace{-2pt}
    \caption{
    Exemplars of noisy questions~\cite{shi2023large} and \textit{noisy rationales} (our new research problem).
    Each input includes three prompting examples and one test question.
    Notably, the test question asks about base-9 calculation, 
    while the misguiding base-10 information is given in noisy questions or rationales.
    }
    \label{fig: problem}
    \vspace{-10px}
\end{figure}

\begin{figure}
\begin{minipage}[t]{0.49\linewidth}
\centering
\includegraphics[width=\textwidth]{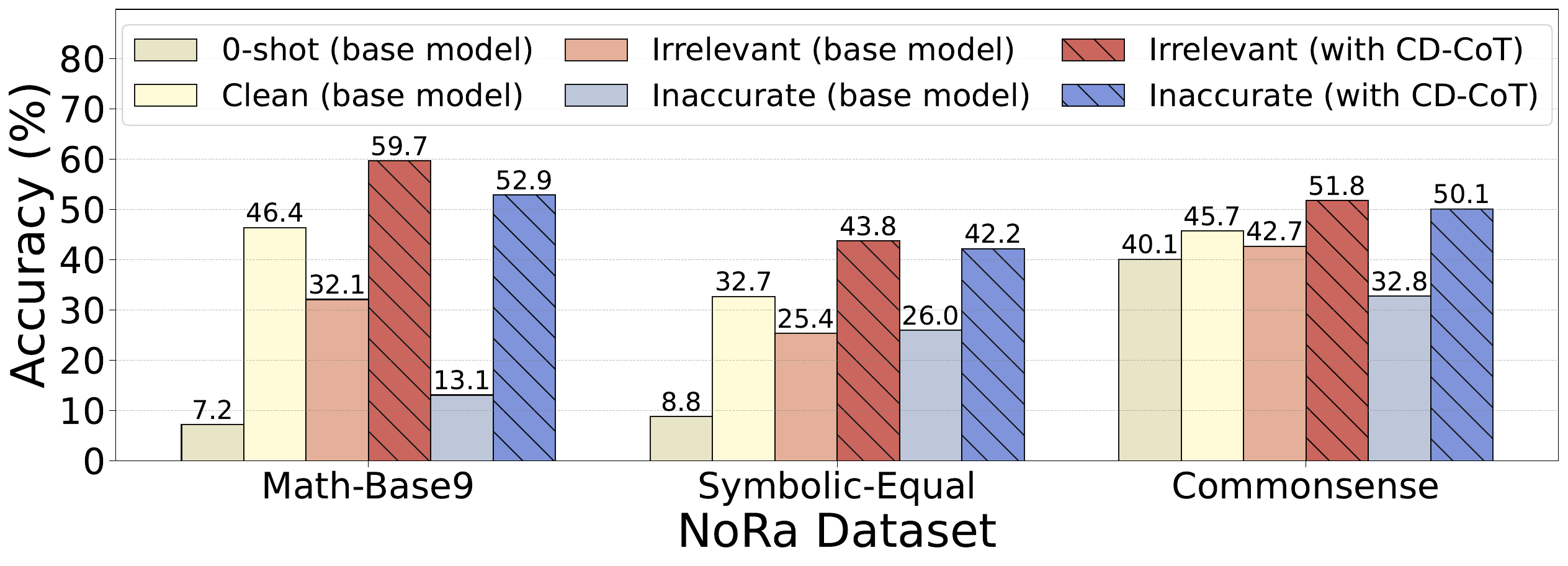}
\caption{Results of GPT-3.5 with 0-shot, 3-shot clean rationales, and 3-shot noisy rationales: Both inaccurate and irrelevant rationales degenerate performance significantly, while the proposed CD-CoT improves robustness against noisy rationales.}
\label{fig:nora_performance}
\end{minipage}
\hfill
\begin{minipage}[t]{0.48\linewidth}
\centering
\includegraphics[width=\textwidth]{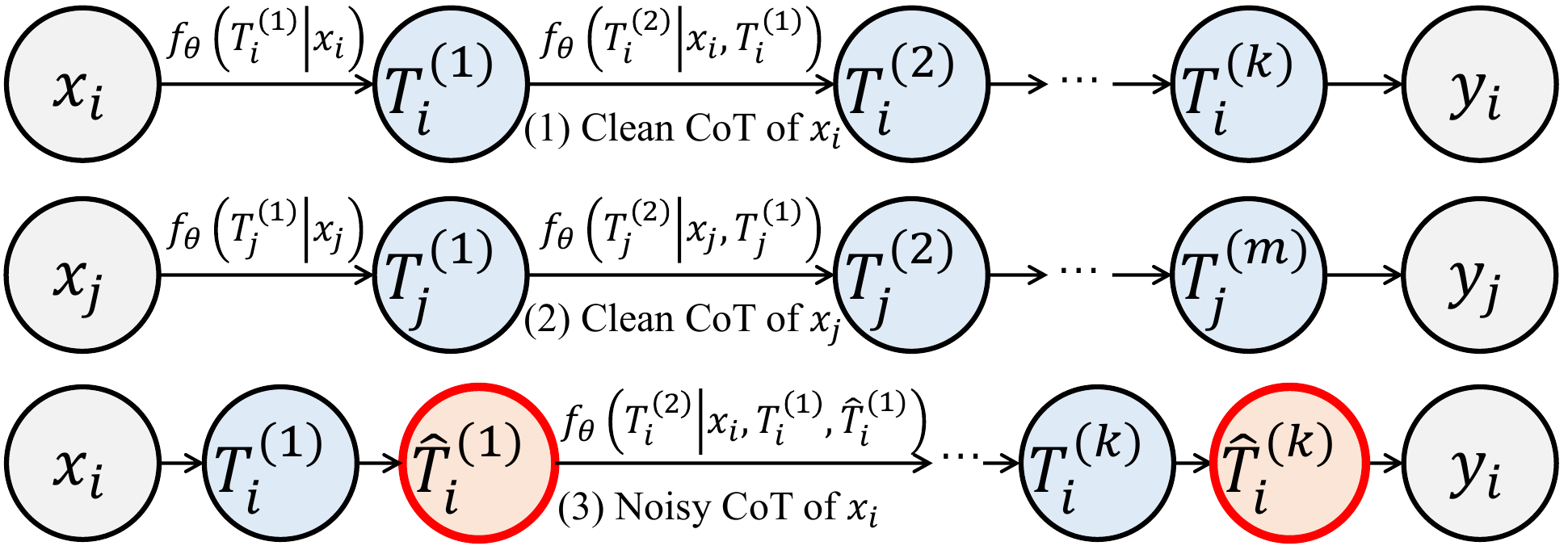}
\caption{
Chain modeling of the noisy rationale problem: Recovering chain (3) from chain (1) with the guidance of chain (2).
From question $x_i$ to answer $y_i$, the rationale of chain (3) includes clean thoughts $T_{i}^{(j)}$ and noisy thoughts $\hat{T}_{i}^{(j)}$.
}
\label{fig: modeling}
\end{minipage}
\vspace{-14px}
\end{figure}

\section{Introduction}
\vspace{-8px}

In-context learning (ICL) is a common approach in large language models (LLMs), enabling models to extrapolate from a few examples and adapt without fine-tuning~\cite{brown2020language, wei2022emergent, dong2022survey}.
However, ICL's efficacy is closely tied to the quality and clarity of the prompting examples, particularly in the prevailing chain-of-thought (CoT) strategy that provides rationales, \textit{i.e.}, intermediate reasoning steps to solve a question~\cite{wei2022chain}.
Recent research has shown that LLMs struggle with noisy questions: they are easily
distracted by irrelevant context and exhibit instability with slight input modifications~\cite{shi2023large, tian2023r3, zheng2023noisy}.

Notably, this work shifts focus from the well-studied noisy questions (Noisy-Q) problem to the under-explored \textit{noisy rationales} (Noisy-R) problem, wherein \textit{factually inaccurate or irrelevant reasoning steps} are paired with valid question-answer examples, as illustrated in Fig.~\ref{fig: problem}.
Here, the emphasis on Noisy-R is due to its practical challenges, with examples drawn from diverse sources such as crowd-sourced platforms, dialogue systems, and machine-generated data 
\footnote{The sources of noisy rationales are discussed in Appendix~\ref{appendix: benchmark} with extensive real-world examples.}
\cite{havrilla2024understanding, li2023metaagents, stacey2024lucid, agrawal2023language, thorstad2023cognitive, liu2023we}.
However, the robustness of LLMs against Noisy-R remains unknown. 
A new benchmarking dataset is needed to conduct a systematic evaluation of current LLMs and verify the corresponding countermeasures.

In this work, we first construct the NoRa (\textbf{No}isy \textbf{Ra}tionales) dataset, a comprehensive testbed to evaluate the robustness of LLM reasoning against noisy rationales across various reasoning domains (in Sec.~\ref{sec: dataset}).
The NoRa contains a total of $26391$ questions, covering three types of reasoning tasks: mathematical, symbolic, and commonsense.
We uniformly formalize the generation of noisy rationales by inserting irrelevant or inaccurate thoughts, controlling reasoning difficulty through noise ratios, and guaranteeing the overall prompting correctness without modifying the question or answer.

With the NoRa dataset, we evaluate several LLMs and reveal that all of them are \textit{intrinsically vulnerable} to noisy rationales (in Sec.~\ref{sec: experiment}).
For example, compared to prompting with clean rationales, GPT-3.5 exhibits an average $3.0\%$ - $33.3\%$ decrease in accuracy with noisy rationales, as in Fig.~\ref{fig:nora_performance}.
Besides, only limited improvements are achieved with existing robust methods based on the model's intrinsic denoising ability, \textit{e.g.}, self-consistency~\cite{wang2023self} and self-denoise~\cite{zhang2023certified}.
We show that Noisy-R is much more challenging than Noisy-Q, requiring context-specific knowledge to guide the denoising.

To solve this, we propose to rectify the rationales with \textit{only one clean CoT demonstration} that can be the most attainable supervision in practice (in Sec.~\ref{sec: method}).
We assume that LLMs can rectify rationales by \textit{contrasting} a noisy rationale with a clean one, as in Fig.~\ref{fig: modeling}.
Guided by this principle, we design the framework of \textbf{C}ontrastive \textbf{D}enoising with noisy \textbf{CoT} (CD-CoT) with four steps: rationale rephrasing, rationale selecting, rationale exploring, and answer voting.
Technically, the first two steps aim to achieve explicit denoising, while the last two steps are for diverse reasoning paths.
Empirically, CD-CoT achieves an average improvement of $17.8\%$ in accuracy \textit{w.r.t.} the base model (refer to Tab.~\ref{tab:proposed-method performance}). Notably, it presents much stronger denoising power than baselines in rectifying the rationales.

\textbf{Contributions.}
To our best knowledge, we are the \textit{first} to investigate the problem of noisy rationales.
\begin{itemize}[leftmargin=.0in]
\vspace{-4pt}
\item 
We formalize the under-explored noisy rationale problem in the prevailing chain-of-thought prompting and construct the NoRa dataset to benchmark the robustness of LLMs against noisy rationales (Sec.~\ref{sec: dataset}). 

\item
We systematically evaluate LLMs with NoRa dataset and extract several insightful observations, \textit{e.g.}, the unsatisfactory robustness and limited denoising power of LLMs under noisy rationales (Sec.~\ref{sec: experiment}).

\item 
We propose to rectify the noisy rationales with only one clean CoT demonstration, design a simple yet effective method, CD-CoT, and verify its effectiveness through comprehensive experiments (Sec.~\ref{sec: method}).
\end{itemize}

\section{Related Work}
\label{sec: related-work}
\vspace{-4px}

\textbf{Limitations of in-context learning (ICL).}
Though effective, ICL suffers from the susceptibility to manual generation and selection of demonstrations (examples), where the ultimate performance is closely tied to the demonstrations' quality and clarity.
Recent investigations on \textit{noisy questions} have shown that
(i) LLMs can be distracted by irrelevant or adversarial context, as they are designed to pay close attention to the context provided in the prompt~\cite{jia2017adversarial, pandia2021sorting, shi2023large, tian2023r3} and (ii) LLM reasoning is unstable, namely, small modifications to the prompt could potentially cause large variations in the model's output~\cite{zhang2023certified, zheng2023noisy}.
Besides, another line of research regarding \textit{noisy answers}~\cite{lei2023s, freeman2023frontier} justifies the feasibility of misleading an LLM to agree factual errors such as "1+1=3" in base-10 calculation.

\textbf{Countermeasures.}
Two intrinsic traits of LLMs are desirable for addressing the above limitations:
\begin{itemize}[leftmargin=.0in]
\vspace{-4pt}

\item
\textit{Self-correction}, wherein LLMs attempt to correct their initial responses based solely on their inherent capabilities without external feedback, \textit{e.g.}, by refining prompts through iterative corrections of responses or question trajectories~\cite{yang2023large,xi2023self}. 
Although LLMs can learn to ignore irrelevant information by examples or instructions~\cite{shi2023large}, they are proved to be still struggling to correct their responses without external feedback, and at times, their performance might even degrade after self-correction~\cite{huang2023large, tyen2023llms}.

\item
\textit{Self-consistency} aims to obtain a consistent answer against input perturbations.
This is achieved by generating multiple responses via randomized smoothing on input questions~\cite{zeng2023certified} or diverse paths for answering one question~\cite{wang2023self} followed by the answer aggregation. 
This strategy brings improvements with extra costs for repeated reasoning. Moreover, it cannot explicitly rectify questions or rationales.

\vspace{-4px}
\end{itemize}

\textbf{Noisy rationales}, as the research focus of this work, mainly originates from (1) the inherent imperfections, inconsistencies, and inaccuracies of humans' cognitive processes~\cite{mayer1977thinking, choi2018quac} and (2) the diversity, unpredictability, and hallucination of the LLMs' generative mechanisms~\cite{zhang2023automatic, huang2023survey, zhang2023hallucination}.
A detailed literature review and discussion of noise rationales are in Appendix~\ref{appendix: related work} and \ref{appendix: benchmark}, respectively. 

\vspace{-4pt}
\section{The NoRa Dataset}
\label{sec: dataset}
\vspace{-4pt}

In this section, we introduce the NoRa (\textbf{No}isy \textbf{Ra}tionales) dataset for benchmarking the robustness against noisy rationales.
NoRa consists of $26391$ questions and $5$ subsets, covering mathematical, symbolic, and commonsense reasoning tasks, where ICL and CoT demonstrations play a crucial role.


\vspace{-4pt}
\subsection{Definition of Noisy Rationales}
\label{ssec: generation}

We start by formalizing the ICL and CoT demonstrations.
Given a test question $x_{\text{test}}$ and an LLM $f_{\theta}$, one expects to get the correct answer $y_{\text{test}}$ as $f_{\theta}(x_{\text{test}}) \! \mapsto \! y_{\text{test}}$.
This zero-shot manner cannot guarantee effectiveness, especially when encountering unfamiliar contexts or scenarios.
To boost effectiveness, the ICL techniques prompt the LLM with a few examples $S_{n} \! = \! \{(x_{i}, y_{i})\}_{i=1}^{n}$ collected in the current context, each composed of a question $x_{i}$ and answer $y_{i}$, and then construct the new input $x_{\text{ICL}}$ as
\begin{align}
\vspace{-2pt}
x_{\text{ICL}} = \left[S_{n}, x_{\text{test}}\right] &= \left[x_{1}, y_{1}, \ldots, x_{n}, y_{n}, \; x_{\text{test}}\right]. 
\vspace{-2pt}
\end{align}
The guidance by $S_{n}$ makes
$f_{\theta}(x_{\text{ICL}}) \! \mapsto \! y_{\text{test}}$
much easier than
$f_{\theta}(x_{\text{test}}) \! \mapsto \! y_{\text{test}}$.
Then, the CoT further refines $x_{\text{ICL}}$ by constructing the step-by-step rationale $\mathcal{T}_i$, consisting of several thoughts $T_{i}^{(j)}$, namely,
\begin{align}
\vspace{-2pt}
x_{\text{CoT}} = \left[x_{1}, \mathcal{T}_1, y_{1}, \ldots, x_{n}, \mathcal{T}_n, y_{n}, \; x_{\text{test}}\right], 
\text{\; where \;}
\mathcal{T}_i = \left[T_{i}^{(1)}, T_{i}^{(2)}, T_{i}^{(3)}, \ldots, T_{i}^{(k)}\right].
\vspace{-2pt}
\label{eqn: clean CoT}
\end{align}
However, as aforementioned, the thoughts in CoT (Eqn.~\ref{eqn: clean CoT}) can be noisy in practice.
This noise can be attributed to (1) \textit{irrelevant thoughts}, which are irrelevant but correct, or (2) \textit{inaccurate thoughts}, which are relevant but factually wrong.
Here, we \textit{uniformly} formalize these two kinds of noise as 
\begin{align} 
\vspace{-2pt}
\noisy{\hat{\mathcal{T}}_i}= \left[ T_{i}^{(1)}, \noisy{\hat{T}_{i}^{(1)}}, T_{i}^{(2)}, \noisy{\hat{T}_{i}^{(2)}}, 
\ldots, T_{i}^{(k)}, \noisy{\hat{T}_{i}^{(k)}} \right],
\vspace{-2pt}
\label{eqn: irrelevant CoT}
\end{align}
where $\noisy{\hat{T}_{i}^{(j)}}$ represents a noisy thought (irrelevant or inaccurate)
that is coherent with the previous clean thought $T_{i}^{(j)}$ (relevant and correct).
The following introduces the definition of noisy thoughts.


\textbf{Irrelevant thoughts} refer to incorporating irrelevant information unhelpful for solving the question, \textit{e.g.}, discussing the genetic overlap of siblings when the task is to deduce family roles in relationship reasoning.
Redundant information may be introduced by the LLM's diverse response generation or by humans when clarifying concepts in problem-solving examples~\cite{chandler1991cognitive, zhao2023explainability}.
Alternatively, we explore various semantic levels of "irrelevance" for constructing diverse irrelevant thoughts in Appendix~\ref{appendix: noise_semantic_difficulty}.

\textbf{Inaccurate thoughts} refer to factual errors in rationales that are common in mathematical calculation or transcription, \textit{e.g.}, "5+5=10" is wrong in base-9 calculation.
This noise comes from algorithmic limitations, errors in training data, misinterpretations of instructions, and logical fallacies~\cite{koo2023uncovering, sambasivan2021everyone}.

\begin{remark}
\vspace{+2pt}
Both types of noise only impact the finer details of the reasoning chain without affecting the correctness of question $x_{i}$ and answer $y_{i}$.
This distinction ensures that the reasoning based on the noisy demonstration is not fundamentally flawed, namely, only the reasoning rationale $\noisy{\hat{\mathcal{T}}_i}$ is noisy.
\end{remark}

\begin{table*}[t!]
    \centering
    \fontsize{7}{7}\selectfont
    \setlength\tabcolsep{2pt}
    \begin{tabular}{p{0.1\textwidth}|p{0.42\textwidth}|p{0.42\textwidth}}
        \toprule
        \multicolumn{1}{c|}{Task} &  \multicolumn{1}{c|}{Irrelevant Thoughts} & \multicolumn{1}{c}{Inaccurate Thoughts} \\
        \midrule
        \multicolumn{1}{c|}{\multirow{7}*{NoRa-Math}}  & In base-9, digits run from 0 to 8.
We have 3 + 2 = 5 in base-10. 
Since we're in base-9, that doesn't exceed the maximum value of 8 for a single digit.
5 mod 9 = 5, so the digit is 5 and the carry is 0. \noisy{\uline{There are five oceans on Earth: the Atlantic, Pacific, Indian, Arctic, and Southern.}} We have 8 + 6 + 0 = 14 in base 10. 
14 mod 9 = 5, so the digit is 5 and the carry is 1. 
A leading digit 1. 
So the answer is 155. Answer:  155
 &In base-9, digits run from 0 to 8.  We have 3 + 2 = 5 in base-10. \noisy{\uline{5 + 4 = 9.}} Since we're in base-9, that doesn't exceed the maximum value of 8 for a single digit. 5 mod 9 = 5, so the digit is 5 and the carry is 0.  \noisy{\uline{5 + 9 = 14.}} We have 8 + 6 + 0 = 14 in base 10. 14 mod 9 = 5, so the digit is 5 and the carry is 1. A leading digit 1. So the answer is 155. Answer: 155

 \\ \midrule
        \multicolumn{1}{c|}{\multirow{6}*{NoRa-Symbolic}}   & ... "turn around right" means the agent needs to turn right, and repeat this action sequence four times to complete a 360-degree loop. \noisy{\uline{Many GPS navigation systems will issue a 'turn around' command if the driver deviates from the planned route.}} So, in action sequence is I\_TURN\_RIGHT I\_TURN\_RIGHT I\_TURN\_RIGHT I\_TURN\_RIGHT.  ...
 & ... "turn around right" means the agent needs to turn right, and repeat this action sequence four times to complete a 360-degree loop. \noisy{\uline{Turn opposite is I\_TURN\_RIGHT I\_TURN\_LEFT.}} So, in action sequence is I\_TURN\_RIGHT I\_TURN\_RIGHT I\_TURN\_RIGHT I\_TURN\_RIGHT. ...\\  \midrule
        \multicolumn{1}{c|}{\multirow{8}*{NoRa-Com.}}  &  The relations path are son, sister, uncle, which means Francisco is David's son's sister's uncle. For son's sister, we have son's sister is daughter. So the relations path are reduced to daughter, uncle. \noisy{\uline{In genetics, mitochondrial DNA is always inherited from the mother, making the mother-daughter genetic link unique.}} For daughter's uncle, we have daughter's uncle is brother. So the relations path are reduced to brother. Therefore, the answer is brother. Answer:brother &The relations path are son, sister, uncle, which means Francisco is David's son's sister's uncle. For son's sister, we have son's sister is daughter. So the relations path are reduced to daughter, uncle. For daughter's uncle, we have daughter's uncle is brother. \noisy{\uline{We have brother' sister is brother.}} So the relations path are reduced to brother. Therefore, the answer is brother. Answer:brother\\
        \bottomrule
    \end{tabular}
    \vspace{-4pt}
    \caption{
    Noisy rationales (consisting \noisy{\uline{noisy thoughts}}) sampled from the NoRa dataset. 
    Full examples of NoRa are in Appendix~\ref{appendix: benchmark-examples},
    and
    real-world examples of noisy rationales are in Appendix~\ref{appendix: real-world examples}.
    }
    \label{tab:dataset-examples}
\vspace{-8pt}
\end{table*}

\vspace{-6pt}
\subsection{Tasks and Statistics}



The NoRa dataset covers the three types of reasoning tasks listed below.
In noise generation, irrelevant thoughts, sourced from extraneous scientific or social facts, and inaccurate thoughts, arising from misguided reasoning, are both based on contextual thoughts of Eqn.~\ref{eqn: clean CoT}
(see examples in Tab.~\ref{tab:dataset-examples}).

\begin{itemize}[label=\textbullet, leftmargin=.0in]

\item
\textbf{NoRa-Math.}
This task is derived from the Base Calculation dataset~\cite{wu2023reasoning} for evaluating non-standard base arithmetic skills and features two sub-tasks of \textit{base-9} and \textit{base-11} addition.
Here, the mastery of mathematical concepts and the calculation rules of specific bases are the keys to solving these tasks.

\item
\textbf{NoRa-Symbolic.}
We utilize the SCAN dataset~\cite{lake2018generalization} here, which aims to transform natural language into symbolic, machine-understandable instructions.
This transformation is learned from the prompting examples, comprising two sub-tasks:
(1) equal-length subtask, where the transformed instructions in both $S_{n}$ and $x_{\text{test}}$ have the equal length; and (2) longer-length subtask, where the transformed instructions in $x_{\text{test}}$ is longer than those in $S_{n}$, presenting an easy-to-hard generalization challenge.

\item
\textbf{NoRa-Commonsense.}
This task is constructed based on the CLUTRR dataset~\cite{sinha2019clutrr}, which is geared towards family relation path reasoning, \textit{e.g.}, "who is aunt's sister's mother?" It requires the mastery and application of commonsense knowledge of relationships and cognitive skills for reasoning.

\vspace{-4pt}
\end{itemize}

\textbf{Noise Ratio.}
Given the noise ratio $\epsilon \! \in \! (0,1)$, the expected number of added noisy thoughts for a $k$-length CoT demonstration is $\lfloor \epsilon \! \cdot \! k + \nicefrac{1}{2} \rfloor$.
Specifically, for an irrelevant thought $\hat{T}_{i}^{(j)}$ in $j$-th position of $i$-th example, a Bernoulli distribution $\text{Bern}(\epsilon) \! \in \! \{0,1\}$ is adopted to indicate its binary existence. 
We also further investigate the impact of the number of noisy thoughts, provided in Appendix~\ref{appendix: the Number of Noisy Thoughts}.

\begin{wraptable}{r}{0.53\textwidth}
    \vspace{-4pt}
    \centering
    \setlength\tabcolsep{3pt}
    \resizebox{0.53\textwidth}{!}{
        \begin{tabular}{c|c|ccccc}
            \toprule
            \multirow{2}{*}{Difficulty} &  Noise & \multicolumn{5}{c}{\#total thoughts (\#noisy thoughts) of prompting rationales (Avg.)} \\
            &  Ratio & Math Base-9 & Math Base-11 & Sym.~Equal & Sym.~Longer & Com. \\
            \midrule
            Easy        & 0.3   &  10 (2)&  10 (2)& 11.5 (2.7) & 11.0 (2.5) & 7 (2) \\
            Medium      & 0.5   &  12 (4)&  12 (4)& 13.3 (4.5) & 12.7 (4.2) & 8 (3)\\
            Hard        & 0.8   &  14 (6)&  14 (6)&  16.0 (7.1) & 15.2 (6.8) & 9 (4)\\
            \midrule
            \multicolumn{2}{c|}{\#questions}  & 4024 & 9269 & 4182 & 3920 & 4996 \\
            \bottomrule
        \end{tabular}}
    \vspace{-2pt}
    \caption{Statistics of NoRa dataset.}
    \label{tab:dataset_statis}
\vspace{-14pt}
\end{wraptable}
\textbf{Statistics.}
Tab.~\ref{tab:dataset_statis} provides a categorization of task difficulties as Easy, Medium, and Hard, with corresponding noise ratios of $0.3$, $0.5$, and $0.8$.
Regarding the number of thoughts in a rationale,
Math entails $8$ thoughts, Symbolic varies from $2$ to $12$, and Commonsense requires $5$ thoughts.
Additionally, a detailed introduction to NoRa is in Appendix~\ref{appendix: benchmark-details}.

\begin{table*}[t]
\definecolor{mygray}{gray}{.9}
    \centering
    \fontsize{7}{7}\selectfont
    \renewcommand\arraystretch{0.88}
    \setlength{\tabcolsep}{6pt}
    \begin{tabular}{c|l|c|cccc|cccc}
        \toprule
        \multirow{2}{*}{Task} & \multicolumn{1}{c|}{\multirow{2}{*}{Method $\mathcal{M}$}} &\multirow{2}{*}{\makecell[c]{$\operatorname{Acc}(\mathcal{M}, \mathcal{Q}, \mathcal{P}_{\text{clean}})$}} &  \multicolumn{4}{c|}{$\operatorname{Acc}(\mathcal{M}, \mathcal{Q}, \mathcal{P}_{\text{irrelevant}})$} &  \multicolumn{4}{c}{$\operatorname{Acc}(\mathcal{M}, \mathcal{Q}, \mathcal{P}_{\text{inaccurate}})$}\\
            &   &   &    Easy & Medium & Hard & Avg. & Easy & Medium & Hard & Avg. \\
        \midrule
        \multirow{6}{*}{\makecell[c]{Math \\ Base-9}}     
            &  \makecell[c]{Base}                           & \gray 46.4            & \gray \underline{39.3}          & \gray \underline{30.3}        & \gray \underline{26.6} & \gray \underline{32.1}      & \gray 23.2     & \gray 10.1      & \gray 6.0         & \gray 13.1   \\
            &  w/ ISC ~\citep{huang2023large}               & 24.3            & 17.7           &  14.7      & 12.7        & 15.0        & 18.4           & 13.7     & 12.3          & 14.8 \\
            &  w/ SP  ~\cite{xi2023self}                    & 26.2              & 25.5           &  25.5      & 21.9         & 24.3        & 20.0           & \underline{18.4}&  \textbf{14.3}  & 17.6  \\ 
            &  w/ SM ~\citep{robey2023smoothllm}            & 37.4              & 30.0            &  22.7     & 16.5          & 23.1          & 24.7        & \textbf{19.2}  & \underline{12.4} & \textbf{18.8}   \\    
            &  w/ SD ~\citep{zhang2023certified}            & \underline{47.9} & 37.2  &  25.4  & 24.7   &29.1 & \underline{29.3}  & 12.5         & 8.7      & 16.8 \\
            &  w/ SC ~\cite{wang2023self}                   & \textbf{61.5}    & \textbf{51.1}    & \textbf{39.0}   & \textbf{36.2} &\textbf{42.1}& \textbf{32.7}&  15.3       & 7.5         & \underline{18.5}   \\ 
        \midrule
        \multirow{6}{*}{\makecell[c]{Math \\ Base-11}}     
            &  \makecell[c]{Base}                           & \gray \underline{23.9} & \gray  \underline{19.1} & \gray  13.6          & \gray  10.7       & \gray  14.5                & \gray  14.0          & \gray  6.7             & \gray  3.6    & \gray  8.1 \\
            &  w/ ISC ~\citep{huang2023large}               & 11.2           & 8.3              & 7.8          & 6.0           & 7.4                 & 6.5          & 5.2          & 4.7       & 5.5  \\
            &  w/ SP ~\cite{xi2023self}                     & 20.7           & 17.5            & \textbf{16.7} &\underline{14.0}&\underline{16.0}  &\underline{14.1} &  \textbf{10.7} & \textbf{10.8}   & \textbf{11.9}  \\    
            &  w/ SM ~\citep{robey2023smoothllm}            & 16.3           & 12.0           & 6.0            & 5.7            & 7.9                & 12.0        & \underline{9.3}& \underline{7.7} & 9.7  \\      
            &  w/ SD ~\citep{zhang2023certified}            & 17.9           & 12.3           & 12.0          & 13.3           & 12.5            & 17.0              & 8.7             & 5.3          & 10.3   \\ 
            &  w/ SC ~\cite{wang2023self}                   & \textbf{33.7}  & \textbf{25.3}  &\underline{16.3} & \textbf{15.0}  & \textbf{18.9}   &  \textbf{19.7} &\underline{9.3}& 3.3       & \underline{10.8}   \\
        \midrule
        \multirow{6}{*}{\makecell[c]{Symbolic \\ Equal}}     
            &  \makecell[c]{Base}                            & \gray \underline{32.7} & \gray  \underline{28.1} & \gray  \underline{25.1} & \gray  \underline{23.0} & \gray  \underline{25.4} & \gray \underline{29.1} & \gray  \underline{26.1}     & \gray  22.7          & \gray  \underline{26.0} \\
            &  w/ ISC~\citep{huang2023large}                 & 23.9           &  20.0         & 16.3               & 15.5           & 17.3               & 19.2       &           18.3               & 18.1         & 18.5\\ 
            &  w/ SP ~\cite{xi2023self}                      & 23.2           &  23.0       & 22.6                 & 22.7           & 22.8       &  23.7     & 22.5          & \underline{23.5}  & 23.2\\    
            &  w/ SM ~\citep{robey2023smoothllm}             & 25.0           &  20.7        & 19.7                & 16.7           & 19.0       &  21.0      & 20.3        &          20.0              & 20.4\\    
            &  w/ SD ~\citep{zhang2023certified}             & 9.9                 &  10.1           & 10.9                  & 10.3              & 10.4          &  10.1         & 10.9           & 10.4           & 10.5 \\
            &  w/ SC  ~\cite{wang2023self}                   &\textbf{35.3}   &  \textbf{31.0} & \textbf{28.3}     & \textbf{27.0} & \textbf{28.8}&\textbf{33.3}      & \textbf{30.7}    & \textbf{26.0}    &\textbf{30.0} \\ 
         \midrule
        \multirow{6}{*}{\makecell[c]{Symbolic \\ Longer}}     
            &  \makecell[c]{Base}                             & \gray \underline{9.2} & \gray \underline{6.3}  & \gray  \underline{7.2} & \gray  \underline{6.0} & \gray  \underline{6.5} & \gray  \underline{7.0} & \gray  \underline{6.8} & \gray  \underline{6.0}  & \gray  \underline{6.6}\\
            &  w/ ISC ~\citep{huang2023large}                 &  4.9          &  4.6          & 2.7              & 3.7              & 3.7        & 3.4          & 4.3              & 3.3             & 3.7\\
            &  w/ SP ~\cite{xi2023self}                       &  5.1       &  4.3             & 4.1              & 3.9              & 4.1         & 4.9          & 4.0             & 4.5              & 4.5\\    
            &  w/ SM ~\citep{robey2023smoothllm}              &  1.7             &  0.7          & 0.7                   & 1.3                    & 1.0           & 1.3       & 0.7         & 0.3          & 0.8\\    
            &  w/ SD ~\citep{zhang2023certified}              &  0.1            &  0.1          & 0.1                   & 0.2                   & 0.1           & 0.1       & 0.3           & 0.0           & 0.1  \\
            &  w/ SC  ~\cite{wang2023self}                    &  \textbf{13.0}  & \textbf{7.7}     & \textbf{9.0}  & \textbf{6.3}          & \textbf{7.7} & \textbf{8.0}     & \textbf{8.0}    & \textbf{8.7}         & \textbf{8.2} \\ 
         \midrule 
        \multirow{6}{*}{Commonsense}
            &   \makecell[c]{Base}                             & \gray   45.7           & \gray  44.3              & \gray  42.3              & \gray  41.4              & \gray  42.7              & \gray  36.7              & \gray  33.4              & \gray  28.3 & \gray  32.8 \\
            &  w/ ISC ~\citep{huang2023large}                 &  21.8           & 24.3              & 22.5              & 21.4              & 22.7              & 23.3              & 26.5              & 24.0 & 24.6 \\
            &  w/ SP ~\cite{xi2023self}                       &  47.9           & 48.2              & 46.7              & \underline{48.1}  & 47.7              & \underline{49.6}  & 46.6              & 46.5 & 47.6\\    
            &  w/ SM ~\citep{robey2023smoothllm}              &\underline{53.3} &\underline{50.3}   & \underline{50.0}  & 46.7              & \underline{49.0}  & 47.7              & \underline{49.0}  & \underline{49.3} & \underline{48.7}  \\   
            &  w/ SD ~\citep{zhang2023certified}              & \textbf{54.0}   & \textbf{58.3}     & \textbf{57.3}     & \textbf{57.7}     &\textbf{57.8}      & \textbf{57.0}     & \textbf{58.3}     & \textbf{53.7} & \textbf{56.3} \\
            &  w/ SC  ~\cite{wang2023self}                     &52.0            & 46.3              & 45.0              & 44.7              & 45.3              & 44.7              & 44.7              & 38.0 & 42.5 \\ 
        \bottomrule
    \end{tabular}
    \caption{
    Reasoning accuracy on NoRa dataset with 3-shot prompting examples with clean, irrelevant, or inaccurate rationales.
    The \textbf{boldface} numbers mean the best results, while the \underline{underlines} numbers indicate the second-best results.
    Note the referenced results of \colorbox{mygray}{\parbox{1.6cm}{Base model}} are highlighted in gray.
    }
    \label{tab:vanilia_LLM}
\vspace{-12pt}
\end{table*}

\section{Evaluating Language Models on NoRa dataset}
\label{sec: experiment}
In this section, we comprehensively evaluate representative LLMs and robust methods on the newly constructed NoRa dataset. We first introduce the basic evaluation setups and then present several observations on the \textit{unsatisfactory robustness} of current LLMs and methods under noisy rationales.

\textbf{Baseline methods.}
We select five representative methods as baselines to ensure a comprehensive assessment that encompasses the two traits of self-correction and self-consistency. 
ISC~\citep{huang2023large} and SP~\cite{xi2023self} exemplify self-correction, focusing on response rectification and prompt rephrasing, respectively.
SM~\citep{robey2023smoothllm}, SD~\citep{zhang2023certified}, and SC~\cite{wang2023self} fall under self-consistency:
SM~\citep{robey2023smoothllm} injects perturbations into prompts for robustness, SD~\citep{zhang2023certified} masks prompts and asks LLMs to reconstruct them, while SC directly samples outputs without denoising. 
These methods are further introduced in Appendix~\ref{app:implementation-baselines}.

\textbf{LLM basis.} 
We employ GPT-3.5-turbo-0613~\citep{floridi2020gpt} as our base LLM~(denoted as \colorbox{mygray}{\parbox{0.65cm}{Base}}) for the analyses presented in this study. In addition, we conduct evaluations on three supplementary models, including Gemini-Pro (Jan. 2024)~\citep{team2023gemini}, Llama2-70B~\citep{touvron2023llama}, and Mixtral-8x7B~\citep{jiang2024mixtral}.
For all baselines, we consistently set the temperature parameter $\tau$ to the value of 1.
In order to obtain consistent results, we evaluate 300 questions for each task and repeat the model reasoning five times for each question.

\textbf{Evaluation metric.}
Given a set of test question $\mathcal{Q} = \{(x_{\text{test}}, y_{\text{test}})\}$
and a set of CoT-prompting examples $\mathcal{P} = \left[x_{1}, \mathcal{T}_1, y_{1}, \ldots, x_{n}, \mathcal{T}_n, y_{n} \right]$,
we define the accuracy of the denoising method $\mathcal{M}$ with a specific LLM $f_{\theta}$, namely,
$\operatorname{Acc}(\mathcal{M}, \mathcal{Q}, \mathcal{P}) = 
\nicefrac{\sum_{(x_{\text{test}}, y_{\text{test}}) \in \mathcal{Q}}  
\mathbf{1}[\mathcal{M}(\mathcal{P}, x_{\text{test}})=y_{\text{test}}]}
{|\mathcal{Q}|}$.
We report the results in percentage (\%) with one decimal point.
Therein,
$\operatorname{Acc}(\mathcal{M}, \mathcal{Q}, \mathcal{P}_{\text{clean}})$, 
$\operatorname{Acc}(\mathcal{M}, \mathcal{Q}, \mathcal{P}_{\text{irrelevant}})$,
and
$\operatorname{Acc}(\mathcal{M}, \mathcal{Q}, \mathcal{P}_{\text{inaccurate}})$
indicate accuracy with clean, irrelevant, and inaccurate rationales, respectively.
When there is no prompting example, \textit{i.e.}, $\mathcal{P} = \emptyset$, then
$\operatorname{Acc}(\mathcal{M}, \mathcal{Q}, \emptyset)$ represents the zero-shot result.

\begin{table*}[t!]
\centering
\begin{minipage}{.32\textwidth}
    \centering
    \fontsize{7}{7}\selectfont
    \setlength\tabcolsep{1pt}
        \begin{tabular}{c|c|cccccc}
            \toprule
            \multirow{2}{*}{Task} & \multirow{2}{*}{Setting} & \multicolumn{5}{c}{Temperature} \\
                                       &          & 0 & 0.3 & 0.5 & 0.7 & 1\\
            \midrule
            \multirow{3}{*}{Base-9} & clean & \textbf{61.0} & \underline{60.9} & 57.5 & 55.3 & \gray 46.4 \\
                                       & ina. easy   & \textbf{29.7} & \underline{28.0} & 27.2 & 26.6 & \gray 21.7 \\
                                       & ina. hard   & 5.0 & \underline{5.1} & \textbf{5.5} & 4.6 & \gray 5.0 \\
            \midrule
            \multirow{3}{*}{Base-11} & clean    & \textbf{34.0} & \underline{33.8} & 31.6 & 29.8 & \gray 23.9\\
                                       & irr. easy     & 21.7 & \underline{23.1} & 21.3 & \textbf{23.3} & \gray 19.1\\
                                       & irr. hard   & \underline{17.0} & \textbf{17.5} & 15.5 & 14.1 & \gray 10.7\\
            \midrule
            \multirow{3}{*}{Sym.(E)} & clean    & 34.2 & \textbf{35.8} & \underline{35.7} & 34.6 & \gray 32.7 \\
                                       & irr. easy     & 28.6 & \textbf{31.5} & \underline{29.8} & 29.1 & \gray 28.1\\
                                       & irr. hard   & \textbf{27.0} & 26.1 & \underline{26.2}  & 24.0 & \gray 23.0\\
            \midrule
            \multirow{3}{*}{Sym.(L)} & clean    & 6.3 & 8.3 & \underline{8.9} & \underline{8.9} & \gray \textbf{9.3} \\
                                       & ina. easy     & 5.0 & 7.3 & \textbf{8.6}  & \underline{8.3} & \gray 7.0\\
                                       & ina. hard   & 4.0 & 6.1 & \textbf{6.3} & \underline{6.2} & \gray 6.0\\
            \bottomrule
        \end{tabular}
    \caption{
    Comparing performances of the base model with different temperatures.
    Sym.(E)/(L) are symbolic tasks.} 
    \label{tab:temperatures}
\end{minipage}%
\hfill
\begin{minipage}{.32\textwidth}
\centering
    \fontsize{7}{7}\selectfont
    \setlength\tabcolsep{1pt}
        \begin{tabular}{c|c|cccccc}
            \toprule
            \multirow{2}{*}{Task} & \multirow{2}{*}{Setting} & \multicolumn{5}{c}{\#Prompting Examples} \\ 
                                       &            & 1 & 2 & 3 & 4 & 5 \\ 
            \midrule
            \multirow{3}{*}{Base-9} & clean           & 24.8       & 38.3  & \gray 46.4  & \textbf{50.8}  & \underline{50.5} \\ 
                                       & ina.-easy   & 17.5       & 22.2  & \gray 23.2  & \underline{25.4}  & \textbf{25.6} \\
                                       & ina.-hard   & \textbf{11.3}       & \underline{6.3}   & \gray 6.0   & 5.7   & 5.7 \\
            \midrule
            \multirow{3}{*}{Base-11} & clean           & 11.8 & 20.4 & \gray 23.9 & \underline{29.9} & \textbf{32.1} \\
                                       & irr. easy    & 8.9  & 15.9 & \gray 19.1 & \underline{21.7} & \textbf{26.3} \\
                                       & irr. hard    & 7.7  & 10.0 & \gray 10.7 & \underline{15.2} & \textbf{16.1} \\
            \midrule
            \multirow{3}{*}{Sym.(E)} & clean    & 18.0 & 26.5 & \gray \underline{32.7} & \textbf{39.8} & --- \\ 
                                       & ina.-easy     & 17.3 & 23.6 & \gray \underline{29.1} & \textbf{34.7} & --- \\ 
                                       & ina.-hard   & 15.0 & \underline{21.0} & \gray \textbf{22.7} & --- & --- \\ 
            \midrule
            \multirow{3}{*}{Sym.(L)} & clean      & 2.7 & 7.7 & \gray 9.3 & \underline{11.3} & \textbf{12.2}\\
                                       & irr. easy  & 2.3 & 5.4 & \gray 7.0 & \underline{8.8} & \textbf{8.9 } \\
                                       & irr. hard  & 1.9 & 4.0 & \gray \underline{6.0} & \textbf{6.3} & --- \\
            \bottomrule
        \end{tabular}
    \caption{
    Comparing performances of the base model with a varying number of examples
    ("---" denotes over token limit).} 
    \label{tab:increasing_exemplars}
\end{minipage}%
\hfill
\begin{minipage}{.32\textwidth}
    \centering
    \fontsize{7}{7}\selectfont
    \setlength\tabcolsep{1pt}
    \begin{tabular}{c|c|c|c|cc}
        \toprule
        \multirow{2}{*}{Model} & \multirow{2}{*}{Task} & \multicolumn{4}{c}{Setting} \\
        && 0-shot & clean & irr. & ina.\\
        \midrule
        \multirow{3}{*}{\makecell[c]{GPT3.5}} & 
        \makecell[c]{Base-9} & \gray 7.2 & \gray \textbf{46.4} & \gray \underline{30.3} & \gray 10.1\\
        & \makecell[c]{Sym.(E)} & \gray 8.8 & \gray \textbf{32.7} & \gray 25.1 & \gray \underline{26.1} \\
        & \makecell[c]{Com.} & \gray 40.0 & \gray \textbf{45.7} & \gray \underline{42.3} & \gray 33.4    \\
        \midrule
         \multirow{3}{*}{\makecell[c]{Gemini}} &
        \makecell[c]{Base-9} & 12.7 & \textbf{88.0} & \underline{72.3} & 21.2 \\
        & \makecell[c]{Sym.(E)} & 9.3 & \textbf{44.5} & \underline{38.9} & 36.7 \\
        & \makecell[c]{Com.} & 42.9 & \textbf{55.6} & \underline{53.2} & 33.5    \\
        \midrule
         \multirow{3}{*}{\makecell[c]{Llama2}} &
        \makecell[c]{Base-9} & 1.7 & \textbf{4.9} & \underline{2.9} & 2.7 \\
        & \makecell[c]{Sym.(E)}  & 4.7 & \textbf{10.1} & 8.7 & \underline{9.1} \\
        & \makecell[c]{Com.} & 35.0 & \textbf{42.3} & \underline{41.9} & 40.2    \\
        \midrule
         \multirow{3}{*}{\makecell[c]{Mixtral}} &
        \makecell[c]{Base-9} & 3.9 & \textbf{27.5} & \underline{16.3} & 3.7 \\
        & \makecell[c]{Sym.(E)} & 8.3 & \textbf{19.3} & \underline{17.9} & 15.1 \\
        & \makecell[c]{Com.} & 24.2 & \textbf{37.5} & \underline{34.9} &  31.1   \\
        \bottomrule
    \end{tabular}
    \caption{
    Comparing LLMs with 0-shot, 3-shot clean, and 3-shot 
    medium irrelevant (irr.) / inaccurate (ina.) rationales.}
    \label{tab:different_llms_on_NoRa}
\end{minipage}%
\vspace{-10pt}
\end{table*}

\begin{table*}[t!]
    \centering
    \fontsize{7}{7}\selectfont
    \setlength\tabcolsep{6pt}
    \begin{tabular}{c|c|c|c|c|c}
        \toprule
        \multirow{1}{*}{Task} & \multirow{1}{*}{Zero-shot} & \multirow{1}{*}{Few-shot (No Shuffle)} 
        & \multirow{1}{*}{Shuffle Questions $x_i$} &  \multirow{1}{*}{Shuffle Rationales $\mathcal{T}_i$} & \multirow{1}{*}{Shuffle Answers $y_i$}   \\
        \midrule
        \multirow{1}{*}{\makecell[c]{Math Base-9}} & 7.2 &  \gray \textbf{46.4} 
        & \underline{45.5} (0.9\%$\downarrow$) & 34.5 (11.9\%$\downarrow$) & 35.7 (10.7\%$\downarrow$)   \\
        \midrule
        \multirow{1}{*}{\makecell[c]{Math Base-11}} & 5.5 & \gray \underline{23.9}	
        & \textbf{24.8} (0.9\%$\uparrow$) & 21.6 (2.3\%$\downarrow$) & 21.1 (11.7\%$\downarrow$)  \\
        \midrule
        \multirow{1}{*}{\makecell[c]{Symbolic Equal}} & 8.8 & \gray \underline{32.7} 
        & \underline{32.7} (0.0\%$\downarrow$) & \textbf{32.8} (0.1\%$\uparrow$) & 32.3 (0.4\%$\downarrow$) \\
         \midrule
        \multirow{1}{*}{\makecell[c]{Symbolic Longer}} & 0.0 & \gray \textbf{9.2} 
        & \underline{7.0} (2.2\%$\downarrow$) & 6.2 (3.0\%$\downarrow$) & 6.3 (2.9\%$\downarrow$) \\
         \midrule
        \multirow{1}{*}{Commonsense}     
        & 40.0 & \gray \textbf{45.7} 
        & 38.7 (7.0\%$\downarrow$) & 39.7 (6.0\%$\downarrow$) & \underline{39.8} (5.9\%$\downarrow$) \\
        \bottomrule
    \end{tabular}
    \caption{
    Performance (in accuracy\%) on NoRa dataset under different few-shot shuffle configurations. 
    }
    \label{tab:shuffle study}
\vspace{-20pt}
\end{table*}

\textbf{Unreliability revealing with noisy rationales.}
We conduct the reasoning tasks on LLM with Noisy-R and summarize the results in Tab.~\ref{tab:vanilia_LLM}.
\textit{Overall, the base LLM with all the existing reasoning methods is severely affected by irrelevant or inaccurate noise, with overall showing a $0.2\%$-$25.3\%$ decrease with irrelevant noise and a more drastic $0.1\%$-$54.0\%$ decrease with inaccurate noise compared with clean rationales.}
While robust methods like SP and SD exhibit resilience to noise on partial tasks, their performance remains inconsistent and often declines.
To further reveal the unreliability, we start by analyzing the two categories of robust methods mentioned above in the following observations.

\textbf{\textit{Observation 4.1.} Self-correction methods perform poorly on most tasks with noisy rationales.} 
Therein, ISC~\citep{huang2023large} and SP~\cite{xi2023self} rely on the inherent capabilities of LLMs to enhance the quality of generated responses. 
However, in the absence of external feedback, the model's self-correction ability in reasoning tasks is limited, often resulting in the miscorrection of the given content (see Tab.~\ref{tab:comparison-rephrased-rationales}).
SP can only slightly improve the accuracy of commonsense tasks, while ISC performs unsatisfactorily across all tasks.
As can be seen from Tab.~\ref{tab:vanilia_LLM}, these methods perform even worse than the base model.

\textbf{\textit{Observation 4.2.} Self-consistency methods can improve robustness without true denoising.} 
Two self-consistency approaches, SM~\citep{robey2023smoothllm} and SD~\citep{zhang2023certified}, are originally proposed to address Noisy-Q issues.
When applied to our Noisy-R scenarios, they tend to easily disrupt the intrinsic logical coherence within the thought chain.
Although these methods utilizing smooth strategies (\textit{e.g.}, random smoothing or masking) perform well on the commonsense dataset, they can hardly handle the more difficult reasoning tasks and even degenerate close to 0\%, \textit{e.g.}, in the Symbolic Longer task.
Another method, SC~\citep{wang2023self}, performs better than the base model in all tasks, improving both clean and noisy reasoning performance.
However, SC does not conduct explicit denoising on rationales during its reasoning procedure. 
In addition, SC also requires a high computation cost (refer to Appendix~\ref{app:exp-quantitative}).

Besides these methods, next, we analyze LLMs' \textit{intrinsic} properties under noisy rationales as follows.

\textbf{\textit{Observation 4.3.} Adjusting model temperature can help reasoning under noisy rationales.} 
In Tab.~\ref{tab:temperatures}, we evaluate the base LLM using different temperatures on 3-shot demonstrations. Overall, reducing temperature can enhance the model's accuracy under both noisy and clean rationale reasoning, compared to the default temperature of 1.
However, the relationship between temperature and accuracy is not linear for noisy reasoning; instead, there are multiple peaks in accuracy within the temperature range of $0$ to $1$.
Additionally, it is found that excessively low temperatures (\textit{e.g.}, 0) tend to result in verbose and repeated responses, which cause the model to exceed token limits up to $30\%$ in symbolic tasks where the length of expected answers is quite variable among different questions.

\textbf{\textit{Observation 4.4.} Prompting with more noisy examples boosts reasoning accuracy on most tasks.}
In Tab.~\ref{tab:increasing_exemplars}, we evaluate the model using different numbers of exemplars while keeping the temperature at 1.
In general, the LLM's accuracy will still improve as the number of noisy examples increases in the clean and noisy settings.
However, it should be noted that in tasks with high-level noise from NoRa-Math, increasing prompting examples can degenerate accuracy. 
For example, in the base-9 inaccurate-hard task, prompting with noisy rationales is even worse than the 0-shot accuracy of $7.2\%$. 
Further, we provide a deeper analysis of increasing the number of noisy examples in Appendix \ref{app: number of noisy examples}.

\textbf{\textit{Observation 4.5.} Different LLMs are generally vulnerable to noisy rationales.}
In Tab.~\ref{tab:different_llms_on_NoRa}, we evaluate different LLMs across three settings: 0-shot CoT, 3-shot clean rationales, and 3-shot medium-level noisy rationales.
Notably, Gemini-Pro outperforms GPT-3.5 in overall performance. 
However, it demonstrates a similar degree of sensitivity to noise, with a $2.4\%$-$15.7\%$ performance decline with irrelevant rationales and a $7.8\%$-$66.8\%$ decline with inaccurate rationales compared to clean rationales.
While Mixtral 8x7B shows a slight underperformance compared to GPT-3.5, it also manifests a vulnerability to noise, incurring a $1.4\%$-$11.2\%$ loss with irrelevant rationales and a greater $4.2\%$-$23.8\%$ loss with inaccurate rationales.
By contrast, Llama2-70B performs suboptimally, with a $0.4\%$-$2.0\%$ drop for irrelevant thoughts and a larger $1.0\%$-$2.2\%$ drop for inaccurate thoughts.

\textbf{Further investigation.}
Inspired by \citet{min2022rethinking}, we further explore the mapping among questions, rationales, and answers through shuffling experiments. 
Specifically, given the 3-shot prompting examples $\{(x_1, \mathcal{T}_1, y_1), (x_2, \mathcal{T}_2, y_2), (x_3, \mathcal{T}_3, y_3)\}$, 
we test three configurations, 
\textit{i.e.}, 
shuffle questions $\{(x_1, \mathcal{T}_3, y_3), (x_2, \mathcal{T}_1, y_1), (x_3, \mathcal{T}_2, y_2)\}$,
shuffle answers $\{(x_1, \mathcal{T}_1, y_3), (x_2, \mathcal{T}_2, y_1), (x_3, \mathcal{T}_3, y_2)\}$, 
and shuffle rationales $\{(x_1, \mathcal{T}_3, y_1), (x_2, \mathcal{T}_1, y_2), (x_3, \mathcal{T}_2, y_3)\}$.
These break the original mappings.
The results under these configurations are shown in Tab.~\ref{tab:shuffle study}, which induces the following observation.

\textbf{\textit{Observation 4.6.} Shuffling the mappings of prompting examples degenerates the reasoning but still performs better than without prompting.}
This means that while LLMs may not heavily rely on the exact mapping (of question, rationale, and answer), they still benefit from demonstrating information even with shuffling.
Notably, this finding is consistent with the conclusions of~\cite{min2022rethinking} that LLMs learn more abstract task information from the demonstrations rather than simply memorizing question-answer pairs.
\textit{More importantly, LLMs are less vulnerable to shuffled mappings than noisy rationales}.
Unlike shuffling, the irrelevant or inaccurate information in noisy rationales introduces misleading elements that significantly interfere with the model's ability to learn \textit{correct} task patterns, thereby resulting in more severe performance degradation. 
This extends \cite{min2022rethinking}'s finding and shows that the quality of reasoning steps can be more crucial than the exact mapping of prompting examples.

\begin{figure}[t!]
\begin{minipage}[t]{0.49\linewidth}
\centering
\includegraphics[width=\textwidth]{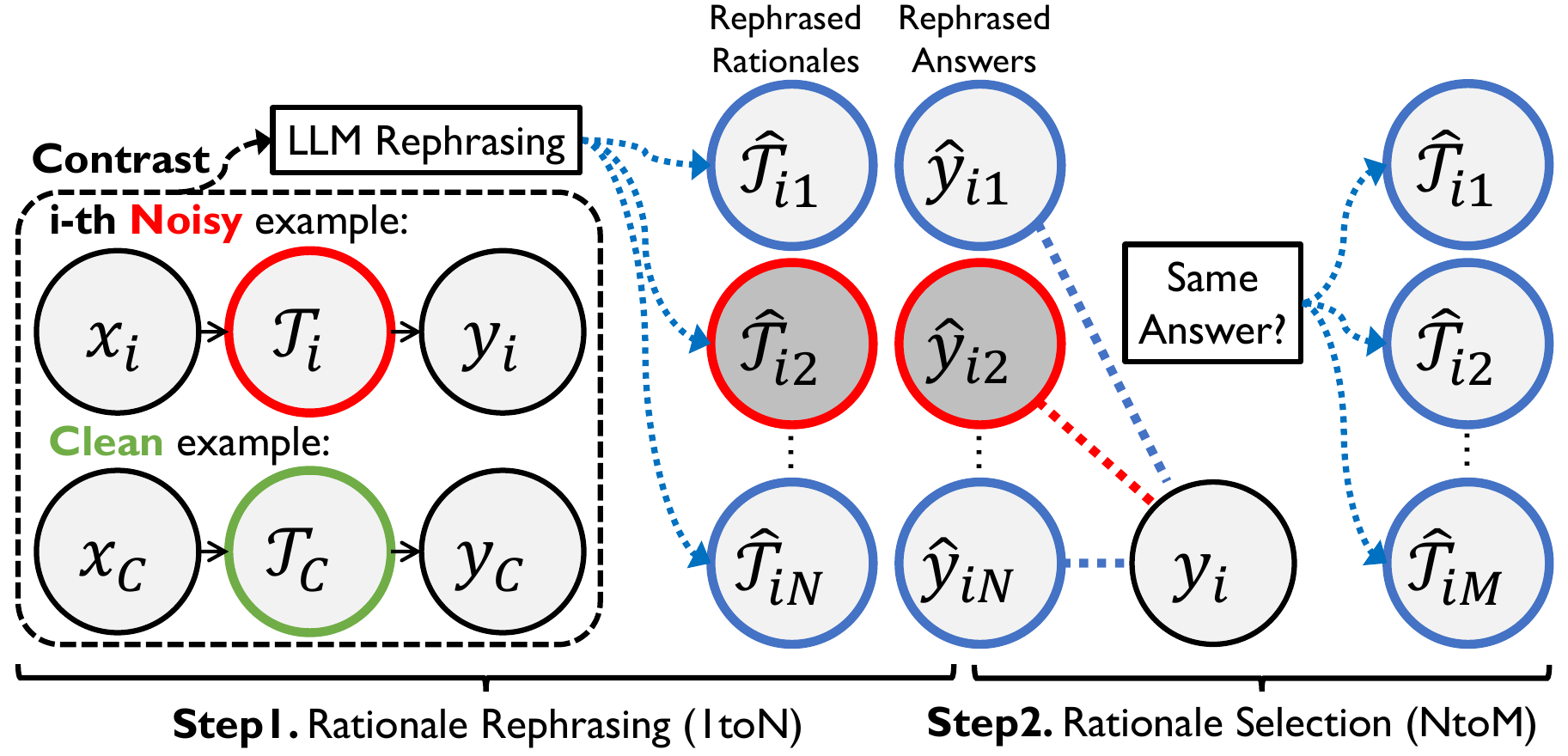}
\caption{CD-CoT's first two steps for data denoising.
First, it rephrases the $i$-th noisy example by contrasting it with the clean example. Then, with the obtained $N$ rephrased examples, it selects the $M$ qualified candidates by checking the validity of the rephrased answers $\hat{y}_{i1}, \ldots, \hat{y}_{iN}$ \textit{w.r.t.} $y_i$.}
\label{fig: method-step12}
\end{minipage}
\hfill
\begin{minipage}[t]{0.49\linewidth}
\centering
\includegraphics[width=\textwidth]{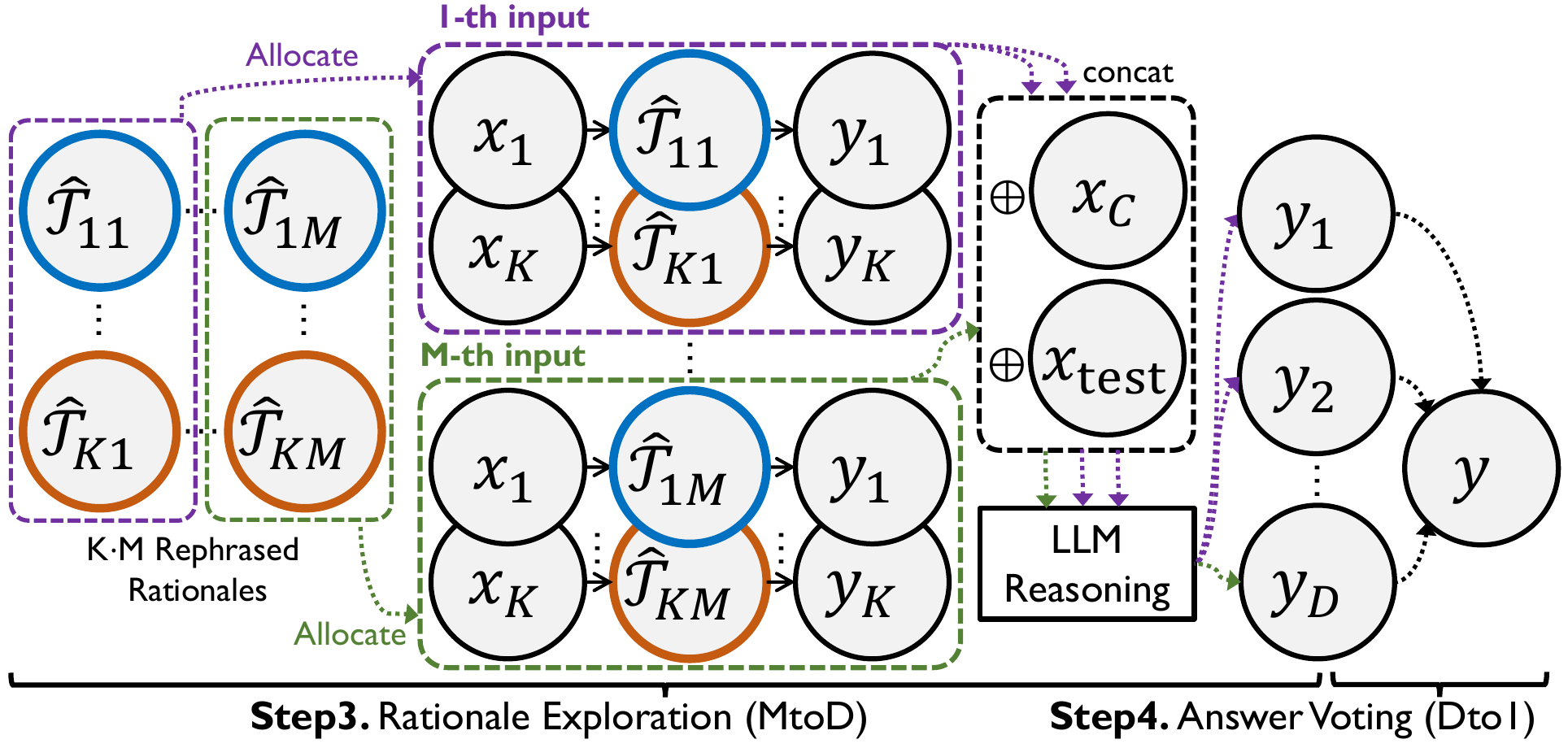}
\caption{CD-CoT constructs $M$ inputs ($K$-shot) by allocating the $K \cdot M$ rephrased rationales.
These inputs are concatenated with the clean example and test question and then fed to an LLM for reasoning separately.
The obtained $D$ answers are equally voted to obtain the final answer $y$.}
\label{fig: method-step34}
\end{minipage}
\end{figure}

\vspace{-20pt}
\section{Method}
\label{sec: method}
\vspace{-14pt}

This section aims to enable LLMs to discern and remove noisy thoughts.
The observations in Sec.~\ref{sec: experiment} and previous works show that current LLMs cannot achieve this with their intrinsic denoising ability, even enhanced with self-denoising methods. 
Therefore, we would claim that the \textit{external supervision} is necessary for enhancement, which should be sufficient for denoising and accessible in practice.
Existing methods with external supervision~\cite{huang2023large, tyen2023llms, chia2023contrastive} require (1) oracle feedback on the test question, (2) human feedback of errors on specific tokens or positions, or (3) expert knowledge to construct detailed descriptions of specific tasks.
By contrast, we believe that \textit{a clean CoT demonstration} is more attainable and practical, which can be the \textit{minimal requirement} for denoising-purpose prompting.

Therein, we assume that LLMs \textit{can} identify noisy thoughts 
by \textit{contrasting} a pair of noisy and clean rationales and discerning their differences, similar to contrastive learning~\cite{he2020momentum, chen2020simple, khosla2020supervised}.
Here, the denoising power could come from the abilities of the instruction following and step-by-step reasoning~\cite {wei2022emergent, schaeffer2023emergent}. 
Hence, we propose the framework of CD-CoT, \textbf{C}ontrastive \textbf{D}enoising with noisy \textbf{CoT}.
The design principle is to explore and then exploit, \textit{i.e.}, (1) rephrasing and selecting rationales in input space to achieve explicit denoising, and then (2) exploring diverse rationales and voting answers in output space for deriving the final answer, as in Figs.~\ref{fig: method-step12}~\&~\ref{fig: method-step34}.
The details are as follows.


\begin{algorithm}[t!]
\setstretch{0.95}
\caption{CD-CoT: Contrastive Denoising with Noisy Chain-of-Thought.}
\begin{algorithmic}[1]
\REQUIRE
an LLM $f_{\theta}$,
the prompt of contrastive denoising $\mathcal{P}_{\text{denoise}}$,
one test question $x_\text{test}$,
one clean example $(x_\text{C}, \mathcal{T}_\text{C}, y_\text{C})$,
K prompting examples $S_{n} \! = \! \{(x_{i}, \mathcal{T}_i, y_{i})\}_{i=1}^{K}$,
hyper-parameters $N, M$, 
and reasoning budget $\{B_i\}_{i=1}^{M}$ (satisfies that $\Sigma_{i=1}^{M} B_i = D$, where $D$ is the total budget).

\FOR{$i=1\dots K$}
    \STATE initialize the set of rephrased results of $i$-th example $\mathcal{R}_i \leftarrow \emptyset$.
    \FOR{$j=1\dots N$}
        \STATE {\noisy{\texttt{\# Step-1: Rationale Rephrasing via Supervised Contrasting}}}
        \STATE obtain a rephrased example as $(x_{i}, \hat{\mathcal{T}}_i, \hat{y}_{i}) \leftarrow f_{\theta}
        \Big(\mathcal{P}_{\text{denoise}}
        (x_\text{C}, \mathcal{T}_\text{C}, y_\text{C}, x_{i}, \mathcal{T}_i, y_{i})
        \Big)$.
        \STATE 
        if match answer
        $\hat{y}_{i} = y_{i}$, then store the rephrased example as $\mathcal{R}_i \leftarrow \mathcal{R}_i \cup \{(x_{i}, \hat{\mathcal{T}}_i, \hat{y}_{i})\}$.
    \ENDFOR    
    \STATE {\noisy{\texttt{\# Step-2: Rationale Selection}}}
    \STATE randomly select $M$ rephrased examples from $\mathcal{R}_i$ and obtain $\tilde{\mathcal{R}}_i = \{(x_{is}, \hat{\mathcal{T}}_{is}, \hat{y}_{is})\}_{s=1}^{M}$.
\ENDFOR

\STATE {\noisy{\texttt{\# Step-3: Rationale Exploration}}}
\STATE initialize the set of answers $\mathcal{Y} \leftarrow \emptyset$.
\FOR{$i=1\dots M$}
    \STATE construct an input $\mathcal{P}_i \leftarrow \{ (x_{ji}, \hat{\mathcal{T}}_{ji}, \hat{y}_{ji}) \}_{j=1}^{K}$, where
    $(x_{ji}, \hat{\mathcal{T}}_{ji}, \hat{y}_{ji})$ is the $i$-th element of $\hat{\mathcal{R}}_j$.
    \STATE concatenate $\mathcal{P}_i$ with the clean example and test question as $\mathcal{P}_i \leftarrow \mathcal{P}_i \cup \{(x_\text{C}, \mathcal{T}_\text{C}, y_\text{C}), x_\text{test}\}$.
    \FOR{$j=1\dots B_M$}
        \STATE get one answer by LLM reasoning as $y_j \leftarrow f_{\theta}(\mathcal{P}_i)$.
        \STATE store the answer as $\mathcal{Y} \leftarrow \mathcal{Y} \cup \{y_j\}$.
    \ENDFOR
\ENDFOR

\STATE {\noisy{\texttt{\# Step-4: Answer Voting}}}
\STATE initialize the dictionary of answer count $\mathcal{C}$ 
that $\forall y_j \in \mathcal{Y}, \mathcal{C}[y_j] = 0$.
\FOR{$j=1\dots D$}
    \STATE update $\mathcal{C}[y_j] \leftarrow (\mathcal{C}[y_j] + 1)$.
\ENDFOR
\STATE get the final answer $y$ with maximum counts as $y \leftarrow \arg \max_{y} \mathcal{C}[y]$.
\STATE \textbf{return } the answer $y$.

\end{algorithmic}
\label{alg: CD-CoT}
\end{algorithm}
\setlength{\textfloatsep}{0.2cm}


\subsection{Implementation } 
\label{implementation}
\vspace{-6pt}

\textbf{Step-1: Rephrasing via Supervised Contrasting ($1$ to $N$).}
First, we establish a general prompt of contrastive rephrasing to construct a pair of contrastive examples, as shown in the template below. This steers the model towards learning from the clean example and then rephrasing and rectifying the noisy examples.
To be specific, given one clean example and $K$ noisy examples, we generate $N$ rephrased rationales for each noisy example independently and obtain $K \! \cdot \! N$ rephrased rationales.

\vspace{-6pt}
\begin{center}
\begin{tcolorbox}[
colback=white,
colframe=black,
width=1.0\linewidth,
arc=1mm, auto outer arc,
boxrule=0.75pt]

\footnotesize 

\textbf{Prompt of Contrastive Rationale Rephrasing:} \\
Here are two examples for the same type of task: the first example has correct explanation and correct answer, and the second example has distracted explanation and correct answer. Please follow the first example and give me the correct explanation and answer for the second example, which should be logically consistent with the first one.

\textbf{First Example:} {\scriptsize Q: [Question], E: [Explanation], A: [Answer].}

\textbf{Second Example:} {\scriptsize Q: [Question], E: [Explanation], A: [Answer].}

\end{tcolorbox}
\end{center}
\vspace{-6pt}

\textbf{Step-2: Rationale Selection ($N$ to $M$, $N \! \geq \! M$).}
Next, we employ answer matching to select those rephrased examples with unchanged answers, leaving behind a refined candidate pool.
Subsequently, we randomly select $M$ rephrased rationales from the pool and concatenate them to form the contexts.

\textbf{Step-3: Rationale Exploration ($M$ to $D$, $M \! \leq \! D$).}
For the $M$ different contexts, we explore rationales by repeated reasoning with the budget of $D$ reasoning repetitions.
Notably, a higher temperature parameter, \textit{e.g.}, 1, is set to introduce more randomness in generating diverse rationales.

\textbf{Step-4: Answer Voting ($D$ to 1).}
Ultimately, all the $D$ answers are equally voted into a final answer.


\textbf{Instantiation.}
By tuning the hyper-parameters $ N$, $ M$, and $ D$, we balance exploration and exploitation in the input and output space. 
The overall procedure of our proposed CD-CoT is presented in Algorithm~\ref{alg: CD-CoT}. 
Besides, we further explain the details of each step of this algorithm in Appendix~\ref{app:implementation-CD-CoT}.

\textbf{Theoretical analysis.}
To understand the underlying mechanism of CD-CoT, we also conduct the theoretical analysis based on the distinguishability~\citep{xie2021explanation} of in-context learning. 
The full analysis is in Appendix~\ref{app:theo}, where we find that the noisy demonstration in ICL can decrease the distinguishability of in-context matching with the clean-prompt distribution, while our method can mitigate this issue. 
Besides, we build a self-supervised variant of CD-CoT and empirically evaluate it in Appendix~\ref{app:variant of CD-CoT}.

\vspace{-6pt}
\subsection{Empirical Study}
\label{sec: Empirical Study}

In this part, we empirically verify the effectiveness of CD-CoT and start by introducing the baselines.

\textbf{Baseline methods.}
We employ three methods that require  \textit{additional information}: (1) Self-Correction with Oracle Feedback~(SCO)~\cite{huang2023large} utilizes the \textit{ground truth answers} of test questions to determine when to terminate the self-correction loop; (2) Backtracking~(BT)~\cite{tyen2023llms} guides self-correction by providing the model with the \textit{position} of the first noisy thought; (3) Contrastive Chain-of-Thought~(CC)~\cite{chia2023contrastive} conducts direct reasoning with all the noisy or \textit{clean examples} without implicit or explicit denoising.

\textbf{Main results.}
As in Tab.~\ref{tab:proposed-method performance}, CD-CoT demonstrates a significant performance improvement across all datasets, with an average improvement of $17.8\%$ compared with the base model under noisy settings.
Notably, on Math-Base-9, Math-Base-11, and Symbolic-Equal, CD-CoT surpasses all baseline methods by a significant margin.
On Symbolic-Longer and Commonsense, CD-CoT only slightly lags behind SCO.
However, SCO requires the ground truth answer to the test question, which should be unknown in practice, as pointed out in~\citep{huang2023large}.
In comparison, CD-CoT only necessitates an additional clean demonstration, making it much more practical to apply across realistic scenarios.
\textit{Notably, CD-CoT outperforms SCO in $20$ out of $30$ settings and surpasses BT, CC in all $30$ settings}.

Besides, CD-CoT displays \textit{remarkable resistance} to the magnitude of noise. Therein, CD-CoT demonstrates enhanced resilience against inaccurate noise on mathematical tasks, which are quite challenging.
For instance, on Math Base-9 with inaccurate rationales, the average accuracies of SCO and BT decline significantly by $28.8\%$ and $26.3\%$ compared to the accuracies with clean rationales. 
In contrast, CD-CoT exhibits a more modest decline of $7.0\%$.
An ablation study of components in Appendix~\ref{appendix: ablation study on CD-CoT} demonstrates the denoising power and performance gain of CD-CoT, attributed to its contrastive denoising with rationale rephrasing as well as repeated reasoning with voting components.

\begin{table*}[t!]
    \centering
    \fontsize{7}{7}\selectfont
    \setlength\tabcolsep{5.2pt}
    \resizebox{\linewidth}{!}{
    \begin{tabular}{c|l|c|c|cccc|cccc}
        \toprule
        \multirow{2}{*}{Task} & \multicolumn{1}{c|}{\multirow{2}{*}{Method $\mathcal{M}$}} 
        &\multirow{2}{*}{\makecell[c]{Additional \\ Information}}  
        &\multirow{2}{*}{\makecell[c]{$\operatorname{Acc}(\mathcal{M}, \mathcal{Q}, \mathcal{P}_{\text{clean}})$}} &  \multicolumn{4}{c|}{$\operatorname{Acc}(\mathcal{M}, \mathcal{Q}, \mathcal{P}_{\text{irrelevant}})$} &  \multicolumn{4}{c}{$\operatorname{Acc}(\mathcal{M}, \mathcal{Q}, \mathcal{P}_{\text{inaccurate}})$}\\
            &   &  &  &    Easy & Medium & Hard & Avg. & Easy & Medium & Hard & Avg. \\
        \midrule
        \multirow{5}{*}{\makecell[c]{Math \\ Base-9}} 
            &  \makecell[c]{Base}                 & - & \gray 46.4             & \gray  39.3           & \gray   30.3               & \gray  26.6 & \gray  32.1               & \gray  23.2             & \gray  10.1             & \gray  6.0           & \gray  13.1    \\
            &  w/ SCO~\cite{huang2023large}  & Ground Truth
            & \baseline \underline{53.6}    & \baseline \underline{46.3} & \baseline 39.6   & \baseline 36.4    & \baseline 40.8     & \baseline 34.7     & \baseline 22.0     & \baseline 17.7     & \baseline 24.8 \\
            &  w/ BT~\cite{tyen2023llms}  
            & Noise Position 
            & 47.2    & 39.2  & 34.2  & 29.9   & 34.4     & 30.1   & 18.4     & 14.1    & 20.9 \\
            &  w/ CC~\cite{chia2023contrastive}  
            & Clean Demo    
            & 44.9     & 43.3     &  \underline{44.6}    & 
            \underline{45.5}    & \underline{44.5}    & \underline{37.2}   & \underline{31.7}  & \underline{30.7}   & \underline{33.2} \\
            &  w/ CD-CoT (ours)         
            & Clean Demo 
            & \ours \textbf{60.7}      & \ours \textbf{59.7}     & \ours \textbf{60.7}    & \ours \textbf{57.2}    & \ours \textbf{59.2}     & \ours \textbf{54.0}   & \ours \textbf{58.7}     & \ours  \textbf{48.4}    & \ours \textbf{53.7}    \\
        \midrule
        \multirow{5}{*}{\makecell[c]{Math \\ Base-11}}
             &  \makecell[c]{Base}   & -                  & \gray  23.9
             & \gray  19.1
             & \gray  13.6 
             & \gray  10.7          
             & \gray  14.5                  & \gray  14.0         
             & \gray  6.7
             & \gray  3.6    & \gray  8.1 \\
            &  w/ SCO~\cite{huang2023large} 
            & Ground Truth     
            & \baseline \textbf{33.0} & \baseline \underline{29.2}  & \baseline \underline{24.0}  & \baseline \underline{20.0}    & \baseline \underline{24.4}     & \baseline \textbf{29.2}  & \baseline \underline{20.0}    & \baseline
            \underline{17.2}    & \baseline \underline{22.1} \\
            &  w/ BT~\cite{tyen2023llms} 
            & Noise Position 
            & 24.3    & 17.9    & 17.2   & 13.7    & 16.3     &12.8   & 9.2   & 6.8   & 9.6 \\
            & w/ CC~\cite{chia2023contrastive} 
            & Clean Demo
            & 22.3     & 19.1        & 18.4        & 18.2        & 18.6     & 19.0       & 15.3      & 14.6       & 16.3 \\
            & w/ CD-CoT (ours)   
            & Clean Demo       
            &  \ours \underline{31.0}      & \ours \textbf{33.7}    & \ours \textbf{32.7}   & \ours \textbf{34.7}    & \ours \textbf{33.7}
            & \ours  \underline{29.0}  & \ours \textbf{30.7}    & \ours \textbf{25.3}    & \ours \textbf{28.3}    \\ 
        \midrule
        \multirow{5}{*}{\makecell[c]{Symbolic \\ Equal}}
            &  \makecell[c]{Base}    
            & - 
            &  \gray 32.7  & \gray   28.1 & \gray  25.1  & \gray  23.0  & \gray  25.4  & \gray   29.1 & \gray  26.1  & \gray  22.7              & \gray  26.0 \\
            & w/ SCO~\cite{huang2023large}  
            & Ground Truth 
            & \baseline \underline{38.5} & \baseline \underline{34.9}    & \baseline \underline{33.4}   & \baseline 
            \underline{32.7}  & \baseline \underline{33.7}     & \baseline \underline{34.0}    & \baseline \underline{34.1}    & \baseline \underline{34.5}    & \baseline \underline{34.2} \\
            & w/ BT~\cite{tyen2023llms}  
            & Noise Position     
            & 31.8 & 26.0    & 22.7    & 22.6 & 23.8     & 26.3    & 22.7    &  22.9    & 24.0 \\
            & w/ CC~\cite{chia2023contrastive}  
            & Clean Demo      
            & 37.8 &  33.8   & 32.7    & 32.0 & 32.8     &  31.3   & 33.0    & 29.9    & 31.4 \\
            & w/ CD-CoT (ours)     
            & Clean Demo     
            & \ours \textbf{42.7}     & \ours \textbf{44.7}   & \ours \textbf{42.7}   & \ours \textbf{44.0}   & \ours \textbf{43.8}     & \ours  \textbf{42.6}  & \ours 
            \textbf{41.3}    & \ours 
            \textbf{42.7}   & \ours \textbf{42.2}    \\ 
         \midrule
        \multirow{5}{*}{\makecell[c]{Symbolic \\ Longer}}
        &  \makecell[c]{Base} 
        & - & \gray   9.2     & \gray   6.3  & \gray  7.2  & \gray  6.0   & \gray  6.5   & \gray  7.0 & \gray  6.8  & \gray  6.0   & \gray  6.6 \\
            & w/ SCO~\cite{huang2023large}  
            & Ground Truth     
            & \baseline \textbf{18.7}  & \baseline \textbf{12.1}  & \baseline \underline{10.5}  & \baseline \underline{11.3}  & \baseline \underline{11.3}     & \baseline \textbf{15.2}   & \baseline \textbf{15.9}   & \baseline \underline{9.8}   & \baseline \textbf{13.6} \\
            &  w/ BT~\cite{tyen2023llms}  
            & Noise Position     
            & 7.2   & 3.4   & 3.5   & 2.5   & 3.1     & 3.8   & 3.6   & 3.6    & 3.7 \\
            & w/ CC~\cite{chia2023contrastive}  
            & Clean Demo    
            & 9.4   & 9.8   & 7.9   & 7.9   & 8.5    & 8.5  & 7.4 & 6.5 & 7.5  \\
            & w/ CD-CoT (ours)    
            & Clean Demo      
            & \ours \underline{12.3}      & \ours \underline{12.0}   & \ours \textbf{12.0}   & \ours \textbf{13.0}  & \ours \textbf{12.3}     & \ours \underline{12.3} & \ours \underline{10.0}   & \ours \textbf{11.0}  & \ours \underline{11.1}    \\ 
         \midrule
        \multirow{5}{*}{Commonsense}     &  \makecell[c]{Base}         & -                    & \gray   45.7           & \gray  44.3              & \gray  42.3              & \gray  41.4              & \gray  42.7              & \gray  36.7              & \gray  33.4              & \gray  28.3 & \gray  32.8 \\
            & w/ SCO~\cite{huang2023large}  
            & Ground Truth      
            & \baseline \textbf{63.5}   & \baseline \textbf{60.1}
            & \baseline \textbf{56.1}    & \baseline \textbf{60.3}   & \baseline \textbf{58.8}   & \baseline  \textbf{56.2}  & \baseline \textbf{58.5}   & \baseline \textbf{57.9}   & \baseline \textbf{57.5} \\
            & w/ BT~\cite{tyen2023llms}     
            & Noise Position  
            & 47.7    & 23.5    & 28.3   & 32.5    & 28.1     & 11.6    & 11.0   & 15.8   & 12.8 \\
            & w/ CC~\cite{chia2023contrastive}  
            & Clean Demo   
            &  48.3 & 45.7 & 43.6  & 44.0 & 44.4  & 42.1 & 40.8 & 40.5 & 41.1 \\
            & w/ CD-CoT (ours) 
            & Clean Demo         
            & \ours \underline{49.0}      & \ours \underline{50.3}    & \ours \underline{54.7}   & \ours \underline{50.3}    & \ours \underline{51.8}     & \ours  \underline{51.0}  & \ours \underline{49.7}    & \ours \underline{49.7}    & \ours \underline{50.1}    \\ 
        \bottomrule
    \end{tabular}
    }
    \vspace{-4pt}
    \caption{Performance of denoising methods that require additional information for supervision.}
    \label{tab:proposed-method performance}
\end{table*}

\begin{table}[t!]
    \centering
    \begin{minipage}{0.56\linewidth}
        \centering
        \renewcommand\arraystretch{1.0}
        \fontsize{7}{7}\selectfont
        \setlength\tabcolsep{3pt}
        \vspace{-4pt}
        \begin{tabular}{cccc|ccc|ccc}
            \toprule
            \multicolumn{4}{c|}{Hyper-parameters} & 
            \multicolumn{3}{c|}{$\operatorname{Acc}(\mathcal{M}, \mathcal{Q}, \mathcal{P}_{\text{irrelevant}})$} &
            \multicolumn{3}{c}{$\operatorname{Acc}(\mathcal{M}, \mathcal{Q}, \mathcal{P}_{\text{inaccurate}})$} \\
            $N$ & $M$ & $D$ & $C$ &
            Base-9 & Sym.(E) & Com. &
            Base-9 & Sym.(E) & Com. \\
            \midrule
            5 & 1 & 5 & Y &
            57.7 & 38.7 & 55.3 &
            53.3 & 39.7 & 51.0 \\
            5 & 1 & 5 & N & 
            54.7 & 32.7 & 53.7 &
            47.0 & 32.3 & \textbf{55.7}\\
            5 & 2 & 2+3 & Y & 
            \textbf{60.7} & \textbf{42.7} & 54.7 &
            \textbf{58.7} & \underline{41.3} & 49.7 \\
            5 & 2 & 2+3 & N & 
            56.7 & 33.0 & 54.7 &
            49.7 & 32.0 & \underline{53.0} \\
            5 & 3 & 1+2+2 & Y & 
            \textbf{60.7} & 38.7 & 53.3 &
            \underline{58.0} & \textbf{43.3} & 49.0\\
            5 & 3 & 1+2+2 & N & 
            56.0 & 33.3 & \underline{55.7} &
            48.7 & 32.0 & 52.3\\
            5 & 5 & 1 & Y &
            \underline{59.3} & \underline{39.7} & \underline{55.7} &
            \underline{58.0} & 39.0 & 48.7\\
            5 & 5 & 1 & N &
            55.3 & 35.7 & \textbf{55.9} &
            48.7 & 33.3 & 50.7\\
            \bottomrule
        \end{tabular}
        \vspace{5pt}
        \caption{Comparison of accuracy on medium-level tasks.}
        \label{tab:different_instantiation_performance}
        
        \vspace{4pt}
        
        \setlength\tabcolsep{3pt}
        \begin{tabular}{cccc|ccc|ccc}
            \toprule
            \multicolumn{4}{c|}{Hyper-parameters} & 
            \multicolumn{3}{c|}{\#Tokens in step-3 (irr.)} &
            \multicolumn{3}{c}{\#Tokens in step-3 (ina.)} \\
            $N$ & $M$ & $D$ & $C$ &
            Base-9 & Sym.(E) & Com. &
            Base-9 & Sym.(E) & Com. \\
            \midrule
            5 & 1 & 5 & Y &
            1440 & 3162 & 788 &
            1428 & 3170 & 798 \\
            5 & 1 & 5 & N & 
            1301 & 2685 & 660 &
            1295 & 2732 & 667 \\
            5 & 2 & 2+3 & Y & 
            2175 & 4934 & 1269 &
            2156 & 4989 & 1311 \\
            5 & 2 & 2+3 & N & 
            1864 & 4044 & 1005 &
            1842 & 4087 & 1039 \\
            5 & 3 & 1+2+2 & Y & 
            2902 & 6704 & 1772 &
            2878 & 6785 & 1821 \\
            5 & 3 & 1+2+2 & N & 
            2416 & 5360 & 1372 &
            2393 & 5443 & 1420 \\
            5 & 5 & 1 & Y &
            4368 & 10340 & 2764 &
            4339 & 10514 & 2845 \\
            5 & 5 & 1 & N &
            3535 & 8099 & 2088 &
            3506 & 8303 & 2163 \\
            \bottomrule
        \end{tabular}
        \vspace{5pt}
        \caption{Comparison of \#tokens on medium-level tasks.}
        \label{tab:different_instantiation_efficiency}

        \vspace{2pt}
        \setlength\tabcolsep{2pt}
        \begin{tabular}{c|c|ccc|ccc}
            \toprule
            \multirow{2}{*}{Model} &
            \multirow{2}{*}{Method} & 
            \multicolumn{3}{c|}{$\operatorname{Acc}(\mathcal{M}, \mathcal{Q}, \mathcal{P}_{\text{irrelevant}})$} &
            \multicolumn{3}{c}{$\operatorname{Acc}(\mathcal{M}, \mathcal{Q}, \mathcal{P}_{\text{inaccurate}})$} \\
            & &
            Base-9 & Sym.(E) & Com. &
            Base-9 & Sym.(E) & Com. \\
            \midrule
            \multirow{5}{*}{\makecell[c]{GPT-3.5-turbo}} &
            \makecell[c]{Base} & \gray 30.3 & \gray 25.1 & \gray 42.3 & \gray 10.1 & \gray 26.1 & \gray 33.4 \\
            & \makecell[c]{SC} & 36.6 & 28.3 & \underline{45.0} & 17.3 & 30.7 & \underline{44.7}
            \\
            & \makecell[c]{BT} & 34.2 & 22.7 & 28.3 & 18.4 & 22.7 & 11.0
            \\
            & \makecell[c]{CC} & \underline{44.3} & \underline{32.7} & 43.6 & \underline{31.7} & \underline{33.0} & 40.8
            \\
            & \makecell[c]{CD-CoT} & \ours \textbf{60.7} & \ours \textbf{42.7} & \ours \textbf{54.7} & \ours \textbf{58.7} & \ours \textbf{41.3} & \ours \textbf{49.7}
            \\
            \midrule
             \multirow{5}{*}{\makecell[c]{Gemini-Pro}} &
            \makecell[c]{Base} & \gray 72.3 & \gray 38.9 & \gray 53.2 & \gray 21.2 & \gray 36.7 & \gray 33.5 \\
            & \makecell[c]{SC} & 80.3 & \underline{43.3} & \underline{60.0} & 32.3 & \underline{45.0} & 42.7
            \\
            & \makecell[c]{BT} & \underline{82.4} & 29.3 & 37.8 & 26.7 & 28.7 & 33.3
            \\
            & \makecell[c]{CC} & 67.5 & 37.3 & 50.2 & \underline{43.6} & 35.0 & \underline{45.6}
            \\
            & \makecell[c]{CD-CoT} & \ours \textbf{92.7} & \ours \textbf{49.3} & \ours \textbf{57.7} & \ours \textbf{76.7} & \ours \textbf{53.3} & \ours \textbf{55.7}
            \\
            \midrule
             \multirow{5}{*}{\makecell[c]{LLaMA2-70B}} &
            \makecell[c]{Base} & \gray 2.8 & \gray 8.7  & \gray 41.9 & \gray 2.7 & \gray 9.1 & \gray \underline{40.2} \\
            & \makecell[c]{SC} & \textbf{5.0} & 10.3 & \textbf{46.7} & \textbf{3.0} & 9.7 & \textbf{46.0}
            \\
            & \makecell[c]{BT} & 1.4 & \underline{11.2} & 36.1 & 0.9 & \underline{12.5} & 36.2
            \\
            & \makecell[c]{CC} & 1.1 & \textbf{16.3} & 29.9 & \underline{2.8} & \textbf{14.0} & 28.3
            \\
            & \makecell[c]{CD-CoT} & \ours \underline{4.0} & \ours 9.7 & \ours \underline{39.3} & \ours 2.7 & \ours 9.7 & \ours 39.7
            \\
            \midrule
             \multirow{5}{*}{\makecell[c]{Mixtral-8x7B}} &
            \makecell[c]{Base} & \gray 16.3 & \gray 17.9 & \gray 34.9 & \gray 3.7 & \gray 15.1 & \gray 31.1 \\
            & \makecell[c]{SC} & \underline{20.0} & \underline{21.7} & \underline{37.0} & 2.7 & 18.0 & \underline{37.7}
            \\
            & \makecell[c]{BT} & 4.1 & 9.7 & 6.2 & 2.4 & 10.1 & 10.5
            \\
            & \makecell[c]{CC} & \textbf{24.4} & 18.5 & 36.0 & \textbf{12.5} & \underline{18.3} & 35.7
            \\
            & \makecell[c]{CD-CoT} & \ours 8.7 & \ours \textbf{22.7} & \ours \textbf{40.3} & \ours \underline{4.7} & \ours \textbf{21.3} & \ours \textbf{40.3}
            \\
            \bottomrule
        \end{tabular}
        \vspace{5pt}
        \caption{Comparing methods with different LLMs.}
        \label{tab:methods_on_different_llms}
        \vspace{-5pt}
    \end{minipage}
    \begin{minipage}{0.40\linewidth}
        \centering
        \renewcommand\arraystretch{0.8}
        \fontsize{7}{7}\selectfont
        \begin{tabular}{p{5.5cm}}
        \toprule
        \multicolumn{1}{c}{Original Noisy Rationale} \\
        \midrule
        \gray 
        { 
        In base-9, the digits are "012345678". \noisy{\uline{9 + 8 = 17.}} We have 6 + 7 = 13 in base-10. Since we're in base-9, that exceeds the maximum value of 8 for a single digit. 13 mod 9 = 4, so the digit is 4 and the carry is 1. We have 8 + 5 + 1 = 14 in base-10. \noisy{\uline{14 + 8 = 22.}} 14 mod 9 = 5, so the digit is 5 and the carry is 1. A leading digit is 1. So the answer is 154. Answer: 154.}
        \end{tabular}
    
        \vspace{-3pt}
    
        \begin{tabular}{p{0.3cm}|p{4.3cm}}
            \midrule
            \multicolumn{1}{c|}{Method} & \multicolumn{1}{c}{Denoised Rationale} \\
            \midrule
            \multirow{5}{*}{\makecell[c]{SM}} & 
            In base-9, the dkgits are "012345678". \noisy{\uline{9 + 8 = 17.}} We5hav\^ 6r+ 7l=J13 in base-10. Since we're in baze-9, ithFt exceeis the maCimum valur of 8 for a siLgle]digit. x0c3 mod 9 = \^, soWghe digix is 4 anj the carry is 1. We hOve 8 + 5i+ z2= 14 Vn baAe 10. \noisy{\uline{14 d 8 = 2p.}} 14 mod 9 = 5, so Xhe diGit*iU 5 and the carry is 1Q A leading x0cigit is 1. So the answer is 154. Asswer: 154. \\
            \midrule
            \multirow{5}{*}{\makecell[c]{SD}} & 
            In base-9, the digits are "012345678". \noisy{\uline{9 + 8 = 17.}} We have 6 + 7 = 13 in base-10. Since we're in base-9, 13 exceeds the maximum value of 8 for a single digit. 13 mod 9 = 4, so the digit is 4 and the carry is 1. We have 8 + 5 + 1 = 14 in base 10. 14 mod 9 = 8. 14 div 9 = 1, so the digit is 8 and the carry is 1. A leading digit is 1. So the answer is 154. Answer: 154.
     \\
            \midrule
            \multirow{10}{*}{\makecell[c]{SP}} & 
            In base-9, the digits range from 0 to 8. Let's add 86 and 57 step by step.
            First, we add the rightmost digits, which are 6 and 7. In base-10, 6 + 7 equals 13. However, since we are in base-9, the maximum value for a single digit is 8. Therefore, we need to find the remainder when 13 is divided by 9. The remainder is 4, so the digit in the units place is 4 and we have a carry of 1.
            Next, we add the next digits, which are 8 and 5, along with the carry of 1. In base-10, 8 + 5 + 1 equals 14. Again, we need to find the remainder when 14 is divided by 9. The remainder is 5, so the digit in the tens place is 5 and we have a carry of 1. Finally, we have a leading digit of 1. So the final answer in base-9 is 154.
            Answer: 154. \\
            \midrule
            \multirow{5}{*}{\makecell[c]{Ours}} & \ours 
            In base-9, the digits are "012345678". We have 6 + 7 = 13 in base-10. Since we're in base-9, that exceeds the maximum value of 8 for a single digit. 13 mod 9 = 4, so the digit is 4 and the carry is 1. We have 8 + 5 + 1 = 14 in base 10. 14 mod 9 = 5, so the digit is 5 and the carry is 1. A leading digit is 1. So the answer is 154. Answer: 154. \\
            \bottomrule
        \end{tabular}
        \vspace{+1pt}
        \caption{Comparison of rephrased rationales by different reasoning methods.}
        \label{tab:comparison-rephrased-rationales}
        \vspace{-1pt}
    \end{minipage}
\vspace{-4pt}
\end{table}

\textbf{Ablation study of varying hyper-parameters.}
By manipulating the values of $N$, $M$, $D$, and $C$, we generate diverse algorithm instances. 
Here, $D$ denotes the reasoning times allocated to the $M$ inputs, while $C$ signifies whether the clean example is used in step 3.
As demonstrated in Tab.~\ref{tab:different_instantiation_performance}, the clean example utilized by CD-CoT during the reasoning process plays a pivotal role. 
The omission of this clean example results in an average decrease of $3.3\%$ and $4.5\%$ in accuracy under irrelevant noise and inaccurate noise, respectively. 
Besides, the accuracy exhibits subtle variations when employing different algorithm instances, with the highest average accuracy observed at $51.3\%$ and the lowest average accuracy at $49.3\%$. 
Further, Tab.~\ref{tab:different_instantiation_efficiency} presents the average number of tokens used in reasoning.
We set $M \! = \! 2$ to strike a balance. Please refer to Appendix~\ref{app: hyper-parameter pptimization} for detailed hyper-parameter selection. 

\textbf{Ablation study with different LLMs.}
We examine the generalization of CD-CoT across different LLMs. 
As shown in Tab.~\ref{tab:methods_on_different_llms}, with comparably more powerful LLMs, \textit{e.g.}, GPT-3.5-turbo and Gemini-Pro, CD-CoT demonstrates notable improvements in average accuracy. 
It respectively achieves increases of $23.4\%$ and $21.6\%$ in accuracy compared to base models and surpasses all the baselines.

\textbf{Case Study.}
We illustrate the denoising effects of various robust methods using Math Base-9 as an example.
The results in Tab.~\ref{tab:comparison-rephrased-rationales} indicate that the introduction of random characters by SM disrupts the logic of the rationale.
SD fails to eliminate all the noise while recovering the input content, and SP alters the original rationale's reasoning process even when noise removal is successful.
In contrast, CD-CoT significantly removes noisy thoughts and ensures format alignment with the original rationale.
More denoising examples and an entire case study are in Appendix~\ref{app:exp-qualitative} and \ref{app:case}, respectively.

\vspace{-10pt}
\section{Conclusion}
\vspace{-6pt}

In this work, we investigate the under-explored problem of noisy rationales in LLMs. We introduce the NoRa dataset, which tests LLMs against irrelevant or inaccurate thoughts in question-answer scenarios. 
Our findings show LLMs' vulnerability to noisy rationales is inadequately mitigated by existing robust methods. 
We thereby design the CD-CoT method to enhance the robustness via contrastive denoising.
The extension advocates for advancing LLMs by strategies, \textit{e.g.}, external knowledge bases with a retrieval-augmented framework, robust inductive reasoning to extract rules from noisy examples, and multi-modal data integration to enhance the robustness of LLM reasoning.


\clearpage

\section*{Acknowledgements}

ZKZ, RT, JNZ, and BH were supported by Guangdong Basic and Applied Basic Research Foundation Nos. 2022A1515011652 and 2024A1515012399, NSFC General Program No. 62376235, HKBU Faculty Niche Research Areas No. RC-FNRA-IG/22-23/SCI/04, and HKBU CSD Departmental Incentive Scheme.
YWL and ZMW were supported by National Natural Science Foundation of China under Grants 62271357, Natural Science Foundation of Hubei Province under Grants 2023BAB072, and Fundamental Research Funds for the Central Universities under Grants 2042023kf0134.
The authors especially thank Jiangchao Yao, Xuan Li, and Xiao Feng for the constructive discussions and insightful suggestions that helped improve this work.

\bibliography{draft}
\bibliographystyle{plainnat}

\clearpage


\clearpage
\onecolumn
\appendix 
\etocdepthtag.toc{mtappendix}
\etocsettagdepth{mtchapter}{none}
\etocsettagdepth{mtappendix}{subsection}
\renewcommand{\contentsname}{Appendix}
\tableofcontents 
\clearpage

\section{Further Discussion}
\label{appendix: further discussion}

\textbf{Broader impact.}
This paper presents work that aims to advance the field of trustworthy machine learning and large language models. We do not find any negative societal consequences of our work.
This paper does not raise any ethical concerns. 
This study does not involve human subjects, practices, data set releases, potentially harmful insights, methodologies, applications, potential conflicts of interest and sponsorship, discrimination/bias/fairness concerns, privacy and security issues, legal compliance, or research integrity issues.

\textbf{Limitations.}
The proposed CD-CoT method is currently dependent on human-annotated, clean rationales. Future enhancements could include developing a self-supervised variant that does not rely on such examples. Exploring strategies like using contrasting noisy examples or incorporating an external knowledge base, possibly through a retrieval-augmented denoising framework, may offer significant advances in automation and robustness of reasoning.

\textbf{Extensions.}
CoT and its variants have predominantly focused on deductive reasoning, leaving inductive reasoning largely unexplored. 
Investigating the ability of LLMs to extract rules from noisy examples is a compelling area. 
Additionally, theoretical analysis of noisy ICL can offer deeper insights into the Noisy-R problem. 
Expanding the NoRa dataset to include multi-modal scenarios, particularly visual data, is also crucial for a more comprehensive understanding of the robustness of foundation models.
Further extensions include knowledge-enhanced denoising~\citep{zhang2023adaprop, liang2023knowledge, liang2024survey, zhou2024less}, generalization to out-of-distribution noisy scenarios~\citep{chen2022ciga, chen2023understanding}, and training to fundamentally improve the robustness of language models~\citep{tang2024fusionllm, tang2023fusionai}.

\section{Related Work}
\label{appendix: related work}

In this section, we provide a detailed literature review as an extension of the preliminaries (Sec.~\ref{sec: related-work}), including
in-context learning (Appendix~\ref{appendix: related work-ICL}),
self-correction methods (Appendix~\ref{appendix: related work-self-correction}),
self-consistency methods (Appendix~\ref{appendix: related work-self-consistency}),
and external supervision (Appendix~\ref{appendix: related work-external}).
We further discuss the relation between our work and literature in Appendix~\ref{appendix: related work-relation}.
We also provide Fig.~\ref{fig:problem-full} to better understand different reasoning settings.

\begin{figure}[h]
    \centering
    \includegraphics[width=\textwidth]{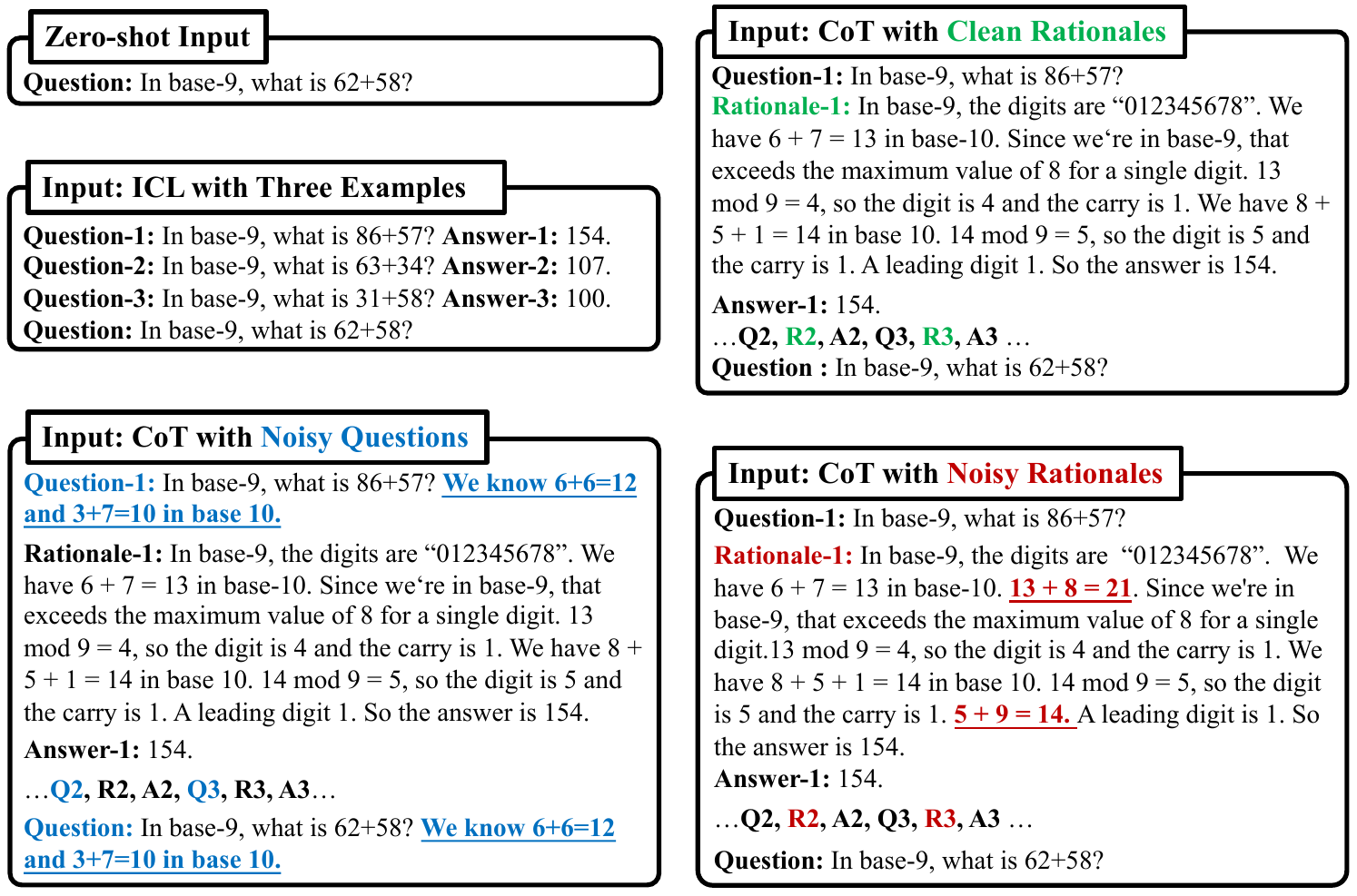}
    \vspace{-6px}
    \caption{
    Illustrations of different reasoning settings.
    }
\label{fig:problem-full}
\vspace{-5pt}
\end{figure}

\subsection{In-context Learning}
\label{appendix: related work-ICL}

With the scaling of model size and corpus size~\cite{radford2019language, brown2020language, chowdhery2023palm, zhao2023survey}, large language models (LLMs) have demonstrated remarkable performance across a variety of tasks through in-context learning (ICL), that is, learning from a few input-output examples in the context. As a training-free framework, ICL can save on expensive training costs and be easily adapted to diverse new tasks~\citep{wu2023reasoning,achiam2023gpt}.

However, standard ICL faces challenges when addressing \textit{intricate} reasoning problems. \citet{wei2022chain} proposes chain-of-thought prompting (CoT), a methodology that explicitly guides LLMs in generating sequential reasoning steps to enhance their performance on intricate reasoning tasks. In particular, CoT replaces the input-output exemplars in ICL with triplets in the form of \textit{<input, rationale, output>}, thereby enabling the model to learn explicit reasoning processes.

In the literature, various versions of chain-of-thought prompting have been proposed. To alleviate the human effort required for constructing triplet exemplars, \citet{kojima2022large} proposes zero-shot CoT, a method that explicitly encourages models to generate a chain of reasoning first and then derive the final answer by employing natural language prompts like ``Let's think step by step''. \citet{wang2023self} shows that self-consistency sampling can improve reasoning accuracy by sampling diverse reasoning paths and then taking the majority vote. \citet{zelikman2022star} proposes STaR, which leverages a small amount of human-annotated rationale data, along with a large dataset of question-label pairs without rationales. This approach iteratively generates additional rationales and enhances the model's ability to generate reasoning steps by fine-tuning.

Least-to-most prompting (LtM)~\cite{zhou2022least} enhances the reasoning capability of models by decomposing a given problem into sub-problems. In particular, LtM initially breaks down the task into a series of consecutive sub-problems and subsequently answers them one by one. During the process of responding, the answer to the preceding sub-problem is incorporated into the prompt for the succeeding one. 
Tree of Thoughts (ToT)~\cite{yao2024tree} extends LtM by exploring multiple reasoning possibilities at each step. Specifically, ToT first decomposes a given problem into several reasoning steps and generates multiple answers for each step, ultimately constructing a tree structure. Subsequently, ToT employs BFS or DFS to traverse the tree, yielding the final rationale and answer.

\textbf{Vulnerabilities of ICL.} Despite being promising, some works point out the brittleness and over-sensitivity of ICL. \citet{liu2021makes, perez2021true, zhang2022active} demonstrate that ICL performance depends heavily on the choice of exemplars. Meanwhile, \citet{zhao2021calibrate, lu2021fantastically} observe that the arrangement order of in-context examples is also crucial to the ICL performance, potentially shifting results from near state-of-the-art to a random guessing. 
Moreover, \citet{ye2022unreliability, gan2023sensitivity, zheng2023noisy, zhang2023certified} reveal LLMs' deficiencies when handling subtle perturbations within the prompts, even when such perturbations do not alter any semantic meaning. \citet{yu2024evaluating} explores the robustness of retrieval-augmented in-context learning (ICL) against demonstration attacks and test sample attacks. It focuses on perturbing the example questions (i.e., noisy questions) or labels, while our work focuses on the rationales of the examples (i.e., noisy rationales). 
In addition, previous work on safety~\citep{li2023deepinception, zhou2023mcgra}
and data noise~\citep{han2018co, liu2021probabilistic, gong2021instance, zhou2023combating} might also inspire the robust problems in ICL.
Overall, the enhancement of reasoning performance brought about by ICL is inherently unstable and susceptible to example selection, example ordering, and prompt perturbations. These observations underscore the importance the robustness of other aspects.

The aforementioned efforts primarily revolve around the idealized ICL, which utilizes high-quality prompts free from any noise or interference. Conversely, a parallel line of research has emerged, exploring the impact of noisy prompts on the performance of LLMs. 
\citet{min2022rethinking} examines the impact of in-context examples on ICL. This work observes that incorporating out-of-distribution input texts significantly diminishes the performance of standard question answering. 
\citet{wei2023larger} devises two different set-ups of ICL: ICL with flipped labels and ICL with semantically unrelated labels. Their investigation reveals that LLMs possess the capability to override semantic priors when confronted with in-context exemplars that contradict these priors. This phenomenon also suggests that larger models may be more susceptible to the influence of the noise present in examples.
\citet{shi2023large} examines the impact of irrelevant context on LLMs, and the results suggest that the inclusion of irrelevant information can significantly impair the performance of the models.
These studies further illuminate the fragility and instability inherent in the reasoning capabilities of LLMs.

However, the previous works mainly consider the noisy questions/answers in standard ICL. In contrast, we move to the under-explored noisy rationales problem in the context of CoT, as illustrated in Fig.~\ref{fig:problem-full}.



Numerous strategies have been proposed to address the vulnerabilities of LLM reasoning during in-context learning. These approaches can be categorized into \textit{self-correction} and \textit{self-consistency}, which are introduced as follows.

\vspace{-10pt}
\subsection{Self-correction} 
\label{appendix: related work-self-correction}
\vspace{-4pt}

Self-correction emerges as a promising direction to enhance LLM reasoning, where LLMs attempt to correct their initial responses based on feedback. One popular line of research involves utilizing manual labor or external systems to evaluate and refine models. However, this can be costly due to the manual labor involved. Another line of research leverages the LLM's inherent capabilities to correct its initial responses without the crutch of external feedback. This methodology is a promising way to make LLM-based solutions practical and deployable~\citep{pan2023automatically}.

\textbf{Self-correction with internal feedback.} In this line of research, the LLM is required to correct response trajectories based solely on its inherent capabilities. 
\citet{huang2022selfimprove} first demonstrates the self-improvement potential of LLMs by utilizing a pre-trained LLM to generate rationale-augmented answers for unlabeled questions using CoT and majority voting and then fine-tuning the LLM using those self-generated labels, eventually improving the general reasoning ability.

When addressing problems, people typically engage in trial and error, coupled with reflective thinking, to discern the correct solutions.
Inspired by this, \citet{madaan2023self} proposes Self-refine, a simple and direct approach to improving LLM's output. In this approach, an LLM is used to create an initial output. Then, the model provides feedback on its own output in multiple dimensions. Based on this feedback, the model refines its initial output and repeats this process until it reaches a specified limit or the LLM determines that no further adjustments are necessary. 

Encouraged by the augmented efficacy achieved through self-feedback mechanisms, \citet{ye2023selfee} releases SelFee, a new instruction-following language model that generates self-feedback on its response and self-revises based on the feedback. The development of SelFee involves the fine-tuning of LLaMA by utilizing training instances generated by ChatGPT. 

In addition, \citet{gero2023self} introduces Self-verification, suggesting that by asking LLMs to provide provenance for their own outputs and conducting checks, it is possible to alleviate LLMs' issues regarding accuracy and interpretability in crucial domains such as healthcare.
On the other hand, \citet{xi2023self} focuses on the simplicity and comprehensibility of the given questions, proposing Self-polish (SP). This method instructs the LLM to iteratively refine the test question by removing irrelevant information and rearranging the logical structure, thereby improving the reasoning performance. 

While the self-correction methodologies based on internal feedback appear promising, \cite{huang2023large} categorizes such self-correction methods as \textit{intrinsic self-correction} (ISC) and demonstrates that the model’s performance drops on all benchmarks after using ISC. This work points out that LLMs struggle to self-correct their responses without external feedback, and the corrected responses often exhibit inferior quality compared to their initial counterparts. \citet{saparov2022language} reveals that while models are able to produce valid reasoning steps with high probability when dealing with proof problems, they struggle with proof planning. In other words, when models occasionally generate incorrect proof steps, they are not able to return to the correct path.

Building upon this observation, \citet{tyen2023llms} further decomposes the self-correction process into two core components: mistake finding
and output correction. This work demonstrates that current state-of-the-art LLMs cannot find mistakes reliably, even in the most simple and unambiguous cases, and suggests this is a main contributing factor to LLMs’ inability to self-correct reasoning errors.

\textbf{Self-correction with external feedback.}
External feedback offers a valuable external perspective, proving particularly advantageous in pinpointing errors that the large language model may not inherently recognize~\citep{pan2023automatically}. The sources of external feedback can be categorized as 1) human feedback~\citep{scheurer2023training, shinn2023reflexion, kim2023language, tyen2023llms}, 2) external tools~\cite{jiang2023selfevolve, yu2023improving, li2023self}, and 3) other models~\citep{paul2023refiner, akyurek2023rl4f}. 

\citet{scheurer2023training} proposes Imitation Learning from Language Feedback (ILF), an approach leveraging informative human feedback that involves conditioning the model on input, initial output, and feedback; selecting the most feedback-incorporated refinement; and fine-tuning the model to maximize the chosen refinement's likelihood given the input. 

Similarly, \citet{shinn2023reflexion} introduces Reflexion, which fortifies language agents by relying on linguistic feedback generated by themselves rather than weight updates, resulting in noteworthy enhancements compared to a baseline agent across a spectrum of tasks. \citet{kim2023language} demonstrates the effectiveness of self-critique on computer tasks by proposing Recursively Criticizes and Improves (RCI). This method utilizes a pre-trained LLM agent to execute computer tasks, guiding it using a simple prompting scheme where the agent recursively criticizes and improves its output. 

Although Reflexion~\cite{shinn2023reflexion} and RCI~\cite{kim2023language} do not explicitly utilize human feedback, \citet{huang2023large} points out that these approaches employ the labels of test questions to determine when to halt the self-correction loop. Consequently, they are implicitly guided by human feedback. ~\citet{huang2023large} categorizes these methods as \textit{self-correction with oracle feedback} (SCO).

Due to the difficulty of LLMs relying solely on their inherent capabilities to correct responses for reasoning tasks~\cite{huang2023large}, \citet{tyen2023llms} decomposes the self-correction process and observes challenges in accurately identifying error locations for LLMs. Consequently, they propose Backtracking~(BT), which guides the correction process of models by providing the initial error location in the problem.

Furthermore, the incorporation of external tools can enhance the model's self-correction performance. For instance, to improve the code generation capabilities of the model, \citet{jiang2023selfevolve} utilizes error messages generated by an external interpreter as feedback, requiring LLM to debug the initially generated code acting as a professional programmer. 
Introducing external sources of knowledge, such as Wikipedia, can also significantly enhance the accuracy of the model in reasoning tasks~\cite{yu2023improving, li2023self}. 

In addition to leveraging external tools, certain efforts revolve around the utilization of external models to generate feedback. \citet{paul2023refiner} introduces a critic model that generates structured feedback for the responses produced by the LLM during the self-correction process. The feedback is then incorporated into the input, allowing the LLM to refine its own output iteratively. Likewise, \citet{akyurek2023rl4f} presents a collaborative framework involving multiple agents, where the critique generator is trained with the aim of maximizing the performance of the reasoning LLM.

\vspace{-10pt}
\subsection{Self-consistency}
\label{appendix: related work-self-consistency}
\vspace{-4pt}

In our study, self-consistency represents a category of robustness methods, typically involving the generation of multiple diverse answers to a given question and subsequently converging them. This approach serves to mitigate the impact of noise inherent in the question. 
Among existing methods, two primary approaches are employed to generate distinct answers: (1) generating multiple samples via randomized smoothing on input, followed by separate and independent reasoning processess~\citep{robey2023smoothllm, zhang2023certified, zeng2023certified}, and (2) leveraging the inherent randomness in the decoding process of the LLM to sample multiple distinct reasoning paths for the same input~\citep{wang2023self}.

\textbf{Sampling diverse outputs.} 
Relying on the idea that an intricate reasoning problem usually allows for various distinct thinking ways, each leading to the unique correct solution,~\citet{wang2023self} first explores a varied array of reasoning paths and subsequently identifies the most consistent answer by majority vote. 
This approach boosts the model's performance across various reasoning tasks.

\textbf{Randomized smoothing over inputs.}\label{smooth} 
Randomized smoothing serves as a straightforward yet effective mechanism commonly employed to defend against adversarial attacks by introducing multiple perturbations to the input to smooth potential noise~\cite{cohen2019certified, zeng2023certified, robey2023smoothllm, zhang2023certified}. 

To address the vulnerability of LLMs to jailbreaking attacks, ~\citet{robey2023smoothllm} proposes Smooth LLM~(SM), which is based on randomized smoothing. Utilizing the discovery that prompts generated adversarially exhibit fragility to character-level modifications, this method initiates by randomly introducing perturbations to multiple copies of a given input prompt. Subsequently, it consolidates the associated predictions to identify adversarial inputs.

However, as pointed out in~\cite{zhang2023certified}, randomized smoothing relies on introducing additional noise to input, and its certification success hinges on the model's performance with corrupted data. To address the issue, ~\citet{zhang2023certified} proposes Self-denoising~(SD), which denoises the corrupted inputs with LLMs using the multitasking nature of LLMs.

\vspace{-10pt}
\subsection{External Supervision}
\label{appendix: related work-external}
\vspace{-4pt}

In addition to self-correction and self-consistency methods, there is a line of methods that require external supervision or information.
In our study, we compare diverse denoising methods that require varying degrees of external supervision, including (1) ground truth for test questions, (2) noise positions, and (3) additional clean demonstrations.

To provide a thorough comparison, We adopt Self-correction with Oracle feedback~(SCO)~\cite{huang2023large} as a strong baseline, which uses the labels of test questions to determine when to terminate the self-correction loop, as discussed in \ref{appendix: related work-self-correction}.
We also employ Backtracking~(BT)~\citep{tyen2023llms}, which necessitates the location of the initial noise in the prompt, to demonstrate the effectiveness of our method.
In addition, we utilize
Contrastive Chain-of-Thought~(CC)~\citep{chia2023contrastive} in experiments which needs similar external supervision as our proposed method: a clean demonstration. This approach enhances reasoning by presenting the LLM with all the clean and noisy examples concurrently. 

\vspace{-10pt}
\subsection{Relation with Literature}
\label{appendix: related work-relation}
\vspace{-4pt}

There are some implicit connections with existing papers. Here, we would like to further clarify our work's major differences and unique contributions, compared with ~\cite{yao2024tree}, \cite{min2022rethinking} and~\cite{chia2023contrastive}.

As discussed in~\ref{appendix: related work-ICL},~\citet{yao2024tree} proposes Tree-of-Thought~(ToT) for deliberate reasoning tasks. ToT decomposes a given problem into several reasoning steps and generates multiple answers for each step, ultimately constructing a tree structure. Subsequently, ToT employs BFS or DFS to traverse the tree, yielding the final rationale and answer. Namely, ToT focuses on the rationale (thoughts) generation in the output space of an LLM, wherein the generated thoughts can be irrelevant or inaccurate. By contrast, Our work focuses on the input space with noisy prompting examples. From this view of input space and output space, ToT is orthogonal to our work.

Moreover, ToT focuses on problem-solving by reasoning, \textit{e.g.}, Game of 24 and Mini Crosswords, while our work investigates the robustness problem of noisy rationales. ToT does not consider imperfect scenarios with noisy inputs. This means that when ToT is prompted by examples with noisy rationales, the reasoning might also be degenerated. 
Hence, a further investigation on reasoning robustness with these up-to-date prompting techniques, possibly with a focus on the output space, can be conducted based on our work.

As for~\cite{min2022rethinking}, this work explores the question-answer mapping of prompting examples. They find that breaking the mapping has nearly no influence on LLM's in-context learning. 

However, it's important to note that~\cite{min2022rethinking} primarily focuses on the relationship between input and label, investigating the question-answer mapping of prompting examples. Their work does not delve into the role of rationales in the examples, which is the central focus of our research. In contrast, our experiments keep the input-label mapping intact and instead explore the impact of introducing noise into the rationales. This allows us to isolate the effect of rationale quality on the model's performance without altering the fundamental relationship between questions and answers.

Hence, when the form of prompting example shifts from (question, answer) to (question, rationale, answer), more aspects can be investigated. We conduct additional experiments to further explore the mapping among question, rationale, and answer in Sec.~\ref{sec: experiment}. 

\citet{chia2023contrastive} proposes Contrastive Chain-of-Thought (CC)~\cite{chia2023contrastive} to enhance LLM reasoning. In our work, we focus on improving reasoning in noisy settings and define the noisy rationale problem as “factually inaccurate or irrelevant reasoning steps paired with valid question-answer prompts.” Fig.~\ref{fig: problem} shows an example. Here, only one rationale is given in each demonstration, which can be potentially noisy but unknown to the model.
However, in CC’s setting, each ICL demonstration explicitly includes a clean rationale and a wrong rationale. Notably, this rationale is wrong instead of noisy as it induces the wrong answer. Therefore, CC’s setting and ours are totally different.
In addition, empirical results in Tab.~\ref{tab:proposed-method performance} show that our method CD-CoT significantly outperforms CC when given the same information.

The problem of noisy rationale is challenging and non-trivial. We believe that the investigation of noisy rationale is still in the budding stage. Several potential solutions can be progressively devised to address this problem. For example, accessing external databases is a promising solution. However, this retrieve augmented generation (RAG) is usually used to address the lack of knowledge of LLMs but not for data denoising. Therein, verifying the factual relevance and accuracy of a rationale is quite challenging: (1) there is a semantic gap between a rationale and the corresponding query instructions of a database; (2) how can the models realize the need for data verification and then find a suitable database for that are also cutting-edge research directions?

Hence, we believe that RAG for denoising is not a trivial solution. More research attention and effort should be paid to achieve this solution.
In this work, we verify that an LLM can achieve good denoising results when prompted with only one clean rationale and several noisy rationales. This solution, sharing a similar spirit to RAG that requires additional information, significantly mitigates the intrinsic vulnerabilities of LLMs.

\clearpage
\section{Benchmark}
\label{appendix: benchmark}

In this section, we provide a comprehensive overview of the NoRa benchmark, including 
the motivation for constructing NoRa (Appendix~\ref{appendix: benchmark-motivation}),
a further discussion on the origins and impact of noisy rationales in CoT demonstrations (Appendix~\ref{appendix: benchmark-noise-discussion}), 
real-world examples of noisy rationales from both model and human perspectives (Appendix~\ref{appendix: real-world examples}), 
the detailed process of generating noise for the NoRa dataset (Appendix~\ref{appendix: benchmark-details}), a comprehensive datasheet for the NoRa dataset (Appendix~\ref{appendix:datasheet}),
and full examples of the NoRa dataset across various reasoning tasks (Appendix~\ref{appendix: benchmark-examples}).

\subsection{Motivation}
\label{appendix: benchmark-motivation}

The robustness of LLMs against noisy inputs deserves more attention, particularly in the context of Chain-of-Thought (CoT) reasoning. While CoT methods have shown remarkable effectiveness, they face two fundamental challenges: the susceptibility to selected demonstrations and the complexity of generating high-quality demonstrations.

Recent studies have revealed that LLMs can be easily distracted by irrelevant or adversarial context, and their reasoning process is unstable under small prompt modifications. Furthermore, misleading information can cause LLMs to agree with factual errors. These vulnerabilities become particularly concerning when we consider that demonstrations in practice are not always clean or accurate.

The problem of noisy rationales in demonstrations arises from various practical scenarios:
\begin{enumerate}
\item \textbf{Human-generated noise}: Just as humans make mistakes in data annotation (which has motivated extensive research in label-noise learning), they can also provide imperfect demonstrations. Different users with varying expertise and background knowledge may contribute demonstrations of varying quality.
\item \textbf{System-generated noise}: Noise can emerge from automated systems, including dialogue systems and machine-generated data, especially in crowdsourced platforms.
\item \textbf{Demonstration selection challenges}: While incorporating human supervision in dealing with noisy rationales is possible, it is often expensive and impractical at scale. The sensitivity of LLMs to in-context learning examples makes this particularly challenging.
\end{enumerate}
Our empirical studies have revealed widespread vulnerability among prevailing LLMs to noisy rationales, with existing reasoning methods showing limited efficacy. Compared to clean rationales, we observe performance decreases of 0.4-19.8\% with irrelevant noise and more severe degradation of 2.0-66.8\% with inaccurate noise.

The NoRa benchmark addresses this crucial gap in current research. While most existing work assumes clean demonstrations for in-context learning, we argue that noisy rationales represent a practical and challenging problem that requires immediate attention. This benchmark serves two primary purposes:

\begin{enumerate}
\vspace{-5pt}

\item It provides a systematic way to evaluate LLM robustness against different types of noise in rationales.

\item It helps identify the limitations of current approaches and motivates the development of more robust reasoning strategies.
\end{enumerate}

We believe the NoRa dataset and our findings will contribute significantly to the development of more trustworthy foundation models capable of handling real-world, imperfect demonstrations.

\subsection{A Further Discussion on Noisy Rationales in CoT Demonstrations}
\label{appendix: benchmark-noise-discussion}

In this part, we further clarify the background and settings of the Noisy-R problem investigated in our work, \textit{i.e.}, noisy rationales in CoT demonstrations. Having established the presence of irrelevant and inaccurate thoughts as primary noise types, we now turn to their origins and impact within human-model interactions.

\textbf{Irrelevant and inaccurate thoughts can be generated by both the model and humans.}
On the one hand, irrelevant or inaccurate thoughts are likely to appear within the rationales of LLMs' answers. These rationales, drawn from historical dialogues between users and the model, can create a noisy context for new conversations. On the other hand, human demonstrators actively provide in-context demonstrations to enhance the reasoning capabilities of models. However, these demonstrations, whether crafted by humans or sourced from datasets, can also contain noisy rationales.

\textbf{Irrelevant thoughts - model perspective:}
LLMs tend to explain the concept of terms during reasoning.
For instance, if you ask GPT-4 to debug an error related to the 'concurrent' package, it tends to start by explaining what the 'concurrent' package is rather than immediately addressing the debug request. Such explanations, while informative, may introduce irrelevant noise into the conversation. 
Here are some key reasons for their emergence.

\begin{enumerate}
\vspace{-5pt}

\item \textbf{Cognitive bias of models.} When addressing complex queries, LLMs tend to include explanations for terms or concepts mentioned in the prompt that are unnecessary for solving the specific problem presented, as a kind of irrelevant thought. This behavior exhibits a form of cognitive bias where the model, unable to assess the inquirer's level of understanding, leads to explicating background information. This is comparable to a lecturer explaining the basics of a subject to a class without first assessing their students' existing knowledge, potentially leading to irrelevant elaborations.


\item \textbf{Lack of precise contextual understanding.}  Despite LLMs' proficiency in processing language and recognizing patterns, they cannot always fully grasp the precise context or specific requirements of a problem. This shortfall can result in the production of thoughts that, although seemingly related, do not directly contribute to resolving the question at hand. Their responses might align more with the broader theme of the discussion rather than the specific, critical details needed for a precise solution. 

\item \textbf{User query ambiguity of the dialogue mechanism.} Ambiguities in user queries can stem from the use of vague or multifaceted language, prompting the model to generate a wide array of responses. This situation is similar to a search engine returning a variety of results for a query that lacks specificity. The model, attempting to cover all potential meanings of the query, may produce responses that contain thoughts unrelated to the user's actual intent. For instance, if someone asks, ``What is force calculation?'', the model might provide information on both Newtonian mechanics and quantum mechanics. These responses, encompassing a broad range of topics, could influence the answers to subsequent physics questions, leading to a continuation of the ambiguity and further complicating the conversation.

\item \textbf{Progressive disclosure bias of the dialogue mechanism.} When engaging with LLMs, individuals often initiate the dialogue by describing simpler concepts and progressively work their way toward articulating the complex question at hand because of human limitations in language or comprehension abilities, which prevent a person from presenting the entire complexity of an issue in one go. This step-by-step approach, while natural for humans who struggle to directly convey intricate problems, can introduce extraneous content that contributes to noise within the model's contextual reasoning. As the conversation builds, the LLM will factor in these initial, possibly tangential, explanations into its understanding of the context, potentially leading to a dilution of the focus necessary for solving the specific issue. This phenomenon reflects a human cognitive strategy in communication that may not be optimally aligned with the operational mechanisms of LLMs for efficient problem-solving.






\end{enumerate}

\textbf{Irrelevant thoughts - human perspective:} 
When a human is actively demonstrating CoT reasoning, the introduction of irrelevant thoughts could be due to a variety of reasons.

\begin{enumerate}
\vspace{-5pt}

\item \textbf{Cognitive overload of humans.} Humans may introduce irrelevant information when they are trying to process too much information at once, which can lead to a loss of focus and the inclusion of tangential thoughts. For instance, a programmer is struggling with a bug in a complex piece of software and asks an LLM for help. To provide context, the programmer starts explaining the issue with a CoT rationale approach, intending to walk the LLM through their thought process. However, due to the complexity of the code and the stress of finding the bug, the programmer gets sidetracked. They include unnecessary details about the different error messages encountered in the past, unrelated functions in the code, and general thoughts on software development.

\item \textbf{Associative thinking of humans.} Humans naturally think in an associative manner, where one thought may lead to another that is only loosely related to the task at hand. This can result in straying from the main point during a CoT explanation. For example, while a programmer outlines the steps to diagnose a software issue for an LLM, they might recall a similar problem encountered in a different project. This memory could lead them to mention troubleshooting strategies, tools, or anecdotes from that experience, which, although related to the broader theme of problem-solving, do not directly contribute to the current issue.

\item \textbf{Irrelevant content in datasets.} In the future, it is likely that companies or professional organizations will increasingly utilize databases to assemble CoT prompts. However, these databases, whether privately maintained or publicly accessible, can contain irrelevant reasoning processes. This is especially true for databases sourced from crowdsourcing platforms or open forums, where the information is contributed by a diverse set of individuals with varying levels of expertise and focus. When these datasets are used to provide in-context information for CoT reasoning, the noise can originate from the inclusion of off-topic discussions, personal opinions, or overly verbose explanations that do not directly address the problem at hand. Such noise can be inadvertently introduced into the CoT process when humans provide explanations that contain unnecessary or tangential information.



\end{enumerate}

Similarly, we analyze the two sources of inaccurate thoughts as follows.

\textbf{Inaccurate thought - model perspective:}
For models, LLMs may produce erroneous thoughts during the reasoning process, especially when dealing with complex problems. For example, when tackling a base-9 math problem in a zero-shot setting, GPT-3.5 may generate some inaccurate reasoning steps. The former dialogue will become inaccurate and noisy in the context of subsequent dialogues. Here are some key reasons for their emergence.


\begin{enumerate}
\vspace{-5pt}

\item \textbf{Outdated or incomplete training data of the model.} Language models are built upon datasets that may not be current or fully comprehensive. When faced with problems that require up-to-date knowledge or complete understanding, which are absent in their training data, models may rely on outdated or incomplete information, resulting in inaccurate outputs. For example, in the field of medicine, if new research suggests a change in treatment protocol after the model's last update, it wouldn't be able to advise on the new information.

\item \textbf{Adaptation to novel reasoning contexts of the model.} New challenges may require models to reason within contexts that slightly or significantly differ from their training data. For instance, a model extensively trained on base-10 arithmetic might struggle with a base-9 math problem because it requires a shift in the underlying numerical framework. This kind of scenario demands on-the-fly adaptation to a novel reasoning context, which can lead to generating thoughts that do not accurately apply the learned principles from the base-10 system to the newly introduced base-9 system.

\item \textbf{Misinterpretation of complex subjects of dialogue mechanism.} Users often fail to clearly articulate their complete requirements at the outset of an inquiry, leading to LLMs generating misunderstandings and inaccurate thoughts that do not align with user expectations. The process of correcting these thoughts is inherently a reasoning process laden with noisy contexts. As users provide feedback to refine the model's output, the iterative nature of this interaction can introduce additional inaccuracies as the model attempts to reconcile the new information with the previously misunderstood context.

\end{enumerate}

\textbf{Inaccurate thought - human perspective:}
 Inaccurate thoughts in CoT can stem from the information provided by humans, whether it is self-made on the spot or sourced from a database for in-context learning by LLMs. These CoT demos can include inaccurate noise due to various factors.

\begin{enumerate}
\vspace{-5pt}

\item \textbf{Personal knowledge limitations of human.} Individuals may possess incomplete or outdated knowledge on a given subject, leading to the provision of incorrect information when creating a CoT. For instance, a person without expertise in mathematics might attempt to construct a CoT for a complex math problem and inadvertently introduce incorrect steps or conclusions. Their understanding may be based on heuristics or educational background that hasn't been updated to reflect more recent methodologies or discoveries in the field.

\item \textbf{Cognitive biases of human.} Human reasoning can be influenced by a range of cognitive biases, such as confirmation bias, where an individual tends to search for, interpret, and remember information in a way that confirms their preconceptions, neglecting contrary information. Or the oversimplification of complex issues might lead to inaccurate reasoning steps within a CoT. These biases can skew the logic flow and result in conclusions that do not hold up under scrutiny or are based on flawed premises.

\item \textbf{Data quality issues of database.} The databases that humans rely on for creating CoTs might contain errors or biases introduced during data collection and processing. If this flawed data is used for in-context learning by LLMs, it can impart incorrect patterns of thought or factual inaccuracies. For example, a dataset with biased sampling methods might lead to generalizations that do not accurately represent the broader population or situation.

\item \textbf{Contextual misplacement of databases.} Information from databases may be stripped of its original context, leading to misinterpretation when reused. When humans include such decontextualized information in a CoT, they might not properly align it with the new context, introducing misunderstandings or inaccuracies. This is particularly problematic in nuanced fields where context heavily influences the meaning and applicability of information, such as legal precedents or cultural studies.

\end{enumerate}

Given the convenience and adaptability of CoT reasoning, broader adoption in LLM applications is expected in the future. This structured approach enables LLMs to break down complex problems and explain their reasoning in a way that resembles how humans think, proving essential for sophisticated problem-solving. Nonetheless, we are bound to face the noisy reasoning challenges, stemming from both model-generated and human-contributed contexts, as mentioned above. 

To address these challenges, we must focus on continuously improving training methods, keeping models updated with the latest information, enhancing their ability to parse context and ambiguity, and refining algorithms to diminish biases and logical inaccuracies.


\textbf{Differences between inaccurate and irrelevant thoughts.}
Given the context and question, a thought is either relevant or irrelevant. Within the relevant thoughts, those accurate ones are desirable, and those inaccurate, termed as "inaccurate thoughts" in this work, are proven to be harmful to LLMs. Hence, the extreme case of "inaccurate and irrelevant thought" is not covered in this work. 
For clarity and simplicity, we investigate these two kinds of noise separately. Specifically, in definitions,

\begin{itemize}[label=\textbullet]
\vspace{-5pt}

\item {Irrelevant} thoughts refer to incorporating extraneous details that are unhelpful for solving the question. Redundant information may be introduced by the LLM’s diverse response generation or by humans when clarifying concepts in problem-solving examples;
\item {Inaccurate} thoughts refer to factual errors in rationales that are common in mathematical calculation or transcription. The emergence of noise can be due to algorithmic limitations, errors in training data, misinterpretations of context or instructions, and logical fallacies.
\vspace{-5pt}
\end{itemize}
Note that the "accuracy" and "relevance" of thoughts are related to the context of the given question. Basically, the question provides the context, and an LLM generates thought conditional on the context. Taking the examples of the Base-9 dataset in Tab.~\ref{tab:dataset-examples},

\begin{itemize}[label=\textbullet]
\vspace{-5pt}
\item {Irrelevant} thought is "There are five oceans on Earth: the Atlantic, Pacific, Indian, Arctic, and Southern.", which is accurate but not relevant and not useful;
\item {Inaccurate} thought here is "5 + 9 = 14", which is relevant to the question and previous thought but is inaccurate for the base-9 calculation.
\vspace{-5pt}
\end{itemize}

In the NoRa dataset, only the relevant thought with factual error will be classified as an "inaccurate thought". There is no irrelevant and inaccurate thought in NoRa, which should be rare in practice. 

We focus on the two major types of noise in this work. And empirically, inaccurate thoughts bring severe degradations. Compared with clean rationales, a 1.4\%-19.8\% decrease with irrelevant noise and a more drastic 2.2\%-40.4\% decrease with inaccurate noise.

\subsection{Real-world Examples}
\label{appendix: real-world examples}


The emphasis on noisy rationale is due to its practical challenges, with examples drawn from diverse sources such as crowdsourced platforms, dialogue systems, and AI-generated data. Here, the Noisy-R mainly originates from (1) the inherent imperfections, inconsistencies, and inaccuracy of humans’ cognitive processes and (2) the diversity, unpredictability, and hallucination of the LLMs’ generative mechanisms, as discussed in more detail in Appendix \ref{appendix: benchmark-noise-discussion}. 

Briefly, irrelevant and inaccurate thoughts can be generated by both the model and humans. From model perspective, the generated rationales by the model can be noisy. From human perspective, the annotated rationales by humans can be noisy.

In the following four parts, we elaborate on the evidence (including sources and examples) of irrelevant and inaccurate thoughts from both the model perspective and the human perspective.

\subsubsection{Irrelevant Thoughts from Model Perspective}

\textbf{Cognitive bias of models:} 
\begin{itemize}
\item 
\textbf{Source.}
The presence of cognitive biases in LLMs is demonstrated, and the implications of using biased reasoning under the guise of expertise are discussed, calling for stronger education, risk management, and continued research as the widespread adoption of this technology increases~\cite{koh2022inversion}. 

\citet{thorstad2023cognitive} examines the inherent cognitive biases within large language models, suggesting that while there is reason for cautious optimism regarding the potential to mitigate these biases, a realistic, anti-Panglossian approach is necessary to drive meaningful improvements. It advocates for a balanced perspective that neither idealizes the perfectibility of AI nor dismisses the progress made in addressing its limitations. 

\item 
\textbf{Examples.} 
Tab.~\ref{tab: Cognitive-bias-examples} demonstrates examples of cognitive bias of models.

\begin{table*}[ht]
    \centering
    \fontsize{8}{8}\selectfont
    \begin{tabular}{p{0.97\textwidth}}
        \toprule
        \multicolumn{1}{c}{Example 1} \\
        \midrule
         \textbf{Question:} What rubber toy filled with helium will make a child happy? Answer Choices: (A) loved (B) learn \(C\) eat cake (D) balloon (E) become adult
    
         \textbf{Rationale:} Let's think step by step. A rubber toy filled with helium will make a child happy if the child loves the toy. If the child loves the toy, he or she will learn to play with it. \noisy{If the child learns to play with it, he or she will eat cake. If the child eats cake, he or she will become an adult.}
    
        \textbf{Answer:} The answer is D. 
    
        \textbf{Explanation:} There is information related to the answer in reasoning, but there is also noise information that is irrelevant to the final answer, and it has nothing to do with whether the child is happy or not because he becomes an adult because he eats the cake.\\
        \midrule
        
        \multicolumn{1}{c}{Example 2} \\
        \midrule
        \textbf{Question:} What might a couple have a lot of when they are deciding on stopping being married to each other? Answer Choices: (A) pleasure (B) detachment \(C\) exercise (D) bankruptcy (E) fights\\
        \textbf{Rationale:} Let's think step by step. \noisy{A couple might have a lot of pleasure when they are deciding on stopping being married to each other if they are both in agreement with the decision. A couple might have a lot of detachment when they are deciding on stopping being married to each other if they are both indifferent to the decision. A couple might have a lot of exercise when they are deciding on stopping being married to each other if they are both physically active and need to work out their aggression. A couple might have a lot of bankruptcy when they are deciding on stopping being married to each other if they are both in financial trouble.} A couple might have a lot of fights when they are \\
        \textbf{Answer:} Therefore, among A through E, the answer is E.\\
        \textbf{Explanation:} Only the last sentence mentions that couples tend to quarrel in the event of divorce, and the previous explanations and reasoning about the other options are nonsensical\\
        \bottomrule
    \end{tabular}
    \caption{Cognitive bias of models of the model examples.}
\vspace{-5pt}
\label{tab: Cognitive-bias-examples}
\end{table*}

\end{itemize}

\textbf{Lack of precise contextual understanding:}
\begin{itemize}
\item
\textbf{Source.} \citet{zhu2024large} introduces a context understanding benchmark by adapting existing datasets to suit the evaluation of generative models’ ability to understand the context and finds that pre-trained dense models struggle with understanding more nuanced contextual features when compared to state-of-the-art fine-tuned models.

\citet{he2024can} designs eight features for complex instructions and construct a comprehensive evaluation dataset from real-world scenarios, and compares the performance of representative Chinese-oriented and English-oriented models in following complex instructions through extensive experiments. 


\item
\textbf{Examples.} Tab.~\ref{tab:lack-of-precise-contextual-understanding-examples} demonstrates the lack of precise contextual understanding in LLMs.
\end{itemize}

\begin{table*}[ht]
    \centering
    \fontsize{8}{8}\selectfont
    \begin{tabular}{p{0.97\textwidth}}
        \toprule
        \multicolumn{1}{c}{Example 1} \\
        \midrule
        \textbf{Question:} He was trying to procreate with many individuals, this led to a what? Answer Choices: (A) moaning (B) die \(C\) kiss (D) std (E) sanity\\
        \textbf{Rationale:} Let's think step by step. He was trying to procreate with many individuals. This would lead to many offspring, which would be a good thing, right? \noisy{But it also led to a lot of competition among the offspring for resources. This would lead to a lot of fighting and eventually to the death of many individuals. So while he may have been trying to create more life, he also caused a lot of death.}\\
        \textbf{Answer:} Therefore, among A through E, the answer is D.\\
        \textbf{Explanation:} The reasoning is mainly about the fact that many people will have a lot of offspring, and the offspring will bring competition for resources and wars, which will cause a lot of deaths, but the final result of the reasoning is that it is indeed a sexually transmitted disease, and there is no information related to the sexually transmitted disease in the reasoning. So it is irrelevant thought.\\
        \midrule
        \multicolumn{1}{c}{Example 2} \\
        \midrule
        \textbf{Question:} Can Hulk's alter ego explain atomic events?\\
        \textbf{Rationale:} Let's think step by step. First, we need to understand what an atom is. \noisy{Atoms are the basic units of matter and the defining structure of elements. The term "atom" comes from the Greek word for indivisible, because it was once thought that atoms were the smallest things in the universe and could not be divided. The structure of an atom is a central nucleus composed of protons and neutrons with electrons orbiting around this nucleus.} Now that we know what an atom is, we can move on to understanding what an atomic event is.  An atomic event is a process that involves.\\
        \textbf{Answer:} Yes, Hulk's alter ego can explain atomic events.\\
        \textbf{Explanation:} The model's rationale focuses on defining atoms and atomic events but fails to connect this information to the context of the question, which is about the ability of Hulk's alter ego, Bruce Banner, to explain these concepts. The model does not demonstrate an understanding of the alter ego's scientific expertise and instead provides a general explanation of atomic structure. This lack of precise contextual understanding leads to an unsupported conclusion.\\
        \bottomrule
    \end{tabular}
    \caption{Examples demonstrating the lack of precise contextual understanding in LLMs.}
    \vspace{-5pt}
    \label{tab:lack-of-precise-contextual-understanding-examples}
\end{table*}

\subsubsection{Irrelevant Thoughts from Human Perspective}



\textbf{Associative thinking of humans:}
\begin{itemize}
\item 
\textbf{Source.} \citet{morewedge2010associative} investigates the underlying cognitive mechanisms of intuition, focusing on how unconscious associative processes can lead to rapid and efficient problem-solving and decision-making. It aims to elucidate the often-overlooked influence of automatic, associative thinking on our intuitive judgments, challenging the traditional emphasis on deliberate and logical reasoning in cognitive psychology. 

\citet{wimmer2012preference} presents evidence that the hippocampus is involved in decision-making by enabling the transfer of value between memories, affecting choices among novel alternatives. Using functional brain imaging, the study shows that monetary rewards activate a network of memories, influencing individuals to later choose associated, but unrewarded, items, with this bias being predicted by hippocampal activity and its connections with memory and reward brain regions. 

\item 
\textbf{Examples.} Please refer to Tab.~\ref{tab:associative-thinking-examples} for examples.
\end{itemize}

\begin{table*}[ht]
    \centering
    \fontsize{8}{8}\selectfont
    \begin{tabular}{p{0.97\textwidth}}
        \toprule
        \multicolumn{1}{c}{Example 1} \\
        \midrule
        \textbf{Question:} How can the relationship between John and Mary be determined within the family tree, given that they share a grandfather but have different fathers?

        \textbf{Rationale:}
    
        1. Identifying the Common Grandfather: The first step is to confirm that John and Mary share a grandfather, indicating a direct blood relation within the family tree.

        2. Analyzing the Fathers' Identity: Since they have different fathers, this implies their fathers are brothers.

        3. An Interesting Anecdote: \noisy{During a family gathering, John and Mary participated in a "Who's Who" family game where they were mistakenly identified as siblings rather than cousins.}

        4. Considering the shared grandfather and that their fathers are brothers, it's established that John and Mary are first cousins.

        \textbf{Answer:} John and Mary are first cousins.
    
        \textbf{Explanation:} When analyzing the relationship, an anecdote that is loosely related to the task is considered. Although the anecdote is about the relationship between John and Mary, it does not help in reasoning.\\
        \midrule
        \multicolumn{1}{c}{Example 2} \\
        \midrule
        \textbf{Question:} Calculate the length of the hypotenuse in a right-angled triangle with one leg of length 3 and the other leg of length 4.\\

        \textbf{Rationale:}\\
        1. Using Pythagoras' Theorem: In a right-angled triangle, the sum of the squares of the lengths of the legs is equal to the square of the length of the hypotenuse, \textit{i.e.}, $a^2 + b^2 = c^2$. \noisy{Pythagoras, an ancient Greek mathematician, is credited with this discovery, but his contributions to mathematics extend beyond this, including music, philosophy, and astronomy.}\\
        2. Substituting Known Values to Solve: In our problem, the lengths of the legs are 3 and 4, thus $3^2 + 4^2 = c^2$, which is $9 + 16 = c^2$. This leads to $c^2 = 25$.\\
        3. Solving the Equation to Find the Length of the Hypotenuse: Solving $c^2 = 25$ yields $c = 5$. \noisy{This solution not only reveals the geometric property of the triangle but also hints at the special status of the 3-4-5 triplet as one of the simplest Pythagorean triples. Pythagorean triples are sets of three positive integers that satisfy Pythagoras' theorem, with 3-4-5 being among the most well-known sets, often used in construction and engineering to create right angles.}\\
    \textbf{Answer:} The length of the hypotenuse is $5$.\\
    \textbf{Explanation:} Introduction of Pythagoras and the special status of the 3-4-5 triplet do not help in calculation of the length of the hypotenuse, which is unnecessary detail about the problem.\\
        \bottomrule
    \end{tabular}
    \caption{Examples of associative thinking of humans}
    \vspace{-2pt}
    \label{tab:associative-thinking-examples}
\end{table*}

\clearpage
\textbf{Irrelevant content in datasets:}
\begin{itemize}
\item 
\textbf{Source.} \citet{havrilla2024understanding} examines how different types and intensities of noise in training data affect the performance of LLMs using a novel framework called Traced Integer (TInt) for generating customizable noised execution traces. The study reveals that while fine-tuned models display high robustness to static noise, they are more susceptible to dynamic noise, and it offers insights into the importance of filtering out training samples with dynamic noise to improve model performance.

\item 
\textbf{Examples.} Please refer to Tab.~\ref{tab:irrelevant-content-datasets-examples} for examples.
\end{itemize}

\begin{table*}[ht]
    \centering
    \fontsize{8}{8}\selectfont
    \begin{tabular}{p{0.97\textwidth}}
        \toprule
        \multicolumn{1}{c}{Example 1} \\
        \midrule
        \textbf{Question:} Calculating the Area of a Circle with Radius 7\\
        \textbf{Rationale:} \\
        1. Using the Area Formula: The area of a circle can be calculated using the formula $A = \pi r^2$, where $A$ is the area and $r$ is the radius of the circle. $\pi$ (pi) is a mathematical constant approximately equal to 3.14159, representing the ratio of a circle's circumference to its diameter. \noisy{$\pi$ is not only widely used in mathematics but also serves as a fundamental element in physics, engineering, and other scientific fields.}\\
        2. Substituting the Given Value: Substituting the radius $r = 7$, we get $A = \pi \times 7^2 = 49\pi$. \noisy{This result not only provides the area of the circle but also reminds us of the irrational nature of $\pi$ – an infinite, non-repeating decimal, representing a form of endless beauty in mathematics.}\\
        3. Practical Application of the Area: In practice, calculating the area of a circle has many applications, such as designing fountains in parks, making round tables, or calculating the surface area of planets. This simple formula underlies limitless exploration and potential applications.\\
        \textbf{Answer:} The area of the circle is $49\pi$.\\
        \textbf{Explanation:} Application about $\pi$ does not help in calculation of area of a circle, which is unnecessary detail about the problem.\\
        \midrule
        \multicolumn{1}{c}{Example 2} \\
        \midrule
        \textbf{Question:} Calculate the length of the hypotenuse in a right-angled triangle with one leg of length 3 and the other leg of length 4.\\
        \textbf{Rationale:}\\
            1. Using Pythagoras' Theorem: In a right-angled triangle, the sum of the squares of the lengths of the legs is equal to the square of the length of the hypotenuse, \textit{i.e.}, $a^2 + b^2 = c^2$. \noisy{Pythagoras, an ancient Greek mathematician, is credited with this discovery, but his contributions to mathematics extend beyond this, including music, philosophy, and astronomy.}\\
            2. Substituting Known Values to Solve: In our problem, the lengths of the legs are 3 and 4, thus $3^2 + 4^2 = c^2$, which is $9 + 16 = c^2$. This leads to $c^2 = 25$.\\
            3. Solving the Equation to Find the Length of the Hypotenuse: Solving $c^2 = 25$ yields $c = 5$. \noisy{This solution not only reveals the geometric property of the triangle but also hints at the special status of the 3-4-5 triplet as one of the simplest Pythagorean triples. Pythagorean triples are sets of three positive integers that satisfy Pythagoras' theorem, with 3-4-5 being among the most well-known sets, often used in construction and engineering to create right angles.}\\
        \textbf{Answer:} The length of the hypotenuse is $5$.\\
        \textbf{Explanation:} Introduction of Pythagoras and the special status of the 3-4-5 triplet do not help in calculation of the length of the hypotenuse, which is unnecessary detail about the problem.\\
        \bottomrule
    \end{tabular}
    \caption{Examples of irrelevant content in datasets}
    \vspace{-10pt}
    \label{tab:irrelevant-content-datasets-examples}
\end{table*}

\subsubsection{Inaccurate Thought from Human Perspective}

\textbf{Personal knowledge limitations of human:}
\begin{itemize}
\item 
\textbf{Source.} The array of problems addressed by the human mind constitutes merely a specific subset within the spectrum of challenges tackled by AI systems \cite{griffiths2020understanding}, resulting from human time limitation and human computation limitation, etc.

Due to the limited lifespan, whatever a human learns must be acquired in less than a million hours of real-time experience. The AlphaGo system that beat the best human players in history has the benefit of multiple human lifetimes of simulated play \cite{silver2016mastering}.

There exists a disparity between human intelligence  and machine intelligence in addressing the Rubik's Cube conundrum. Machine intelligence has successfully determined the shortest path from any initial configuration to an unscrambled state, requiring no more than 20 moves \cite{rokicki2014diameter}. Conversely, the most adept human solvers necessitate approximately 50 moves to achieve the same outcome.

\item 
\textbf{Examples.} Please refer to Tab.~\ref{tab:human-knowledge-examples} for examples.
\end{itemize}

\begin{table*}[ht]
    \centering
    \fontsize{8}{8}\selectfont
    \begin{tabular}{p{0.97\textwidth}}
        \toprule
        \multicolumn{1}{c}{Example 1} \\
        \midrule
        \textbf{Question:} What should be the next move in the game of Go?\\
        \textbf{Rationale:} In order to win this game, I need to capture more of the opponent's pieces. \noisy{Since capturing as many of the opponent's pieces as possible is advantageous,} it is now time to launch an offensive and encircle their pieces.\\
        \textbf{Answer:} Therefore, the next move is at position A.\\
        \textbf{Explanation:} In the context of playing Go, a deficiency in holistic board assessment and the incorporation of offensive strategies leads to inaccurate thought. This narrow focus solely on offensive tactics detrimentally impacts the overall outcome of the game, hindering the player's ability to achieve victory across the entirety of the board.\\
        \midrule
        \multicolumn{1}{c}{Example 2} \\
        \midrule
        \textbf{Question:} How long will it take to solve the Rubik's Cube at the fastest?\\
        \textbf{Rationale:} One rotates a Rubik's Cube in 0.2 second, only needing to know how many steps it takes to solve one. \noisy{Someone solve a Rubik's Cube requires a minimum of 50 steps.} It takes 4 seconds to solve the Rubik's Cube at the fastest.\\
        \textbf{Answer:} Therefore, the answer is 4 seconds.\\
        \textbf{Explanation:} In assessing the completion time of the Rubik’s Cube conundrum, due to a lack of knowledge regarding shortest path algorithms and planning, an individual relies solely on experiential intuition, presuming that a minimum of 50 steps is necessary, whereas in reality, the puzzle can be completed within 20 steps or fewer.\\
        \bottomrule
    \end{tabular}
    \caption{Examples of human knowledge limitation.}
    \vspace{-2pt}
    \label{tab:human-knowledge-examples}
\end{table*}




\textbf{Cognitive biases of human:}
\begin{itemize}

\item
\textbf{Source.} \citet{tversky1974judgment} define human cognitive biases: systematic patterns of deviation from rational judgment. Here we list three types of cognitive biases of human.

Confirmation bias is a common cognitive bias, referring to the tendency of people to seek, remember, or emphasize information that supports their existing views or beliefs, while ignoring or disregarding information that contradicts their viewpoints~\cite{del2017modeling}.

When members within a group suppress their individual capacity for independent thinking in order to maintain team harmony or consensus, groupthink occurs~\cite{janis2008groupthink}.

Emotional reasoning happens when people judge the authenticity or value of things based on their own emotional state~\cite{norris2000emotional}.

\item
\textbf{Examples.} Please refer to Tab.~\ref{tab:cognitive-bias-examples} for examples.
\end{itemize}
\begin{table*}[ht]
    \centering
    \fontsize{8}{8}\selectfont
    \begin{tabular}{p{0.97\textwidth}}
        \toprule
        \multicolumn{1}{c}{Example 1} \\
        \midrule
        \textbf{Question:} Alice likes going to the coffee shop. Does Alice like drinking tea?\\
        \textbf{Rationale:} Alice likes going to the coffee shop. \noisy{People who enjoy going to coffee shops must enjoy drinking coffee.} However, individuals who enjoy drinking coffee may not necessarily enjoy drinking tea. Therefore, it cannot be inferred whether Alice enjoys drinking tea.\\
        \textbf{Answer:} Therefore, not sure if Alice likes drinking tea.\\
        \textbf{Explanation:} Not all individuals who enjoy frequenting coffee shops necessarily enjoy drinking coffee. There is no inherent correlation between enjoying coffee shop visits and liking coffee. Here, the introduction of confirmation bias plays a pivotal role.\\
        \midrule
        \multicolumn{1}{c}{Example 2} \\
        \midrule
        \textbf{Question:} Which one is the middle-aged woman? (A) The beautifully dressed slender woman, (B) The dowdily dressed slightly overweight woman,(C)Not sure.\\
        \textbf{Rationale:} This woman is middle-aged. \noisy{She needs to take care of her family and children, and she has no time to dress up or manage her figure.} She could be either overweight or slim, inclined towards dressing up or adopting a more casual appearance; hence, we cannot ascertain her physical appearance.\\
        \textbf{Answer:} Therefore, the answer is (C).\\
        \textbf{Explanation:} In the above example, groupthink leads to the maximization of societal and individual family interests, hence the collective belief that a middle-aged woman should be overweight and dowdily dressed. However, the reality is that middle-aged women are diverse and cannot be stereotyped.\\
        \bottomrule
    \end{tabular}
    \caption{Examples of cognitive bias.}
    \vspace{-2pt}
    \label{tab:cognitive-bias-examples}
\end{table*}

\subsection{Noise Generation}
\label{appendix: benchmark-details}

As detailed in Sec.~\ref{sec: dataset}, we have introduced both irrelevant and inaccurate noises into our dataset as insertions. These insertional noises are carefully integrated into the rationales, ensuring they neither modify the existing reasoning pathways nor affect the final answers. Furthermore, they are purposefully crafted based on the intermediate steps of the reasoning sequence, which guarantees that each piece of noise is contextually related to the stage of reasoning it accompanies. To maintain consistency, we treat each sentence in the reasoning sequence as a single thought; accordingly, each noise we introduce is also fashioned as only one sentence. For irrelevant noise, we draw from unrelated scientific or social facts. Alternatively, inaccurate noise involves the extension of an intermediate thought, which is redundant and incorrect.

\textbf{NoRa-Math.} In NoRa-Math problems, an intermediate reasoning result typically manifests as a numerical value. For instance, from the equation "We have 4 + 2 = 6 in base-10. ", we take the derived number $6$ and craft a sentence of noise to follow this particular reasoning step. Take the numeral "6"; we might introduce an unrelated fact such as "According to the Standard Model of physics, there are six types of quarks, the fundamental constituents of matter." For inaccurate thoughts, we generate a decimal addition related to $6$, such as "6 + 5 = 11. ", which is inaccurate in base-9 representation. 

\textbf{NoRa-Symbolic.} NoRa-Symbolic problems are handled similarly; here, intermediate results are individual elements of a navigation instruction. We insert a sentence of noise that aligns with these specific components. For example, if the element is the directive "right," we fabricate a direction-related noise. One irrelevant thought can be "Turning
right in countries that drive on the right side of the road typically does not intersect with oncoming traffic.". For inaccurate thoughts, to ensure that the noise does not contradict the previous reasoning, we select another related instruction on the same term domain. For example, when explaining the instruction "right," we might generate noise related to the instruction "left", such as "left means I\_TURN\_RIGHT.", which is incorrect and does not contradict the previous thought, and this thought itself is redundant. 

\textbf{NoRa-Commonsense.} When it comes to NoRa-Commonsense problems, we introduce noise that mirrors the relationships discerned during the reasoning process. For example, after establishing that "mother’s sister is aunt," we craft a noise sentence associated with the concept of "aunt." An irrelevant noise insertion could be: "Aunts often play pivotal roles in the social development of primates, akin to their influence in human societies." To generate inaccurate thoughts, we might produce a statement like "an aunt's mother is also an aunt," which is logically inaccurate.

In addition to inserting a fixed number of noisy thoughts per chain of thought shot, as detailed in the main text, we further introduce variability in noise addition by randomly inserting noise following each thought. This randomness follows a Bernoulli distribution $\text{Bern}(\epsilon) \in \{0,1\}$, where noise is added after a thought only if the Bernoulli trial results in 1. Consequently, while the fixed number approach guarantees a set amount of noise within a CoT shot, the random addition allows for the possibility of varying amounts of noise in each shot, dictated by the probability parameter $\epsilon$.

\subsection{NoRa Datasheet}
\label{appendix:datasheet}

Following the framework of \citet{gebru2021datasheets}, we provide a comprehensive datasheet for the NoRa dataset in Tab.~\ref{tab:datasheet}. This datasheet documents the motivation, composition, collection process, preprocessing details, intended uses, distribution plan, and maintenance strategy of our dataset.

{\scriptsize
\begin{longtable}{p{0.35\textwidth}|p{0.6\textwidth}}
\toprule
\multicolumn{2}{c}{\textsc{\textbf{Motivation}}} \\
\midrule
    \textbf{For what purpose was the dataset created?} 
    & 
    The NoRa dataset was created to evaluate the robustness of large language models when faced with noisy rationales in chain-of-thought prompting. It addresses the under-explored challenge of LLMs' reasoning ability when presented with in-context examples containing irrelevant or inaccurate steps.  
    \\ \midrule
    \textbf{Who created the dataset and on behalf of which entity?} & The authors of this paper created the dataset.\\ \midrule
    \textbf{Who funded the creation of the dataset?} & Unavailable. \\ \midrule
    \textbf{Any other comment?} & The NoRa dataset is the first of its kind specifically designed to evaluate the robustness of LLMs against noisy rationales. It covers a wide range of reasoning tasks, including mathematical, symbolic, and commonsense reasoning, The dataset includes varying levels of noise in rationales to systematically access LLMs' performance. \\ \midrule
    \multicolumn{2}{c}{\textsc{\textbf{Composition}}} \\ \midrule
    \textbf{What do the instances that comprise the dataset represent?} & Each instance in the NoRa dataset consists of a question, its correct answer, and in-context demonstrations containing noisy reasoning steps.  \\ \midrule
    \textbf{How many instances are there in total?} & The NoRa dataset contains 26,391 questions in total (see Section 3.2). Table 2 contains the sizes of our datasets. \\ \midrule
    \textbf{Does the dataset contain all possible instances or is it a sample (not necessarily random) of instances from a larger set?} & The NoRa dataset is a carefully curated sample selected from a larger pool of potential reasoning tasks. \\ \midrule
    \textbf{What data does each instance consist of?} & Each instance in the NoRa dataset contains a question, its correct answer, and in-context demonstrations with noisy reasoning steps.\\ \midrule
    \textbf{Is there a label or target associated with each instance?} & Yes, each instance has a correct answer that serves as its label. \\ \midrule
    \textbf{Is any information missing from individual instances?} & No, each instance contains complete information. \\ \midrule
    \textbf{Are relationships between individual instances made explicit?} & Yes, instances are grouped by reasoning type: mathematical, symbolic, and commonsense. \\ \midrule
    \textbf{Are there recommended data splits?} & No, this dataset is designed for training-free in-context learning. We provide recommended in-context demonstrations for each question. \\ \midrule
    \textbf{Are there any errors, sources of noise, or redundancies in the dataset?} & No, the dataset itself is clean and does not contain any errors, unintended noise, or redundancies. \\ \midrule
    \textbf{Is the dataset self-contained, or does it link to or otherwise rely on external resources?} & Yes, the dataset is self-contained and does not rely on any external resources. \\ \midrule
    \textbf{Does the dataset contain data that might be considered confidential?} & 
    No.
    \\ \midrule
    \textbf{Does the dataset contain data that, if viewed directly, might be offensive, insulting, threatening, or might otherwise cause anxiety?} & No, the dataset does not include any offensive, insulting, threatening, or anxiety-inducing content.
    \\ \midrule
    \multicolumn{2}{c}{\textsc{\textbf{Collection}}} \\ \midrule
    \textbf{How was the data associated with each instance acquired?} &  The data was created based on the reference of three rational datasets. (see sec 3.2)\\ \midrule
    \textbf{What mechanisms or procedures were used to collect the data?} & We systematically added irrelevant or inaccurate thoughts to clean rationales from existing datasets. The irrelevant or inaccurate thoughts are generated from GPT4. \\ \midrule
    \textbf{If the dataset is a sample from a larger set, what was the sampling strategy?} & Whenever we use subsets, we randomly sample from the original data. \\ \midrule
    \textbf{Who was involved in the data collection process and how were they compensated?} & The authors of this paper. \\ \midrule
    \textbf{Over what timeframe was the data collected?} &  NoRa was created in 2023. The original datasets of NoRa (Base Calculation, SCAN, and CLUTRR) were created between 2018-2023. We added noisy rationales to these existing datasets to create NoRa. \\ \midrule
    \textbf{Were any ethical review processes conducted?} & No, as the dataset does not involve human subjects or sensitive information, no formal ethical review was required. However, we adhered to ethical guidelines for AI research throughout the process \\ \midrule
    \multicolumn{2}{c}{\textsc{\textbf{Preprocessing}}} \\ \midrule
    \textbf{Was any preprocessing/cleaning/labeling of the data done?} & Yes. We preprocess the original datasets by adding noisy rationales (irrelevant or inaccurate thoughts) generated by GPT-4 and ensure that the questions and answers remain unchanged.  (detailed in Section 3.1)   \\ \midrule
    \textbf{Was the “raw” data saved in addition to the preprocessed/cleaned/labeled data?} & Yes. We retained the original datasets (Base Calculation, SCAN, and CLUTRR) in their raw form, as well as our modified NoRa dataset with added noisy rationales. \\ \midrule
    \textbf{ Is the software that was used to preprocess/clean/label the data available?} & No. \\ \midrule
    \multicolumn{2}{c}{\textsc{\textbf{Uses}}} \\ \midrule
    \textbf{Has the dataset been used for any tasks already?} & Yes, the NoRa dataset has been used in our paper to benchmark the reasoning robustness of large language models against noisy rationales in chain-of-thought prompting. \\ \midrule
    \textbf{Is there a repository that links to any or all papers or systems that use the dataset?} 
    &  
    Yes. 
    The repository is publicly available at:
    \url{https://github.com/tmlr-group/NoisyRationales}.
    \\ \midrule
    \textbf{What (other) tasks could the dataset be used for?} & The NoRa dataset could be used for:
1. Evaluating and improving LLMs' robustness to noisy information
2. Developing new denoising techniques for reasoning tasks
3. Understanding the impact of In-context noise on LLMs.\\ \midrule
    \textbf{Is there anything about the composition of the dataset or the way it was collected and preprocessed/cleaned/labeled that might impact future uses?} & The dataset contains artificially introduced noise, which might not perfectly reflect real-world scenarios of noisy reasoning. \\ \midrule
    \textbf{Are there tasks for which the dataset should not be used?} & The dataset should not be used for tasks that require completely noise-free data or for evaluating models' performance on clean, standard reasoning tasks without noise. \\ \midrule
    \multicolumn{2}{c}{\textsc{\textbf{Distribution}}} \\ \midrule
    \textbf{Will the dataset be distributed to third parties outside of the entity on behalf of which the dataset was created?} & Yes, we plan to make the dataset available to the research community. \\ \midrule
    \textbf{How will the dataset will be distributed?} & The dataset will be made available through a public repository (e.g., GitHub) along with necessary documentation for its use. \\ \midrule
    \textbf{When will the dataset be distributed?} & We aim to release the dataset upon publication of our paper. \\ \midrule
    \textbf{Will the dataset be distributed under a copyright or other intellectual property (IP) license, and/or under applicable terms of use (ToU)?} &  Not to our knowledge. \\ \midrule
    \textbf{Have any third parties imposed IP-based or other restrictions on the data associated with the instances?} & Not to our knowledge.  \\ \midrule
    \textbf{Do any export controls or other regulatory restrictions apply to the dataset or to individual instances?} & Not to our knowledge. \\ \midrule
    \multicolumn{2}{c}{\textsc{\textbf{Maintenance}}} \\ \midrule
    \textbf{Who will be supporting/hosting/maintaining the dataset?} & The authors of this paper. \\ \midrule
    \textbf{How can the owner/curator/manager of the dataset be contacted?} & The authors of this paper.\\ \midrule
    \textbf{Is there an erratum?} & No. \\ \midrule
    \textbf{Will the dataset be updated?} & No. \\ \midrule
    \textbf{If others want to extend/augment/build on/contribute to the dataset, is there a mechanism for them to do so?} & No. \\ 
    \bottomrule
    
\caption{Datasheet for the NoRa dataset, following the framework introduced by \citet{gebru2021datasheets}.
}
\label{tab:datasheet}
\end{longtable}}


\subsection{Full Examples of the NoRa Dataset}
\label{appendix: benchmark-examples}

In this part, we show the full examples of the NoRa dataset.

NoRa-Math:
\begin{itemize}[label=\textbullet]
\setlength\itemsep{-2pt}
\vspace{-8pt}
\item
Base-9 Irrelevant (Tab.~\ref{table:Irrelevant Base-9})
\item
Base-9 Inaccurate (Tab.~\ref{table:Inaccurate Base-9})
\item
Base-11 Irrelevant (Tab.~\ref{table:Irrelevant Base-11})
\item
Base-11 Inaccurate (Tab.~\ref{table:Inaccurate Base-11})
\end{itemize}

NoRa-Symbolic:
\begin{itemize}[label=\textbullet]
\setlength\itemsep{-2pt}
\vspace{-8pt}
\item
Equal Irrelevant  (Tab.~\ref{table:Irrelevant Symblic-Equal})
\item
Equal Inaccurate  (Tab.~\ref{table:Inaccurate Symblic-Equal})
\item
Longer Irrelevant  (Tab.~\ref{table:Irrelevant Symbolic-Longer})
\item
Longer Inaccurate  (Tab.~\ref{table:Inaccurate Symbolic-Longer})
\end{itemize}

NoRa-Commonsense:
\begin{itemize}[label=\textbullet]
\setlength\itemsep{-2pt}
\vspace{-8pt}
\item
Irrelevant (Tab.~\ref{table:Irrelevant Commonsense})
\item
Inaccurate (Tab.~\ref{table:Inaccurate Commonsense})

\end{itemize}

\begin{table*}[ht]
\centering
\fontsize{8}{8}\selectfont
\begin{tabular}{p{\textwidth}} 
\toprule[1.5pt] 
\midrule 
\textbf{Example:} 

User: In base-9, what is 64+62? Please reason it step by step. End the response with the result in "Answer:\textbackslash boxed\{result\}".

Assistant: In base-9, the digits are "012345678".  We have 4 + 2 = 6 in base-10.  \noisy{\uline{According to the Standard Model of physics, there are six types of quarks, the fundamental constituents of matter.}} Since we're in base-9, that doesn't exceed the maximum value of 8 for a single digit.  \noisy{\uline{In music, an octave consists of eight notes in the Western musical scale.}} 6 mod 9 = 6, so the digit is 6 and the carry is 0. We have 6 + 6 + 0 = 12 in base 10. 12 mod 9 = 3, so the digit is 3 and the carry is 1. A leading digit is 1. So the answer is 136. Answer:\textbackslash box\{136\}
\\
\midrule 
\bottomrule[1.5pt]
\caption{Example of NoRa-Math Base-9 with irrelevant rationales}
\label{table:Irrelevant Base-9}
\end{tabular}
\end{table*}


\begin{table*}[ht]
\centering
\fontsize{8}{8}\selectfont
\begin{tabular}{p{\textwidth}} 
\toprule[1.5pt] 
\midrule 

\textbf{Example:}

User: In base-9, what is 64+62? Please reason it step by step. End the response with the result in "Answer:\textbackslash box\{result\}".

Assistant: In base-9, the digits are "012345678". \noisy{\uline{9 + 8 = 17.}}  We have 4 + 2 = 6 in base-10. Since we're in base-9, that doesn't exceed the maximum value of 8 for a single digit. \noisy{\uline{8 + 3 = 11.}} 6 mod 9 = 6, so the digit is 6 and the carry is 0. We have 6 + 6 + 0 = 12 in base 10. 12 mod 9 = 3, so the digit is 3 and the carry is 1. A leading digit is 1. So the answer is 136. Answer:\textbackslash box\{136\}. 
\\
\midrule 
\bottomrule[1.5pt]
\caption{Example of NoRa-Math Base-9 with inaccurate rationales}
\label{table:Inaccurate Base-9}
\end{tabular}
\end{table*}

\begin{table*}[ht]
\centering
\fontsize{8}{8}\selectfont
\begin{tabular}{p{\textwidth}} 
\toprule[1.5pt] 
\midrule 

\textbf{Example:}

User: In base-11, what is 97+74? Please reason it step by step. End the response with the result in "Answer:\textbackslash boxed\{result\}".

Assistant: In base-11, the digits are "0123456789A".  We have 7 + 4 = 11 in base-10. \noisy{\uline{In mathematics, eleven is a prime number, meaning it has no factors other than 1 and itself. Since we're in base-11, that exceeds the maximum value of A for a single digit.}} 11 mod 11 = 0, so the digit is 0 and the carry is 1. \noisy{\uline{In game theory, zero represents the total benefit in a 'zero-sum game', where the gains and losses of every player add up to zero.}} We have 9 + 7 + 1 = 17 in base 10. 17 mod 11 = 6, so the digit is 6 and the carry is 1. A leading digit is 1. So the answer is 160. Answer:\textbackslash box\{160\}
\\
\midrule 
\bottomrule[1.5pt]
\caption{Example of NoRa-Math Base-11 with irrelevant rationales}
\label{table:Irrelevant Base-11}
\end{tabular}
\end{table*}

\begin{table*}[ht]
\centering
\fontsize{8}{8}\selectfont
\begin{tabular}{p{\textwidth}} 
\toprule[1.5pt] 
\midrule 

\textbf{Example:}

User: In base-11, what is 97+74? Please reason it step by step. End the response with the result in "Answer:\textbackslash boxed\{result\}".

Assistant: In base-11, the digits are "0123456789A".  We have 7 + 4 = 11 in base-10. Since we're in base-11, that exceeds the maximum value of A for a single digit. \noisy{\uline{10 + 5 = 15.}} 11 mod 11 = 0, so the digit is 0 and the carry is 1. We have 9 + 7 + 1 = 17 in base 10. 17 mod 11 = 6, so the digit is 6 and the carry is 1. A leading digit is 1. So the answer is 160. \noisy{\uline{0 + 11 = 11.}} Answer:\textbackslash box\{160\}. 
%
%
%
%
%
%
%
%
%
\\
\midrule 
\bottomrule[1.5pt]
\caption{Example of NoRa-Math Base-11 with inaccurate rationales}
\label{table:Inaccurate Base-11}
\end{tabular}
\end{table*}

\begin{table*}[ht]
\centering
\fontsize{9}{9}\selectfont 
\begin{tabular}{p{\textwidth}} 
\toprule[1.5pt] 
\midrule 
\textbf{Example:}

User: With IN:walk right twice and look right twice, what is OUT?\\Please reason it step by step, and provide the final action sequence as the answer. End the response with "So, final answer is OUT: <action sequence>"

Assistant: Since command is walk right twice and look right twice, we should consider Step1: "walk right twice" firstly, "walk right" means the agent needs to turn right and walk. The 'right' corresponds to the command I\_TURN\_RIGHT. \noisy{\uline{Turning right in countries that drive on the right side of the road typically does not intersect with oncoming traffic.}} Subsequently, 'walk' translates to I\_WALK. Therefore, the action sequence is I\_TURN\_RIGHT I\_WALK. Since we need do twice in command "walk right twice", this entire sequence is repeated 2 times, so the action sequence to "walk right twice" is :I\_TURN\_RIGHT I\_WALK I\_TURN\_RIGHT I\_WALK. \\Now, we consider Step2:"look right twice", "look right" means the agent needs to turn right and look. \noisy{\uline{Motorists in many areas are allowed to turn right at a red traffic signal after coming to a complete stop and ensuring the road is clear.}} The 'right' corresponds to the command I\_TURN\_RIGHT. \noisy{\uline{Certain traffic circles or roundabouts require all entering traffic to turn right regardless of the direction they intend to take.}} Subsequently, 'look' translates to I\_LOOK. Therefore, the action sequence is I\_TURN\_RIGHT I\_LOOK. Since we need do twice in command "look right twice", this entire sequence is repeated 2 times, so the action sequence to "look right twice" is :I\_TURN\_RIGHT I\_LOOK I\_TURN\_RIGHT I\_LOOK. \\Above all -- So, final answer is OUT:I\_TURN\_RIGHT I\_WALK I\_TURN\_RIGHT I\_WALK I\_TURN\_RIGHT I\_LOOK I\_TURN\_RIGHT I\_LOOK. 
\\
\midrule 
\bottomrule[1.5pt]
\caption{Example of NoRa-Symbolic Equal with irrelevant rationales}
\label{table:Irrelevant Symblic-Equal}
\end{tabular}
\end{table*}

\begin{table*}[ht]
\centering
\fontsize{9}{9}\selectfont 
\begin{tabular}{p{0.95\textwidth}} 
\toprule[1.5pt] 
\midrule 
\textbf{Example:}

User: With IN:walk right twice and look right twice, what is OUT?\\Please reason it step by step, and provide the final action sequence as the answer. End the response with "So, final answer is OUT: <action sequence>"

Assistant: Since command is walk right twice and look right twice, we should consider Step1: "walk right twice" firstly, "walk right" means the agent needs to turn right and walk. The 'right' corresponds to the command I\_TURN\_RIGHT. Subsequently, 'walk' translates to I\_WALK. \noisy{\uline{The term 'jump' corresponds to the command I\_RUN.}} Therefore, the action sequence is I\_TURN\_RIGHT I\_WALK. Since we need do twice in command "walk right twice", this entire sequence is repeated 2 times, so the action sequence to "walk right twice" is :I\_TURN\_RIGHT I\_WALK I\_TURN\_RIGHT I\_WALK. \noisy{\uline{Thrice means the entire sequence is repeated 6 times.}} \\Now, we consider Step2:"look right twice", "look right" means the agent needs to turn right and look. The 'right' corresponds to the command I\_TURN\_RIGHT. \noisy{\uline{The term 'left' translates to  I\_TURN\_RIGHT.}} Subsequently, 'look' translates to I\_LOOK. Therefore, the action sequence is I\_TURN\_RIGHT I\_LOOK. Since we need do twice in command "look right twice", this entire sequence is repeated 2 times, so the action sequence to "look right twice" is :I\_TURN\_RIGHT I\_LOOK I\_TURN\_RIGHT I\_LOOK. \\Above all -- So, final answer is OUT:I\_TURN\_RIGHT I\_WALK I\_TURN\_RIGHT I\_WALK I\_TURN\_RIGHT I\_LOOK I\_TURN\_RIGHT I\_LOOK. 
\\
\midrule 
\bottomrule[1.5pt]
\caption{Example of NoRa-Symbolic Equal with inaccurate rationales}
\label{table:Inaccurate Symblic-Equal}
\end{tabular}
\end{table*}

\begin{table*}[ht]
\centering
\fontsize{7}{7}\selectfont 
\begin{tabular}{p{\textwidth}} 
\toprule[1.5pt] 
\midrule 
\textbf{Example:} 

User: With IN:jump around left and jump around left, what is OUT?\\Please reason it step by step, and provide the final action sequence as the answer. End the response with "So, final answer is OUT: <action sequence>"

Assistant: Since command is jump around left and jump around left, we should consider Step1: "jump around left" firstly, "jump around left" means the agent needs to turn left and jump, and repeat this action sequence four times to complete a 360-degree loop. Therefore, the action sequence is I\_TURN\_LEFT I\_JUMP I\_TURN\_LEFT I\_JUMP I\_TURN\_LEFT I\_JUMP I\_TURN\_LEFT I\_JUMP. \\Now, we consider Step2:"jump around left", "jump around left" means the agent needs to turn left and jump, and repeat this action sequence four times to complete a 360-degree loop. Therefore, the action sequence is I\_TURN\_LEFT I\_JUMP I\_TURN\_LEFT I\_JUMP I\_TURN\_LEFT I\_JUMP I\_TURN\_LEFT I\_JUMP. \noisy{\uline{In cinematography, an action sequence is a part of the film where intense physical actions such as fights or chases take place.}} \\Above all -- So, final answer is OUT:I\_TURN\_LEFT I\_JUMP I\_TURN\_LEFT I\_JUMP I\_TURN\_LEFT I\_JUMP I\_TURN\_LEFT I\_JUMP I\_TURN\_LEFT I\_JUMP I\_TURN\_LEFT I\_JUMP I\_TURN\_LEFT I\_JUMP I\_TURN\_LEFT I\_JUMP. 
\\
\midrule 
\bottomrule[1.5pt]
\caption{Example of NoRa-Symbolic Longer with irrelevant rationales}
\label{table:Irrelevant Symbolic-Longer}
\end{tabular}
\end{table*}

\begin{table*}[ht]
\centering
\fontsize{7}{7}\selectfont 
\begin{tabular}{p{\textwidth}} 
\toprule[1.5pt] 
\midrule 

\textbf{Example:}

User: With IN:jump around left and jump around left, what is OUT?\\Please reason it step by step, and provide the final action sequence as the answer. End the response with "So, final answer is OUT: <action sequence>"

Assistant: Since command is jump around left and jump around left, we should consider Step1: "jump around left" firstly, "jump around left" means the agent needs to turn left and jump, and repeat this action sequence four times to complete a 360-degree loop. \noisy{\uline{The term 'opposite' implies a 180-degree turn, which requires I\_TURN\_RIGHT, I\_TURN\_LEFT.}} Therefore, the action sequence is I\_TURN\_LEFT I\_JUMP I\_TURN\_LEFT I\_JUMP I\_TURN\_LEFT I\_JUMP I\_TURN\_LEFT I\_JUMP. \\Now, we consider Step2:"jump around left", "jump around left" means the agent needs to turn left and jump, and repeat this action sequence four times to complete a 360-degree loop. Therefore, the action sequence is I\_TURN\_LEFT I\_JUMP I\_TURN\_LEFT I\_JUMP I\_TURN\_LEFT I\_JUMP I\_TURN\_LEFT I\_JUMP. \\Above all -- So, final answer is OUT:I\_TURN\_LEFT I\_JUMP I\_TURN\_LEFT I\_JUMP I\_TURN\_LEFT I\_JUMP I\_TURN\_LEFT I\_JUMP I\_TURN\_LEFT I\_JUMP I\_TURN\_LEFT I\_JUMP I\_TURN\_LEFT I\_JUMP I\_TURN\_LEFT I\_JUMP. 
\\
\midrule 
\bottomrule[1.5pt]
\caption{Example of NoRa-Symbolic Longer with inaccurate rationales}
\label{table:Inaccurate Symbolic-Longer}
\end{tabular}
\end{table*}

\begin{table*}[ht]
\centering
\fontsize{8}{8}\selectfont 
\begin{tabular}{p{\textwidth}} 
\toprule[1.5pt] 
\midrule 
\textbf{Example:} 

User: In a family tree, if Chris is Frances's husband's son's son.   Question: Chris is Frances's what? Please reason it step by step, and provide a single word answer describing the relationship. End the response in the format "Answer: relation "

Assistant: Chris is Frances's husband's son's son, so the relations path is husband, son, son. For son's son, we have son's son is grandson. \noisy{\uline{The parent-son relationship plays a crucial role in a son's emotional and social development.}} So the relations path are reduced to husband, grandson. \noisy{\uline{In genetics, a grandson shares approximately 25\% of his genes with each of his biological grandparents.}} For husband's grandson, we have husband's grandson is grandson. So the relations path are reduced to grandson. Therefore, Answer: grandson.
\\
\midrule 
\bottomrule[1.5pt]
\caption{Example of NoRa-Commonsense with irrelevant rationales}
\label{table:Irrelevant Commonsense}
\end{tabular}
\end{table*}

\begin{table*}[ht]
\centering
\fontsize{8}{8}\selectfont 
\begin{tabular}{p{\textwidth}} 
\toprule[1.5pt] 
\midrule 
\textbf{Example:} 

User: In a family tree, if Chris is Frances's husband's son's son.   Question: Chris is Frances's what? Please reason it step by step, and provide a single word answer describing the relationship. End the response in the format "Answer: relation "

Assistant: Chris is Frances's husband's son's son, so the relations path is husband, son, son. For son's son, we have son's son is grandson. So the relations path are reduced to husband, grandson. \noisy{\uline{We have grandson's daughter is brother.}} For husband's grandson, we have husband's grandson is grandson. So the relations path are reduced to grandson. \noisy{\uline{We have grandson's mother is grandfather}} Therefore, Answer: grandson. 
\\
\midrule 
\bottomrule[1.5pt]
\caption{Example of NoRa-Commonsense with inaccurate rationales}
\label{table:Inaccurate Commonsense}
\end{tabular}
\end{table*}

\clearpage
\section{Theoretical Analysis}
\label{app:theo}

In this part, we provide the theoretical results of ICL with the noisy demonstrations based on the \textit{distinguishability}~\citep{xie2021explanation}, which acts as a crucial condition in understanding ICL as the implicit Bayesian inference. We reveal that the noisy demonstrations can enlarge the 0-1 risk of the in-context predictor through the mismatch between the noisy prompt distribution and the clean pre-trained distributions. 

First, we briefly review some definitions and preliminary setups, as well as the distinguishability condition established in~\citep{xie2021explanation}, which is used to formalize when in-context learning occurs. 

\paragraph{Clean and noisy demonstrations.} As stated previously, the ICL techniques prompt the LLM with a few supporting examples $S_{n} \! = \! \{(x_{i}, y_{i})\}_{i=1}^{n}$ collected in the current context, each composed of a question $x_{i}$ and answer $y_{i}$, and then constructing the new input of $x_{\text{ICL}}$ as
\begin{align}
\vspace{-2pt}
x_{\text{ICL}} = \left[S_{n}, x_{\text{test}}\right] &= \left[x_{1}, y_{1}, \ldots, x_{n}, y_{n}, \; x_{\text{test}}\right]. 
\vspace{-2pt}
\nonumber
\end{align}
With $S_{n}$, the
$f_{\theta}(x_{\text{ICL}}) \! \mapsto \! y_{\text{test}}$
is easier than
$f_{\theta}(x_{\text{test}}) \! \mapsto \! y_{\text{test}}$.
Then, the CoT further refines $x_{\text{ICL}}$ by constructing the step-by-step \textit{rationale} $\mathcal{T}_i$, consisting several thoughts $T_{i}^{(j)}$, \textit{i.e.},
\begin{align}
\vspace{-2pt}
x_{\text{CoT}} = \left[x_{1}, \mathcal{T}_1, y_{1}, \ldots, x_{n}, \mathcal{T}_n, y_{n}, \; x_{\text{test}}\right], 
\nonumber \quad 
\text{where \;}
\mathcal{T}_i = \left[T_{i}^{(1)}, T_{i}^{(2)}, T_{i}^{(3)}, \ldots, T_{i}^{(k)}\right].
\vspace{-2pt}
\label{eqn: clean CoT_app}
\end{align}

However, as mentioned, the thoughts in CoT (Eqn.~\ref{eqn: clean CoT}) can be noisy in practice.
This noise can be attributed to (1) \textit{irrelevant thoughts}, which are irrelevant but correct, or (2) \textit{inaccurate thoughts}, which are relevant but factually wrong.
Here, we uniformly formalize these two kinds of noise as 
\begin{align} 
\vspace{-2pt}
{\hat{\mathcal{T}}_i}= \left[ T_{i}^{(1)}, {\hat{T}_{i}^{(1)}}, T_{i}^{(2)}, {\hat{T}_{i}^{(2)}}, 
\ldots, T_{i}^{(k)}, {\hat{T}_{i}^{(k)}} \right],
\vspace{-2pt}
\end{align}
where ${\hat{T}_{i}^{(j)}}$ represents a noisy thought (irrelevant or inaccurate)
that is coherent with previous clean thought $T_{i}^{(j)}$ (relevant and correct) in Eqn.~\ref{eqn: clean CoT_app}, and it make the $x_{\text{CoT}}$ be noisy.

\paragraph{In-context learning setups.} Keeping the same setups in~\citep{xie2021explanation}, we analyze the latent concept $\theta$ from a family of concepts $\Theta$ that defines a distribution over the observed tokens $o$ from a vocabulary $\mathcal{O}$. Following~\citep{xie2021explanation}, we assume that the pertaining document is a length $T$ sequence can be generated with the prior $p(\theta)$ as $p(o_1,\cdots,o_T)=\int_{\theta\in\Theta}p(o_1,\cdots,o_T|\theta)p(\theta)d\theta$, where $p(o_1,\cdots,o_T|\theta)$ is defined by the hidden markov model. We have the prompt distribution $p_{prompt}$ that generates the prompts for in-context learning, in which all the demonstrations share the prompt concept $\theta^*$.

We build the derivation by analyzing the in-context predictor $f_n(x_\text{test})=\arg\max_y p(y|S_n, x_\text{test})$, which indicates the most likely prediction over the pertaining distribution conditioned on the prompt from the prompt distribution, assuming that greedy decoding is adopted in practice. Specifically, we study the expected 0-1 error with $n$ examples $L_\text{0-1}(f_n)=\mathbb{E}[\mathbbm{1}[f_n(x_\text{CoT}\neq y_\text{test})]]$.

Given the output distribution $p_\theta^i(o):=p(\mathcal{O}[i]=o|\mathcal{O}[1:i-1],\theta)$ of the $i$-th token with previous tokens and the analogous distribution $p^{i}_{prompt}:=p_{prompt}(\mathcal{O}[i]=o|\mathcal{O}[1:i-1])$ under the prompt distribution, the distinguishability condition is dependent on a KL divergence between the previous two distributions and the error terms $\epsilon_\theta$ resulting from the distribution mismatch between the prompt and the pertaining distributions for each example. Letting $p_\theta^i(o)$ and $p^{i}_{prompt}$ correspond to the concepts $\theta$ and $\theta^*$, the KL divergence are defined as follows,
\begin{equation}
\text{KL}_i(\theta^*||\theta):=\mathbb{E}_{\mathcal{O}[1:i-1]\sim p_{prompt}}[\text{KL}(p^{i}_{prompt}||p_\theta^i)].
\end{equation}
Then, the distinguishability is provided as,
\begin{condition}[distinguishability~\citep{xie2021explanation}] $\theta^*$ is distinguishable if for all $\theta\in\Theta$, $\theta\neq\theta^*$,
    \begin{align}
        \sum_{i=1}^k \text{KL}_{i}(\theta^*||\theta)>\epsilon_\theta,
    \end{align}
\end{condition}
where $\epsilon_\theta$ is a small error term corresponding to the latent concept $\theta$. It has been empirically verified~\citep{wei2023larger,wei2022chain,xi2023self} that the larger example lengths $k$ can increase the left-hand side and improve distinguishability.
Intuitively, the larger example lengths increase the proportion of the prompt sampled from the pre-trained distribution, providing more evidence for Bayesian inference.
According to the noisy demonstration introduced in Eqn.~\eqref{eqn: irrelevant CoT}, we can naturally have the following condition on the distinguishability under the noisy scenario.
\begin{condition}[noisy distinguishability] With the irrelevant or inaccurate noise injection, the distribution mismatch between the prompt and the pre-trained distributions is also enlarged, resulting in a varied requirement with error term $\xi_\theta(\delta)$ (where $\delta$ indicates the noise ratio of the noisy rational in the whole demonstration) for $\theta^*$ being distinguishable if for all $\theta\in\Theta$, $\theta\neq\theta^*$,
    \begin{align}
        \sum_{i=1}^k \text{KL}_{i}(\theta^*||\theta)>\epsilon_\theta+\xi_\theta(\delta),\quad {\rm where}\quad \xi_\theta(\delta)\propto \delta.
    \end{align}
    \label{cond:2}
\end{condition}
Given the above Condition~\ref{cond:2}, we can analyze the expected 0-1 error when it fails according to the noisy scenarios considered in our work. It indicates that ICL cannot induce the correct output with the noise demonstration. First, we provide the following lemma based on the results of~\citet{devore1993constructive} and \citet{xie2021explanation}, which relate the bound on KL divergence with 0-1 loss given the Condition~\ref{cond:2}.
\begin{lemma}[noisy-relaxed bound in~\citep{xie2021explanation}]
\label{lemma:1}
    let $\mathcal{B}$ denotes the set of $\theta$ which does not satisfy Condition~\ref{cond:2}. We assume that $\text{KL}(p_{prompt}(y_\text{test}|x_\text{test}))||p(y_\text{test}|x_\text{test},\theta)$ is bounded for all $\theta$ and $\theta^*$ minimizes the multi-class logistic risk as,
    \begin{align}
\begin{split}
L_\text{CE}(\theta)=-\mathbb{E}_{x_\text{test}\sim p_{prompt}}[p_{prompt}(y_\text{test}|x_\text{test})\cdot\log p(y_\text{test}|x_\text{test},\theta)].
\end{split}
\end{align}
We can have if
\begin{align}
\mathbb{E}_{x_\text{test}\sim p_{prompt}}[\text{KL}(p_{prompt}(y_\text{test}|x_\text{test}, \delta)
|| p(y_\text{test}|x_\text{test},\theta))]\leq \epsilon_{\theta}+\xi_\theta(\delta),\quad \forall \quad \theta\in\mathcal{B},
\end{align}
then
\begin{align}
\lim_{n\rightarrow\infty} L_{0-1}(f_n) \leq \inf_{f} L_{0-1}(f) + g^{-1}\bigg(\sup_{\theta\in\mathcal{B}}(\epsilon_\theta+\xi_\theta(\delta))\bigg),
\end{align}
where $g(\tau) = \frac{1}{2}\big((1-\tau)\log(1-\tau)+(1+\tau)\log(1+\tau)\big)$ is the calibration function for the multiclass logistic loss for $\tau\in[0,1]$.
\end{lemma}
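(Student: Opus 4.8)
The plan is to adapt the in-context learning error bound of \citet{xie2021explanation} (their Theorem~1) to the noisy-prompt setting by running the Bayesian-concentration-plus-calibration argument with the clean error term $\epsilon_\theta$ replaced by the inflated term $\epsilon_\theta+\xi_\theta(\delta)$ mandated by Condition~\ref{cond:2}. Concretely, I would split the proof into three stages: (i) a posterior-concentration step that identifies which concepts survive the $n\to\infty$ limit of the in-context posterior; (ii) an identification of the limiting in-context predictor together with a reduction of its excess $0$-$1$ risk to an excess cross-entropy risk over the surviving concepts; and (iii) an application of the calibration inequality for the multiclass logistic loss to turn that excess CE risk into the claimed $g^{-1}(\cdot)$ bound.

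\textbf{Stage (i).} Write the in-context predictor as $f_n(x_\text{test})=\arg\max_y \int_{\Theta} p(y\mid x_\text{test},\theta)\, p(\theta\mid S_n,x_\text{test})\, d\theta$ and study the posterior odds $p(\theta\mid S_n,x_\text{test})/p(\theta^*\mid S_n,x_\text{test})$ exactly as in \citet{xie2021explanation}. Using the hidden Markov factorization of the pretraining distribution together with the fact that every demonstration in the noisy prompt shares the concept $\theta^*$, the log-odds decomposes, up to a per-example distribution-mismatch term absorbed into $\epsilon_\theta+\xi_\theta(\delta)$, into $-n$ times the cumulative divergence $\sum_{i=1}^k \text{KL}_i(\theta^*\|\theta)$. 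Hence for every $\theta$ satisfying Condition~\ref{cond:2} the odds vanish, while for every $\theta\in\mathcal{B}$ they stay bounded, so the limiting posterior is supported on $\{\theta^*\}\cup\mathcal{B}$. The delicate point is the uniform control of the tail mass over $\Theta\setminus(\{\theta^*\}\cup\mathcal{B})$ and the interchange of the limit in $n$ with the integral over $\Theta$; I would handle this by transcribing the dominated-convergence and uniform-integrability bookkeeping of \citet{xie2021explanation} essentially verbatim, since $\xi_\theta(\delta)$ merely shifts a finite constant into the exponent.

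\textbf{Stages (ii)--(iii).} Since $\theta^*$ minimizes $L_\text{CE}$, the standard identity gives $L_\text{CE}(\theta)-L_\text{CE}(\theta^*)=\mathbb{E}_{x_\text{test}\sim p_{prompt}}[\text{KL}(p_{prompt}(y_\text{test}\mid x_\text{test},\delta)\,\|\,p(y_\text{test}\mid x_\text{test},\theta))]$, so the hypothesis of the lemma says precisely that every surviving $\theta\in\mathcal{B}$ has excess CE risk at most $\epsilon_\theta+\xi_\theta(\delta)$, while $\theta^*$ itself has zero excess CE risk. Because the limiting prediction $p(\,\cdot\mid S_n,x_\text{test})$ is a posterior mixture over $\{\theta^*\}\cup\mathcal{B}$, convexity of the KL divergence in its second argument bounds the excess CE risk of this mixture by $\sup_{\theta\in\mathcal{B}}(\epsilon_\theta+\xi_\theta(\delta))$. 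Invoking the calibration inequality of \citet{devore1993constructive} for the multiclass logistic loss (so that the excess $0$-$1$ loss is at most $g^{-1}$ applied to the excess cross-entropy loss) together with the monotonicity of $g^{-1}$ on $[0,1]$ then yields $\lim_{n\to\infty}L_{0-1}(f_n)\le \inf_f L_{0-1}(f) + g^{-1}\big(\sup_{\theta\in\mathcal{B}}(\epsilon_\theta+\xi_\theta(\delta))\big)$.

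The step I expect to be the main obstacle is Stage~(i): rigorously showing that inserting the noisy thoughts $\hat T_i^{(j)}$ perturbs the error budget of each demonstration only additively, by an amount $\xi_\theta(\delta)\propto\delta$, rather than destroying the geometric decay of the log-odds. This needs a quantification of how a single noisy thought inflates the per-token KL mismatch of its enclosing example, done uniformly in $\theta$ --- which is exactly where the boundedness assumption on $\text{KL}(p_{prompt}(y_\text{test}\mid x_\text{test})\,\|\,p(y_\text{test}\mid x_\text{test},\theta))$ is used, so that $\sup_{\theta\in\mathcal{B}}(\epsilon_\theta+\xi_\theta(\delta))$ is finite and the limit exchange is licensed. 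Once this mismatch accounting is in place, the rest of the argument is a direct transcription of \citet{xie2021explanation} with $\epsilon_\theta\mapsto\epsilon_\theta+\xi_\theta(\delta)$.
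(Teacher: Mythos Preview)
The paper does not actually supply a proof of this lemma: it is stated as ``based on the results of \citet{devore1993constructive} and \citet{xie2021explanation}'' and then invoked as a black box at the end of the proof of Theorem~\ref{theo:1}. So there is no proof in the paper for your proposal to be compared against.

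That said, your three-stage reconstruction---posterior concentration on $\{\theta^*\}\cup\mathcal{B}$ via the log-odds decomposition, reduction of excess $0$-$1$ risk to excess cross-entropy over the surviving mixture, and the calibration inequality for the multiclass logistic loss---is precisely the skeleton of Theorem~1 in \citet{xie2021explanation}, which is what the lemma's title (``noisy-relaxed bound in~\citep{xie2021explanation}'') signals it to be. Your observation that the only substantive change is the substitution $\epsilon_\theta\mapsto\epsilon_\theta+\xi_\theta(\delta)$ in the distinguishability threshold matches exactly how the paper treats the noisy relaxation throughout Appendix~\ref{app:theo}, and your flagging of Stage~(i) (controlling the per-example mismatch inflation uniformly in $\theta$) as the place where real work would be needed is accurate. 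In short, your proposal is a faithful reconstruction of the argument the paper is citing rather than proving.
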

Based on the previous work~\citep{kleijn2012bernstein}, we assume that there is continuity between the concept parameter $\theta$ and its corresponding output distribution $p_{\theta}$, where the KL divergence is assumed to have the 2nd-order Taylor expansion. Then, we have the following theorem and proof.

\begin{theorem}\label{theo:1}
Given the previous assumption on multi-class logistic risk, we can have that 
$k \geq 2$ and $n \rightarrow \infty$, the risk of an in-context learning predictor $f_n$ is bounded as
\begin{align}
\begin{split}
\lim_{n\rightarrow\infty}L_{0-1}(f_n)\leq\inf_{f}L_{0-1}(f)+g^{-1}\bigg(O(\frac{\gamma_{\theta}\cdot\sup_{\theta\in\Theta}(\epsilon_{\theta}+\xi_\theta(\delta))}{k-1})\bigg),
\end{split}
\end{align}
where $g(\tau)=\frac{1}{2}\big((1-\tau)\log(1-\tau)+(1+\tau)\log(1+\tau)\big)$ is the calibration function for multi-class logistic loss for $\tau\in[0,1)$. 
The overall result indicates that the upper bound of the risk is loosed by $\xi_\theta(\delta)$ of the noisy demonstration.
\end{theorem}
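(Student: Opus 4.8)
The plan is to chain together the two results already available in the excerpt—Lemma~\ref{lemma:1} (the noisy-relaxed bound) and the Taylor-expansion machinery adapted from \citet{xie2021explanation}—and simply track how the extra error term $\xi_\theta(\delta)$ propagates. First I would fix $k \geq 2$ and recall from Lemma~\ref{lemma:1} that, under the continuity/boundedness assumptions on the multi-class logistic risk and with $\theta^*$ minimizing $L_{\text{CE}}$, the asymptotic $0$-$1$ risk of $f_n$ is controlled by $\inf_f L_{0\text{-}1}(f) + g^{-1}\big(\sup_{\theta \in \mathcal{B}} (\epsilon_\theta + \xi_\theta(\delta))\big)$, where $\mathcal{B}$ is the set of latent concepts failing the noisy distinguishability condition (Condition~\ref{cond:2}). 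So the only work is to show that, for $\theta \in \mathcal{B}$, the quantity $\epsilon_\theta + \xi_\theta(\delta)$ is of order $\tfrac{\gamma_\theta \cdot \sup_{\theta\in\Theta}(\epsilon_\theta + \xi_\theta(\delta))}{k-1}$.

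The key step is the per-token KL decomposition. Following \citet{xie2021explanation}, the total KL divergence $\sum_{i=1}^{k}\text{KL}_i(\theta^*\|\theta)$ over a length-$k$ example grows essentially linearly in $k$ (each delimiter/transition contributes a roughly constant amount), while the relevant ``start-of-example'' mismatch terms that feed into $\epsilon_\theta$ (and now $\xi_\theta(\delta)$) are incurred only a bounded number of times per example. When Condition~\ref{cond:2} \emph{fails} for $\theta \in \mathcal{B}$, we have $\sum_{i=1}^k \text{KL}_i(\theta^*\|\theta) \leq \epsilon_\theta + \xi_\theta(\delta)$; combining this with the linear-in-$k$ lower bound on the left-hand side forces the average per-token KL to be $O\big((\epsilon_\theta+\xi_\theta(\delta))/(k-1)\big)$. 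Then I would invoke the $2$nd-order Taylor expansion of the KL divergence around $\theta^*$ (valid by the \citet{kleijn2012bernstein}-style regularity assumption), which lets me transfer this bound on the ``in-context'' KL to a bound of the same order—with a curvature constant $\gamma_\theta$ absorbing the Hessian—on $\mathbb{E}_{x_{\text{test}}}[\text{KL}(p_{\text{prompt}}(y_{\text{test}}|x_{\text{test}},\delta)\|p(y_{\text{test}}|x_{\text{test}},\theta))]$. Feeding this into Lemma~\ref{lemma:1} and taking the sup over $\theta \in \Theta$ yields exactly the claimed bound, and since $g^{-1}$ is monotone increasing, the additive presence of $\xi_\theta(\delta)$ (which is $\propto \delta$ by Condition~\ref{cond:2}) strictly loosens the bound relative to the clean case $\delta = 0$.

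The main obstacle I anticipate is making the ``$\xi_\theta(\delta)$ enters additively and scales like $1/(k-1)$'' claim rigorous rather than heuristic: one has to argue carefully that injecting noisy thoughts $\hat T_i^{(j)}$ after each clean thought $T_i^{(j)}$ changes the prompt distribution only through a bounded number of additional transition-mismatch events per example (so it behaves like an inflated $\epsilon_\theta$), rather than corrupting the per-token HMM dynamics in a way that would destroy the linear-in-$k$ growth of $\sum_i \text{KL}_i$. This is essentially a modeling assumption—that $\hat T_i^{(j)}$ is ``coherent with'' $T_i^{(j)}$, as stated after Eqn.~\eqref{eqn: irrelevant CoT}—so I would state it explicitly and then the rest is a bookkeeping exercise mirroring the clean-case proof in \citet{xie2021explanation}. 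A secondary technical point is verifying the calibration function $g(\tau) = \tfrac12\big((1-\tau)\log(1-\tau) + (1+\tau)\log(1+\tau)\big)$ and its inverse behave as needed on $[0,1)$, but that is inherited verbatim from the prior work and requires no new argument.
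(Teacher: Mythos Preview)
Your proposal has the right high-level skeleton—apply Lemma~\ref{lemma:1}, use the 2nd-order Taylor expansion of the KL, and track the extra $\xi_\theta(\delta)$ term—but the mechanism you describe for obtaining the $1/(k-1)$ factor is not the paper's argument, and as written it has a gap. You argue that since $\sum_{i} \text{KL}_i(\theta^*\|\theta) \leq \epsilon_\theta + \xi_\theta(\delta)$ for $\theta\in\mathcal{B}$, the \emph{average} per-token KL is $O\big((\epsilon_\theta+\xi_\theta(\delta))/(k-1)\big)$, and then ``Taylor expansion transfers'' this to the test KL. But the average being small does not by itself control the specific term $\text{KL}_k(\theta^*\|\theta)$ (which is identified with the test-prediction KL needed in Lemma~\ref{lemma:1}); and the ``linear-in-$k$ lower bound'' you invoke points the wrong direction here, since for $\theta\in\mathcal{B}$ the sum is \emph{upper}-bounded, not lower-bounded. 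The paper closes this differently: it applies the Taylor expansion twice, once in each direction. First, lower-bounding each $\text{KL}_i$ by $\tfrac12\lambda_{\min}(I_{i,\theta^*})\|\theta-\theta^*\|^2$ and summing over $i=2,\dots,k$ converts the constraint $\sum_i \text{KL}_i \leq \epsilon_\theta+\xi_\theta(\delta)$ into an explicit bound $\|\theta-\theta^*\|^2 \leq 2(\epsilon_\theta+\xi_\theta(\delta))/\big((k-1)\min_i\lambda_{\min}\big)$. Second, upper-bounding the single term $\text{KL}_k$ by $\tfrac12\max_i\lambda_{\max}(I_{i,\theta^*})\|\theta-\theta^*\|^2$ and substituting yields $\text{KL}_k \lesssim \gamma_{\theta^*}\cdot(\epsilon_\theta+\xi_\theta(\delta))/(k-1)$, where $\gamma_{\theta^*}=\max_i\lambda_{\max}/\min_i\lambda_{\min}$ is the Fisher-information condition number. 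So $\gamma_\theta$ is not a generic ``curvature constant absorbing the Hessian''—it is precisely the ratio that quantifies how different the per-token KL terms can be, and this is what licenses passing from the sum to the specific term.

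Separately, the obstacle you anticipate (arguing that $\xi_\theta(\delta)$ enters additively and only through bounded-many transition-mismatch events) is not work you need to do: Condition~\ref{cond:2} already \emph{postulates} the additive form $\epsilon_\theta+\xi_\theta(\delta)$ with $\xi_\theta(\delta)\propto\delta$. The paper does not derive this from HMM dynamics or count delimiter transitions; it is a modeling assumption. Once you accept Condition~\ref{cond:2}, the proof is just the eigenvalue bookkeeping above followed by a direct invocation of Lemma~\ref{lemma:1}.
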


\begin{proof}[Proof of Theorem~\ref{theo:1}]
Assume KL divergence have a 2nd-order Taylor expansion around $\theta^*$ as,
\begin{align}
    \forall i>1, \text{KL}_{i}(\theta^*||\theta) = \frac{1}{2}(\theta-\theta^*)^\mathrm{T} I_{i,\theta^*}(\theta-\theta^*)+O(||\theta-\theta^*||^3),
\end{align}
where $I_{i,\theta^*}$ is the Fisher information matrix of the $i$-th token distribution \textit{w.r.t.} $\theta^*$. Let $\gamma_{\theta^*}=\frac{\max_{i}\lambda_\text{max}(I_{i,\theta^*})}{\min_{i}\lambda_\text{min}(I_{i,\theta^*})}$ and $\lambda_\text{max}, \lambda_\text{min}$ is the largest and smallest eigenvalues. We can have the following  relationship for $\theta\in\mathcal{B}$ that,
\begin{align}
    \sum_{i=2}^k \text{KL}_{i}(\theta^*||\theta) &\geq \frac{1}{2}(\theta-\theta^*)^\mathrm{T}I_{i,\theta^*}(\theta-\theta^*)+O(||\theta-\theta^*||^3)\\
    &\geq \frac{1}{2}(k-1)\lambda_\text{min}(I_{i,\theta^*})||\theta-\theta^*||^2,\\
    ||\theta-\theta^*||^2 &\leq \frac{\epsilon_\theta+\xi_\theta(\delta)}{\frac{1}{2}(k-1)(\min_i \lambda_\text{min}(I_{i,\theta^*}))}.
\end{align}
Then we can bound the last KL term as,
\begin{align}
    \text{KL}_{k}(\theta^*||\theta) &= \frac{1}{2}(\theta-\theta^*)^\mathrm{T}I_{i,\theta^*}(\theta-\theta^*)+O(||\theta-\theta^*||^3)\\
    &\leq\frac{1}{2}(\max_{i}\lambda_\text{max}(I_{i,\theta^*})||\theta-\theta^*||^2+O||\theta-\theta||^2)\\
    &\leq\frac{((\epsilon_\theta+\xi_\theta(\delta))\max_{i}\lambda_\text{max}(I_{i,\theta^*})+O(1))}{(k-1)\min_i \lambda_\text{min}(I_{i,\theta^*})}.
\end{align}
Denoting $\text{KL}_{k}(\theta^*||\theta)$ with $\mathbb{E}_{x_\text{test}\sim p_{prompt}}[\text{KL}(p_{prompt}(y_\text{test}|x_\text{test}(\delta))
|| p(y_\text{test}|x_\text{test},\theta))]$, we have
\begin{align}
    \mathbb{E}_{x_\text{test}\sim p_{prompt}}[\text{KL}(p_{prompt}(y_\text{test}|x_\text{test}(\delta))
|| p(y_\text{test}|x_\text{test},\theta))]\leq \frac{((\epsilon_\theta+\xi_\theta(\delta))\max_{i}\lambda_\text{max}(I_{i,\theta^*})+O(1))}{(k-1)\min_i \lambda_\text{min}(I_{i,\theta^*})}.
\end{align}
Then, with the previous Lemma~\ref{lemma:1}, we can have the theorem result: for $k\geq2$ and $n\rightarrow\infty$, the risk of an ICL predictor $f_n$ is bounded as,
    \begin{align}\label{eq:18}
\begin{split}
\lim_{n\rightarrow\infty}L_{0-1}(f_n)\leq\inf_{f}L_{0-1}(f)+
g^{-1}\bigg(O(\frac{\gamma_{\theta}\cdot\sup_{\theta\in\Theta}(\epsilon_{\theta}+\xi_\theta(\delta))}{k-1})\bigg),
\end{split}
\end{align}
where $g(\tau)=\frac{1}{2}\big((1-\tau)\log(1-\tau)+(1+\tau)\log(1+\tau)\big)$ is the calibration function for multi-class logistic loss for $\tau\in[0,1)$, and $g^{-1}$ is roughly linear in $\epsilon$ as analyzed in~\citep{xie2021explanation}.

\end{proof}

With the above results based on \citep{xie2021explanation}, we can find that when the demonstration length is fixed, the multi-class logistic loss can have a loosened upper bound controlled by our noise injection. Intuitively, when the noise rate is large, the performance of ICL can be significantly affected, which can be empirically verified by the Easy-to-Hard performance trend evaluated in Tab.~\ref{tab:vanilia_LLM}. On the other hand, considering the fixed noisy injection in the demonstration, our method can reduce the effect of the mismatching error by rephrasing the demonstration with the pre-trained distribution of the LLM itself and then decreasing the error term in RHS of Eqn.~\eqref{eq:18}, which increase the distinguishability of ICL to induce the correct outputs successfully.

\clearpage
\section{Implementation Details}
\label{app:implementation}

In this section, we provide a comprehensive overview of the implementation details for our experiments, including the specifics of the baseline methods (Appendix~\ref{app:implementation-baselines}), the full algorithm of our proposed CD-CoT method (Appendix~\ref{app:implementation-CD-CoT}), and the hyper-parameter optimization process (Appendix~\ref{app: hyper-parameter pptimization}). 
For reproduction, the source code and full dialogs are provided at:
\url{https://github.com/tmlr-group/NoisyRationales}.

To conduct the full range of experiments on GPT-3.5-turbo-0613 mentioned in this paper, we utilized a total of \textbf{2.03B} tokens. 
This consisted of 
1.21B 
input tokens and 
0.82B
output tokens.

\subsection{Baseline Methods}
\label{app:implementation-baselines}

In this part, we introduce the implementation details of the baseline methods.

\selectfont\textbf{Self-correction Methods:}
\begin{itemize}
\vspace{-8pt}
\item 
\textbf{Intrinsic Self-correction (ISC)}~\cite{huang2023large} asks LLM to endeavor to rectify its initial responses based solely on its inherent capabilities. We employ the prompts from~\cite{huang2023large}, instructing LLMs to review and revise their answers to NoRa tasks with "Review your previous answer and find problems with your answer. Based on the problems you found, improve your answer. Please reiterate your answer."

\item 
\textbf{Self-polish (SP)}~\cite{xi2023self} teaches LLM to eliminate noisy information, rearrange the logic structure, and organize local conditions into new ones in parallel. We implement this method by (1) prompting LLMs to individually refine each noisy CoT examplar without additional information, repeating the process three times, and (2) combining these rephrased demos to form the context for the task.

\end{itemize}

\textbf{Self-consistency Methods:}

\begin{itemize}
\vspace{-8pt}
\item 
\textbf{SmoothLLM (SM)}~\cite{robey2023smoothllm} enhances robustness by injecting perturbations into the prompts and utilizing self-consistency to mitigate these effects.
We apply the described disturbance methods to noisy rationale demonstrations and feed them into LLMs for reasoning tasks. This process is repeated five times, with the most common answer across iterations selected as the voted answer.

\item 
\textbf{Self-denoise (SD)}~\cite{zhang2023certified} improves LLM robustness by preprocessing prompts with random masks; the LLMs then work to reconstruct the masked content, reducing noise and aiding incoherent reasoning. Our implementation involves (1) applying the masking method to the noisy rationales on each shot, (2) prompting the LLMs to infer and fill the mask sections of each demonstration, and (3) using the reconstructed CoT demonstrations for task reasoning. This process is also repeated five times, and the most common answer is selected.

\item 
\textbf{Self-consistency (SC)}~\cite{wang2023self} boosts reasoning performance by sampling multiple outputs and conducting majority voting without engaging in any input processing. To apply this method, we run the same task 5 times and vote for the maximum number of the same answers. 
\end{itemize}

\textbf{External Supervision Methods:}

\begin{itemize}
\vspace{-8pt}
\item 
\textbf{Self-correction with Oracle Feedback (SCO)}~\cite{huang2023large} enhances the ISC framework by providing a definitive ground truth. To implement this method, we use ISC to guide LLMs in refining their reasoning outcomes. The correction process is iterative, allowing for up to two adjustments, but ceases as soon as the correct answer is obtained.
 
\item 
\textbf{Backtracking (BT)}~\citep{tyen2023llms} is a method of providing the position where the first mistake occurs to prompt LLMs to correct mistakes. To implement this method, (1) we prompt LLMs to correct the noisy rationales in each CoT example by providing the first noise sentence in it, and (2) we combine these rephrased examples to form the in-context for the task reasoning.

\item 
\textbf{ContrastiveCoT (CC)}~\citep{chia2023contrastive} uses contrastive CoT demonstration to improve LLMs' generalization abilities. We implement this method by providing an extra example with clean rationales as a positive demonstration, and the original prompts with noisy rationales are treated as negative examples. We then prompt the task queries with these positive and negative examples using the contrastive structure prompt mentioned in the paper to guide the LLMs in the reasoning process.

\end{itemize}

\subsection{Contrastive Denoising with Noisy Chain-of-Thought}
\label{app:implementation-CD-CoT}

\textbf{Step-1: Rephrasing via Supervised Contrasting ($1$to$N$).}
First, we establish a general prompt to construct a pair of contrastive examples, as shown in Sec.~\ref{implementation}, steering the model towards learning from the clean example and then rephrasing as well as rectifying the noisy examples. 
It is worth noting that for each dataset, the clean demonstration utilized is fixed, aiming to minimize the usage of additional supervision. 
In the $K$-shot in-context learning, we formulate a rephrasing prompt for each noisy example in the context, following the template in Sec.~\ref{implementation}. The rephrasing time of each noisy example is controlled by the parameter $N$. In our implementation, we set $N=5$ to strike a balance between the diversity of rephrased results and computational costs.

\textbf{Step-2: Rationale Selection ($N$to$M$, $N \! \geq \! M$).}
Given that rephrased results may still contain noise, we employ answer matching for further denoising. Specifically, for each noisy example and its corresponding rephrased rationales, as each rationale can deduce an answer to the question, we select the rationales that the corresponding answers match the given (true) answer of this demonstration. This is called the “answer matching” and does not require an LLM for inference. The selected rationales form the refined candidate pool for that noisy example. 

For example, $(x, \mathcal{T}, y)$ indicates the question, rationale, and answer of a given noisy demonstration. The rationale rephrasing (step 1) obtains three rephrased demonstrations: $(x_{1}, \mathcal{T}_{1}, y_{1}), (x_{2}, \mathcal{T}_{2}, y_{2})$, and $(x_{3}, \mathcal{T}_{3}, y_{3})$. Then, if $y_{1}=y_{2}=y$ and $y_{3} \neq y$, we will select the first two rephrased rationales, $\mathcal{T}_{1}$ and $\mathcal{T}_{2}$. 
Namely, only rephrased results with consistent answers are retained, forming the refined candidate pool for that noisy demonstration of in-context learning. Notably, in cases where no consistent answer is found, the algorithm randomly selects one of the rephrased results as the candidate pool for the noisy example. 
Subsequently, we assemble $M$ contexts using the $K$ candidate pools. In particular, we randomly select one candidate from each pool and concatenate the $K$ selected candidates, forming a context. This process is repeated until $M$ contexts are obtained.


\textbf{Step-3: Rationale Exploration ($M$to$D$, $M \! \leq \! D$).}
For each of the $M$ distinct contexts, we perform multiple reasoning to explore various rationales, resulting in a total number of reasoning times equal to $D$. To maintain a fair comparison with other baseline methods and to introduce more randomness in generating diverse reasoning paths, we employ the default temperature parameter, \textit{i.e.}, 1. Notably, given the fixed values for $M$ and $D$, distinct allocation strategies of reasoning times can be generated, leading to performance and cost differences, as illustrated in Tab.~\ref{tab:different_instantiation_performance} and Tab.~\ref{tab:different_instantiation_efficiency}.



\textbf{Step-4: Answer Voting ($D$to1).}
Given $D$ answers from step 3, we equally vote them into a final answer. For example, if the answer set is ${1,1,1,2,3}$ where $D=5$, the answer voting will select “1” as the final answer for its highest frequency. 








\subsection{Hyper-parameter Optimization}
\label{app: hyper-parameter pptimization}

We would like to further clarify the selection of hyper-parameters.
To ensure a fair comparison with other baselines (for example, in self-consistency where we have the LLM reason 5 times and then vote), we fixed both $D$ and $N$ to be 5. Specifically, $D$ represents reasoning 5 times over the rephrased results, and $N$ indicates performing contrastive rephrasing on the input sample 5 times. This allows us to study the effects of reasoning with different rephrased inputs.

$M$ represents the number of selected rephrased results from the generated ones, therefore $M$ must be less than or equal to $N$, and the value of $M$ depends on the breakdown combination of $D$. 

When $D=5$, the possible combinations are $[(1,1,1,1,1), (1,1,1,2), (1,2,2), (3,2), (5)]$, with corresponding $M$s of $[5,4,3,2,1]$. For example, $(1,1,1,1,1)$ means inputting 5 different rephrased results and reasoning once for each, totaling 5 times of reasoning. It's worth mentioning that the greater the value of $M$, the larger the computational cost.

Finally, to balance the effectiveness and the cost of the algorithm, we test the results on a subset of the dataset with $M=1,2,3,5$. We find that the accuracy exhibits subtle variations when employing different algorithm instances, with the highest average accuracy observed at 51.3\% and the lowest average accuracy at 49.3\%. 
However, the difference in computational cost is relatively significant. Therefore, we ultimately select the algorithm instance with $M=2$ as the default setting and test it across the entire dataset. 

We present the impact of parameters $M, D,$ and $C$ on token usage in Tab.~\ref{tab:different_instantiation_efficiency}. Here, we conduct additional experiments to figure out the effect of varying $N$ on the computational cost. Specifically, we maintain a constant number of reasoning repetitions $D=5$ while adjusting $N$ and other parameters. Here are the configurations for testing:
\begin{enumerate}
    \item N=1, M=1, C=[5], D=5
    \item N=2, M=2, C=[3,2], D=5
    \item N=3, M=2, C=[3,2], D=5
    \item N=4, M=2, C=[3,2], D=5
    \item N=5, M=2, C=[3,2], D=5 (default)
\end{enumerate}

These experiments are conducted on the NoRa-Math ase-9 task with irrelevant hard noise. The Tab.~\ref{tab: ablation study of parameter N} shows the total number of tokens consumed by CD-CoT for complete reasoning on 300 test samples. This includes tokens used for both rephrasing and reasoning steps.

\begin{table*}[ht]
    \centering
    \fontsize{8}{8}\selectfont
    \begin{tabular}{c|c|c|c|c|c}
        \toprule
         & $N=1$ & $N=2$ & $N=3$ & $N=4$ & $N=5$\\
        \midrule
        tokens & 1071560 & 1408845 & 1532606 & 1656617 & 1780095\\
        \bottomrule
        
    \end{tabular}
    \caption{Ablation study of parameter $N$.}
    \label{tab: ablation study of parameter N}
\end{table*}

As we can observe, the number of tokens generated increases as $N$ increases. This growth in token count directly correlates with computational needs. However, it's important to note that the increase is not linear, suggesting that the computational cost does not scale linear proportionally with $N$.

\clearpage
\section{Full Experiments}
\label{app:exp}

In this section, we provide additional experimental details and analyses to supplement the main results presented in the paper. We conduct experiments on various LLMs and reasoning tasks to comprehensively evaluate the effectiveness of our proposed Contrastive Denoising Chain-of-Thought (CD-CoT) method in handling noisy rationales. Specifically,

\begin{itemize}
    
\item 
In Appendix \ref{app:exp-setup}, we describe the detailed experimental setups, including the base language models used, evaluation settings, and CoT settings. 

\item 
In Appendix \ref{app:exp-quantitative}, we present additional experimental results on different language models, and computational costs, and introduce the Normalized Difference in Accuracy (NDA) metric for evaluating the efficacy of methods under noisy scenarios. 

\item 
In Appendix \ref{appendix: ablation study on CD-CoT}, we conduct an ablation study to further justify the denoising power of the proposed CD-CoT method and highlight the importance of its two key components: contrastive denoising with rationale rephrasing and repeated reasoning with voting. 

\item 
In Appendix \ref{appendix: noise_semantic_difficulty}, we explore the impact of different semantic difficulty levels of irrelevant noise on model performance and the effectiveness of CD-CoT under these scenarios.

\item 
In Appendices \ref{appendix: the Number of Noisy Thoughts} and \ref{app: number of noisy examples}, we investigate how the number of noisy thoughts per example and the number of noisy examples affect the model's reasoning performance. 

\item 
In Appendix \ref{app:variant of CD-CoT}, we discuss the variants of CD-CoT, including a self-supervised variant that does not rely on manually crafted clean examples.
 
\item 
In Appendix \ref{appendix: Large-scale Real Problems}, we further explore CD-CoT's robustness by introducing new datasets. Moreover, we investigate the noisy rationale problem in large-scale real-world scenarios by evaluating the impact of noisy context on model performance in multi-turn conversational QA tasks using the MT-Bench~\cite{zheng2024judging} dataset.

\item 
In Appendix \ref{app:exp-qualitative}, we provide additional qualitative results, showcasing the denoised examples generated by different robust methods across various tasks under medium-level noise.


\end{itemize}

These additional experiments and analyses aim to provide a comprehensive understanding of the noisy rationale problem, the effectiveness of the proposed CD-CoT method, and the factors influencing model performance under noisy scenarios.

\subsection{Detailed Setups of the Experiments}
\label{app:exp-setup}

We employ GPT-3.5-turbo-0613~\citep{floridi2020gpt} as our base LLM~(denoted as Base) for the analyses presented in this study. In addition, we conduct evaluations on three supplementary models, including Gemini-Pro (Jan. 2024)~\citep{team2023gemini}, Llama2-70B~\citep{touvron2023llama}, and Mixtral-8x7B~\citep{jiang2024mixtral}.
While evaluating baseline methods on various, we consistently keep the temperature parameter $\tau$ and the top-p setting at their default value of 1, along with all other hyperparameters of models set to defaults.
We conducted experiments on the first 300 questions for each task and repeated reasoning 5 times for each question. 

We assume all CoT experiments with clean rationales or noisy rationales are conducted in a 3-shot setting unless specified otherwise.
Furthermore, all CoT examples are constructed by randomly drawing from all available questions, except for the NoRa-Symbolic Longer task, which has predefined demonstrating and testing scopes.

\subsection{Supplementary Results of the Main Experiments}
\label{app:exp-quantitative}


\textbf{Different LLMs.}
Fig.~\ref{fig:gpt3.5_nora_performance} displays the result of the GPT-3.5-Turbo model's evaluation on the NoRa Dataset. 
It corresponds to base model results in Tab.~\ref{tab:vanilia_LLM}.
We have also conducted comprehensive experiments on the Gemini model to evaluate various types of noise. Fig.~\ref{fig:gemini_nora_performance} shows the full performance evaluation of Gemini on the NoRa dataset.

\begin{figure}[h]
    \centering
    \includegraphics[width=\textwidth]{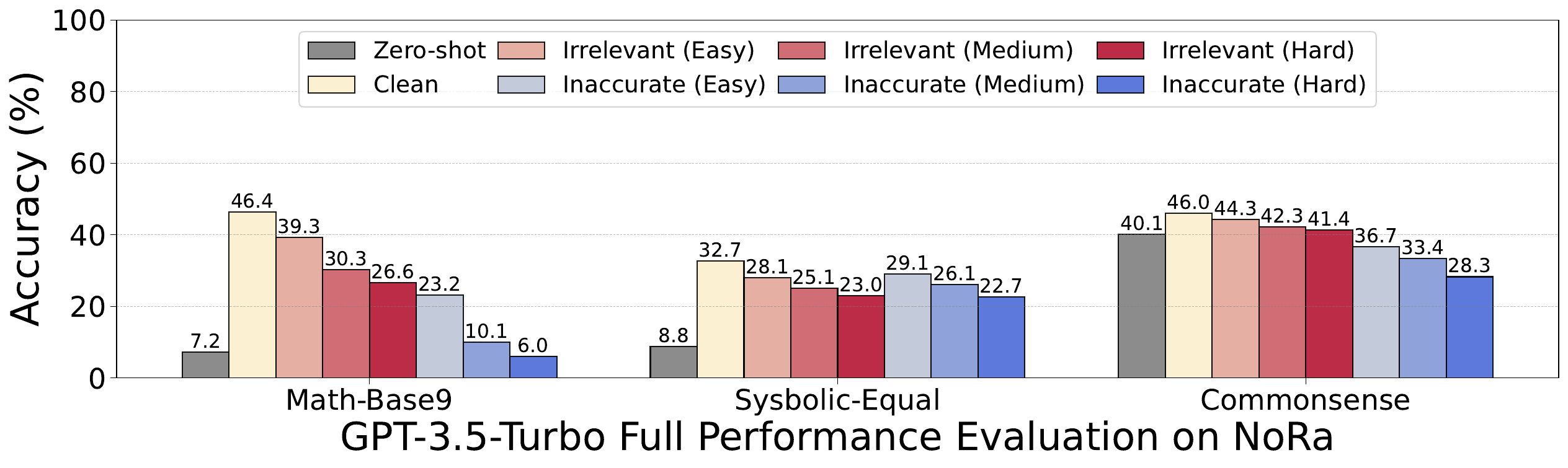}
    \vspace{-6px}
    \caption{
     \textbf{GPT-3.5-Turbo} Full Performance Evaluation on the NoRa Dataset.
    }
\label{fig:gpt3.5_nora_performance}
\vspace{-5pt}
\end{figure}

\begin{figure}[h]
    \centering
    \includegraphics[width=\textwidth]{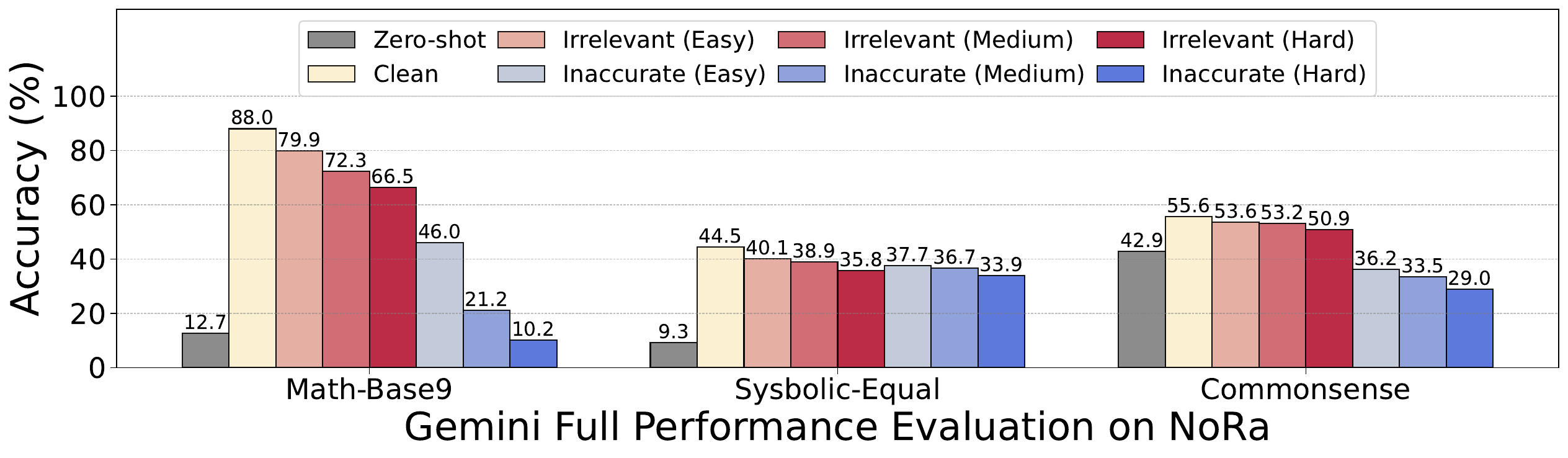}
    \vspace{-6px}
    \caption{
    \textbf{Gemini} Full Performance Evaluation on the NoRa Dataset.
    }
\label{fig:gemini_nora_performance}
\vspace{-5pt}
\end{figure}

\textbf{Computation Cost.}
Tab.~\ref{tab:cost_comparison} shows that the cost of CD-CoT is within an acceptable range compared with other methods.

\begin{table*}[ht]
\definecolor{mygray}{gray}{.9}
    \centering
    \fontsize{8}{10}\selectfont
    \setlength{\tabcolsep}{3pt}
    \renewcommand\arraystretch{0.9}
    \resizebox{\linewidth}{!}{\begin{tabular}{c|l|c|cccc|cccc}
        \toprule
        \multirow{2}{*}{Task} & \multicolumn{1}{c|}{\multirow{2}{*}{Method $\mathcal{M}$}} &\multirow{2}{*}{\makecell[c]{
        \#Tokens per \\
        clean sample
        }} &  \multicolumn{4}{c|}{
        \#Tokens per irrelevant sample 
        } &  \multicolumn{4}{c}{
        \#Tokens in inaccurate sample
        }\\
            &   &   &    Easy & Medium & Hard & Avg. & Easy & Medium & Hard & Avg. \\
        \midrule
        \multirow{10}{*}{\makecell[c]{Math \\ Base-9}}     
            &  \makecell[c]{Base}          & \gray 702.9  
            & \gray 858.2 & \gray 1027.9   & \gray 1209.5 & \gray 1031.9   
            & \gray 763.9 & \gray 826.5   & \gray 893.8 & \gray  828.1
            \\  
            &  w/ ISC ~\citep{huang2023large}        & 1375.9  
            & 1635.8   & 1922.6    & 2224.8   & 1927.7  
            & 1522.1   & 1633.2   & 1735.9   & 1630.4   
            \\  
            &  w/ SP  ~\cite{xi2023self}   & 1664.9  
            &  1739.6  & 1815.9    & 1900.1   & 1818.5   
            & 1742.4   & 1793.2   & 1824.4  & 1786.7   
            \\  
            &  w/ SM ~\citep{robey2023smoothllm}    & 3872.4  
            & 5029.2    & 5434.0    & 6197.6   & 5553.6   
            & 4226.8   &  4530.7   & 4831.1   & 4529.5  
            \\    
            &  w/ SD ~\citep{zhang2023certified}    & 5882.6   
            & 7365.2   & 9453.7  & 11123.8    & 9314.2  
            & 6033.3  & 10819.6    & 11625.9   & 9492.9 
            \\
            &  w/ SC ~\cite{wang2023self}  & 2002.2    
            & 2317.4    & 2783.7  & 3362.3   & 2821.1    
            & 2089.0    & 2228.4 & 2416.3    & 2244.6   
            \\ 
            &  w/ SCO ~\citep{huang2023large}        & 3772.1 
            & 4715.5   & 6122.8    & 7250.7   & 6029.7   
            & 5349.2   & 6536.2   & 6942.8   & 6276.1   
            \\ 
            &  w/ BT
            ~\citep{tyen2023llms}        & 701.2   
            & 1706.9  & 1859.8   & 1983.5  & 1850.1   
            & 1603.0   & 1680.1   & 1735.0  & 1672.7  
            \\ 
            &  w/ CC ~\citep{chia2023contrastive}        & 1047.7   
            & 1178.1   & 1315.5    & 1452.0   & 1315.2  
            & 966.2   & 1020.1   & 1075.0   & 1020.4  
            \\ 
            &  w/ CD-CoT (ours)        & \ours 5518.9   
            & \ours 5661.9   & \ours 5803.9    & \ours 5933.7   & \ours 5799.8   
            & \ours 5589.2   & \ours 5614.1   & \ours 5647.8   & \ours 5617.0 
            \\ 
        \midrule
        \multirow{10}{*}{\makecell[c]{Math \\ Base-11}}     
            &  \makecell[c]{Base}          & \gray 710.1   
            & \gray 877.3  & \gray 1057.9   & \gray 1226.0 & \gray 1053.7   
            & \gray 773.1  & \gray 834.4   & \gray 897.3 & \gray 834.9
            \\  
            &  w/ ISC ~\citep{huang2023large}        & 2498.2  
            & 2988.3   & 3531.7  & 4010.4   & 3510.1
            & 2687.9   & 2872.6    & 3026.4   & 2862.3      
            \\  
            &  w/ SP  ~\cite{xi2023self}   & 2837.6   
            & 2972.3   & 3136.5   & 3199.0   & 3102.6   
            & 2971.2   & 3055.2   & 3119.4   & 3048.6   
            \\  
            &  w/ SM ~\citep{robey2023smoothllm}    & 3945.6   
            & 5170.4   & 6406.1     & 7480.0   & 6352.2 
            & 4272.4  &  4567.2   & 4847.2  & 4562.3   
            \\    
            &  w/ SD ~\citep{zhang2023certified}    &  9578.2   
            & 11613.6   & 13848.5  & 15856.9    & 13773.0   
            & 10094.5  & 10842.7 & 11631.1   & 10856.1 
            \\
            &  w/ SC ~\cite{wang2023self}  & 2054.5    
            & 2385.7    & 2872.0  & 3405.4    & 2887.7    
            & 2161.0    & 2289.5    & 2433.8    & 2294.8   
            \\ 
            &  w/ SCO ~\citep{huang2023large}        & 4932.4  
            & 6073.5   & 7403.8    & 8485.9   & 7321.1  
            & 5557.4   & 6554.4   & 6986.8   & 6366.2   
            \\ 
            &  w/ BT
            ~\citep{tyen2023llms}        &    1603.0
            & 1706.9   & 1859.8   & 1983.5   & 1850.1   
            & 1603.0   & 1680.1   & 1735.0   & 1672.7  
            \\ 
            &  w/ CC ~\citep{chia2023contrastive}        & 913.8   
            & 1064.2   & 1211.2  & 1337.0   & 1204.1   
            & 1127.4 & 1188.1    & 1249.9   & 1188.5  
            \\ 
            &  w/ CD-CoT (ours)        & \ours5536.5   
            & \ours5681.3   & \ours5834.9     & \ours5968.1   & \ours5828.1   
            & \ours5647.2   & \ours5710.6     & \ours5772.9   & \ours5710.2   
            \\ 
        \midrule
        \multirow{10}{*}{\makecell[c]{Symbolic \\ Equal}}     
            &  \makecell[c]{Base}          & \gray  1623.7 
            & \gray 1834.9  & \gray 1991.0   & \gray 2221.4 & \gray 2015.8   
            & \gray 1765.3  & \gray 1868.8   & \gray 2022.4 & \gray 1885.5
            \\  
            &  w/ ISC ~\citep{huang2023large}        & 5246.0  
            & 5868.1   & 6345.2   & 7007.1   & 6406.8   
            & 5691.5   & 5998.2   & 6449.5   & 6046.4   
            \\  
            &  w/ SP  ~\cite{xi2023self}   & 5343.0   
            & 5539.8   & 5661.8   & 5858.0   & 5686.5   
            & 5526.3   & 5635.7   & 5800.3   & 5654.1   
            \\  
            &  w/ SM ~\citep{robey2023smoothllm}    & 10272.9   
            & 11936.8   & 13156.2    & 14860.2   & 13317.7   
            & 11315.3   & 12029.4   & 13093.6   & 12146.1   
            \\    
            &  w/ SD ~\citep{zhang2023certified}    & 6267.1    
            & 6782.5   & 6965.8  & 7004.4    & 6917.6   
            & 6763.1    & 7006.6    & 7135.4   & 6968.4 
            \\
            &  w/ SC ~\cite{wang2023self}  & 4089.3    
            & 4429.2 & 4724.0  & 5267.5    & 4806.9   
            & 4262.6 & 4404.0    & 4691.1    & 4452.6   
            \\ 
            &  w/ SCO ~\citep{huang2023large}        & 8937.4   
            & 10677.4   & 11253.8    & 12599.2   & 11510.1   
            & 10307.6   & 10970.7   & 11443.9   & 10907.4  
            \\ 
            &  w/ BT
            ~\citep{tyen2023llms}        & 1614.4  
            & 3058.9  & 3154.7  & 3320.2  & 3177.9   
            & 3011.1   & 3097.6   & 3203.4   & 3104.0  
            \\ 
            &  w/ CC ~\citep{chia2023contrastive}        & 2394.7  
            & 2592.5   & 2734.3  & 2933.8  & 2753.5  
            & 2529.7   & 2625.9   & 2761.8   & 2639.1   
            \\ 
            &  w/ CD-CoT (ours)        & \ours13602.0  
            & \ours13686.6   & \ours13609.4     & \ours13574.0   & \ours13623.3   
            & \ours13751.0   & \ours12818.5   & \ours13741.6   & \ours13437.0   
            \\ 
         \midrule
        \multirow{10}{*}{\makecell[c]{Symbolic \\ Longer}}     
            &  \makecell[c]{Base}          & \gray 1687.1   
            & \gray 1826.1   & \gray 1862.4   & \gray 2017.9  & \gray 1902.1  
            & \gray 1802.9   & \gray 1832.4   & \gray 1952.3 & \gray 1862.5
            \\  
            &  w/ ISC ~\citep{huang2023large}        & 5601.1  
            & 5957.7   & 6052.7    & 6466.0   & 6158.8   
            & 5932.0   & 6073.3   & 6382.8   & 6129.4   
            \\  
            &  w/ SP  ~\cite{xi2023self}   & 5687.9  
            & 5765.3   & 5815.2   & 5893.1   & 5824.5  
            & 5907.2   & 5911.1   & 6051.5   & 5956.6   
            \\  
            &  w/ SM ~\citep{robey2023smoothllm}    & 10487.3        
            & 11492.5   & 11838.6   & 12922.2      & 12084.4    
            & 11277.9   & 11566.7   & 12346.6   & 11730.4  
            \\    
            &  w/ SD ~\citep{zhang2023certified}    & 3087.1    
            & 3194.6  & 3277.5  & 3351.7    & 3274.6   
            & 3214.7  & 3416.0    & 3417.6   & 3349.4 
            \\
            &  w/ SC ~\cite{wang2023self}  & 4934.9
            & 5153.9    & 5175.4   & 5441.8    & 5257.0    
            & 5146.2    & 5146.8    & 5524.5    & 5272.5   
            \\ 
            &  w/ SCO ~\citep{huang2023large}        & 11888.2   
            & 13232.5   & 13765.5   & 14389.6   & 13795.9   
            & 13255.4   & 12982.1   & 10293.9   & 12177.1  
            \\ 
            &  w/ BT
            ~\citep{tyen2023llms}        & 1690.1   
            & 3066.5   & 3091.0   & 3187.7   & 3115.1  
            & 3095.8   & 3154.1   & 3253.5   & 3167.8   
            \\ 
            &  w/ CC ~\citep{chia2023contrastive}        & 2491.5 
            & 2618.7   & 2668.3  & 2801.0   &2696.0   
            & 2598.5   & 2626.9   & 2737.5   & 2654.3   
            \\ 
            &  w/ CD-CoT (ours)        & \ours14282.3  
            & \ours14270.6   & \ours14296.3   &  \ours14286.5   & \ours14284.5   
            & \ours14798.8   & \ours14974.2   & \ours15393.0   & \ours15055.3   
            \\ 
         \midrule 
        \multirow{10}{*}{Commonsense}
            &  \makecell[c]{Base}          & \gray  553.2
            & \gray 712.0   & \gray 789.5   & \gray 867.6 & \gray 789.7   
            & \gray 605.9   & \gray 635.6   & \gray 669.9 & \gray 637.1
            \\  
            &  w/ ISC ~\citep{huang2023large}        & 1873.2  
            & 2334.8   & 2559.2   & 2786.7   & 2560.2  
            & 2021.0   & 2110.4   & 2207.5   & 2113.0   
            \\  
            &  w/ SP  ~\cite{xi2023self}   & 4728.7   
            & 5502.0  & 5907.6  & 6308.6  & 5906.1   
            & 4998.5   & 5131.9  & 5265.1   & 5131.8   
            \\  
            &  w/ SM ~\citep{robey2023smoothllm}    & 3532.2   
            & 4755.0   & 5377.2  & 5999.0   & 5377.1   
            & 3943.7   & 4153.7  & 4352.8   & 4150.1  
            \\    
            &  w/ SD ~\citep{zhang2023certified}    & 5007.8   
            & 5882.5  & 6357.3  & 6816.1    & 6352.0   
            & 5285.0   & 5430.4    & 5561.9   & 5425.8
            \\
            &  w/ SC ~\cite{wang2023self}  & 853.5   
            & 1125.7    & 1252.5  & 1382.0    & 1253.4    
            & 900.2  & 941.2    & 1007.3    & 949.6   
            \\ 
            &  w/ SCO ~\citep{huang2023large}        & 1776.8   
            & 2382.5  & 2688.8  & 2827.4  & 2632.9   
            & 2243.5 & 2372.9   & 2581.2   & 2399.2   
            \\ 
            &  w/ BT
            ~\citep{tyen2023llms}        & 553.7   
            & 1089.7   & 1149.5   & 1213.7   & 1151.0   
            & 984.9   & 1006.2   & 1031.1   & 1007.4   
            \\ 
            &  w/ CC ~\citep{chia2023contrastive}        & 721.8 
            & 771.0   & 796.2   & 821.8   & 796.3 
            & 854.2   & 925.6   & 993.6   & 924.5   
            \\ 
            &  w/ CD-CoT (ours)        & \ours3060.7   
            & \ours3069.9   & \ours3055.1   & \ours3098.9   & \ours3074.6  
            & \ours3129.5   & \ours3093.2   & \ours3090.0   & \ours3104.2   
            \\ 
        \bottomrule
    \end{tabular}}
    \caption{Computation cost (\#tokens) of all methods.}
    \label{tab:cost_comparison}
\vspace{-2pt}
\end{table*}

\textbf{The Normalized Difference in Accuracy (NDA) Metric.}
We propose a new evaluation score, Normalized Difference in Accuracy (NDA), to quantify the efficacy of $\mathcal{M}$ under the noisy scenario, 

\begin{equation}
    \operatorname{NDA}(\mathcal{M}, \mathcal{Q}, \mathcal{P}) \! = \!
    \frac{\operatorname{Acc}(\mathcal{M}, \mathcal{Q},  \mathcal{P}_{\text{noisy}}) 
    \! - \! \operatorname{Acc}(\mathcal{M}, \mathcal{Q},  \emptyset)}
    {\operatorname{Acc}(\mathcal{M},  \mathcal{Q}, \mathcal{P}_\text{clean}) \! - \! \operatorname{Acc}(\mathcal{M} ; , \mathcal{Q},  \emptyset)},
\end{equation}
where $\operatorname{Acc}(\mathcal{M},  \mathcal{Q}, \mathcal{P}_\text{clean})$, $\operatorname{Acc}(\mathcal{M}, \mathcal{Q},  \mathcal{P}_{\text{noisy}})$ and $\operatorname{Acc}(\mathcal{M} ; , \mathcal{Q},  \emptyset)$ represent the accuracy of method $\mathcal{M}$ with clean rationales, noisy rationales (irrelevant or inaccurate), and without CoT demos.
Fig.~\ref{fig:NDA} is the illustration of the NDA metric. 

Please note that: (1) The main evaluation metric used throughout this paper is the accuracy score as introduced in Section 4; (2) The NDA metric serves as an auxiliary tool for analyzing empirical results, specifically designed to quantify how effectively a given LLM and denoising method perform under noisy scenarios.


Tab.~\ref{tab:NDA} presents a comparison of the accuracy and NDA across all methods. A negative value in NDA indicates that the accuracy of noisy rationales falls below that of 0-shot. We observe that CD-CoT consistently excels in NDA, nearing 100\% in most tasks and even surpassing it in certain instances.

\begin{figure}[ht]
    \centering
    \includegraphics[width=6.0cm]{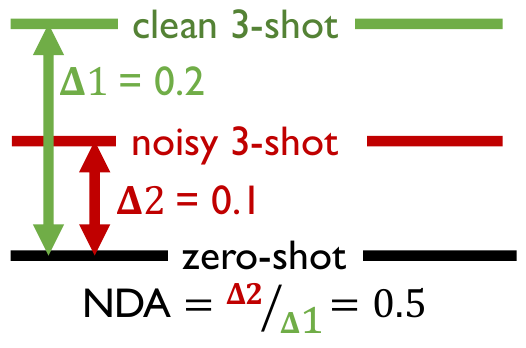}
    \vspace{-6px}
    \caption{
    Illustration of the NDA metric
    }
\label{fig:NDA}
\vspace{-5pt}
\end{figure}

\begin{table*}[ht]
    \centering
    \fontsize{7}{7}\selectfont
    \setlength{\tabcolsep}{1.8pt}
    \resizebox{\linewidth}{!}{
    \begin{tabular}{c|c|c|c|cccc|cccc}
        \toprule
        \multirow{2}{*}{Dataset} & \multicolumn{1}{c|}{\multirow{2}{*}{Method $\mathcal{M}$}} & \multirow{2}{*}{\makecell[c]{$\operatorname{Acc}(\mathcal{M}, \mathcal{Q},  \emptyset)$}} &\multirow{2}{*}{\makecell[c]{$\operatorname{Acc}(\mathcal{M}, \mathcal{Q}, \mathcal{P}_{\text{clean}})$}} &  \multicolumn{4}{c|}{$\operatorname{Acc}(\mathcal{M}, \mathcal{Q}, \mathcal{P}_{\text{Irrelevant}})$ (NDA)} &  \multicolumn{4}{c}{$\operatorname{Acc}(\mathcal{M}, \mathcal{Q}, \mathcal{P}_{\text{Inaccurate}})$ (NDA)}\\
            &   &  &  &    Easy & Medium & Hard & Avg. & Easy & Medium & Hard & Avg. \\
        \midrule
        \multirow{10}{*}{\makecell[c]{Math \\ Base-9}}     
            &  \makecell[c]{Base}                           
            & \gray 7.2  & \gray46.4           & \gray39.3 (81.9)           &  \gray30.3 (58.9)       & \gray26.6 (49.5)& \gray32.1 (63.5)       & \gray23.2 (40.8)     & \gray10.1 (7.4)      & \gray6.0 (-3.0)         & \gray13.1  (15.1)   
            \\
            &  w/ ISC                
            & 8.7 & 24.3 & 17.7 (57.7) & 14.7 (38.5) & 12.7 (25.6) & 15.0 (40.4) & 18.4 (62.2) & 13.7 (32.1) & 12.3 (23.1) & 14.8 (39.1) 
            \\
            &  w/ SP                     
            & 7.2 & 26.2 & 25.5 (\underline{96.3}) & 25.5 (\underline{96.3}) & 21.9 (77.4) & 24.3 (90.0) & 20.0 (67.4) & 18.4 (58.9) & 14.3 (37.4) & 17.6 (54.7)  
            \\ 
            &  w/ SM           
            & 12.3 & 37.4 &  30.0 (70.5) & 22.7 (41.4) & 16.5 (16.7) & 23.1 (43.0) & 24.7 (49.4) & 19.2 (27.5) & 12.4 (0.4) & 18.8 (25.9)   
            \\    
            &  w/ SD           
           & 12.3 & \underline{47.9} & 37.2 (69.9) & 25.4 (36.8) & 24.7 (34.8) & 29.1 (47.1) & 29.3 (47.8) & 12.5 (0.6) & 8.7 (-10.1) & 16.8 (12.6) 
            \\
            &  w/ SC                    
            & 12.3 & \textbf{61.5} & \underline{51.1} (78.9) & 39.0 (48.6) & 36.2 (54.3) & 42.1 (60.6) & 32.7 (41.5) & 15.3 (6.1) & 7.5 (-9.8) & 18.5 (12.6)   
            \\ 
            &  w/ SCO
            & 12.5 & 53.6 & 46.3 (82.2) & 39.6 (65.9) & 36.4 (58.2) & 40.8 (68.9) & 34.7 (54.0) & 22.0 (23.1) & 17.7 (12.7) & 24.8 (29.9)  
            \\
            &  w/ BT
            & 7.2 & 47.2 & 39.2 (80.0) & 34.2 (67.5) & 29.9 (56.8) & 34.4 (68.0) & 30.1 (57.2) & 18.4 (28.0) & 14.1 (17.2) & 20.9 (34.2) 
            \\
            &  w/ CC   
            & 7.2 & 44.9 & 43.3 (95.8) & \underline{44.6} (\underline{99.2}) & \underline{45.5} (\textbf{101.6}) & \underline{44.5} (\textbf{98.9}) & \underline{37.2} (\underline{79.4}) & \underline{31.7} (\underline{64.8}) & \underline{30.7} (\underline{62.2}) & \underline{33.2} (\underline{68.8}) 
            \\
            &  w/ CD-CoT         
            & \ours 12.3 & \ours 60.7 & \ours \textbf{59.7} (\textbf{97.9}) & \ours \textbf{60.7} (\textbf{100.0}) & \ours \textbf{57.2} (\underline{92.8}) & \ours \textbf{59.2} (\underline{96.9}) & \ours \textbf{54.0} (\textbf{86.2}) & \ours \textbf{58.7} (\textbf{95.9}) & \ours \textbf{48.4} (\textbf{74.6}) & \ours \textbf{53.7} (\textbf{85.5})   \\
        \midrule
        \multirow{10}{*}{\makecell[c]{Math \\ Base-11}}     
            &  \makecell[c]{Base}                           
            & \gray 5.5 & \gray 23.9 & \gray 19.1 (73.9) & \gray 13.6 (44.0) & \gray 10.7 (28.3) & \gray 14.5 (48.9) & \gray 14.0 (46.2) & \gray 6.7 (6.5) & \gray 3.6 (-10.3) & \gray 8.1 (14.1) 
            \\
            &  w/ ISC 
            & 7.4 & 11.2 & 8.3 (23.7) & 7.8 (10.5) & 6.0 (-36.8) & 7.4 (0.0) & 6.5 (-23.7) & 5.2 (-57.9) & 4.7 (-71.1) & 5.5 (-50.0)  
            \\
            &  w/ SP
            & 5.5 & 20.7 & 17.5 (78.9) & 16.7 (73.7) & 14.0 (55.9) & 16.0 (69.1) & 14.1 (56.6) & 10.7 (34.2) & 10.8 (34.9) & 11.9 (42.1) 
            \\    
            &  w/ SM             
            & 8.0 & 16.3 & 12.0 (48.2) & 6.0 (-24.1) & 5.7 (-27.7) & 7.9 (-1.2) & 12.0 (48.2) & 9.3 (15.7) & 7.7 (-3.6) & 9.7 (20.5)  
            \\      
            &  w/ SD   
            & 8.0 & 17.9 & 12.3 (43.4) & 12.0 (40.4) & 13.3 (53.5) & 12.5 (45.5) & 17.0 (\underline{90.9}) & 8.7 (7.1) & 5.3 (-27.3) & 10.3 (23.2)   
            \\ 
            &  w/ SC
            & 8.0 & \textbf{33.7} & 25.3 (67.3) & 16.3 (32.3) & 15.0 (27.2) & 18.9 (42.4) & 19.7 (45.5) & 9.3 (5.1) & 3.3 (-18.3) & 10.8 (10.9)  
            \\
            &  w/ SCO 
            & 22.9 & \underline{33.0} & \underline{29.2} (62.4) & \underline{24.0} (10.9) & \underline{20.0} (-28.7) & \underline{24.4} (14.9) & \textbf{29.2} (62.4) & \underline{20.0} (-28.7) & \underline{17.2} (-56.4) & \underline{22.1} (-7.9) 
            \\
            &  w/ BT
            & 5.5 & 24.3 & 17.9 (66.0) & 17.2 (62.2) & 13.7 (43.6) & 16.3 (57.4) & 12.8 (38.8) & 9.2 (19.7) & 6.8 (6.9) & 9.6 (21.8) 
            \\
            & w/ CC
            & 5.5 & 22.3 & 19.1 (\underline{81.0}) & 18.4 (\underline{76.8}) & 18.2 (\underline{75.6}) & 18.6 (\underline{78.0}) & 19.0 (80.4) & 15.3 (\underline{58.3}) & 14.6 (\underline{54.2}) & 16.3 (\underline{64.3}) 
            \\
            & w/ CD-CoT   
            & \ours 8.0 & \ours 31.0 & \ours \textbf{33.7} (\textbf{111.7}) & \ours \textbf{32.7} (\textbf{107.4}) & \ours \textbf{34.7} (\textbf{116.1}) & \ours \textbf{33.7} (\textbf{111.7}) & \ours \underline{29.0} (\textbf{91.3}) & \ours \textbf{30.7} (\textbf{98.7}) & \ours \textbf{25.3} (\textbf{75.2}) & \ours \textbf{28.3} (\textbf{88.3}) 
            \\
        \midrule
        \multirow{10}{*}{\makecell[c]{Symbolic \\ Equal}}     
            &  \makecell[c]{Base}                            
            & \gray 8.8 & \gray 32.7 & \gray 28.1 (80.8) & \gray 25.1 (68.2) & \gray 23.0 (59.4) & \gray 25.4 (69.5) & \gray 29.1 (84.9) & \gray 26.1 (72.4) & \gray 22.7 (58.2) & \gray 26.0 (72.0) 
            \\
            &  w/ ISC                
            & 5.7 & 23.9 & 20.0 (78.6) & 16.3 (58.2) & 15.5 (53.8) & 17.3 (63.7) & 19.2 (74.2) & 18.3 (69.2) & 18.1 (68.1) & 18.5 (70.3) 
            \\ 
            &  w/ SP                      
            & 8.8 & 23.2 & 23.0 (\underline{98.6}) & 22.6 (\underline{95.8}) & 22.7 (\underline{96.5}) & 22.8 (\underline{97.2}) & 23.7 (\textbf{103.5}) & 22.5 (\underline{95.1}) & 23.5 (\textbf{102.1}) & 23.2 (\textbf{100.0}) 
            \\    
            &  w/ SM             
            & 9.7 & 25.0 & 20.7 (71.9) & 19.7 (65.4) & 16.7 (45.8) & 19.0 (60.8) & 21.0 (73.9) & 20.3 (69.3) & 20.0 (67.3) & 20.4 (69.9)  
            \\    
            &  w/ SD             
            &9.7 & 9.9  & 10.1 (---)   & 10.9 (---)  & 10.3 (---) & 10.4 (---) &  10.1 (---) & 10.9 (---) & 10.4 (---) & 10.5 (---) 
            \\
            &  w/ SC          
            & 9.7 & 35.3 & 31.0 (83.2) & 28.3 (72.7) & 27.0 (67.6) & 28.8 (74.6) & 33.3 (92.2) & 30.7 (82.0) & 26.0 (63.7) & 30.0 (79.3)  
            \\ 
            & w/ SCO
            & 12.5 & 38.5 & \underline{34.9} (86.2) & \underline{33.4} (80.4) & \underline{32.7} (77.7) & \underline{33.7} (81.5) & \underline{34.0} (82.7) & \underline{34.1} (83.1) & \underline{34.5} (84.6) & \underline{34.2} (83.5) 
            \\
            & w/ BT    
            & 8.8 & 31.8 & 26.0 (74.8) & 22.7 (60.4) & 22.6 (60.0) & 23.8 (65.2) & 26.3 (76.1) & 22.7 (60.4) & 22.9 (61.3) & 24.0 (66.1)
            \\
            & w/ CC 
            & 8.8 & \underline{37.8} & 33.8 (86.2) & 32.7 (82.4) & 32.0 (80.0) & 32.8 (82.8) & 31.3 (77.6) & 33.0 (83.4) & 29.9 (72.8) & 31.4 (77.9) 
            \\
            & w/ CD-CoT 
            & \ours 9.7 & \ours \textbf{42.7} & \ours \textbf{44.7} (\textbf{106.1}) & \ours \textbf{42.7} (\textbf{100.0}) & \ours \textbf{44.0} (\textbf{103.9}) & \ours \textbf{43.8} (\textbf{103.3}) & \ours \textbf{42.6} (\underline{99.7}) & \ours \textbf{41.3} (\textbf{95.8}) & \ours \textbf{42.7} (\underline{100.0}) & \ours \textbf{42.2} (\underline{98.5})     
            \\ 
         \midrule
        \multirow{10}{*}{\makecell[c]{Symbolic \\ Longer}}     
            &  \makecell[c]{Base}                             
            & \gray 0.0 & \gray 9.2 & \gray 6.3 (68.5) & \gray 7.2 (78.3) & \gray 6.0 (65.2) & \gray 6.5 (70.7) & \gray 7.0 (76.1) & \gray 6.8 (73.9) & \gray 6.0 (65.2) & \gray 6.6 (71.7) 
            \\
            &  w/ ISC                 
            & 0.1 & 4.9 & 4.6 (93.7) & 2.7 (54.2) & 3.7 (75.0) & 3.7 (75.0) & 3.4 (68.7) & 4.3 (\textbf{87.5}) & 3.3 (66.7) & 3.7 (75.0) 
            \\
            &  w/ SP                    
            & 0.0 & 5.1 & 4.3 (84.3) & 4.1 (80.4) & 3.9 (76.5) & 4.1 (80.4) & 4.9 (\underline{96.1}) & 4.0 (78.4) & 4.5 (\underline{88.2}) & 4.5 (\underline{88.2}) 
            \\    
            &  w/ SM 
            & 0.0&  1.7   &  0.7 (---) & 0.7 (---)  & 1.3 (---)  & 1.0 (---)  & 1.3 (---)  & 0.7 (---)  & 0.3 (---)   & 0.8 (---)
            \\    
            &  w/ SD   
            & 0.0&  0.1  &  0.1 (---)  & 0.1 (---)   & 0.2 (---)  & 0.1 (---)  & 0.1 (---) & 0.3 (---) & 0.0 (---) & 0.1 (---)
            \\
            &  w/ SC             
            & 0.0 & \underline{13.0} & 7.7 (59.2) & 9.0 (69.2) & 6.3 (48.5) & 7.7 (59.2) & 8.0 (61.5) & 8.0 (61.5) & 8.7 (66.9) & 8.2 (63.1) 
            \\ 
            & w/ SCO
            & 0.0 & \textbf{18.7} & \textbf{12.1} (64.7) & \underline{10.5} (56.1) & \underline{11.3} (60.4) & \underline{11.3} (60.4) & \textbf{15.2} (81.3) & \textbf{15.9} (85.0) & \underline{9.8} (52.4) & \textbf{13.6} (72.7) 
            \\
            &  w/ BT   
            & 0.0 & 7.2 & 3.4 (47.2) & 3.5 (48.6) & 2.5 (34.7) & 3.1 (43.1) & 3.8 (52.8) & 3.6 (50.0) & 3.6 (50.0) & 3.7 (51.4) 
            \\
            & w/ CC 
            & 0.0 & 9.4 & 9.8 (\textbf{104.3}) & 7.9 (\underline{84.0}) & 7.9 (\underline{84.0}) & 8.5 (\underline{90.4}) & 8.5 (90.4) & 7.4 (78.7) & 6.5 (69.1) & 7.5 (79.8) 
            \\
            & w/ CD-CoT     
            & \ours 0.0 & \ours 12.3 & \ours \underline{12.0} (\underline{97.6}) & \ours \textbf{12.0} (\textbf{97.6}) & \ours \textbf{13.0} (\textbf{105.7}) & \ours \textbf{12.3} (\textbf{100.0}) & \ours \underline{12.3} (\textbf{100.0}) & \ours \underline{10.0} (\underline{81.3}) & \ours \textbf{11.0} (\textbf{89.4}) & \ours \underline{11.1} (\textbf{90.2})    
            \\ 
        \bottomrule
    \end{tabular}}
    \caption{
    Comparing accuracy and NDA metrics across all methods. The \textbf{boldface} numbers mean the best results; \underline{underlines} indicate the second-best. "---" denotes methods with poor results, where zero-shot performance approximates clean 3-shot outcomes, rendering NDA calculation meaningless.
    }
    \label{tab:NDA}
\end{table*}

\textbf{Standard Deviation Comparison.}
Tab.~\ref{tab:std} provides a comprehensive comparison of the standard deviations ($\sigma$) across various methods on the NoRa-Math Base-9 task, using the GPT-3.5-Turbo model. The table presents the $\sigma$ values for each method under different conditions, including clean rationale, irrelevant rationale (easy, medium, and hard), and inaccurate rationale (easy, medium, and hard). These results supplement the main experimental findings by offering insights into the variability and consistency of the methods' performance. 

\begin{table*}[ht]
    \centering
    \fontsize{7}{7}\selectfont
    \setlength{\tabcolsep}{1.8pt}
    \resizebox{\linewidth}{!}{
    \begin{tabular}{c|c|ccc|ccc}
        \toprule
        \multicolumn{1}{c|}{\multirow{2}{*}{Method $\mathcal{M}$}} &\multirow{2}{*}{\makecell[c]{ Clean (Acc $\mu \pm \sigma$)}} &  \multicolumn{3}{c|}{Irrelevant (Acc $\mu \pm \sigma$)} &  \multicolumn{3}{c}{Inaccurate (Acc $\mu \pm \sigma$)}\\
         &  &  Easy & Medium & Hard  & Easy & Medium & Hard \\
         \midrule
         \makecell[c]{Base}                          
         & \gray $46.4 \pm 2.54 $  
         & \gray $39.3 \pm 2.22 $ & \gray $30.3 \pm 2.62$ & \gray $26.6 \pm 2.17$  
         & \gray $23.2 \pm 2.92$  & \gray $10.1 \pm 1.34$  & \gray $6.0 \pm 1.42$
            \\
        w/ ISC                     
         & $24.3 \pm 2.00$  
             & $17.7 \pm 1.72$  & $14.7 \pm 1.04$  & $12.7 \pm 1.19$   
         &  $18.4 \pm 1.75$  &  $13.7 \pm 1.22$  & $12.3 \pm 0.82$   
            \\
        w/ SP                        
         &  $26.2 \pm 2.06 $
         &  $25.5 \pm 0.87 $ & $25.5 \pm  1.48$  &  $21.9 \pm 0.69$
         & $20.0 \pm 1.65 $  & $18.4 \pm 1.32 $  &   $14.3 \pm 2.17 $ 
            \\
        w/ SM                        
         &  $37.4 \pm 6.58 $
         & $30.0 \pm 3.82 $  & $22.7 \pm 4.24 $  &  $16.5 \pm 7.77 $
         & $24.7 \pm 3.32$  & $19.2 \pm 2.62$  &    $12.4 \pm 1.91 $
            \\
        w/ SD                        
        &  $47.9 \pm 1.63$ 
        &  $37.2 \pm 0.71$ & $25.4 \pm 3.68$  & $24.7 \pm 1.41$   
        &  $29.3 \pm 0.58$ & $12.5 \pm 0.28$  & $8.7 \pm 2.33$
        \\  
        w/ SC                      
        & $61.5 \pm 0.68$  
        & $51.1 \pm 1.25$  & $39.0 \pm 2.14$ & $36.2 \pm 2.31 $
        & $32.7 \pm 1.19$  & $15.3 \pm 0.91$  &  $7.5 \pm 1.37$
        \\
        w/ SCO                      
        &  $53.6 \pm 0.72$
        & $46.3 \pm 1.60$  & $39.6 \pm 1.61$  &   $36.4 \pm 0.15$
        & $34.7 \pm 0.88$  & $22.0 \pm 1.18$  &$17.7 \pm 1.99$
        \\
        w/ BT                      
        &  $47.2 \pm 1.35$
        &  $39.2 \pm 1.35$ &  $34.2 \pm 2.74$ & $29.9 \pm 2.10$   
        &  $30.1 \pm 1.59$ &  $18.4 \pm 2.03$ & $14.1 \pm 2.90$
        \\
        w/ CC                  
        &  $44.9 \pm 3.80$
        &  $43.3 \pm 1.36$ & $44.6 \pm 3.23$  & $45.5 \pm 2.88$  
        &  $37.2 \pm 1.85$ & $31.7 \pm 2.15$  & $30.7 \pm 2.94$
        \\
        w/ CD-CoT                      
        &  \ours$60.7 \pm 0.49$
        & \ours$59.7 \pm 0.49$  & \ours$60.7 \pm 3.18$  & \ours$57.2 \pm 1.63$   
        & \ours$54.0 \pm 1.41$  & \ours$58.7 \pm 4.53$  & \ours $48.4 \pm 1.91$
        \\
        \bottomrule
        \end{tabular}}
    \caption{
    Comparing the mean values and standard deviation values of accuracy across all methods. 
    }
    \label{tab:std}
\end{table*}
\subsection{The Superior Performance and Denoising Effectiveness}
\label{appendix: ablation study on CD-CoT}

The proposed CD-CoT method denoises noisy rationales, which leads to better reasoning performance. To be specific:
\begin{itemize}
    
    \item The first two steps of CD-CoT are for explicit data denoising. First, it rephrases the noisy example by contrasting it with the clean example. Then, with the obtained rephrased examples, it selects qualified candidates by checking the validity of the rephrased answers.
    \item The rephrased (denoise) rationales by different robust methods are shown in Tab.~\ref{tab:comparison-rephrased-rationales}. As can be seen, CD-CoT significantly removes noise and also ensures format alignment with the original rationale. More examples of denoising can be found in Appendix.~\ref{app:exp-qualitative}. Hence, these empirical results adequately justify the denoising power of CD-CoT.
\end{itemize}
The two important aspects of the proposed method are (1) contrastive denoising with rationale rephrasing (+ CD Rephrase) and (2) repeated reasoning with voting (+ Vote). Here, we conduct an extra ablation study on the NoRa-Math Base-9 dataset to justify the denoising power of CD-CoT. 

In this ablation study, we compare the performance of CD-CoT with different combinations of clean and noisy rationales on the NoRa-Math Base-9 dataset. The "Base model" column represents the accuracy of the base model without any additional components. The "+ CD Rephrase" column shows the accuracy when the contrastive denoising with rationale rephrasing (CD Rephrase) is applied. The "+ Vote" column represents the accuracy when the final answer is determined by a majority vote of five runs. Finally, the "CD-CoT" column shows the accuracy of the complete CD-CoT method, which includes both CD rephrasing and voting.

Since CD-CoT requires 3-shot noisy rationales and 1-shot clean rationale for contrastive rephrasing, we compare its performance with the following setups. The results are summarized in the Tab.~\ref{tab: Ablation study of CD rephrase and vote components}.

\begin{itemize}

\item  3-shot clean rationales (without CD-CoT)
\item  3-shot noisy rationales (irrelevant or inaccurate) of varying difficulty levels + 1-shot clean rationale (with CD-CoT)
\item 4-shot clean rationales (without CD-CoT)
\end{itemize}

\begin{table*}[ht]
    \centering
    \fontsize{8}{8}\selectfont
    \begin{tabular}{c|c|c|c|c}
        \toprule
        CoT shots & Base model & + CD Rephrase & + Vote & CD-CoT \\
        \midrule
        3-shot clean shots &  46.4& --- &\textbf{62.3}& ---\\
        \midrule
        3-shot irrelevant-easy + 1-shot clean &  46.0 & 47.2 & 57.3 & \textbf{60.0}\\
        \midrule
        3-shot irrelevant-medium + 1-shot clean &  41.3 & 47.2 & 54.7 & \textbf{60.7}\\
        \midrule
        3-shot irrelevant-hard + 1-shot clean &  36.2  & 46.5 & 47.3 & \textbf{58.3}\\
        \midrule
        3-shot inaccurate-easy + 1-shot clean &  36.5 & 42.6 & 44.7 & \textbf{53.0}\\
        \midrule
        3-shot inaccurate-medium + 1-shot clean &  23.4 & 43.0 & 32.3 & \textbf{58.7}\\
        \midrule
        3-shot inaccurate-hard + 1-shot clean &  16.1  & 37.0 & 24.3 & \textbf{47.0}\\
        \midrule
        4-shot clean rationale & \textbf{50.8} & --- & \textbf{67.3} & ---\\
        \bottomrule
        
    \end{tabular}
    \caption{Ablation study of CD rephrase and vote components on the NoRa-Math Base-9 dataset.}
    \label{tab: Ablation study of CD rephrase and vote components}
\end{table*}

From the experimental results, we can observe that:

\begin{observation}[CD-CoT significantly improves reasoning performance across various noisy scenarios]
CD-CoT brings a significant improvement compared to the base model when prompted with 3 noisy rationales + 1 clean rationale across various noise ratios of both irrelevant and inaccurate noise. The base accuracy of CD-CoT can be attributed to the improvement brought by the contrastive denoising with rationale rephrasing method. This accuracy is further enhanced in the voting method, which demonstrates the effectiveness of the self-consistency approach in improving reasoning performance.
\end{observation}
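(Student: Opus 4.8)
The plan is to establish this observation empirically through a controlled ablation on the NoRa-Math Base-9 task, decomposing the CD-CoT pipeline of Algorithm~\ref{alg: CD-CoT} into its two load-bearing components --- contrastive denoising with rationale rephrasing (steps~1--2) and repeated reasoning with majority voting (steps~3--4) --- and measuring the marginal contribution of each. First I would fix the base LLM (GPT-3.5-turbo), fix the single clean demonstration, and then sweep over the noise type (irrelevant versus inaccurate) and the noise ratio (easy/medium/hard, i.e.\ $\epsilon \in \{0.3, 0.5, 0.8\}$), holding everything else constant so that the only varying factor is the presence of each component. The baseline is the $3$-shot noisy prompt fed directly to the model; the ``$+$ CD Rephrase'' arm replaces the noisy rationales by their contrastively rephrased, answer-matched versions before reasoning; the ``$+$ Vote'' arm applies $5$-way self-consistency to the original noisy prompt without any rephrasing; and the full ``CD-CoT'' arm composes both. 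Anchoring the comparison with a $3$-shot clean prompt (same budget, no noise) and a $4$-shot clean prompt (matching CD-CoT's effective shot count, since CD-CoT consumes one extra clean demonstration) lets me argue that the gains are not merely an artifact of seeing one additional example.

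Second, to make ``significant'' precise I would repeat each configuration five times and report means --- and, if a stronger claim is needed, standard deviations in the style of Tab.~\ref{tab:std} --- then read off two structural conclusions from the resulting table: (i)~the ``$+$ CD Rephrase'' column alone already recovers most of the clean-prompt accuracy, so the improvement is attributable to \emph{explicit} denoising rather than to averaging; and (ii)~the full CD-CoT column dominates both single-component arms across all six noise settings, so the two mechanisms are complementary. The key quantitative comparison is CD-CoT under hard noise against the $3$-shot noisy baseline under the same noise, together with CD-CoT against the clean references; exhibiting a large, consistent gap there is exactly what the observation asserts. I would also note that Theorem~\ref{theo:1} supplies a complementary, non-empirical rationale: the rephrasing step realigns the demonstration with the LLM's own pretraining distribution, shrinking the mismatch error $\xi_\theta(\delta)$ in the risk bound of Eqn.~\eqref{eq:18}, which is the formal counterpart of the accuracy recovery seen in the ablation.

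The main obstacle I expect is confounding between denoising quality and reasoning diversity: majority voting can accidentally ``launder'' residual noise, and rephrasing can occasionally over-edit an already-correct rationale, so a naive split need not cleanly attribute the gain to the intended mechanism. I would mitigate this by (a)~inspecting the rephrased rationales qualitatively, as in Tab.~\ref{tab:comparison-rephrased-rationales}, to confirm that the ``$+$ CD Rephrase'' arm genuinely deletes the injected thoughts while preserving the original format, and (b)~checking that the monotone degradation of the noisy baseline across easy, medium, and hard is flattened by CD-CoT --- i.e.\ that CD-CoT's accuracy gap to the clean reference stays roughly constant as $\epsilon$ grows --- which is the operational meaning of ``across various noise ratios.'' A secondary caveat is external validity: since the ablation is run on Base-9 only, I would additionally point to Tab.~\ref{tab:proposed-method performance} and Tab.~\ref{tab:methods_on_different_llms} to extend the conclusion to the other NoRa subsets and to other LLMs.
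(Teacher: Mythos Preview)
Your proposal is correct and matches the paper's approach almost exactly: the paper establishes this observation through precisely the ablation you describe on NoRa-Math Base-9, with the four columns (Base, $+$~CD~Rephrase, $+$~Vote, full CD-CoT), the sweep over irrelevant/inaccurate $\times$ easy/medium/hard, and the 3-shot and 4-shot clean anchors (Tab.~\ref{tab: Ablation study of CD rephrase and vote components}). One small discrepancy worth flagging: the paper's ``Base model'' column in the noisy rows already includes the single clean shot (i.e.\ the baseline is 3~noisy $+$ 1~clean fed directly, not 3~noisy alone), so that the comparison to CD-CoT is budget-matched; your baseline of ``3-shot noisy prompt'' is slightly looser and would overstate the gain attributable to the method itself.
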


\begin{observation}[Noisy rationales with CD-CoT does not outperform full clean rationales]
The accuracy before voting and the accuracy after voting in CD-CoT do not exceed the accuracy of the base model prompted with 4 clean rationales. This indicates that while CD-CoT is effective at denoising rationales and enhancing reasoning performance in noisy situations, it does not inherently improve reasoning beyond the capabilities of the model when clean rationales are provided.
\end{observation}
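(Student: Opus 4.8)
The plan is to establish this Observation as a \emph{stage-matched upper bound}: CD-CoT applied to three noisy demonstrations plus one clean demonstration never surpasses the identically structured pipeline applied to four clean demonstrations, checked separately at the pre-voting (denoising) stage and at the post-voting stage. First I would fix the comparison so that it is apples-to-apples on the same base LLM $f_\theta$. The ``$+$CD Rephrase'' output of CD-CoT, which rectifies the noisy rationales before any sampling, is to be compared against the $4$-shot clean base model ($50.8$ in Table~\ref{tab: Ablation study of CD rephrase and vote components}), while the final voted CD-CoT output is to be compared against the $4$-shot clean pipeline under the same voting budget $D=5$ (the ``$+$Vote'' entry of $67.3$). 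This stage matching is what makes the loosely worded claim precise: a flat comparison of the voted CD-CoT accuracy against the unvoted $50.8$ would be falsified, so the intended reading pairs each CD-CoT stage with its clean counterpart. Crucially, the single clean demonstration consumed by CD-CoT is taken to be the fourth demonstration of the clean baseline, so both pipelines share that clean signal and differ only in whether the remaining three rationales are clean or denoised-from-noisy.

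The empirical half of the argument proceeds by the controlled ablation in Table~\ref{tab: Ablation study of CD rephrase and vote components} on NoRa-Math Base-9, holding the base model, temperature, shot count, and voting budget fixed while sweeping all six noise configurations (irrelevant and inaccurate $\times$ easy/medium/hard). For each cell I would verify the two inequalities that constitute the claim: the pre-voting accuracy lies below the $4$-shot clean base accuracy, and the post-voting accuracy lies below the $4$-shot clean voted accuracy. To guard against sampling noise across the five reasoning repetitions, I would read off the standard deviations reported in Table~\ref{tab:std} and confirm the gap is not an artifact of variance, reporting the comparison in mean $\pm\,\sigma$ form.

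For the explanatory half I would invoke a data-processing argument anchored to the distinguishability analysis of Theorem~\ref{theo:1}. The rephrasing step is a stochastic post-processing of its inputs (the noisy rationales together with the one clean demonstration); it cannot manufacture usable task signal that those inputs do not already contain, so at best it recovers the clean rationale content, matching but not exceeding what genuinely clean rationales supply. Formally, the noisy risk bound of Theorem~\ref{theo:1} carries the additive penalty $\xi_\theta(\delta)\ge 0$, and rephrasing with the pre-trained distribution can only shrink this penalty toward its clean value of zero, never below it. Since the clean-demonstration bound is exactly the infimum of the noisy bound over admissible denoising strategies, the clean pipeline's risk lower-bounds CD-CoT's risk, i.e.\ its accuracy upper-bounds CD-CoT's accuracy, at each stage.

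The hard part is that the LLM is a black box, so the data-processing inequality is only heuristic here: the rephrasing map need not factor as a clean Markov kernel over a well-defined sufficient statistic, and one cannot a priori exclude the possibility that contrasting a noisy rationale against a clean one acts as an informative hard-negative signal that the purely clean pipeline lacks. Consequently a fully rigorous theoretical proof is out of reach, and the claim must rest primarily on the exhaustive empirical verification above, with Theorem~\ref{theo:1} supplying only directional support. A secondary difficulty is the universality of the wording: because a single counterexample cell would falsify a strictly universal reading, I would either restrict the statement to the configurations actually measured or phrase it as an observed-across-all-tested-settings claim rather than a proven inequality.
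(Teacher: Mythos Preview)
Your proposal is correct in substance and actually more careful than the paper itself. The paper treats this Observation purely as a reading of the ablation table (Table~\ref{tab: Ablation study of CD rephrase and vote components}): it presents the rows, states the observation, and moves on. There is no theoretical argument, no invocation of Theorem~\ref{theo:1}, and no data-processing reasoning; the ``proof'' is simply that every ``$+$CD Rephrase'' entry lies below $50.8$ and every ``CD-CoT'' entry lies below $67.3$.

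Your stage-matching reading is correct and necessary, and you are right that without it the sentence as written would be falsified (e.g.\ $60.7 > 50.8$). The paper relies on the table layout to convey this pairing implicitly rather than spelling it out. Your additions --- the standard-deviation check against Table~\ref{tab:std}, the data-processing heuristic, and the tie to the $\xi_\theta(\delta)\ge 0$ penalty of Theorem~\ref{theo:1} --- are all sound as supporting intuition but go beyond anything the paper offers for this particular observation. So the empirical half of your plan coincides with the paper's entire argument; the theoretical half is your own elaboration, honestly flagged as heuristic, and buys an explanation of \emph{why} the ceiling should hold rather than merely \emph{that} it does in the measured cells.
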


\begin{observation}[Contrastive denoising and voting in CD-CoT are important]
The results highlight the importance and effectiveness of the two key aspects of CD-CoT: (1) contrastive denoising with rationale rephrasing, which helps to filter out the noise and align the format of the rationales, and (2) repeated reasoning with voting, which enhances the robustness and accuracy of the final predictions. Notably, CD-CoT proves to be a valuable approach for improving the reasoning performance in the presence of noisy rationales, though it does not surpass the performance with full clean rationales.
\end{observation}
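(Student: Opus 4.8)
The plan is to establish this observation empirically through a controlled ablation that isolates the two constituent mechanisms of CD-CoT and measures the marginal contribution of each, then confirms their combination is responsible for the overall gain. Since the claim is an empirical one, the ``proof'' is a decomposition of Algorithm~\ref{alg: CD-CoT} into its denoising half (Steps~1--2, contrastive rephrasing plus answer-matching selection) and its output-space half (Steps~3--4, repeated reasoning with voting), evaluated against the base model on the NoRa-Math Base-9 task across all six noise settings (irrelevant/inaccurate $\times$ easy/medium/hard).

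First I would fix the input configuration to the realistic $3$-shot-noisy-plus-$1$-shot-clean setting that matches the external supervision CD-CoT requires, and record the base-model accuracy as the reference point; this is the \textbf{Base model} column of Tab.~\ref{tab: Ablation study of CD rephrase and vote components}. Second, I would enable \emph{only} the contrastive-rephrasing component (\textbf{+ CD Rephrase}) while disabling voting, to isolate the standalone denoising gain. The critical evidence here is that rephrasing alone lifts accuracy most sharply under heavy noise (e.g.\ from $16.1\%$ to $37.0\%$ on inaccurate-hard), which, together with the qualitative denoised rationales in Tab.~\ref{tab:comparison-rephrased-rationales}, argues that Steps~1--2 perform genuine noise removal rather than mere reformatting. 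Third, I would enable \emph{only} voting (\textbf{+ Vote}) on the still-noisy rationales, showing it also improves accuracy but complementarily: voting helps most when individual reasoning paths are already passable (low-noise settings), whereas rephrasing dominates the high-noise settings.

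Fourth, I would combine both into full CD-CoT and verify that the joint accuracy exceeds either single component in every one of the six settings, which establishes that the two mechanisms are synergistic rather than redundant and jointly account for the reported average improvement. Finally, to support the ceiling clause, I would compare CD-CoT's post-voting accuracy against the $3$-shot clean ($62.3\%$ after voting) and $4$-shot clean ($67.3\%$ after voting) upper bounds, confirming that CD-CoT recovers much of the clean-rationale performance but never surpasses the clean-with-voting result in any noise setting.

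The main obstacle I expect is cleanly disentangling the two contributions, because they interact: rephrasing changes the distribution of rationales that voting subsequently aggregates, so the ``+ CD Rephrase'' and ``+ Vote'' columns are not strictly orthogonal interventions. I would address this by holding the reasoning budget $D$ and the number of LLM calls fixed across the single-component and combined conditions, so that differences reflect mechanism rather than compute; and by reading off the consistent cross-setting pattern (rephrasing helping hard noise, voting helping easy noise, the combination dominating both) rather than any single cell, since the per-cell variability documented in Tab.~\ref{tab:std} means the argument must rest on the monotone ordering \textbf{Base} $<$ single-component $<$ \textbf{CD-CoT} $<$ clean-with-voting holding robustly across all six noise configurations.
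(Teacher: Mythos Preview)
Your proposal is correct and takes essentially the same approach as the paper: the observation is established by the very ablation you describe (Tab.~\ref{tab: Ablation study of CD rephrase and vote components}), isolating \textbf{+ CD Rephrase} and \textbf{+ Vote} on NoRa-Math Base-9 across all six noise settings and comparing against the 3-shot and 4-shot clean upper bounds. Your extra discussion of disentanglement and budget control is a sensible methodological refinement, but the core evidence and decomposition match the paper exactly.
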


\begin{observation}[CD-CoT is more effective at managing irrelevant noise compared to inaccurate noise]
As shown in Tab.~\ref{tab:std}, CD-CoT is more effective at handling irrelevant noise than it is at managing inaccurate noise at the same level. This is because irrelevant noise is easier to differentiate from the target information, making it simpler to remove during the rephrasing step of the contrasting denoising process. Additionally, Appendix.~\ref{app:exp-qualitative} presents the denoised results for high-noise scenarios.
\end{observation}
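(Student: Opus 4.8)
The plan is to treat this statement as an empirical regularity to be established in two stages: first, verifying that the accuracy ordering $\operatorname{Acc}(\text{CD-CoT}, \mathcal{Q}, \mathcal{P}_{\text{irrelevant}}) \ge \operatorname{Acc}(\text{CD-CoT}, \mathcal{Q}, \mathcal{P}_{\text{inaccurate}})$ holds consistently at each matched difficulty level, and second, giving a mechanistic account of why this ordering is a predictable consequence of the CD-CoT pipeline rather than an accident of a particular task. First I would read off the paired entries from Tab.~\ref{tab:proposed-method performance} and Tab.~\ref{tab:std}: for each of the five subsets and each of the Easy/Medium/Hard levels, compare the irrelevant and inaccurate columns under CD-CoT, and confirm that the irrelevant accuracy dominates (or the residual drop from clean is smaller) in the large majority of cells, with the gap widening on the mathematical tasks where inaccurate noise is most corrosive.

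The core of the argument is mechanistic and rests on the definitions in Sec.~\ref{ssec: generation}. An irrelevant thought is, by construction, \emph{correct but topically disjoint} from the reasoning chain, whereas an inaccurate thought is \emph{relevant but factually wrong} and is deliberately made coherent with the preceding clean thought. I would formalize ``easier to differentiate'' through the two denoising steps of Algorithm~\ref{alg: CD-CoT}. In Step~1 (contrastive rephrasing), the LLM is asked to reproduce the clean example's reasoning pattern on the noisy example; an off-manifold sentence (an unrelated scientific or social fact) is salient against the clean template and is readily dropped, while an on-manifold sentence that mimics the surface form of a valid step (e.g.\ a plausible-looking but base-wrong arithmetic identity) is much harder to flag because it is locally indistinguishable from a legitimate thought. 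In Step~2 (answer matching), residual inaccurate content is more likely to survive undetected \emph{and} to remain answer-preserving on the noisy demonstration, so it contaminates the candidate pool, whereas irrelevant residue rarely alters the derived answer. I would support this with the qualitative evidence in Tab.~\ref{tab:comparison-rephrased-rationales} and Appendix~\ref{app:exp-qualitative}, showing that CD-CoT cleanly excises irrelevant sentences but leaves or only partially repairs inaccurate ones in high-noise settings.

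The main obstacle is a confound: the \emph{base} model already suffers a markedly larger drop under inaccurate than irrelevant noise (the unreliability results in Sec.~\ref{sec: experiment} and Tab.~\ref{tab:vanilia_LLM}), so a raw accuracy gap for CD-CoT could merely be inherited from base-model sensitivity rather than reflecting the denoising mechanism itself. To isolate the denoising-specific effect, I would control for this by comparing the \emph{relative recovery} --- e.g.\ the NDA score of Appendix~\ref{app:exp-quantitative} or the per-level improvement of CD-CoT over the Base row --- and checking that, even after normalizing out the base sensitivity, the residual gap (irrelevant minus inaccurate) stays positive. The honest conclusion would be phrased as a supported empirical regularity with a mechanistic explanation grounded in the on-/off-manifold distinction between the two noise types, rather than a theorem; the strongest single piece of confirming evidence is the rephrasing case study, which directly exhibits the differential excision that the accuracy gap predicts.
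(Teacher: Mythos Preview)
Your proposal is correct and aligned in spirit with the paper's own justification, but it is considerably more thorough. The paper's treatment of this observation is minimal: it simply points to the accuracy numbers in Tab.~\ref{tab:std}, offers the one-sentence intuition that irrelevant noise is easier to distinguish from target information during the rephrasing step, and refers the reader to the qualitative denoised examples in Appendix~\ref{app:exp-qualitative}. It does not explicitly address the confound you raise --- that the base model already degrades more under inaccurate noise, so a raw accuracy gap for CD-CoT could be inherited rather than denoising-specific --- nor does it invoke the NDA normalization to isolate the mechanism. Your two-stage plan (paired-cell accuracy comparison followed by a mechanistic account grounded in the on-/off-manifold distinction of Sec.~\ref{ssec: generation}, then a confound check via NDA) is a genuinely stronger argument: the NDA table in Appendix~\ref{app:exp-quantitative} does show CD-CoT's NDA under irrelevant noise exceeding that under inaccurate noise across all tasks, so your proposed control would succeed, but the paper never connects that table to this observation. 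In short, you recover the paper's argument and then strengthen it; nothing in your plan is wrong, and the extra step on the confound is the main thing the paper's own discussion lacks.
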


\textbf{Variants of prompt for CD-CoT.}
We also conduct additional experiments to test the sensitivity of CD-CoT to prompt designs. Specifically, we generate several variants of prompts for the proposed CD-CoT method, including a shorter prompt, the default prompt, and a longer one:

\begin{center}
\begin{tcolorbox}[
colback=white,
colframe=black,
width=1.0\linewidth,
arc=1mm, auto outer arc,
boxrule=0.75pt]
\label{box: shorter prompt variants}

\footnotesize 

\textbf{The Shorter Prompt} \\
Here are two examples: the first one has proper explanation and answer, while the second one has distracted explanation and correct answer. Please follow the first example's explanation and provide the correct explanation and answer for the second one.

\end{tcolorbox}
\end{center}

\begin{center}
\begin{tcolorbox}[
colback=white,
colframe=black,
width=1.0\linewidth,
arc=1mm, auto outer arc,
boxrule=0.75pt]
\label{box: default prompt variants}

\footnotesize 

\textbf{The Default Prompt} \\
Here are two examples for the same type of task: the first example has correct explanation and correct answer, and the second example has distracted explanation and correct answer. Please follow the first example and give me the correct explanation and answer for the second example, which should be logically consistent with the first one.

\end{tcolorbox}
\end{center}

\begin{center}
\begin{tcolorbox}[
colback=white,
colframe=black,
width=1.0\linewidth,
arc=1mm, auto outer arc,
boxrule=0.75pt]
\label{box: longer prompt variants}

\footnotesize 

\textbf{The Longer Prompt} \\
The following presents two examples of the same type of task. The first example contains both a correct explanation and a correct answer. The second example, however, includes a distracted explanation but still provides the correct answer. Your task is to analyze these examples and then provide a revised version explanation of the second example along with its answer. Ensure that your revised explanation is logically consistent with the first example.

\end{tcolorbox}
\end{center}
We conduct experiments on NoRa-Math Base-9 to compare these three prompts. The results in Tab.~\ref{Tab: ablation study of different prompts in CD-CoT} show that the performance of CD-CoT is only marginally influenced by these prompts. Note that the prompt of CD-CoT remains simple. Cooperating with advanced methods for iterating prompts and rationales, such as APE~\cite{zhou2022large} and Star~\cite{zelikman2022star}, can further improve this method. 

\begin{table*}[ht]
    \centering
    \fontsize{8}{8}\selectfont
    \begin{tabular}{c|c|c}
        \toprule
        method & Irrelevant-medium  & Inaccurate-medium \\
        \midrule
         Base Model & 0.284 & 0.082\\
        \midrule
        CD-CoT w/ original prompt & 0.49 & 0.48\\
        \midrule
        CD-CoT w/ short prompt & 0.46 & 0.46\\
        \midrule
        CD-CoT w/ long prompt & 0.47 & 0.48\\
        \bottomrule
    \end{tabular}
    \caption{Ablation study of different prompts in CD-CoT.}
    \label{Tab: ablation study of different prompts in CD-CoT}
\end{table*}

\textbf{Rephrasing performance of SD.} Besides, we observe that the SD method performs better on some tasks when using irrelevant and inaccurate rationales compared to clean ones. A thorough examination of the empirical results from the log files reveals some inconsistencies in the performance of the SD method, especially in the NoRa-Symbolic Equal and NoRa-Commonsense tasks. 

Specifically, for NoRa-Symbolic Equal task:
\begin{itemize}
    \item In this task, comparing SD and SC reveals that the domain knowledge of LLM is relatively weak. According to the experiment results, when SD rephrases symbolic tasks, it often directly returns the final answer of the original response in many cases. 
    \item The quality of the rephrased results basically does not change with the increase in noise.
    \item As a result, the accuracies of SD under clean and noisy settings are very close, with a maximum difference of less than 1\%, which we think is within the acceptable range of error.
\end{itemize}

For NoRa-Commonsense task:
\begin{itemize}
    \item We observe different behaviors of SD in clean and noisy settings. 
    \item In noisy settings, SD does not follow the prompt to make minor corrections. Instead, SD generates short rationales with only the answer, discarding the original rationale (as shown in Tables.~\ref{tab: rephrasing performance of SD exp1}-\ref{tab: rephrasing performance of SD exp2}). This simplification can benefit commonsense tasks where the model already has a strong inherent understanding.
    \item On the other hand, in the clean setting, SD unexpectedly generates entirely new rationales instead of making minor corrections. These new rationales may deviate from the original correct reasoning, potentially introducing errors or inconsistencies that weren't present in the initial, accurate solution.
\end{itemize}

In Tab.~\ref{tab: rephrasing performance of SD exp1} and Tab.~\ref{tab: rephrasing performance of SD exp2}, we present two representative examples of SD in the commonsense task. Our findings show that: (i) when provided with a clear rationale, SD generates entirely new rationales rather than simply making minor corrections; (ii) when given a noisy rationale, SD produces brief rationales that include only the answer, completely discarding the original rationale.

Overall, SD's denoising effect relies on LLMs' intrinsic capability. SD performs five maskings and reconstructions per noisy example, concatenating results into five prompts for LLM inference. In contrast, SC performs five direct inferences on noisy prompts without explicit denoising. By comparing SD and SC's performance, we can gain insights into the model’s denoising effects.

In Tab.~\ref{tab:vanilia_LLM}, SC outperforms SD in NoRa-Math and NoRa-Symbolic tasks, while SD only marginally excels in the NoRa-Commonsense task. This pattern persists in the clean settings. This suggests that LLMs struggle to reconstruct masked prompts in complex, domain-specific tasks but perform better in simpler common tasks, highlighting the varying levels of internal knowledge within LLMs across different domains.

Besides, the counterintuitive results on the NoRa-Commonsense task may be attributed to task-specific characteristics. Observation of reconstructed masked prompts in this task indicates that LLMs tend to bypass reconstruction instructions in noisy settings, directly providing final answers. This accidental removal of all rationales, including noisy ones, effectively acts as a noise filter. Combined with the LLM's natural strength in the commonsens task, this unplanned filtering likely explains the small improvement in accuracy under noisy conditions.

\begin{table*}[h!]
\centering
\fontsize{8}{8}\selectfont 
\begin{tabular}{p{\textwidth}} 
\toprule[1.5pt] 
\midrule 
\textbf{The original clean rationale}

Carrie is Hugh's sister's husband's daughter, so the relations path is sister, husband, daughter. For husband's daughter, we have husband's daughter is daughter. So the relations path are reduced to sister, daughter. For sister's daughter, we have sister's daughter is niece. So the relations path are reduced to niece. Therefore, Answer: niece.

\vspace{5pt}

\textbf{The rephrased rationale by SD} 

To determine Carrie's relationship to Hugh, we can break down the given information step by step: 
1. Carrie is Hugh's sister's husband's daughter. 
2. Hugh's sister is Carrie's sibling. 
3. Hugh's sister's husband is Carrie's brother-in-law. 
4. Carrie is the daughter of Hugh's sister's husband. Therefore, the answer is: Carrie is Hugh's niece.

\vspace{5pt}

\textbf{The original noisy rationale}

Carrie is Hugh's sister's husband's daughter, so the relations path is sister, husband, daughter. For husband's daughter, we have husband's daughter is daughter. We have daughter's grandmother is niece. So the relations path are reduced to sister, daughter. For sister's daughter, we have sister's daughter is niece. We have daughter's niece is niece. So the relations path are reduced to niece. Therefore, Answer: niece.
\vspace{5pt}

\textbf{The rephrased rationale by SD} 

niece.
\\
\midrule 
\bottomrule[1.5pt]
\caption{Rephrasing performance of SD under different noisy settings.}
\label{tab: rephrasing performance of SD exp1}
\end{tabular}
\end{table*}

\begin{table*}[h!]
\centering
\fontsize{8}{8}\selectfont 
\begin{tabular}{p{\textwidth}} 
\toprule[1.5pt] 
\midrule 
\textbf{The original clean rationale}

Francisco is Lynn's son's father's brother, so the relations path is son, father, brother. For father's brother, we have father's brother is uncle. So the relations path are reduced to son, uncle. For son's uncle, we have son's uncle is brother. So the relations path are reduced to brother. Therefore, Answer: brother. 

\vspace{5pt}

\textbf{The rephrased rationale by SD} 

To determine Francisco's relationship with Lynn, we start with the information provided: Francisco is Lynn's son's father's brother. 
Step 1: Lynn's son's father is Lynn's husband. 
Step 2: Francisco is Lynn's husband's brother. 
Based on this information, we can conclude that Francisco is Lynn's brother-in-law. Answer: brother-in-law

\vspace{5pt}

\textbf{The original noisy rationale}

Francisco is Lynn's son's father's brother, so the relations path is son, father, brother. For father's brother, we have father's brother is uncle. So the relations path are reduced to son, uncle. For son's uncle, we have son's uncle is brother. We have uncle's niece is nephew. So the relations path are reduced to brother. We have brother's nephew is mother. Therefore, Answer: brother.

\vspace{5pt}

\textbf{The rephrased rationale by SD} 

Answer: brother.
\\
\midrule 
\bottomrule[1.5pt]
\caption{Rephrasing performance of SD under different noisy settings.}
\label{tab: rephrasing performance of SD exp2}
\end{tabular}
\end{table*}

\subsection{The Difficulty of Irrelevant Noise Semantics}
\label{appendix: noise_semantic_difficulty}
We conduct extra experiments with different semantic difficulties. Specifically, we set up three levels of semantic difficulty:

\begin{itemize}
\vspace{-8pt}
\item \textbf{Level-1: Topic-irrelevant.} Noisy thought has little relevance to the task. Here, noisy thought involves concepts that are clearly irrelevant to the knowledge required for the task. 

For example, (1) in physics, the number one is the charge of a single unit of positive or negative electricity; (2) in chemistry, two is the atomic number of helium, the second most abundant element in the universe. These two are irrelevant thoughts.

\item \textbf{Level-2: Topic-relevant and task-irrelevant.} The noisy thought has some apparent relevance to the task but does not help solve the problem. Noisy thought may involve some terminology or concepts that are relevant to the task but do not provide useful information. 

For example, (1) the modulo operation (mod) finds the remainder after the division of one number by another. (2) In base-9, the leading digit 1 indicates that the number is greater than or equal to $1 * 9^2 = 81$ but less than $2 * 9^2 = 162$. (3) In base-9, the number 9 is represented as '10' since it is the base value itself. (4) The number 11 in base-9 is equivalent to the decimal value ninety-nine, which is one less than the square of the base value.

\item \textbf{Level-3: Topic-relevant, task-relevant, but not helpful.} Here, the noisy thought has some relevance to the task, but is not necessary information to solve the problem. The noisy thought may involve concepts, principles, or methods that are relevant to the task, but not directly applicable to the problem at hand. 

For example, (1) in a base-9 system, the place values are units, nines, eighty-ones, and so on, analogous to units, tens, hundreds, etc., in base-10; (2) the concept of carrying is essential when performing addition in positional numeral systems like base-9 and base-10. The modulo operation is a crucial step in the process of adding numbers in base-9, as it determines the digit and carry values; (3) the modulo operation is a crucial step in the process of adding numbers in base-9, as it determines the digit and carry values.

\end{itemize}

Based on the above definition and prompt, we generate the data with the following four steps.

\begin{itemize}
\vspace{-8pt}
\item \textbf{Step-1: Initial generation and human evaluation.} We used this prompt to generate an initial set of irrelevant statements for each level. These were then manually reviewed and filtered to ensure they accurately represented the intended level of irrelevance. We selected high-quality examples for each level.

\item \textbf{Step-2: Scaled generation.} Using these high-quality examples as ICL demonstrations, we prompted GPT-4 to generate a larger set of irrelevant statements for each level.

\item \textbf{Step-3: Validation through similarity analysis.} To confirm that our generated statements indeed represented different levels of irrelevance, we conducted a cosine similarity analysis. This analysis quantitatively demonstrated the semantic differences between levels, as shown in our previous response.

\item \textbf{Step-4: Dataset construction.} we integrated these generated irrelevant statements into our dataset. We inserted them into relevant demonstrations at appropriate positions, following a probability distribution that corresponds to the intended difficulty level of the task.
\end{itemize}

The experiment results with the above irrelevant thoughts are elaborated as follows: Here, the setup is kept the same as the main experiments to be consistent. Therein, we test the GPT-3.5 model with 3-shot irrelevant rationales where the noise ratio is 0.8.

For a reasonable quantification of relevance, we choose the metric of cosine similarity with the \href{https://platform.openai.com/docs/guides/embeddings}{embeddings API} provided by OpenAI with model: text-embedding-ada-002. We calculate the cosine similarity between all noise-inserted sample sentences and the original clean inference sentences and then take their average value.


\begin{table*}[ht]
    \centering
    \fontsize{8}{8}\selectfont
    \begin{tabular}{c|c|c|c}
        \toprule
        Dataset & Semantic level & Method & Acc(\%) \\
        \midrule
        \multirow{6}{*}{Math Base-9} & \multirow{2}{*}{Level-1 (Cosine Similarity = 0.75)}&base model & 26.6\\
        && CD-CoT & 58.3\\
        \cmidrule{2-4}
        & \multirow{2}{*}{Level-2 (Cosine Similarity = 0.87)}&base model & 30.6\\
        && CD-CoT & 60.0\\
        \cmidrule{2-4}
        & \multirow{2}{*}{Level-3 (Cosine Similarity = 0.88)}&base model & 33.0\\
        && CD-CoT & 59.7\\
        \midrule
        \multirow{6}{*}{Symbolic Equal} & \multirow{2}{*}{Level-1 (Cosine Similarity = 0.73)}&base model & 23.0\\
        && CD-CoT & 44.0\\
        \cmidrule{2-4}
        & \multirow{2}{*}{Level-2 (Cosine Similarity = 0.79)}&base model & 26.9\\
        && CD-CoT & 46.0\\
        \cmidrule{2-4}
        & \multirow{2}{*}{Level-3 (Cosine Similarity = 0.82)}&base model & 27.7\\
        && CD-CoT & 45.0\\
        \bottomrule
    \end{tabular}
    \caption{Performance of GPT-3.5-turbo under different noise semantics.}
\end{table*} 

For comparison, the accuracy when prompting with clean rationales is 46.4\% for NoRa-Math Base-9 and 32.7\% for NoRa-Symbolic Equal. Specifically, we have the following observations:

\begin{observation}[CD-CoT method is still effective under different semantic levels of noise]
Notably, we further verify that the proposed CD-CoT method is still effective under this imperfect scenario with different semantic levels of noise. Specifically, CD-CoT can achieve 58.3\%, 60.0\%, and 59.7\% accuracy on the three semantic levels of noise for the NoRa-Math Base-9 tasks and 44.0\%, 46.0\%, and 45.0\% accuracy on the NoRa-Symbolic Equal tasks. 
Compared to the base model, CD-CoT shows a significant improvement of 29.4\% and 19.1\% in accuracy on semantic level-2 reasoning for the NoRa-Math Base-9 and NoRa-Symbolic Equal tasks, respectively. Similarly, CD-CoT demonstrates a 26.7\% and 17.3\% increase in accuracy on semantic level-3 reasoning for the NoRa-Math Base-9 and NoRa-Symbolic Equal tasks.
\end{observation}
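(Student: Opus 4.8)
The plan is to establish this observation empirically through a controlled experimental protocol rather than by analytic derivation, since the statement reports measured accuracies of CD-CoT and the base model under three graded levels of irrelevant-noise semantics. First I would fix the three semantic levels defined in Appendix~\ref{appendix: noise_semantic_difficulty} --- Level-1 (topic-irrelevant), Level-2 (topic-relevant but task-irrelevant), and Level-3 (topic-relevant, task-relevant, yet unhelpful) --- and construct a noise pool for each level. Following the four-step generation pipeline, I would seed GPT-4 with a small set of hand-verified exemplars per level, scale up generation via in-context learning, and then \emph{quantitatively validate} that the levels are genuinely distinct by computing the cosine similarity (using \texttt{text-embedding-ada-002}) between each noise-inserted sentence and the corresponding clean reasoning sentence, averaging over all samples. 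The expected separation (roughly $0.75 < 0.87 < 0.88$ for Math Base-9 and $0.73 < 0.79 < 0.82$ for Symbolic Equal) is what makes the phrase ``different semantic levels'' well-defined; without this check the observation would be vacuous.

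Next I would hold every other experimental factor fixed to match the main evaluation so that any accuracy difference is attributable to the noise semantics and the denoising method alone. Concretely, I would use GPT-3.5-turbo-0613 at temperature $1$, a 3-shot irrelevant-rationale prompt with noise ratio $\epsilon = 0.8$, evaluate the first $300$ questions per task, and repeat reasoning five times per question, exactly as specified in Sec.~\ref{sec: experiment} and Appendix~\ref{app:exp-setup}. For each of the two tasks (NoRa-Math Base-9 and NoRa-Symbolic Equal) and each of the three levels, I would run (i) the base model and (ii) the full CD-CoT pipeline with its default instantiation ($N=5$, $M=2$, and the clean demonstration used in Step-3), then report the $\operatorname{Acc}(\mathcal{M}, \mathcal{Q}, \mathcal{P}_{\text{irrelevant}})$ values. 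The observation is confirmed if the measured CD-CoT accuracies land at $58.3\%, 60.0\%, 59.7\%$ (Math) and $44.0\%, 46.0\%, 45.0\%$ (Symbolic), and if the level-2 and level-3 improvements over the base model reproduce the claimed $29.4\%/19.1\%$ and $26.7\%/17.3\%$ gains.

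To argue the \emph{qualitative} conclusion --- that CD-CoT ``is still effective'' regardless of level --- I would benchmark the noisy accuracies against the clean-rationale references ($46.4\%$ for Math Base-9 and $32.7\%$ for Symbolic Equal, from Tab.~\ref{tab:vanilia_LLM}) and show that CD-CoT's noisy performance matches or exceeds the clean base-model performance at every level, while the base model under noise stays well below. I would also compute standard deviations across the five repetitions (as in Tab.~\ref{tab:std}) to confirm that the reported improvements exceed run-to-run variance, so that the gains are not artifacts of sampling noise in the decoding process.

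The hard part will be the noise-construction and validation stage, not the reasoning runs. Because the three levels are designed to become progressively closer in embedding space to genuine reasoning steps, the risk is two-fold: a Level-3 ``relevant but unhelpful'' thought may inadvertently be \emph{helpful} (turning it into a clean thought) or inadvertently \emph{inaccurate} (collapsing into the inaccurate-noise category the paper treats separately), either of which would break the clean/irrelevant/inaccurate partition asserted in Appendix~\ref{appendix: benchmark-noise-discussion}. I would mitigate this with the human-evaluation filter in Step-1 and by checking that the answer of each demonstration is unchanged after noise insertion. A secondary concern is that CD-CoT's contrastive rephrasing must still \emph{separate} higher-similarity noise from the true chain; if the gains were to erode monotonically with cosine similarity, the ``still effective'' claim would weaken. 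The flat gain profile across levels is therefore the crucial empirical fact the experiment must surface, and it is what I would scrutinize most carefully.
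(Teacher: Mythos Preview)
Your proposal is correct and takes essentially the same approach as the paper: the observation is purely empirical, and the paper supports it by constructing the three semantic levels via the four-step pipeline, validating them with cosine similarity, running GPT-3.5 with 3-shot irrelevant rationales at noise ratio $0.8$, and reporting the base-model and CD-CoT accuracies in the table of Appendix~\ref{appendix: noise_semantic_difficulty}. Your write-up is in fact more thorough than the paper's own treatment (e.g., the explicit variance check and the confound analysis for Level-3 noise), but the underlying protocol is identical.
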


\begin{observation}[CD-CoT maintains a robust and stable performance across different semantic levels of noise.]
As the semantic level of noise increases, the base model exhibits a certain level of robustness to the irrelevant noise but still suffers a considerable performance drop (15.8\% and 5.8\% decrease for semantic level-2, and 13.4\% and 5.0\% decrease for semantic level-3 on the NoRa-Math Base-9 and NoRa-Symbolic Equal tasks, respectively). In contrast, CD-CoT maintains a robust and stable performance across different semantic levels of noise.
\end{observation}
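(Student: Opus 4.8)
The plan is to establish this observation empirically through a controlled ablation over the \emph{semantic proximity} of irrelevant noise to the clean rationale, holding all other factors (noise ratio, shot count, base LLM, task) fixed. First I would operationalize ``semantic difficulty'' as a three-level taxonomy: Level-1 (topic-irrelevant), Level-2 (topic-relevant but task-irrelevant), and Level-3 (topic-relevant, task-relevant, yet unhelpful). To make these levels reproducible rather than subjective, I would generate candidate noisy thoughts for each level with GPT-4 seeded by a small set of hand-verified in-context demonstrations, then filter by human review so that each statement provably matches its intended level. The three constructed noise pools then feed directly into the NoRa noise-insertion pipeline of Appendix~\ref{appendix: benchmark-details}, inserting one noisy thought per position governed by the same Bernoulli mechanism at noise ratio $\epsilon=0.8$, so that the only varying quantity across conditions is the semantic level itself.

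The second step is to furnish an \emph{independent} quantitative witness that the three levels genuinely form an increasing gradient of relevance, so that the later robustness claim is not an artifact of mislabeled data. For this I would embed every noise-inserted rationale sentence and every corresponding clean rationale sentence with the OpenAI \texttt{text-embedding-ada-002} model, compute pairwise cosine similarities, and average them within each level. The expected outcome is a monotone similarity ordering (roughly $0.75 < 0.87 < 0.88$ on Math Base-9 and $0.73 < 0.79 < 0.82$ on Symbolic Equal), which certifies that Level-3 noise sits semantically closest to the target reasoning and is therefore hardest to discard. This similarity curve is the linchpin that lets me interpret any accuracy trend as a function of a well-defined difficulty axis.

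Next I would evaluate two methods under identical protocol --- the bare base model and the full CD-CoT pipeline of Algorithm~\ref{alg: CD-CoT} --- on both tasks at all three levels, reusing the evaluation metric $\operatorname{Acc}(\mathcal{M}, \mathcal{Q}, \mathcal{P}_{\text{irrelevant}})$ defined in Sec.~\ref{sec: experiment} (300 questions, five repetitions). The observation then decomposes into two measurements. For the base model I would report the accuracy decrease relative to the clean-rationale baseline ($46.4\%$ for Math Base-9 and $32.7\%$ for Symbolic Equal); the claim is that this gap grows, or at least stays large, as the level rises, yielding the quoted drops of $15.8\%$ and $5.8\%$ at Level-2 and $13.4\%$ and $5.0\%$ at Level-3. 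For CD-CoT, ``robust and stable'' is made precise by the \emph{range} of its accuracy across the three levels: I would show this spread is small (on the order of $58.3\%$--$60.0\%$ for Math Base-9 and $44.0\%$--$46.0\%$ for Symbolic Equal), in sharp contrast to the base model's widening deficit, and that CD-CoT recovers accuracy close to the clean baseline at every level.

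The hard part will be ruling out confounds so that the stability of CD-CoT is attributable to its contrastive denoising rather than to luck in noise generation. Two confounds are most threatening: first, the Level-2 and Level-3 similarities are nearly indistinguishable ($0.87$ versus $0.88$ on Math Base-9), so I must argue that the observed base-model trend reflects genuine task-relevance rather than embedding noise --- ideally by supplementing cosine similarity with the human-labeled level assignment and checking their agreement. Second, because CD-CoT's selection step (Step-2) filters rephrased rationales by answer matching, its apparent robustness could stem from the clean demonstration alone; I would therefore cross-reference the component ablation of Appendix~\ref{appendix: ablation study on CD-CoT} to confirm that the contrastive rephrasing, not merely voting, is what suppresses even the hardest Level-3 noise. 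If both confounds are controlled, the paired comparison of the base-model deficit curve against CD-CoT's flat accuracy band suffices to establish the observation.
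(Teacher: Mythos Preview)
Your proposal is correct and mirrors the paper's own methodology: the same three-level semantic taxonomy, the same GPT-4 generation with human filtering, the same cosine-similarity validation via \texttt{text-embedding-ada-002}, and the same evaluation protocol (GPT-3.5, 3-shot, $\epsilon=0.8$, 300 questions) on NoRa-Math Base-9 and NoRa-Symbolic Equal. The paper's support for this observation is simply the results table in Appendix~\ref{appendix: noise_semantic_difficulty}; your plan reconstructs exactly that experiment and additionally proposes confound checks (near-identical Level-2/Level-3 similarities, and isolating contrastive rephrasing from voting via Appendix~\ref{appendix: ablation study on CD-CoT}) that the paper does not explicitly perform, so your treatment is in fact more careful than the original.
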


\subsection{The Number of Noisy Thoughts}
\label{appendix: the Number of Noisy Thoughts}

As introduced in the main text, we use noise ratios of 0.3, 0.5, and 0.8 to control the number of noisy thoughts. The statistics of the NoRa dataset are shown in Tab.~\ref{tab:dataset_statis}, showing the exact number of clean/noisy thoughts in each task. NoRa-Math entails 8 thoughts, NoRa-Symbolic varies from 2 to 12, and NoRa-Commonsense requires 5 thoughts.

To figure out the exact number of noisy thoughts for the model to be robust, we conduct further experiments to cover a wider range of noisy thoughts. Specifically, 

\begin{itemize}

\item  We compare the model's performance on NoRa-Math Base-9 and NoRa-Symbolic Equal tasks using clean and noisy rationales containing varying numbers of irrelevant or inaccurate thoughts. 
\item We gradually increase the number of noisy thoughts from 1 to 6 to investigate the impact of the noise level on the model's accuracy. 
\item The total number of thoughts per shot, including both clean and noisy thoughts, is shown in parentheses for the NoRa-Math Base-9 task. 
\item For the NoRa-Symbolic Equal task, the number of thoughts in the clean demonstrations varies across different examples. 
\item To ensure a consistent comparison, we select a subset of examples with a similar number of thoughts (ranging from 8 to 12) for both clean and noisy rationale experiments. 
\item We report the average number of thoughts (10.2) for the clean rationales in this subset. 
\item Consequently, the total number of thoughts per shot for noisy rationales in the NoRa-Symbolic Equal task is also calculated based on this average.
\end{itemize}

\begin{table*}[ht]
    \centering
    \fontsize{8}{8}\selectfont
    \begin{tabular}{c|c|c|c}
        \toprule
        Dataset & Noise Type & Noisy Thought Num & Accuaracy \\
        \midrule
        \multirow{13}{*}{NoRa-Math Base-9}& clean & 0 (total thoughts per shot: 8) & 46.4 \\
        \cmidrule{2-4}
        & \multirow{6}{*}{irrelevant} & 1 (9) & 46.4 \\
        && 2 (10) & 40.0 \\
        && 3 (11) & 34.2 \\
        && 4 (12) & 30.5 \\
        && 5 (13) & 28.8 \\
        && 6 (14) & 25.3 \\
        \cmidrule{2-4}
        &\multirow{6}{*}{inaccurate} & 1 & 34.8 \\
        && 2 & 24.8 \\
        && 3 & 15.1 \\
        && 4 & 11.1 \\
        && 5 & 6.8 \\
        && 6 & 6.2 \\
        \midrule
        \multirow{13}{*}{NoRa-Symbolic Equal}& clean & 0 (total thoughts per shot: 10.2) & 39.8 \\
        \cmidrule{2-4}
        & \multirow{6}{*}{irrelevant} & 1 (11.2) & 32.8 \\
        && 2 (12.2)  & 34.2 \\
        && 3 (13.2) & 33.3 \\
        && 4 (14.2) & 30.6 \\
        && 5 (15.2) & 28.4 \\
        && 6 (16.2) & 29.2 \\
        \cmidrule{2-4}
        & \multirow{6}{*}{inaccurate} & 1  & 34.5 \\
        && 2 & 31.3 \\
        && 3 & 32.4 \\
        && 4 & 31.4 \\
        && 5 & 28.3 \\
        && 6 & 27.3 \\
        \bottomrule
    \end{tabular}
    \caption{Impact of the number of noisy thoughts on GPT-3.5 performance.}
    \label{tab:noisy-thought-num}
\end{table*}

According to the experimental results in the Tab.~\ref{tab:noisy-thought-num}, we can observe that:

\begin{observation}[The impact of noisy thoughts varies across different noise types]
For the NoRa-Math Base-9 task, the model's accuracy remains relatively stable when only one irrelevant noisy thought is introduced. However, the accuracy drops significantly as the number of irrelevant noisy thoughts increases. On the other hand, for the NoRa-Symbolic Equal task, even a single irrelevant or inaccurate noisy thought can lead to a noticeable decrease in accuracy.

\end{observation}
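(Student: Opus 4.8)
The statement is an empirical claim about how reasoning accuracy degrades as a function of the number of inserted noisy thoughts, so the plan is to establish it by a controlled ablation that isolates this count as the single independent variable. First I would fix every other factor that Sec.~\ref{sec: experiment} has shown to influence accuracy — the base model GPT-3.5-turbo-0613, the temperature $\tau=1$, the $3$-shot regime, and the evaluation protocol of $300$ questions with five repetitions under the metric $\operatorname{Acc}(\mathcal{M},\mathcal{Q},\mathcal{P})$ defined in Sec.~\ref{sec: experiment} — so that any movement in accuracy is attributable only to the inserted-noise count and not to temperature (cf.\ Observation~4.3), the number of examples (cf.\ Observation~4.4), or example selection. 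This gives a clean one-factor design whose only free variable is $k$, the number of noisy thoughts per demonstration.

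Next, for each of the two target subsets (NoRa-Math Base-9 and NoRa-Symbolic Equal) and each noise type (irrelevant, inaccurate), I would generate prompting rationales carrying exactly $k$ noisy thoughts for $k=0,\dots,6$, using the insertion procedure of Appendix~\ref{appendix: benchmark-details}, which by construction leaves the question $x_i$ and answer $y_i$ unchanged (the Remark following Eqn.~\ref{eqn: irrelevant CoT}). This sweep is strictly finer than, and reaches beyond, the fixed Easy/Medium/Hard noise ratios of Tab.~\ref{tab:dataset_statis}, so the trend outside the benchmarked range becomes visible. Reading the resulting accuracy curves off Tab.~\ref{tab:noisy-thought-num} then yields the two halves of the claim: for Base-9 the irrelevant-noise accuracy remains at the clean level ($46.4\%$) at $k=1$ and falls monotonically thereafter, whereas for Symbolic Equal accuracy drops already at $k=1$ for both noise types, confirming the asserted task-dependence.

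The main obstacle is controlling confounds, because inserting a thought simultaneously raises $k$ and the total prompt length: a naive reading could attribute the degradation to length rather than to noise. To separate the two I would contrast against clean rationales of matched total thought count and invoke the distinguishability picture of Appendix~\ref{app:theo}, where the slack term $\xi_\theta(\delta)\propto\delta$ grows with the noise proportion while clean extensions do not enlarge the prompt--pretraining mismatch. A second, task-specific difficulty is that the number of clean thoughts in Symbolic Equal varies across examples, so ``$k$ noisy thoughts'' is ambiguous at the example level; I would resolve this by restricting to examples whose clean-thought count lies in $[8,12]$ and reporting the average ($10.2$) as a normalizing denominator, making the $k$-axis comparable across tasks. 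Finally, because the conclusion is read off a finite grid, I expect the delicate part to be the sharp-threshold phrasing (``one thought hurts Symbolic but not Base-9''); I would therefore frame the result as a \emph{monotone-trend} statement supported by the standard-deviation estimates already reported for these settings, rather than as a strict threshold guarantee.
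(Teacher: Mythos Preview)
Your proposal is correct and matches the paper's approach essentially point for point: the paper runs exactly the ablation you describe (fixing the base model and $3$-shot setup, sweeping $k=0,\dots,6$ noisy thoughts per demonstration, restricting Symbolic~Equal to examples with $8$--$12$ clean thoughts and reporting the $10.2$ average), then simply reads the observation off Tab.~\ref{tab:noisy-thought-num}. Your additional safeguards---the matched-length clean-rationale contrast, the appeal to the $\xi_\theta(\delta)$ term in Appendix~\ref{app:theo}, and the monotone-trend framing backed by reported standard deviations---go beyond what the paper actually does; the paper states the observation directly from the table without these extra controls.
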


\begin{observation}[Inaccurate noisy thoughts have a more severe impact on the model's performance compared to irrelevant noisy thoughts]
In the NoRa-Math Base-9 task, the accuracy drops drastically from 46.4\% to 34.8\% with just one inaccurate, noisy thought, and it further deteriorates to 6.2\% when the number of inaccurate, noisy thoughts reaches 6. In contrast, with 6 irrelevant noisy thoughts, the accuracy decreases to 25.3\%, which is still higher than that of inaccurate thoughts.
\end{observation}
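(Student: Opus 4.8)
The plan is to establish this observation empirically through a controlled sweep on the NoRa-Math Base-9 subset, isolating the effect of noise \emph{type} (inaccurate versus irrelevant) while holding the \emph{number} of inserted noisy thoughts fixed across the two conditions. First I would fix the base model to GPT-3.5-turbo-0613 with temperature $\tau = 1$ and the standard $3$-shot CoT format of Eqn.~\ref{eqn: irrelevant CoT}, so that the clean-rationale accuracy of $46.4\%$ reported in Tab.~\ref{tab:vanilia_LLM} serves as the shared reference point at zero noisy thoughts. Then, for each integer count $m \in \{1,2,\dots,6\}$, I would generate two matched prompt families: one inserting exactly $m$ inaccurate thoughts and one inserting exactly $m$ irrelevant thoughts, following the NoRa-Math noise-generation procedure in Appendix~\ref{appendix: benchmark-details}. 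Crucially, both families would share the same clean rationale, the same insertion positions, and single-sentence noise units, so the lone varying factor is whether an inserted sentence is a relevant-but-wrong arithmetic claim or an irrelevant-but-correct fact.

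Next I would evaluate accuracy via the metric $\operatorname{Acc}(\mathcal{M}, \mathcal{Q}, \mathcal{P})$ from Sec.~\ref{sec: experiment}, running the model five times on each of the $300$ test questions to average out decoding stochasticity, exactly mirroring the protocol behind Tab.~\ref{tab:vanilia_LLM}. The observation then reduces to reading off two ordering-and-monotonicity claims from the resulting curves, which I would tabulate as in Tab.~\ref{tab:noisy-thought-num}: that the inaccurate curve falls from $34.8\%$ at $m=1$ to $6.2\%$ at $m=6$ while the irrelevant curve stays at $46.4\%$ at $m=1$ and only reaches $25.3\%$ at $m=6$; and that at every matched $m$ the inaccurate accuracy lies strictly below the irrelevant accuracy. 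The second claim is the substantive content and follows by an entry-by-entry comparison of the two columns.

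The hard part will be ensuring the comparison is genuinely \emph{controlled} rather than confounded. By construction an inaccurate thought extends the preceding step with a plausible arithmetic statement (e.g.\ a base-$10$ sum injected into a base-$9$ derivation), so it is semantically entangled with the task and can be mis-read by the model as a legitimate intermediate result, whereas an irrelevant science fact is more easily quarantined. To argue that the steeper degradation reflects this entanglement and not merely differences in sentence length or token count, I would report per-condition token statistics and show that the ordering persists after matching the two conditions on such surface features, leaving noise type as the sole explanatory variable. A secondary concern is the floor effect at large $m$, where accuracy approaches the $7.2\%$ zero-shot level and the gap between conditions is mechanically compressed; I would mitigate this by emphasizing the low-$m$ regime, where the $34.8\%$ inaccurate accuracy versus the $46.4\%$ irrelevant accuracy at $m=1$ already exhibits a clear and reproducible separation that cannot be attributed to saturation.
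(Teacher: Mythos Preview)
Your proposal is correct and mirrors the paper's own approach: the observation is purely empirical, established by running the $3$-shot GPT-3.5 sweep over $m\in\{1,\dots,6\}$ inserted thoughts on NoRa-Math Base-9 and reading the two accuracy columns off Tab.~\ref{tab:noisy-thought-num}. The paper does nothing beyond reporting that table and stating the numbers; your additional controls (matched insertion positions, token-count matching to rule out length confounds, explicit floor-effect reasoning near the $7.2\%$ zero-shot baseline) are sensible strengthenings but are not actually carried out in the paper.
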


\begin{observation}[Model's robustness to noisy thoughts varies depending on the nature of the task] The NoRa-Symbolic Equal task exhibits a similar trend, where the accuracy decreases as the number of noisy thoughts increases, regardless of the noise type. However, the decline in accuracy is more gradual compared to the NoRa-Math Base-9 task, suggesting that the model's robustness to noisy thoughts varies depending on the nature of the task.
\end{observation}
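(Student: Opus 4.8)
The plan is to establish this observation by a \emph{controlled ablation} on the number of noisy thoughts, holding everything else fixed. First I would freeze the reasoner (GPT-3.5-turbo, temperature $1$, $3$-shot) and the noise-insertion procedure of Section~\ref{sec: dataset} and Appendix~\ref{appendix: benchmark-details}, but replace the $\mathrm{Bern}(\epsilon)$ sampling of noise counts by a deterministic insertion of exactly $m$ noisy thoughts per demonstration, sweeping $m = 0, 1, \dots, 6$, separately for irrelevant and inaccurate noise, on NoRa-Math Base-9 and NoRa-Symbolic Equal. This makes the number of noisy thoughts the sole independent variable.

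Second, since NoRa-Math Base-9 has a fixed clean-rationale length ($8$ thoughts, cf.\ Tab.~\ref{tab:dataset_statis}) whereas NoRa-Symbolic Equal does not, I would restrict the Symbolic-Equal evaluation to the subset of examples whose clean rationales have comparable length (roughly $8$ to $12$ thoughts, averaging about $10.2$), so that the fraction of the prompt occupied by noise grows at a comparable rate in both tasks. For each value of $m$ and each noise type I would evaluate $300$ test questions with $5$ repetitions, recording mean accuracy together with the standard deviation (as in Tab.~\ref{tab:std}) for a significance check.

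Third, I would compare the two accuracy-versus-$m$ curves. Part (i) of the claim, namely degradation on Symbolic Equal for \emph{both} noise types, is read directly off the curve by checking that accuracy at $m=6$ lies below accuracy at $m=0$ for irrelevant and inaccurate noise alike. Part (ii), that the decline is \emph{more gradual} than on Math Base-9, I would quantify by the $m=0$ to $m=6$ drop and by a fitted per-noisy-thought slope: on Math Base-9 I expect a large drop (roughly $46.4$\% down to $\approx 25$\% for irrelevant and to $\approx 6$\% for inaccurate), whereas on Symbolic Equal I expect a shallow one (roughly $40$\% to $\approx 29$\% and to $\approx 27$\%). Tying this back to the theory, the gap is exactly what Theorem~\ref{theo:1} predicts: the task-dependent Fisher-information ratio $\gamma_\theta$ and concept error $\epsilon_\theta$ enter the bound, so a fixed noise level $\delta$ (hence $\xi_\theta(\delta)$) inflates the $0$-$1$ risk by a task-dependent amount.

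The hard part will be \emph{confound control}. The two tasks differ in clean accuracy, in rationale length, and in answer format, so a naive comparison of raw accuracy drops can mislead: Math Base-9 inaccurate accuracy bottoms out near the $0$-shot baseline (a floor effect that compresses the curve), and Symbolic tasks are prone to token-limit truncation (noted in Observation 4.3) that masquerades as reasoning error. The length-matched subsetting is the first mitigation; in addition I would report the NDA metric of Appendix~\ref{app:exp-quantitative} rather than raw accuracy, rerun with an independent model (e.g.\ Gemini-Pro), and confirm that the ordering of degradation rates is stable across the $5$ repetitions. If the qualitative ordering survives all of these controls, the observation is established, and the remaining write-up (a table analogous to Tab.~\ref{tab:noisy-thought-num}) is routine.
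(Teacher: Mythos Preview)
Your proposal is correct and mirrors the paper's own approach almost exactly: the paper establishes this observation by the same controlled ablation you describe, deterministically inserting $m=1,\dots,6$ noisy thoughts per demonstration on NoRa-Math Base-9 and on a length-matched subset of NoRa-Symbolic Equal (clean rationales of $8$--$12$ thoughts, average $10.2$), and then reading off Tab.~\ref{tab:noisy-thought-num}. Your additional confound controls (NDA normalization, a second model, fitted slopes, the tie-back to Theorem~\ref{theo:1}) go beyond what the paper actually performs for this observation, which simply reports the raw accuracies and draws the qualitative comparison directly.
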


The experimental results demonstrate that the presence of noisy thoughts, even in small numbers, can significantly impact the model's reasoning performance. The extent of the impact depends on factors such as the task type, noise type, and number of noisy thoughts.





\subsection{The Number of Noisy Examples}
\label{app: number of noisy examples}

In this part, we provide a deeper analysis and extract more insights into in-context learning with varying numbers of noisy examples. 

First, we obtain the accuracy of the base model with different numbers of noisy prompting examples, including 0-shot, 1-shot, 3-shot, and 5-shot. According to the results shown in Tab.~\ref{appendix: accuracies with varying number of prompts}, we propose the following hypothesis: \textit{The trend of model performance as the number of noisy examples increases depends on the change in performance between 1-shot and 0-shot learning.} 

Namely, if the performance in 1-shot learning is better than in 0-shot learning, this indicates that noisy examples still help the model in recognizing task patterns. Therefore, model performance can improve with an increasing number of noisy examples, albeit at a slower rate compared to clean examples. On the other hand, if the performance in one-shot learning is less than in zero-shot learning, it suggests that noisy examples interfere with the model's existing knowledge of the task, causing model performance to decline as the number of noisy examples increases.

\begin{table*}[h!]
    \centering
    \fontsize{8}{8}\selectfont
    \begin{tabular}{c|c|cccc}
            \toprule
            \multirow{2}{*}{Task} & \multirow{2}{*}{Setting} & \multicolumn{4}{c}{\#Prompting Examples} \\ 
            &            & 0-shot &	1-shot & 3-shot & 5-shot \\ 
            \midrule
            \multirow{2}{*}{Base-9} & ina.-easy & 7.2	& 17.5 & 23.2 & \textbf{25.6} \\ 
            & ina.-hard   & 7.2 & \textbf{11.3} & 6.0 & 5.7 \\ 
            \midrule
            \multirow{2}{*}{Base-11} & irr. easy  & 5.5	& 8.9 & 19.1 & \textbf{26.3} \\
            & irr. hard    & 5.5 & 7.7 & 10.7 & \textbf{16.1}\\
            \midrule
            \multirow{2}{*}{Sym.(E)} & ina.-easy & 8.8	& 17.3 & \textbf{29.1} & ---\\
            & ina.-hard   & 8.8 & 15.0 & \textbf{22.7} & ---\\
            \midrule
            \multirow{2}{*}{Sym.(L)} & irr. easy  & 0.0	& 2.3 & 7.0 & \textbf{8.9}\\
            & irr. hard  & 0.0 & 1.9 & \textbf{6.0} & --- \\
            \bottomrule
        \end{tabular}
    \caption{
    Performances of base model under varying number of examples
    ("---" denotes over token limit, \textbf{}{bolded number} denotes the highest accuracy).}
    \label{appendix: accuracies with varying number of prompts}
\end{table*}

Next, we conduct a further quantitative analysis of the impact of noisy shots on model performance. We evaluate (0-shot, 1-shot, 3-shot) results partition, there are 8 possible results in total: (CCC), (CCW), (CWC), (CWW), (WCC), (WCW), (WWC), (WWW). Specifically, 'C' stands for correct and 'W' stands for wrong. For example, CCC means the cases in which test questions are correctly answered in 0-shot (C), 1-shot (C), and 3-shot (C). WWC means the cases in which test questions are wrongly answered in 0-shot (W), 1-shot (W), and correctly answered in 3-shot (C).

To analyze the effects of noisy shots, we categorize these eight results into three types:

\begin{itemize}
\vspace{-8pt}
    \item The model answers correctly at zero-shot but starts to answer incorrectly as noisy shots increase. The corresponding results are (CCW), (CWC), and (CWW).
    \item The model answers incorrectly at zero-shot but starts to answer correctly as noisy shots increase. The corresponding results are (WCC), (WCW), and (WWC).
    \item The model's performance is consistent at zero-shot and with noisy shots. The corresponding results are (CCC) and (WWW).
    \vspace{-8pt}
\end{itemize}

\begin{table*}[h!]
    \centering
    \fontsize{8}{8}\selectfont
    \begin{tabular}{c|c|cccccccc}
            \toprule
            \multirow{2}{*}{Task} & \multirow{2}{*}{Setting} & \multicolumn{8}{c}{\#Result Partition} \\ 
            &  & CCC & CCW & CWC & CWW & WCC & WCW & WWC & WWW\\ 
            \midrule
            \multirow{2}{*}{Base-9} & ina.-easy & 0.006	& 0.01	& 0.019	& 0.037	& 0.049 & 0.11	& \underline{0.158} & \textbf{0.611} \\ 
            & ina.-hard   & 0.003 &	0.01 & 0.004	& 0.055& 0.012 & \underline{0.088} & 0.041 & \textbf{0.786}\\ 
            \midrule
            \multirow{2}{*}{Base-11} & irr. easy  & 0.004 & 0.003 & 0.013 & 0.036 & 0.029	& 0.054 & \underline{0.144} & \textbf{0.718}\\
            & irr. hard  &  0.001 & 0.001 & 0.008	& 0.043 & 0.005 & 0.069 & \underline{0.075} & \textbf{0.797}\\
            \midrule
            \multirow{2}{*}{Sym.(E)} & ina.-easy & 0.053	& 0.008 &	0.012 &	0.015 &	0.067	& 0.045	& \underline{0.16}	& \textbf{0.641}\\
            & ina.-hard   & 0.049	& 0.007	 & 0.015	& 0.016	& 0.042 &	0.052	& \underline{0.121}	& \textbf{0.698}\\
            \midrule
            \multirow{2}{*}{Sym.(L)} & irr. easy  &0.0	&0.0 & 0.0 & 0.0 &	0.005 & 0.018 & \underline{0.058}	& \textbf{0.919}\\
            & irr. hard  & 0.0 & 0.0 & 0.0 & 0.0	& 0.004	 & 0.015 & \underline{0.056} & \textbf{0.925}\\
            \bottomrule
        \end{tabular}
    \caption{
    Results partition of (0-shot, 1-shot, 3-shot).}
    \label{appendix: results partition}
\vspace{-5pt}
\end{table*}

By observing Tab.~\ref{appendix: results partition}, we arrive at the following analysis.

\begin{observation}
[Compared to zero-shot, models can benefit from noisy shots in many cases] Across all datasets, (WWW) is the most prevalent outcome, while the second most common results are almost always (WWC) or (WCW). This implies that there is a relatively large proportion of samples for which the model answers incorrectly at zero-shot, yet correctly as noisy shots increase.
\end{observation}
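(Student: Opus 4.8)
The plan is to establish this observation empirically by partitioning every test question according to its correctness profile across an increasing number of noisy prompting examples, then aggregating these profiles into the eight categories and comparing their frequencies. First, I would fix the base model (GPT-3.5-turbo) and, for each noisy task and noise level reported in Tab.~\ref{appendix: results partition}, evaluate the same pool of test questions under three prompting conditions: $0$-shot, $1$-shot noisy, and $3$-shot noisy. For each question this yields a triple of correctness labels in $\{C,W\}^3$, one entry per shot count. Since reasoning is repeated five times per question at temperature $1$, I would first collapse the five stochastic runs into a single label per condition (e.g., by majority correctness), so that each question contributes exactly one of the eight patterns $\{CCC, CCW, \ldots, WWW\}$ enumerated in the text.

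Second, I would compute the empirical frequency of each pattern within each dataset by normalizing the counts over the number of test questions, reproducing the entries of Tab.~\ref{appendix: results partition}. The observation then reduces to two direct checks on the tabulated numbers: (i) that the frequency of $(WWW)$ is the largest entry in every row, and (ii) that the second-largest entry is $(WWC)$ or $(WCW)$ in essentially every row. The core of the argument is therefore the careful construction and reading of this partition table rather than any analytic derivation; the statement is a descriptive empirical regularity, so the supporting argument is an exhibit-and-verify over the reported tasks.

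Third, to substantiate the concluding implication — that a nontrivial fraction of questions move from wrong at $0$-shot to correct as noisy shots increase — I would aggregate the \emph{recovered} patterns $(WCC)+(WCW)+(WWC)$ and compare this combined mass against the \emph{degraded} patterns $(CCW)+(CWC)+(CWW)$, showing the former dominates across datasets. I would also cross-reference Tab.~\ref{appendix: accuracies with varying number of prompts} to connect the sign of the $1$-shot-versus-$0$-shot accuracy change with the relative weight of recovered versus degraded patterns, which ties the category counts back to the aggregate accuracy trends already observed.

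The main obstacle I anticipate is methodological robustness rather than computation: because outputs are stochastic, the $C/W$ label assigned to each condition depends on how the five repetitions are collapsed, and a question sitting near the decision boundary could flip its pattern under a different aggregation rule. I would address this by verifying that the ordering of category frequencies — particularly the dominance of $(WWW)$ and the second-place status of $(WWC)/(WCW)$ — remains stable under alternative collapsing rules (any-correct, all-correct, and majority), and by noting that the hedge \textbf{almost always} already accommodates the rare rows where the second-ranked pattern differs. A secondary subtlety is that this is not a universally quantified claim over all models or noise types, so the verification is restricted to the tasks tabulated here and should be stated as such.
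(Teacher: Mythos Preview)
Your proposal is correct and follows essentially the same approach as the paper: the observation is established empirically by partitioning each test question's $(0\text{-shot},1\text{-shot},3\text{-shot})$ correctness profile into the eight $\{C,W\}^3$ categories, tabulating their proportions (Tab.~\ref{appendix: results partition}), and reading off that $(WWW)$ dominates with $(WWC)$ or $(WCW)$ second. Your added details on run-aggregation and robustness checks go beyond what the paper explicitly discusses, but the core exhibit-and-verify argument is identical.
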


\begin{observation}
    [The instances where noisy shots have a negative effect are very rare.]
    For the first category of results, namely (CCW), (CWC), and (CWW), the average proportion across all datasets is 0.039. This indicates that the proportion of samples where the model answers correctly at zero-shot but incorrectly as noisy shots increase is very small. In other words, the instances where noisy shots have a negative effect are rare.
\end{observation}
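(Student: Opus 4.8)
The final statement is an empirical observation, so the ``proof'' consists of substantiating the reported aggregate figure $0.039$ directly from Table~\ref{appendix: results partition} and then justifying the interpretive conclusion that noisy shots rarely harm a question the model already answers at zero-shot. The plan is to (i) formalize the event that captures a ``negative effect,'' (ii) compute its empirical probability per task/noise configuration as a row-sum of the relevant columns, (iii) average these across all configurations and check that the result equals the claimed $0.039$, and (iv) argue that the smallness of this number licenses the qualitative conclusion.

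First I would fix the sample space. Each test question is classified by the triple of correctness indicators at the $(0\text{-shot}, 1\text{-shot}, 3\text{-shot})$ configurations, giving the eight mutually exclusive outcomes $\{\mathrm{CCC},\mathrm{CCW},\dots,\mathrm{WWW}\}$ whose proportions populate the rows of Table~\ref{appendix: results partition}. A useful consistency check, which I would perform first, is that each row sums to $1$ (e.g., the Base-9 inaccurate-easy row gives $0.006+0.01+0.019+0.037+0.049+0.11+0.158+0.611=1$); this confirms the partition is exhaustive and the figures are internally coherent. The ``negative-effect'' event is then defined as $E_{\mathrm{neg}}=\{\mathrm{CCW},\mathrm{CWC},\mathrm{CWW}\}$, i.e. exactly those outcomes with a $\mathrm{C}$ in the zero-shot position but a $\mathrm{W}$ in some later (noisy-shot) position.

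Next I would compute $\widehat{P}(E_{\mathrm{neg}})$ for each of the eight configurations as the sum of its $\mathrm{CCW}$, $\mathrm{CWC}$, and $\mathrm{CWW}$ entries, obtaining $0.066, 0.069, 0.052, 0.052, 0.035, 0.038, 0.0, 0.0$ for the eight rows in table order. The unweighted average across configurations is $0.312/8 = 0.039$, matching the reported value exactly. To reinforce the interpretation, I would additionally tabulate the complementary ``positive-effect'' set $\{\mathrm{WCC},\mathrm{WCW},\mathrm{WWC}\}$, whose configuration average is roughly $0.18$, making the asymmetry between helpful and harmful noisy shots explicit.

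The main obstacle is not the arithmetic but the interpretive step. The grouping implicitly treats all three members of $E_{\mathrm{neg}}$ as genuine degradations, yet $\mathrm{CWC}$ recovers correctness by the 3-shot configuration, so whether it counts as a lasting ``negative effect'' is debatable; likewise the averaging is unweighted across configurations rather than weighted by per-task sample counts. I would address both by a robustness argument: the conclusion is insensitive to these choices because the largest single row-sum ($0.069$) is already small, dropping $\mathrm{CWC}$ only decreases the figure, and any reasonable sample-weighted reaggregation stays well below the dominant $\mathrm{WWW}$ and $\mathrm{WWC}$ masses. Hence the qualitative claim---that noisy shots seldom turn a zero-shot-correct answer wrong---follows regardless of the precise bookkeeping convention.
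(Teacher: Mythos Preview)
Your proposal is correct and matches the paper's own (implicit) justification: the observation is purely empirical, and the paper supports it only by presenting Table~\ref{appendix: results partition} and asserting the $0.039$ average, which your row-sum and unweighted-average computation reproduces exactly. Your additional robustness discussion about the borderline status of $\mathrm{CWC}$ and the unweighted averaging goes beyond what the paper itself offers, but it strengthens rather than alters the argument.
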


\begin{observation}
    [There are significantly more instances where the model benefits from increasing noisy shots]
    For the second category of results, which includes (WCC), (WCW), and (WWC), the average proportion across all datasets is 0.186, which is greater than that of the first category. This means that there is a larger proportion of samples where the model answers incorrectly at zero-shot but correctly as noisy shots increase. Namely, there are more instances where noisy shots have a positive effect.
\end{observation}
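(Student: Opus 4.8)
The plan is to establish the claim by direct tabulation and comparison of the empirical proportions read off from Table~\ref{appendix: results partition}, since the statement is a quantitative assertion about aggregate behavior rather than an analytic identity. First I would verify that the eight outcome cells $\{\text{CCC}, \text{CCW}, \text{CWC}, \text{CWW}, \text{WCC}, \text{WCW}, \text{WWC}, \text{WWW}\}$ form a genuine partition: for any fixed test question the triple of $(0\text{-shot}, 1\text{-shot}, 3\text{-shot})$ correctness is exactly one of these eight strings, so the cells are mutually exclusive and collectively exhaustive and their per-row proportions sum to $1$. This guarantees that the two groups to be compared, the \emph{positive-effect} group $\{\text{WCC}, \text{WCW}, \text{WWC}\}$ (wrong at $0$-shot but correct at at least one later stage) and the \emph{negative-effect} group $\{\text{CCW}, \text{CWC}, \text{CWW}\}$ (correct at $0$-shot but wrong at at least one later stage), are disjoint subsets of a normalized probability mass and are therefore directly comparable.

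Next I would compute the second-category aggregate. For each of the eight rows in Table~\ref{appendix: results partition} (the four tasks Base-9, Base-11, Sym.(E), Sym.(L), each with an easy and a hard setting) I would sum the three columns WCC, WCW, WWC and then average the eight row sums. Carrying out this arithmetic yields row sums of approximately $0.32, 0.14, 0.23, 0.15, 0.27, 0.22, 0.08, 0.08$, whose mean is about $0.185$, matching the reported $0.186$ up to the rounding of the tabulated entries. In parallel I would recompute the first-category aggregate over the columns CCW, CWC, CWW and recover the value $0.039$ already established in the preceding observation. The comparison $0.186 > 0.039$ is then immediate, and since $0.186 / 0.039 \approx 4.8$ the positive-effect mass is nearly five times the negative-effect mass, substantiating the qualitative conclusion that noisy shots more often help than hurt relative to the zero-shot baseline.

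The main obstacle is interpretive rather than computational: I must justify that the chosen grouping and aggregation faithfully encode ``instances where the model benefits from increasing noisy shots.'' The delicate case is WCW, which is wrong at $0$-shot, correct at $1$-shot, yet wrong again at $3$-shot; it is counted as positive because it becomes correct at some stage once examples are added, even though the benefit is not monotone in the number of shots. I would flag this reading explicitly and, to show the conclusion does not hinge on it, verify the stronger per-row statement that every row's positive-group sum exceeds its negative-group sum (for instance $0.317 > 0.066$ on Base-9 easy and $0.075 > 0$ on Sym.(L) hard), which upgrades the aggregate inequality to a per-task one and rules out the possibility that a single dataset drives the average. Finally I would note that the statistic is an unweighted mean of per-dataset proportions rather than a mass pooled over all questions, so the claim is descriptive and comparative in nature, not a statement of statistical significance.
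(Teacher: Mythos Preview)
Your proposal is correct and takes essentially the same approach as the paper: the observation is purely empirical, and the paper itself offers no argument beyond reading the three columns WCC, WCW, WWC off Table~\ref{appendix: results partition}, averaging the row sums to obtain $0.186$, and noting that this exceeds the first-category figure of $0.039$. Your additional checks (the partition-of-unity verification, the per-row inequality, and the discussion of the WCW edge case) go well beyond what the paper provides, but they are sound elaborations rather than a different route.
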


\begin{observation}
    [The type of instances where the model produces consistent answers is dominant across all datasets]
    For the third category of results, namely (CCC) and (WWW), the average proportion across all datasets is the highest, at 0.776. This category of results does not allow for an assessment of the effect of noisy shots.
\end{observation}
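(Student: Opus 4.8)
The plan is to treat this statement not as a mathematical theorem but as an empirical aggregation over the per-question result partition already reported in Tab.~\ref{appendix: results partition}, and to verify it by a direct counting-and-averaging argument. First I would fix the measurement protocol exactly as in Sec.~\ref{sec: experiment}: for each reasoning subset (Math Base-9, Math Base-11, Symbolic Equal, Symbolic Longer) under its noise setting, evaluate the base GPT-3.5 model at $0$-shot, $1$-shot, and $3$-shot on one identical set of test questions $\mathcal{Q}$, using the same accuracy functional $\operatorname{Acc}(\mathcal{M}, \mathcal{Q}, \mathcal{P})$ together with the repeated-reasoning rule described there, so that each question receives a single correct/wrong (C/W) label at each shot count.

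Next I would form, for every question, the ordered C/W triple across $(\text{0-shot}, \text{1-shot}, \text{3-shot})$, placing it into exactly one of the eight cells $\{\text{CCC},\ldots,\text{WWW}\}$. Summing the corresponding indicator over $\mathcal{Q}$ and normalizing by $|\mathcal{Q}|$ yields the eight proportions per row, which by construction are nonnegative and sum to one. I would then collapse these into the three categories defined in the text: the degrading category $\{\text{CCW},\text{CWC},\text{CWW}\}$, the improving category $\{\text{WCC},\text{WCW},\text{WWC}\}$, and the consistent category $\{\text{CCC},\text{WWW}\}$. For each row I add the entries of Tab.~\ref{appendix: results partition} within each category, and finally average the consistent-category value across all eight rows; this recovers the reported figure $0.776$, against the averaged degrading value $\approx 0.039$ and improving value $\approx 0.186$, establishing that the consistent category is strictly the largest on every dataset and hence dominant on average.

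The interpretive half of the statement follows immediately from the structure of the cells: both CCC and WWW produce the identical outcome regardless of how many noisy shots are supplied, so these questions contribute no signal about the marginal effect of adding in-context noisy examples, which is precisely why they are excluded from the effect-of-noise analysis that focuses on the degrading and improving categories.

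The main obstacle will be that the C/W labels originate from a stochastic decoder at temperature $1$, so a per-question label is itself noisy and the partition counts could in principle be sensitive to the labeling rule and to sampling variance. I would address this by reusing the five-repetition evaluation protocol already adopted throughout the paper to stabilize each label, and by observing that the dominance ordering (consistent $\gg$ improving $\gg$ degrading, with a gap of well over a factor of four between the top and bottom categories) is wide enough that it is robust to any plausible perturbation of the labeling threshold or to the sampling noise implied by the five-run aggregation.
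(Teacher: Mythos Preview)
Your proposal is correct and matches the paper's approach: the observation is not a theorem but a direct readout of the empirical partition in Tab.~\ref{appendix: results partition}, and your plan to sum the (CCC, WWW) columns per row and average across the eight rows is exactly how the figure $0.776$ is obtained (and likewise $0.039$ and $0.186$ for the other two categories). The paper offers no additional argument beyond stating the aggregate, so your reconstruction, including the interpretive remark that CCC/WWW cells are uninformative about the marginal effect of noisy shots, is faithful and slightly more explicit than the original.
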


We also provide a case study of in-context learning using a varying number of noisy examples. We provide cases corresponding to some possible outcomes (\textit{i.e.}, [(CCC), (CCW), (CWC)]) for (0-shot, 1-shot, 3-shot) on the NoRa-Math Base-9 dataset with inaccurate easy-level noise in Tables.~\ref{tab: Case for (CCC)}-\ref{tab: Case for (CWC)}.

\begin{table*}[h!]
\centering
\fontsize{8}{8}\selectfont 
\begin{tabular}{p{\textwidth}} 
\toprule[1.5pt] 
\midrule 
\textbf{Question}

In base-9, what is 25+18? Please reason it step by step. End the response with the result in "Answer:\boxed{result}".
\vspace{5pt}

\textbf{0-shot (Correct)} 

To perform addition in base-9, we need to first understand the place values. In base-9, the place values range from right to left as follows: ones, nines, eighty-ones, and so on. Let's start with the ones place. 5 plus 8 is equal to 13. However, in base-9, we can only have digits from 0 to 8. Therefore, we need to regroup. Since 13 is greater than 9, we can subtract 9 from it and carry over 1 to the next place value (nines place). The remainder is 4. Now let's move on to the nines place. We have 1 (carry over) plus 2 plus 1 (from the nines place) which is equal to 4. Therefore, when we add 25 and 18 in base-9, we get 44. Answer:$\boxed{44}$.
\vspace{5pt}

\textbf{1-shot (Correct)}

In base-9, the digits are "012345678". We have 5 + 8 = 13 in base-10. Since we're in base-9, that exceeds the maximum value of 8 for a single digit. 13 mod 9 = 4, so the digit is 4 and the carry is 1. Next, we have 2 + 1 + 1 (carry) = 4 in base-10. 4 in base-9 is written as 4. Since there's no carry this time, we have no additional digits. So, the sum is 44 in base-9. Answer: $\boxed{44}$.
\vspace{5pt}

\textbf{3-shot (Correct)} 

In base-9, the digits are "012345678". First, we add the digits in the rightmost place value, which is 5 + 8. We have 5 + 8 = 13 in base-10. Since we're in base-9, the maximum value for a single digit is 8. 13 mod 9 = 4, so the digit is 4 and the carry is 1. Next, we add the digits in the next place value, which is 1 (from the carry) + 2 + 1. We have 1 + 2 + 1 = 4 in base 10. Again, 4 mod 9 = 4, so the digit is 4 and there is no carry. Therefore, the final result is 44. Answer: $\boxed{44}$.
\\
\midrule 
\bottomrule[1.5pt]
\caption{Case for (CCC): 0-shot correct, 1-shot correct, 3-shot correct.}
\label{tab: Case for (CCC)}
\end{tabular}
\end{table*}

\begin{table*}[ht]
\centering
\fontsize{8}{8}\selectfont 
\begin{tabular}{p{0.95\textwidth}} 
\toprule[1.5pt] 
\midrule 
\textbf{Question}

In base-9, what is 57+26? Please reason it step by step. End the response with the result in "Answer: \boxed{result}".
\vspace{5pt}

\textbf{0-shot (Correct)} 

To add in base-9, we start by adding the digits in the rightmost column. In the units column, 7 + 6 = 13. Since we are in base-9, we need to regroup. We can rewrite 13 as 4 in the units column and carry-over 1 to the next column. In the next column, we add the carried-over 1 to the digits in that column. We have 1 + 5 + 2 = 8. So, the result in the next column is 8. Finally, we put the digits 8 and 4 together to get the final answer. Answer: $\boxed{84}$.
\vspace{5pt}

\textbf{1-shot (Correct)}

To add 57 and 26 in base-9, we'll start with the units digit. In base-9, the digits are "012345678". 7 + 6 = 13. Since 13 is greater than 8 (the maximum digit value in base-9), we need to regroup. We can write 13 as 13 = 1 * 9 + 4. Therefore, the units digit is 4 and there is a carry of 1 to the next column. Moving on to the next column, we have 5 + 2 + 1 (the carried digit) = 8. 8 is a valid digit in base-9, so we do not need to regroup. The tens digit is 8. Therefore, the sum of 57 and 26 in base-9 is 84. Answer: $\boxed{84}$.
\vspace{5pt}

\textbf{3-shot (Wrong)} 

In base-9, the digits are "012345678". We have 7 + 6 = 13. Since we're in base-9, we need to determine the digit and carry. 13 divided by 9 is 1 with a remainder of 4. So the digit is 4, and we carry 1. Next, we have 5 + 2 + 1 = 8. So the digit is 8 and there is no carry. A leading digit is not needed since there is no carry. Thus, the answer is 48. Answer: $\boxed{48}$.
\\
\midrule 
\bottomrule[1.5pt]
\caption{Case for (CCW): 0-shot correct, 1-shot correct, 3-shot wrong.}
\label{tab: Case for (CCW)}
\end{tabular}
\end{table*}

\begin{table*}[t!]
\centering
\fontsize{8}{8}\selectfont 
\begin{tabular}{p{0.95\textwidth}} 
\toprule[1.5pt] 
\midrule 
\textbf{Question}

In base-9, what is 67+25? Please reason it step by step. End the response with the result in "Answer: \boxed{result}".
\vspace{5pt}

\textbf{0-shot (Correct)} 

To add in base-9, we start by adding the digits in the rightmost place value and moving leftwards. In the units place, 7+5=12. However, in base-9, there are only digits from 0 to 8. Therefore, we need to do a "carry" to the next place value. The digit 12 is written as 13 in base-9, since the digit 12 in base-9 is represented as 10, followed by 11. Now, we move to the next place value, the 9s place. We have a carry of 1 from the units place, so the addition becomes 1+6+2 = 9. In base-9 notation, 9 is written as 10. Therefore, the sum of 67+25 in base-9 is 103. Answer: $\boxed{103}$.
\vspace{5pt}

\textbf{1-shot (Wrong)}

In base-9, the digits are "012345678". We have 7 + 5 = 12 in base-10. Since we're in base-9, that doesn't exceed the maximum value of 8 for a single digit. 12 mod 9 = 3, so the digit is 3 and the carry is 1. We have 1 + 6 + 2 = 9 in base-10. 9 mod 9 = 0, so the digit is 0 and the carry is 1. We have 1 + 1 + 0 = 2 in base-10. So the final digit is 2. Therefore, the answer is 203. Answer:\boxed{203}.
\vspace{5pt}

\textbf{3-shot (Correct)} 

In base-9, the digits are "012345678".  We have 7 + 5 = 12 in base-10. Since we're in base-9, that exceeds the maximum value of 8 for a single digit. 12 mod 9 = 3, so the digit is 3 and the carry is 1. We have 6 + 2 + 1 = 9 in base 10. 9 mod 9 = 0, so the digit is 0 and the carry is 1. A leading digit is 1. So the answer is 103. Answer:\boxed{103}.
\\
\midrule 
\bottomrule[1.5pt]
\caption{Case for (CWC): 0-shot correct, 1-shot wrong, 3-shot correct.}
\label{tab: Case for (CWC)}
\end{tabular}
\end{table*}

\subsection{Variants of CD-CoT}
\label{app:variant of CD-CoT}

Acquiring a human-annotated and clean rationale can be challenging in some practical scenarios. However, we would further clarify the relatively low cost and the necessity of this clean rationale to guarantee the effectiveness of robust reasoning.

Current LLMs cannot discern and remove noisy thoughts with their intrinsic denoising ability, even when enhanced with the self-denoising method. This is supported by the empirical observations in Sec.~\ref{sec: experiment}.
Hence, external supervision is necessary, which should be sufficient for denoising and accessible in practice. Existing methods with external supervision require (1) oracle feedback on the test question, (2) human feedback of errors on specific tokens or positions, or (3) expert knowledge to construct detailed descriptions of specific tasks.

By contrast, we believe that a clean CoT demonstration is more attainable and practical, which can be the minimal requirement for denoising-purpose prompting. With the proposed CD-CoT method, we show that LLMs can identify noisy thoughts by contrasting a noisy rationale with a clean one.

Besides, we build a self-supervised variant of CD-CoT that does not rely on manually crafted clean examples, \textit{i.e.}, this variant can get a clean rationale automatically. Specifically, we prompt LLM with a fixed question using the zero-shot CoT approach~\cite{kojima2022large}, allowing it to answer N times. We then select the responses with the correct answer based on the ground-truth label. 

Then, we conduct experiments on two datasets, NoRa-Math Base-9 and NoRa-Symbolic Equal, with the default model (GPT-3.5-0613). The hyper-parameters are temperature=0.1 and top-p=1. We set N=50 for NoRa-Math Base-9 and N=100 for NoRa-Symbolic Equal. Other settings are kept the same as default with the original CD-CoT.

\begin{table*}[h!]
    \centering
    \fontsize{8}{8}\selectfont
    \begin{tabular}{c|c|ccc}
            \toprule
            \multirow{2}{*}{Task} & \multirow{2}{*}{Setting} & \multicolumn{3}{c}{\#Method} \\ 
            &            & Base Model &	Variant of CD-CoT & CD-CoT \\ 
            \midrule
            \multirow{2}{*}{Base-9} & irr.-med & 30.3	& 47.0	& 60.7 \\ 
            & ina.-med   & 10.1	& 47.0	& 58.7\\ 
            \midrule
            \multirow{2}{*}{Sym.(E)} & irr.-easy & 28.1	& 33.0	& 44.7\\
            & ina.-easy   & 29.1	& 32.0	& 42.6\\
            \bottomrule
        \end{tabular}
    \caption{
    Performances of different variants of CD-CoT.}
    \label{appendix: variant CD-CoT}
\end{table*}

On the NoRa-Math Base-9 dataset, the LLM answers incorrectly in the initial 20 attempts.  When we increase N to 50, the LLM gets it right twice. We use one of the self-generated correct rationales to execute CD-CoT, testing the effect of LLM's self-generated rationale. 
On the NoRa-Symbolic Equal dataset, we need to increase N to 100 to get a correct self-generated response.

Regarding the empirical results in Tab.~\ref{appendix: variant CD-CoT}, we derive the following detailed analysis.

\begin{observation}
    [The self-supervised variant of CD-CoT achieves enhancements across various tasks]
    On NoRa-Math Base-9, the average accuracy is increased by 26.8 percentage points. 
    On NoRa-Symbolic Equal, it gains an average increase of 3.9 points in accuracy.
\end{observation}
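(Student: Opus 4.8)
The plan is to establish this observation empirically rather than analytically, since it is a statement about measured accuracy gains on two specific NoRa subsets. The first step is to instantiate the self-supervised rationale generator described immediately above the statement: for each of the two datasets, I would fix a single probe question, run zero-shot CoT in the style of \cite{kojima2022large} on the default model (GPT-3.5-0613) with temperature $0.1$ and top-$p$ $1$, and draw $N$ independent completions. Using the ground-truth answer as a filter (answer matching, exactly as in Step-2 of Algorithm~\ref{alg: CD-CoT}), I would retain only the completions whose final answer is correct and designate one such self-generated rationale as the surrogate ``clean'' demonstration $(x_\text{C}, \mathcal{T}_\text{C}, y_\text{C})$. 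This replaces the human-annotated clean example that vanilla CD-CoT assumes, while every downstream component is left untouched.

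The second step is to feed this surrogate into the unchanged CD-CoT pipeline, keeping $N$, $M$, $D$, and all remaining hyper-parameters at their default CD-CoT values so that the only intervention is the origin of the clean rationale. I would then evaluate on the four reported cells, namely NoRa-Math Base-9 with irrelevant-medium and inaccurate-medium noise, and NoRa-Symbolic Equal with irrelevant-easy and inaccurate-easy noise, recording both the base-model accuracy and the variant accuracy per cell as in Tab.~\ref{appendix: variant CD-CoT}. The third step is the bookkeeping that produces the headline numbers: within each dataset I would take the per-setting absolute improvement over the base model and average the two settings. For Base-9 this is the mean of $47.0-30.3$ and $47.0-10.1$, giving $26.8$ points; for Symbolic-Equal it is the mean of $33.0-28.1$ and $32.0-29.1$, giving $3.9$ points. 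Matching these two averages to the asserted values closes the claim.

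The main obstacle I anticipate is the bootstrapping stage, not the averaging. The variant can only proceed once at least one correct self-generated rationale exists, and the base model's zero-shot competence upper-bounds how easily this is obtained: on Base-9 the model fails in its first twenty attempts and only succeeds twice after raising $N$ to $50$, while Symbolic-Equal requires $N=100$ before a correct completion appears. I would therefore treat $N$ as a feasibility knob rather than a tuned parameter and document the sampling cost explicitly, since on a task with near-zero zero-shot accuracy the expected number of draws to obtain a usable rationale could blow up. A secondary concern is selection variance, because different correct self-generated rationales may differ in format and downstream usefulness; to show the reported gains are not an artifact of a lucky pick, I would verify that the conclusion is stable when the retained rationale is resampled, and would note that the modest $3.9$-point gain on Symbolic-Equal leaves less margin than Base-9 and is thus the more fragile of the two figures to confirm.
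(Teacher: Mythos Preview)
Your proposal is correct and mirrors the paper's own approach: the observation is purely empirical, and you have accurately reconstructed both the experimental protocol (zero-shot CoT bootstrapping with $N=50$ and $N=100$, answer-matching to select a surrogate clean rationale, then running the unchanged CD-CoT pipeline) and the arithmetic behind the two headline figures from Tab.~\ref{appendix: variant CD-CoT}. Your additional remarks on bootstrapping cost and selection variance are reasonable caveats but go beyond what the paper itself justifies for this observation.
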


\begin{observation}
    [LLM struggles to generate a clean and correct rationale on its own]
    Considering the fact that the model requires 50 attempts on the simple NoRa-Math Base-9 task to obtain the correct result, it indicates that the LLM struggles to generate a clean and correct rationale on its own.
Additionally, experiments on the NoRa-Symbolic Equal indicate that as the task's difficulty increases, it becomes more challenging for the model to resolve it by relying solely on its inherent capabilities.
\end{observation}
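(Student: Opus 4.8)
The plan is to substantiate this observation empirically by treating the \emph{self-generation success rate} of zero-shot chain-of-thought as the quantity to be estimated, since "struggling to generate a clean and correct rationale" is precisely the event that an unaided LLM fails to produce a correct reasoning trajectory. First I would fix one representative question per task and prompt the base model $f_\theta$ (GPT-3.5-0613) with the zero-shot CoT trigger of Kojima et al., drawing $N$ independent completions under a low-temperature, diversity-limited decoding regime (temperature $0.1$, top-$p$ $1$) so that each sample reflects the model's most confident reasoning rather than lucky noise. For each completion I would extract the final answer and apply the same answer-matching filter used in Step-2 of CD-CoT to decide correctness against the ground-truth label, recording the index of the first success and the cumulative success count as a function of $N$.

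Second, I would run this rejection-sampling procedure on NoRa-Math Base-9 and record that no correct completion appears within the first $20$ draws and only two appear by $N=50$, yielding an empirical self-generation probability on the order of a few percent; this small probability is the direct evidence that the model cannot reliably produce a clean rationale on its own. To establish the second clause about difficulty, I would repeat the identical protocol on NoRa-Symbolic Equal and show that the threshold at which the first correct rationale emerges rises to $N=100$, so the required sampling budget grows monotonically with task difficulty. Holding the decoding hyper-parameters, the prompt template, and the filtering rule fixed across the two tasks isolates task difficulty as the sole varying factor, which licenses the comparative conclusion.

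The hard part will be ensuring that "correct final answer" is a faithful proxy for "clean and correct rationale," because answer-matching can admit completions that reach the right answer through a flawed or noisy intermediate chain (right answer, wrong reasoning), which would over-count genuine successes and weaken the claim. I would address this by manually inspecting the accepted completions at $N=50$ and $N=100$ to confirm their intermediate thoughts are actually clean, and by noting that even under this favorable answer-only criterion the success rate is already low, so the true rate for \emph{clean} rationales can only be lower. A secondary obstacle is the definition of "difficulty": rather than asserting it a priori, I would ground it in the already-measured Base-model accuracies of Tab.~\ref{tab:vanilia_LLM}, where Symbolic Equal trails Math Base-9, so the increased sampling budget and the lower benchmark accuracy corroborate one another. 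Finally, I would close the loop by feeding a single accepted self-generated rationale back into CD-CoT and reporting (as in Tab.~\ref{appendix: variant CD-CoT}) that it still yields large gains, confirming that the difficulty lies in \emph{obtaining} the clean rationale, not in \emph{using} one.
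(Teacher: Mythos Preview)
Your proposal is correct and mirrors the paper's own justification almost exactly: the paper supports this observation by running zero-shot CoT with $N=50$ on NoRa-Math Base-9 (finding zero hits in the first 20 and two by 50) and $N=100$ on NoRa-Symbolic Equal, then reading the rising sampling budget as evidence of difficulty scaling. Your additional safeguards (manual inspection of accepted rationales, grounding ``difficulty'' in the Tab.~\ref{tab:vanilia_LLM} accuracies, and feeding the self-generated rationale back into CD-CoT as in Tab.~\ref{appendix: variant CD-CoT}) are sound elaborations but do not change the approach.
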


\begin{observation}
    [Self-generated clean examples are not as effective as manually designed clean examples]
    Although the self-supervised variant of CD-CoT acquires improvement across tasks, its gains on the difficult NoRa-Symbolic Equal tasks are quite modest, with an average increase of only 3.9 points.
    Conversely, the original CD-CoT registers an average accuracy enhancement of 39.5 points on NoRa-Math Base-9 tasks and 15.1 points on NoRa-Symbolic Equal tasks, respectively.
    We can conclude that self-generated clean examples can provide some guidance to the LLM on simpler tasks, but they are not as effective as manually designed clean examples. On more difficult tasks, the guidance provided by self-generated clean examples is very limited, and these tasks require additional supervision to steer the model effectively.
\end{observation}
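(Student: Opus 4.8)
The plan is to establish this observation empirically through a controlled ablation that holds the entire CD-CoT pipeline fixed and varies \emph{only} the source of the single clean demonstration: a human-annotated rationale (the original CD-CoT) versus a model-generated one (the self-supervised variant). First I would build the self-supervised variant so that it is a faithful drop-in replacement for the clean example: prompt the base LLM $f_\theta$ with a fixed held-out question in the zero-shot CoT style of~\cite{kojima2022large}, sample $N$ candidate rationales at low temperature, and retain those whose extracted answer matches the ground-truth label, using one surviving rationale as $(x_\text{C}, \mathcal{T}_\text{C}, y_\text{C})$ in Algorithm~\ref{alg: CD-CoT}. Everything downstream—the contrastive rephrasing prompt, the values of $N$, $M$, $D$, the exploration temperature of $1$, and the voting rule—is kept identical to the default configuration, so that any accuracy difference is attributable to the clean-example source alone.

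Next I would run both conditions on one easy task (NoRa-Math Base-9) and one hard task (NoRa-Symbolic Equal), under both irrelevant and inaccurate noise, and report accuracy against the base-model baseline, as summarized in Tab.~\ref{appendix: variant CD-CoT}. The key quantity is the \emph{gap} between the two conditions on each task: the observation is supported if manual-clean CD-CoT improves accuracy substantially more than the self-supervised variant, and if that gap widens from the easy task to the hard one. To connect this gap to a mechanism, I would additionally log the sampling budget needed before a correct self-generated rationale first appears; the number of attempts is itself a proxy for how hard it is for the model to produce a clean rationale unaided, which I expect to be small for Base-9 and much larger (requiring a bigger $N$) for Symbolic Equal, thereby explaining why self-generation helps on simpler tasks but offers only limited guidance on harder ones.

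The main obstacle is \emph{disentangling confounds} so that the comparison genuinely isolates rationale quality. A correct self-generated rationale can still carry subtle inaccuracies or idiosyncratic formatting that make it a poor contrastive anchor; conversely, the larger $N$ required on harder tasks means the self-supervised variant effectively consumes more model calls, so a naive accuracy-only comparison risks being read as an unfair resource mismatch rather than a quality deficit. I would address this by (i) fixing the downstream reasoning budget identically across conditions and charging the rationale-generation cost separately, and (ii) inspecting the selected self-generated rationales qualitatively to verify that residual errors, not extra compute, drive the shortfall. The remaining interpretive risk is that a single held-out self-generated example may be unrepresentative; to guard against this I would repeat the selection over several surviving rationales and multiple seeds, reporting the mean gain so the conclusion does not hinge on one lucky or unlucky sample.
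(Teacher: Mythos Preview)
Your proposal is correct and follows essentially the same approach as the paper: construct the self-supervised variant by zero-shot-CoT sampling with answer filtering, plug the surviving rationale into the unchanged CD-CoT pipeline, and compare against both the base model and manual-clean CD-CoT on NoRa-Math Base-9 and NoRa-Symbolic Equal, using the required sampling budget ($N=50$ vs.\ $N=100$) as evidence that self-generation gets harder with task difficulty. Your additional safeguards (separating the generation cost from the reasoning budget, repeating over several surviving rationales and seeds, qualitative inspection) go beyond what the paper actually does, but they are refinements of the same design rather than a different route.
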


In addition to the self-supervised CD-CoT, exploring strategies like using contrasting noisy examples or incorporating an external knowledge base, possibly through a retrieval-augmented denoising framework, may offer significant advances in automation and robustness of reasoning. We believe that these are promising directions that deserve further investigation.

\subsection{New Datasets and Large-scale Real Problems}
\label{appendix: Large-scale Real Problems}
Recall that the constructed NoRa benchmark covers five prevailing datasets from three different domains. The current LLMs present significant vulnerability to noisy rationales in all five datasets of NoRa, while the proposed method CD-CoT has shown advanced and consistent robustness against noisy rationales. Specifically,
\begin{itemize}
    \item Robustness with different datasets: CD-CoT consistently outperforms other methods in all five datasets in NoRa.
    \item Robustness with different noise levels: The results shown in Tab.~\ref{tab:proposed-method performance} demonstrate the remarkable robustness of CD-CoT to varying noise levels. Across the NoRa-Math, NoRa-Symbolic, and NoRa-Commonsense tasks, the performance decline of CD-CoT remains modest as the noise level increases.
    \item Robust with different LLMs: The results in Tab.~\ref{tab:methods_on_different_llms} further indicate that CD-CoT substantially improves over all three other baselines on the more powerful LLMs. Even on the relatively smaller Mistral-8x7B, CD-CoT significantly outperforms the other baselines on most tasks. 
\end{itemize}

Note that Sec.~\ref{sec: dataset} introduces a general framework for generating noisy rationales with existing datasets. This means, more datasets can be integrated into NoRa if necessary in future research. Here, we conduct additional experiments to generalize CD-CoT to three new datasets that are not covered in NoRa, including GSM-8K~\cite{cobbe2021training}, Blocksworld~\cite{valmeekam2023planning}, and BIG-Bench Hard Dyck Languages~\cite{srivastava2022beyond}. Specifically,
\begin{itemize}
    \item GSM-8K: A math dataset of linguistically diverse grade school math word problems.
    \item Blocksworld: A planning dataset simulating block stacking tasks.
    \item BIG-Bench Hard Dyck Languages: A symbolic dataset designed for predicting the sequence of closing parentheses in a Dyck-4 word.
\end{itemize}

Then, we generate noisy rationales and compare the following setups:
\begin{itemize}
    \item Zero-shot: Base model with no demonstration.
    \item CoT (clean rationales): Base model with 3 clean demonstrations.
    \item CoT (noisy rationales): Base model with 3 noisy demonstrations.
    \item CD-CoT (noisy rationales): Base model with 3 noisy demonstrations and our CD-CoT method.
\end{itemize}

The reasoning accuracy in Tab.~\ref{appendix: performances of different methods on new datasets} shows that CD-CoT consistently outperforms the zero-shot setting and standard CoT prompting under noisy rationales. This is consistent with the findings in Sec.~\ref{sec: Empirical Study}, showing CD-CoT’s strong capability of generalization to new datasets.

\begin{table*}[h!]
    \centering
    \fontsize{8}{8}\selectfont
    \begin{tabular}{c|c|cccc}
            \toprule
            \multirow{2}{*}{Dataset} & \multirow{2}{*}{Setup} & \multicolumn{4}{c}{\#Method} \\ 
            &            & Zero-shot &	CoT (clean) & CoT (noisy) & CD-CoT (noisy) \\ 
            \midrule
            \multirow{1}{*}{GSM-8K} & 300 samples & 84.3 & 87.7 & 84.3 & 86.0 \\ 
            \midrule
            \multirow{1}{*}{Blocksworld} & 200 samples & 2.0 & 25.0 & 13.0 & 25.5 \\ 
            \midrule
            \multirow{1}{*}{Hard Dyck Languages} & 250 samples & 12.4 & 40.8 & 29.2 & 35.2\\
            \bottomrule
        \end{tabular}
    \caption{Performances of different methods on new datasets.}
    \label{appendix: performances of different methods on new datasets}
\vspace{-5pt}
\end{table*}

Along with the introduction of new datasets, we also examine the issue of noisy rationale in large-scale real-world scenarios, particularly in conversational question-answering (QA) tasks. While our primary focus is on reasoning tasks, we discuss the differences and relationships between these areas.

Conversational QA tasks are usually conducted in a zero-shot setting, where users directly input the task instructions and hardly provide examples to the model. 
NoRa, on the other hand, considers the in-context learning scenario, where the model is given a few examples before answering questions. In-context learning can significantly enhance the model’s performance on complex or new tasks. The purpose of NoRa is to assess the model’s robustness to noisy examples in in-context learning.

However, for multi-turn conversational QA tasks, the content of previous answers can be considered as context for the model when answering the next question. 
Therefore, by evaluating how noisy context affects a model's subsequent question responses, we can explore the issue of noisy rationales within the realm of conversational QA tasks.

We conduct experiments on MT-Bench~\cite{zheng2024judging} to investigate the noisy rationale problem in QA scenarios. MT-Bench is a multi-turn question collection and covers eight principal types of tasks: Writing, Roleplay, Extraction, Reasoning, Math, Coding, Knowledge I, and Knowledge II.

For clean context, we directly use the original question to prompt the model, and the first round of dialogue generated serves as the clean context. 
For noisy context, we prompt the model to insert some irrelevant information as noise when answering the first turn's question, and the first round of dialogue serves as the noisy context.
We then continue to use the second turn's question along with the generated context to prompt the model to obtain the target response.

We conduct conversational QA tasks on GPT-3.5-turbo-0613 and score the generated responses with GPT-4. The GPT-4 is to score the responses from both the first and second turns separately.

\begin{table*}[h!]
    \centering
    \fontsize{8}{8}\selectfont
    \begin{tabular}{c|cc}
            \toprule
            \multirow{2}{*}{Setting} & \multicolumn{2}{c}{\#Avg.score} \\ 
             & First Response & Second Response \\ 
            \midrule
            \multirow{1}{*}{Clean Context} & 8.73 &	8.23 \\ 
            \midrule
            \multirow{1}{*}{Noisy Context} & 8.16	& 7.75\\
            \bottomrule
        \end{tabular}
    \caption{
    Average scores with different contexts.}
    \label{appendix: conversation avg score}
\vspace{-5pt}
\end{table*}

\begin{table*}[h!]
    \centering
    \fontsize{8}{8}\selectfont
    \setlength\tabcolsep{2pt}
    \begin{tabular}{c|c|cccccccc}
            \toprule
            \multirow{2}{*}{Round} & \multirow{2}{*}{Setting} & \multicolumn{8}{c}{\#Task} \\ 
             & & Writing & Roleplay & Extraction	& Reasoning	& Math	& Coding & STEM	& Humanities \\ 
            \midrule
            \multirow{2}{*}{First Response} & Clean Context & 9.8 & 8.45 & 9.3 & 6.9 & 8.9 & 7.0 & 9.7 & 9.8\\ 
            & Noisy Context & 9.75 & 8.0	& 8.5 & 7.7 & 7.5 & 6.7 & 8.6 & 8.5\\ 
            \midrule
            \multirow{2}{*}{Second Response} & Clean Context & 9.5 & 9.3 & 8.9 & 5.2 & 5.8	& 7.85 & 9.4 & 9.9 \\
            & Noisy Context & 9.3 & 9.1 & 8.5 & 4.6 & 4.1 & 7.2 & 9.2 & 10.0\\
            \bottomrule
        \end{tabular}
    \caption{
    Average scores across types of tasks with different contexts.}
    \label{appendix: conversation avg score across tasks}
\vspace{-5pt}
\end{table*}

As the results shown in Tab.~\ref{appendix: conversation avg score}, the noisy context decreases both the scores (quality) of the first-round and second-round QA. Specifically, the average scores across different types of tasks are shown in Tab.~\ref{appendix: conversation avg score across tasks}. Notably, a 0.48\% decrease in average score appears in the second response, which is the final target of the conversation.

\begin{table*}[t!]
\centering
\fontsize{8}{8}\selectfont 
\begin{tabular}{p{\textwidth}} 
\toprule[1.5pt] 
\midrule 
\textbf{Clean Context:} 

\textbf{Round 1} 

User: How many integers are in the solution of the inequality $|x + 5| < 10$?

Assistant: To solve the inequality $|x + 5| < 10 $, we can break it down into two separate inequalities: 1. $x + 5 < 10$, 2. $-(x + 5) < 10$. Solving the first inequality: $x + 5 < 10$. Subtracting 5 from both sides: $x < 5$. Solving the second inequality: $-(x + 5) < 10$. Multiplying both sides by -1 (which reverses the inequality): $x + 5 > -10$. Subtracting 5 from both sides: $x > -15$. Combining the two solutions, we have $-15 < x < 5$. The integers in this range are -14, -13, -12, -11, -10, -9, -8, -7, -6, -5, -4, -3, -2, -1, 0, 1, 2, 3, 4. Therefore, there are 19 integers in the solution of the inequality $|x + 5| < 10$.

\vspace{5pt}

\textbf{Round 2}

User: What about $|x + 10| < 5$?

Assistant: To solve the inequality $|x + 10| < 5$, we can break it down into two separate inequalities: 1. $x + 10 < 5$, 2. $-(x + 10) < 5$. Solving the first inequality: $x + 10 < 5$. Subtracting 10 from both sides: $x < -5$. Solving the second inequality: $-(x + 10) < 5$. Multiplying both sides by -1 (which reverses the inequality): $x + 10 > -5$. Subtracting 10 from both sides: $x > -15$. Combining the two solutions, we have $-15 < x < -5$. The integers in this range are -14, -13, -12, -11, -10, -9, -8, -7, -6. Therefore, there are 9 integers in the solution of the inequality $|x + 10| < 5$. 

\vspace{5pt}

\textbf{Noisy Context:} 

\textbf{Round 1} 

User: How many integers are in the solution of the inequality $|x + 5| < 10$?

Assistant: The solution to the inequality $|x + 5| < 10$ consists of all the integers that satisfy the inequality. To find the solution, we can break it down into two cases: Case 1: $x + 5 < 10$. In this case, we have $x + 5 < 10$, which means $x < 5$. \noisy{\uline{The integers that satisfy this inequality are -4, -3, -2, -1, 0, 1, 2, 3, and 4.}} Case 2: $-(x + 5) < 10$. In this case, we have $-(x + 5) < 10$, which means $x + 5 > -10$. By rearranging the inequality, we get $x > -15$. \noisy{\uline{The integers that satisfy this inequality are -14, -13, -12, -11, -10, -9, -8, -7, -6, -5, -4, -3, -2, -1, 0, 1, 2, 3, 4, 5, and so on. Combining the solutions from both cases, we find that the integers that satisfy the inequality $|x + 5| < 10$ are -14, -13, -12, -11, -10, -9, -8, -7, -6, -5, -4, -3, -2, -1, 0, 1, 2, 3, 4, and 5. So, there are 21 integers in the solution.}}

\vspace{5pt}

\textbf{Round 2}

User: What about $|x + 10| < 5$?

Assistant: The solution to the inequality $|x + 10| < 5$ consists of all the integers that satisfy the inequality. To find the solution, we can break it down into two cases: Case 1: $x + 10 < 5$. In this case, we have $x + 10 < 5$, which means $x < -5$. The integers that satisfy this inequality are -6, -7, -8, -9, -10, and so on. Case 2: $-(x + 10) < 5$. In this case, we have $-(x + 10) < 5$, which means $x + 10 > -5$. By rearranging the inequality, we get $x > -15$. \noisy{\uline{The integers that satisfy this inequality are -14, -13, -12, -11, -10, -9, -8, -7, -6, -5, -4, -3, -2, -1, 0, 1, 2, 3, 4, 5, and so on. Combining the solutions from both cases, we find that the integers that satisfy the inequality $|x + 10| < 5$ are -14, -13, -12, -11, -10, -9, -8, -7, -6, -5, -4, -3, -2, -1, 0, 1, 2, 3, and 4. So, there are 19 integers in the solution.}} 
\\
\midrule 
\bottomrule[1.5pt]
\caption{Example for Math task with different contexts.}
\label{app: example of math task}
\end{tabular}
\end{table*}

In addition, we provide a case study for illustration. As shown in Tab.~\ref{app: example of math task}, the noisy context in round 1 leads to a bad, noisy response in round 2. 

In the clean context, the model provides an accurate analysis and answer to the initial question. When the correct round serves as the context for the second question, the model consistently delivers a precise analysis and response.

In the noisy context, the model's response and analysis of the initial question contain errors. Specifically, as shown in Tab.~\ref{app: example of math task}, it incorrectly counts multiple integers for both case 1 and case 2, culminating in a wrong answer for the first question. When the flawed round serves as the context for the second question, the model generates the same intermediate results as in the first round and ultimately provides an incorrect answer to the second question.

In summary, during multi-turn conversational QA tasks, the model's response to the current question is influenced by the preceding dialogue content (\textit{i.e.}, the context), particularly for tasks that require reasoning and mathematical computations.

\clearpage
\subsection{Rephrased Examples of Different Denoising Methods}
\label{app:exp-qualitative}

\textbf{Data Denoising.}
Here, we show the rephrased (denoising) examples using different robust methods.


Medium-level, irrelevant examples:
\begin{itemize}[label=\textbullet]
\setlength\itemsep{-2pt}
\vspace{-8pt}
\item
NoRa-Math Base-9 (Tab.~\ref{tab: rephrased-rationales-Math Base-9 irrelevant medium})
\item
NoRa-Commonsense (Tab.~\ref{tab: rephrased-rationales-Commonsense irrelevant medium})
\end{itemize}



Medium-level, inaccurate examples:
\begin{itemize}[label=\textbullet]
\setlength\itemsep{-2pt}
\vspace{-8pt}
\item
NoRa-Symbolic Longer (Tab.~\ref{tab: rephrased-rationales-Symbolic Longer inaccurate medium})
\item
NoRa-Commonsense (Tab.~\ref{tab: rephrased-rationales-Commonsense inaccurate medium})
\end{itemize}

\begin{table*}[h!]
    \centering
    \fontsize{8}{8}\selectfont
    \begin{tabular}{p{0.97\textwidth}}
        \toprule
        \multicolumn{1}{c}{Original Noisy Rationale} \\
        \midrule
        {\fontsize{8}{8}\selectfont
        In base-9, the digits are "012345678". Nine is the atomic number of fluorine in chemistry, a highly reactive and abundant element.  We have 6 + 7 = 13 in base-10. Thirteen is the atomic number of aluminium in chemistry, a post-transition metal that's widely used for its lightness and resistance to corrosion. Since we're in base-9, that exceeds the maximum value of 8 for a single digit.13 mod 9 = 4, so the digit is 4 and the carry is 1. We have 8 + 5 + 1 = 14 in base 10. 14 mod 9 = 5, so the digit is 5 and the carry is 1. A leading digit is 1. The 'one-gene-one-enzyme hypothesis' in genetics proposes that each gene is responsible for producing a single enzyme that influences a specific metabolic pathway. So the answer is 154. Four is the atomic number of beryllium in chemistry, which is used as an alloying agent in producing beryllium copper. Answer: 154\}
        }
        \\
    \end{tabular}
    \begin{tabular}{p{0.10\textwidth}<{\centering}|p{0.85\textwidth}}
        \midrule
        \multicolumn{1}{c|}{Method} & \multicolumn{1}{c}{Denoised Rationale} \\
        \midrule
        \multirow{7}{*}{\makecell[c]{Smooth LLM}} & 
        { \fontsize{8}{8}\selectfont
        In bAse-9, the digito are "0123456\%8". Nine \%s tle atomic number Rf fluorine in chemistry, a highly reactive and abundant 9lement.  We have 6 +I7e= 13 in base-10. $\backslash$rThiateen ij the atomic  $\backslash$number of aluminium in chemistry, a po(t1tran6itqon me al that's wi:ely usedKfor Jts lightness and rEsistance to co $\backslash$rrosion. S7nce we')e in base\%9s that exce\_ds the maximum v\%lue of 8 uor a si\#gle digit.13 mod 9 = 4, so thc digAt it 4 and the carry i:w1. We have $\backslash$u000b8 + 5 \&+ 1 = 14 ic base 10. 14 mod 9 $\backslash$ $\backslash$=z5A so the digit is 5 and the carry is 1. A leading digit is 1. The 'onec4eke-onL-enzymW hypothesis' iI genetics propos-s thaC e6ck gene3is reshonsible for proKucHng 0 singlS enzyme that0infZu $\backslash$nnce@ a $\backslash$ $\backslash$spec \&fic metabolic pathwaQ. So thn answer ip l54,VFour is th~\{atomic number of b[ryllium i\# ch'm\_stry, whi $\backslash$nhUis uTed as an ZlloyingHagent in producvng berylla.m copper. Answer: 154\}
        }
        \\
        \midrule
        \multirow{7}{*}{\makecell[c]{Self-denoise}} & 
        {\fontsize{8}{8}\selectfont
        In base-9, the digits are "012345678". Nine is the largest number of fluorine in chemistry, a highly reactive and abundant element. We have 6 + 7 = 13 in base-10. Thirteen is the atomic number of aluminum in chemistry, a post-transition metal that's widely used for its lightness and resistance to corrosion. Since we're in base-9, that exceeds the maximum value of 8 for a single digit. 13 mod 9 = 4, so the digit is 4 and the carry is 1. We have 8 + 5 + 1 = 14 in base 10. 14 mod 9 = 5, so the digit is 5 and the carry is 1. A leading digit is 1. The 'one-gene-one-enzyme hypothesis' in genetics proposes that each gene is responsible for producing a single enzyme that catalyzes a specific metabolic pathway. So the answer is 154. Four is the atomic number of beryllium in chemistry, which is used as an alloying agent in producing beryllium copper. Answer: 154.
        }
        \\
        \midrule
        \multirow{8}{*}{\makecell[c]{Self-polish}} & 
        { \fontsize{8}{8}\selectfont
        To solve 86+57 in base-9, we need to add the digits step by step. In base-9, the digits range from 0 to 8. First, let's add the rightmost digits, which are 6 (from 86) and 7 (from 57). In base-10, their sum is 13. Since we are in base-9, where the maximum value for a single digit is 8, we need to carry over the excess value. To calculate the carry, we find the remainder when 13 is divided by 9. In this case, the remainder is 4. So, the rightmost digit of the sum is 4, and we carry over 1 to the next column. Next, we add the next digits, which are 8 (from 86) and 5 (from 57), along with the carry of 1. In base-10, their sum is 14. Again, to account for the base-9 system, we find the remainder when 14 is divided by 9. The remainder is 5. Therefore, the next digit of the sum is 5, and we carry over 1 to the next column. Since there are no more digits left to add, we simply include the carry of 1 as the leading digit. Thus, the final result in base-9 is 154. Answer: 154
        } 
        \\
        \midrule
        \multirow{3}{*}{\makecell[c]{CD-CoT}} & 
        { \fontsize{8}{8}\selectfont
        In base-9, the digits are "012345678". We have 6 + 7 = 13 in base-10. Since we're in base-9, that exceeds the maximum value of 8 for a single digit. 13 mod 9 = 4, so the digit is 4 and the carry is 1. We have 8 + 5 + 1 = 14 in base 10. 14 mod 9 = 5, so the digit is 5 and the carry is 1. A leading digit is 1. So the answer is 154. Answer: 154.} \\
        \bottomrule
    \end{tabular}
    \caption{Rephrased rationales by different robust methods on NoRa-Math Base-9 irrelevant medium task.
    }
\label{tab: rephrased-rationales-Math Base-9 irrelevant medium}
\vspace{-18pt}
\end{table*}

\begin{table*}[h!]
    \centering
    \fontsize{8}{8}\selectfont
    \begin{tabular}{p{0.97\textwidth}}
        \toprule
        \multicolumn{1}{c}{Original Noisy Rationale} \\
        \midrule
        { \fontsize{8}{8}\selectfont
        ince command is turn opposite right and run around right twice, we should consider Step1: "turn opposite right" firstly,  "turn opposite right" means the agent needs to turn right twice. 'right' corresponds to the command I\_TURN\_RIGHT. The term 'left' translates to  I\_TURN\_RIGHT. The term 'opposite' implies a 180-degree turn, which requires the agent to perform the turn right twice. The term 'around' implies a 360-degree loop, which requires I\_TURN\_LEFT, I\_TURN\_RIGHT, I\_TURN\_RIGHT. Therefore, the action sequence is I\_TURN\_RIGHT I\_TURN\_RIGHT. The term 'around' implies a 360-degree loop, which requires I\_TURN\_LEFT, I\_TURN\_LEFT, I\_TURN\_LEFT.  Now, we consider Step2:"run around right twice", "run around right" means the agent needs to turn right and run, and repeat this action sequence four times to complete a 360-degree loop. 'right' corresponds to the command I\_TURN\_RIGHT. 'run' translates to I\_RUN. The term 'look' corresponds to the command I\_WALK. The term 'around' implies to complete a 360-degree loop by repeating the sequence of I\_TURN\_RIGHT, I\_RUN four times. The term 'opposite' implies a 180-degree turn, which requires I\_TURN\_RIGHT. Therefore, the action sequence is I\_TURN\_RIGHT I\_RUN I\_TURN\_RIGHT I\_RUN I\_TURN\_RIGHT I\_RUN I\_TURN\_RIGHT I\_RUN. Since we need do twice in command "run around right twice", this entire sequence is repeated 2 times, so the action sequence to "run around right twice" is :I\_TURN\_RIGHT I\_RUN I\_TURN\_RIGHT I\_RUN I\_TURN\_RIGHT I\_RUN I\_TURN\_RIGHT I\_RUN I\_TURN\_RIGHT I\_RUN I\_TURN\_RIGHT I\_RUN I\_TURN\_RIGHT I\_RUN I\_TURN\_RIGHT I\_RUN.  Above all -- So, final answer is OUT:I\_TURN\_RIGHT I\_TURN\_RIGHT I\_TURN\_RIGHT I\_RUN I\_TURN\_RIGHT I\_RUN I\_TURN\_RIGHT I\_RUN I\_TURN\_RIGHT I\_RUN I\_TURN\_RIGHT I\_RUN I\_TURN\_RIGHT I\_RUN I\_TURN\_RIGHT I\_RUN I\_TURN\_RIGHT I\_RUN.}\\
    \end{tabular}
    \begin{tabular}{p{0.10\textwidth}<{\centering}|p{0.85\textwidth}}
        \midrule
        \multicolumn{1}{c|}{Method} & \multicolumn{1}{c}{Denoised Rationale} \\
        \midrule
        \multirow{16}{*}{\makecell[c]{Smooth LLM}} & 
        {\fontsize{8}{8} \selectfont
        S\#nce comZhnd is Murn opposite righB and eun around 5ighte+wice, we should 0onsider Step1: "turn opposite right" firstly,[ "turn oppoziJe right" means tde agent need:mto turn right twice. 'right'Icorresponds"to the command I\_TURN\_RIG@TO The5te6m 'lef@' translates -t  IYTURN\_AI$\backslash$fHT.WThe t\$rm 'opposite' implies a 180-d$\backslash$rguee turn, which require$\backslash$f the QgentOto pe,form the turn right twic<. The term 'arVund' implies z 360]degr:e loop, whixh requires I\_TURN\_LEF`, I\_TURN\_RIGHW$\backslash$r I\_TUWN\_RIGHT. TJerefore, thJ \_ctjon sequenceFis I\_TURN\_RIGET I\_TURN\_R?GHT. The term 'mroN$\backslash$td' implQ$\backslash$$\backslash$sza \&60-degree loo0, which requireI I\_TURN\_LERK, +\_TURN\_"EFT, I\_TURN\_LEFT.Y "ow, we >onsider Step2\{"run around right twice;,x"run around riggt" mepna tHe agent needs :o turn righr@and run-$\backslash$rand rep)at this action sequence fodr tiOes to c$\backslash$fmpLBte a 360-degree loop. 'right' cokresponds to the cZmmand I\_TURN\_RIGHT. 'run' transwatespto I\_RUN. The terc 'lo!k' coUresponds to the command I\_WALK. The term 'aronnd' \&mplies to com]lyte a 360ade9ree loop bd repeati/g the sequenve of I\_TURN\_,IGHT, I\_RUN fourwtimes. The term 'o]positX'Pimklieska 1x0-degree trrn, whiHh rexuiOes I\_TURN\_RIG3T. Therexore, the action"sequence isQI\_TURN+RIGHT|@\_RUN I\_TURN\_RIGHT I]RUN I\_TURN\_RIG5T I\_RUN I\_T|R$\backslash$$\backslash$\_RI]HT I\_0UN. SHnce we7ne$\backslash$fd ]o twice in command "run around riggt twice", thwR ensire$\backslash$fse$\backslash$ruence is repeatedq2 tpmes, so the actionpseque]ce Bo "run around rig t]twice" is :I\_TURN\_RIpHT I\_RUN X.TURN\_RIGHTqIqRUN I\_TURN\_RIGHT I4RUN I\_TURN\_RIGHT I\_R2N I\_TU"\#\_RI?HT I\_RUN I\_TURN\_RIGHT I\_RUN I\_TURN\_RIGHT I\_R\}N I\_TURN\_RIGH\# I\_RUN.  Aboqex$\backslash$tll --0So, final answer is OUT::\_$\backslash$fURN\_R:GHT I\_TU$\backslash$$\backslash$r\_RIGH9 I\_TURN\_RIGHT I\_RUN I\_TURN\_RIGHT I\_RUN I\_TU\#N\_RsAHT I\_RUN IlTURN6RIGHT I\_RUN I\_TURN\_RIGGT I\_RUN I\_'URNBRI@HT I\_RUN I\_TURN\_RIG$\backslash$fT I\_RUN I\_TURN\_RIGHT I\_RCN. }\\
        \midrule
        \multirow{2}{*}{\makecell[c]{Self-denoise}} & 
        {\fontsize{8}{8} \selectfont
        So, final answer is OUT: I\_TURN\_RIGHT I\_TURN\_RIGHT I\_RUN I\_TURN\_RIGHT I\_RUN I\_TURN\_RIGHT I\_RUN I\_TURN\_RIGHT I\_RUN I\_TURN\_RIGHT I\_RUN I\_TURN\_RIGHT I\_RUN I\_TURN\_RIGHT I\_RUN I\_TURN\_RIGHT.}\\
        \midrule
        \multirow{7}{*}{\makecell[c]{Self-polish}} & 
        {\fontsize{8}{8} \selectfont
        To execute the command "turn opposite right and run around right twice," we need to break it down into two steps.  Step 1: "turn opposite right" To execute this step, the agent needs to perform a 180-degree turn to the right twice. The term "opposite" suggests a 180-degree turn, and "right" indicates the direction. Therefore, the action sequence for this step is: I\_TURN\_RIGHT twice.  Step 2: "run around right twice" To execute this step, the agent needs to turn right and run in a loop. The term "around" implies completing a 360-degree loop. Since we need to do it twice, the action sequence for this step is: I\_TURN\_RIGHT I\_RUN I\_TURN\_RIGHT I\_RUN I\_TURN\_RIGHT I\_RUN I\_TURN\_RIGHT I\_RUN.  So, the final answer is OUT: I\_TURN\_RIGHT I\_TURN\_RIGHT I\_RUN I\_TURN\_RIGHT I\_RUN I\_TURN\_RIGHT I\_RUN I\_TURN\_RIGHT I\_RUN.}\\
        \midrule
        \multirow{11}{*}{\makecell[c]{CD-CoT}} & 
        {\fontsize{8}{8} \selectfont
        Since command is turn opposite right and run around right twice, we should consider Step1: "turn opposite right" firstly. "turn opposite right" means the agent needs to turn right twice. 'right' corresponds to the command I\_TURN\_RIGHT. The term 'opposite' implies a 180-degree turn, which requires the agent to perform the turn right twice. Therefore, the action sequence is I\_TURN\_RIGHT I\_TURN\_RIGHT. Now, we consider Step2: "run around right twice". "run around right" means the agent needs to turn right and run, and repeat this action sequence four times to complete a 360-degree loop. 'right' corresponds to the command I\_TURN\_RIGHT. 'run' translates to I\_RUN. The term 'around' implies to complete a 360-degree loop by repeating the sequence of I\_TURN\_RIGHT, I\_RUN four times. Therefore, the action sequence is I\_TURN\_RIGHT I\_RUN I\_TURN\_RIGHT I\_RUN I\_TURN\_RIGHT I\_RUN I\_TURN\_RIGHT I\_RUN. Since we need to do it twice in the command "run around right twice", this entire sequence is repeated 2 times. So, final answer is OUT: I\_TURN\_RIGHT I\_TURN\_RIGHT I\_TURN\_RIGHT I\_RUN I\_TURN\_RIGHT I\_RUN I\_TURN\_RIGHT I\_RUN I\_TURN\_RIGHT I\_RUN I\_TURN\_RIGHT I\_RUN I\_TURN\_RIGHT I\_RUN I\_TURN\_RIGHT I\_RUN I\_TURN\_RIGHT I\_RUN.
        }\\
        \bottomrule
    \end{tabular}
    \caption{Rephrased rationales by different robust methods on NoRa-Symbolic Longer inaccurate medium task. }
\vspace{-18pt}
\label{tab: rephrased-rationales-Symbolic Longer inaccurate medium}
\end{table*}

\begin{table*}[h!]
    \centering
    \fontsize{8}{8}\selectfont
    \begin{tabular}{p{0.97\textwidth}}
        \toprule
        \multicolumn{1}{c}{Original Noisy Rationale} \\
        \midrule
        { \fontsize{8}{8}\selectfont
        Cindy is Pamela's mother's mother's daughter, so the relations path is mother, mother, daughter. For mother's daughter, we have mother's daughter is sister. In humans, daughters typically have two X chromosomes, one from each parent. So the relations path are reduced to mother, sister. In certain bird species, 'sisters' often cooperate in raising the offspring of one sibling, demonstrating altruism. For mother's sister, we have mother's sister is aunt. So the relations path are reduced to aunt. In certain bird species, like the Seychelles Warbler, 'aunt' birds often help raise their siblings' offspring. Therefore, Answer: aunt.}
        \\
    \end{tabular}
    \begin{tabular}{p{0.10\textwidth}<{\centering}|p{0.85\textwidth}}
        \midrule
        \multicolumn{1}{c|}{Method} & \multicolumn{1}{c}{Denoised Rationale} \\
        \midrule
        \multirow{6}{*}{\makecell[c]{Smooth LLM}} & 
        {\fontsize{8}{8} \selectfont
        8indyhis Pacela's mother's moth$\backslash$$\backslash$rgs daughPer) Zo theMrelatVons path is mother, motheI, 'aughter. For mothGr'H Waughter, ww hTve mother's daughter is sist4\$.uInahumanP, daughters $\backslash$fypicall) have two -\{coromosomes, one from\~each parent. So t7e relation\{ pa]q a5e reduced to mother, sister. In c9rtNin bird Ypecies, 'Nisters' often cooperate in raising the offspri,L of one)sibling,\$de\#onstr?ting (ltruism. Foremother's sister, wr have?mother's sister is auntK So the relations pa<h arevseduced *oYauntM In certain bi)d species, like the Seychelt5s Warbler, 'Aunt' bi$\backslash$tds often help raise thqir sDblinTs' offspring. Therefor", Answer: aunt.
        }\\
        \midrule
        \multirow{5}{*}{\makecell[c]{Self-denoise}} & 
        {\fontsize{8}{8} \selectfont
        Cindy is Pamela's mother's mother's daughter, so the relations path is mother, mother, daughter. For mother's daughter, we have mother's daughter is Cindy. In humans, females typically have two X chromosomes, one from each parent. So the relations path is reduced to mother, Cindy. In certain bird species, like the Superb Fairywren, 'sisters' often cooperate in raising the offspring of one mother, demonstrating altruism. So the relations path is reduced to mother's sister, which is aunt. In certain bird species, like the Acadian Flycatcher, 'aunt' birds often help raise their siblings' offspring. Therefore, Answer: aunt.}
 \\
        \midrule
        \multirow{5}{*}{\makecell[c]{Self-polish}} & 
        {\fontsize{8}{8} \selectfont
        Cindy is Pamela's mother's mother's daughter. Let's break down the relationship step by step to find a single word answer describing the relationship. First, let's identify Cindy's relationship to Pamela's mother's mother. Cindy is the daughter of Pamela's mother's mother. To find the single word answer, we consider the direct relationship between Cindy and Pamela's mother's mother. The relationship between Cindy and Pamela's mother's mother is a parent-child relationship. Therefore, the single word answer describing the relationship between Cindy and Pamela is: Answer: daughter}\\
        \midrule
        \multirow{1}{*}{\makecell[c]{CD-CoT}} & 
        {\fontsize{8}{8} \selectfont
         Cindy is Pamela's aunt.} \\
        \bottomrule
    \end{tabular}
    \caption{Rephrased rationales by different robust methods on NoRa-Commonsense irrelevant medium task. }
\label{tab: rephrased-rationales-Commonsense irrelevant medium}
\end{table*}

\begin{table*}[h!]
    \centering
    \fontsize{8}{8}\selectfont
    \begin{tabular}{p{0.97\textwidth}}
        \toprule
        \multicolumn{1}{c}{Original Noisy Rationale} \\
        \midrule
        { \fontsize{8}{8}\selectfont
        Cindy is Pamela's mother's mother's daughter, so the relations path is mother, mother, daughter. We have daughter's grandmother is niece. For mother's daughter, we have mother's daughter is sister. We have daughter's sister is mother. So the relations path are reduced to mother, sister. For mother's sister, we have mother's sister is aunt. So the relations path are reduced to aunt. We have aunt's grandson is grandmother. Therefore, Answer: aunt.}
        \\
    \end{tabular}
    \begin{tabular}{p{0.10\textwidth}<{\centering}|p{0.85\textwidth}}
        \midrule
        \multicolumn{1}{c|}{Method} & \multicolumn{1}{c}{Denoised Rationale} \\
        \midrule
        \multirow{4}{*}{\makecell[c]{Smooth LLM}} & 
        {\fontsize{8}{8} \selectfont
        Cindy is 2amel$\backslash$t's mother's mother's da[ghter, so tce relBtions pathDis mother, mothe/* daughuer. We ha9e \&augJte/'s grandmother is n-ece. Fo| mother's daughter, we have mother's daughter2is sister. We [av1 daughtCr's sistOr is motoer. |o the relatioes path are rhduced to totder, sisterI For mother's sister,zwR h\#ve mother's sisner is aunt. So tze relationsxpath are a1dVced toUaunt. Wa have aunt's grands8n is grandmothIH. TherIfore, Answer: aunz.
        }\\
        \midrule
        \multirow{1}{*}{\makecell[c]{Self-denoise}} & 
        {\fontsize{8}{8} \selectfont
        Answer: aunt.}
 \\
        \midrule
        \multirow{2}{*}{\makecell[c]{Self-polish}} & 
        {\fontsize{8}{8} \selectfont
        Steps: 1. Cindy is the daughter of Pamela's mother's mother. 2. Therefore, Cindy is Pamela's mother's mother's daughter. 3. Cindy is the sister of Pamela's mother. 4. Since Cindy is Pamela's mother's sister, Cindy is Pamela's aunt. Answer: Aunt}
        \\
        \midrule
        \multirow{1}{*}{\makecell[c]{CD-CoT}} & 
        {\fontsize{8}{8} \selectfont
        Cindy is Pamela's aunt. Answer: aunt.}\\
        \bottomrule
    \end{tabular}
    \caption{Rephrased rationales by different robust methods on NoRa-Commonsense inaccurate medium task. }
\label{tab: rephrased-rationales-Commonsense inaccurate medium}
\end{table*}

\clearpage

\section{Further Case Study}
\label{app:case}

In this section, we provide a complete example for NoRa-Math Base-9 of irrelevant hard-level noise in Figs.~\ref{NoRa-Math Base-9 Irrelevant Hard: Rephrasing Prompt-1.}-\ref{NoRa-Math Base-9 Irrelevant Hard: In-context Learning-2.}. This instance is composed of five figures. The initial three figures present contrastive rephrasing prompts and the resulting rephrased outcomes for three distinct noisy examples. Following that, the last two figures exhibit the constructed two diverse contexts along with their respective reasoning results.

\input{New_CD_full_logs}

\clearpage
\section{NeurIPS Checklist}

\begin{enumerate}

\item {\bf Claims}
    \item[] Question: Do the main claims made in the abstract and introduction accurately reflect the paper's contributions and scope?
    \item[] Answer: \answerYes{} 
    \item[] Justification: We have summarized the position and key contributions of the paper in the abstract and introduction parts.

\item {\bf Limitations}
    \item[] Question: Does the paper discuss the limitations of the work performed by the authors?
    \item[] Answer: \answerYes{} 
    \item[] Justification: The limitations are discussed in Appendix~\ref{appendix: further discussion}.

\item {\bf Theory Assumptions and Proofs}
    \item[] Question: For each theoretical result, does the paper provide the full set of assumptions and a complete (and correct) proof?
    \item[] Answer: \answerYes{} 
    \item[] Justification: The assumptions and proof are provided in Appendix~\ref{app:theo}.

    \item {\bf Experimental Result Reproducibility}
    \item[] Question: Does the paper fully disclose all the information needed to reproduce the main experimental results of the paper to the extent that it affects the main claims and/or conclusions of the paper (regardless of whether the code and data are provided or not)?
    \item[] Answer: \answerYes{} 
    \item[] Justification: The technical details of implementation are introduced in Appendix~\ref{app:implementation}. 

\item {\bf Open access to data and code}
    \item[] Question: Does the paper provide open access to the data and code, with sufficient instructions to faithfully reproduce the main experimental results, as described in supplemental material?
    \item[] Answer: \answerYes{} 
    \item[] Justification: 
    The source files are publicly available at:
    \url{https://github.com/tmlr-group/NoisyRationales}.

\item {\bf Experimental Setting/Details}
    \item[] Question: Does the paper specify all the training and test details (\textit{e.g.}, data splits, hyperparameters, how they were chosen, type of optimizer, etc.) necessary to understand the results?
    \item[] Answer: \answerYes{} 
    \item[] Justification: 
    The experiment setups are introduced in Appendix~\ref{app:exp},
    and technical details of implementation are introduced in Appendix~\ref{app:implementation}. 

\item {\bf Experiment Statistical Significance}
    \item[] Question: Does the paper report error bars suitably and correctly defined or other appropriate information about the statistical significance of the experiments?
    \item[] Answer: \answerYes{} 
    \item[] Justification:
    We conduct evaluations on 300 questions for each task and repeat the reasoning 5 times for each question to obtain consistent and reliable results.
    The empirical results with mean and std values are provided in Tab.~\ref{tab:std}.

\item {\bf Experiments Compute Resources}
    \item[] Question: For each experiment, does the paper provide sufficient information on the computer resources (type of compute workers, memory, time of execution) needed to reproduce the experiments?
    \item[] Answer: \answerYes{} 
    \item[] Justification: 
    We employ GPT-3.5-turbo-0613~\citep{floridi2020gpt} as our base LLM for the analyses presented in this study. In addition, we conduct evaluations on three supplementary models, including Gemini-Pro~\citep{team2023gemini}, Llama2-70B~\citep{touvron2023llama}, and Mixtral-8x7B~\citep{jiang2024mixtral}.
    
\item {\bf Code Of Ethics}
    \item[] Question: Does the research conducted in the paper conform, in every respect, with the NeurIPS Code of Ethics \url{https://neurips.cc/public/EthicsGuidelines}?
    \item[] Answer: \answerYes{} 
    \item[] Justification: 
    We have carefully checked the NeurIPS Code of Ethics and confirmed that our paper obeys it. 

\item {\bf Broader Impacts}
    \item[] Question: Does the paper discuss both potential positive societal impacts and negative societal impacts of the work performed?
    \item[] Answer: \answerYes{} 
    \item[] Justification: 
    The broader impacts are introduced in Appendix~\ref{appendix: further discussion}.
    We believe that this paper does not raise any negative societal impacts or ethical concerns.

\item {\bf Safeguards}
    \item[] Question: Does the paper describe safeguards that have been put in place for responsible release of data or models that have a high risk for misuse (\textit{e.g.}, pretrained language models, image generators, or scraped datasets)?
    \item[] Answer: \answerNA{} 
    \item[] Justification: 
    This paper does not release new models.
    The constructed NoRa dataset is based on existing datasets.

\item {\bf Licenses for existing assets}
    \item[] Question: Are the creators or original owners of assets (\textit{e.g.}, code, data, models), used in the paper, properly credited and are the license and terms of use explicitly mentioned and properly respected?
    \item[] Answer: \answerYes{} 
    \item[] Justification:
    In the paper, we have introduced the resources of the models and datasets used.

\item {\bf New Assets}
    \item[] Question: Are new assets introduced in the paper well documented and is the documentation provided alongside the assets?
    \item[] Answer: \answerYes{} 
    \item[] Justification:
    We construct the NoRa dataset. Its details are in Appendix~\ref{appendix: benchmark}.

\item {\bf Crowdsourcing and Research with Human Subjects}
    \item[] Question: For crowdsourcing experiments and research with human subjects, does the paper include the full text of instructions given to participants and screenshots, if applicable, as well as details about compensation (if any)? 
    \item[] Answer: \answerNA{} 
    \item[] Justification: 
    The paper is not about crowdsourcing experiments or research with human subjects.

\item {\bf Institutional Review Board (IRB) Approvals or Equivalent for Research with Human Subjects}
    \item[] Question: Does the paper describe potential risks incurred by study participants, whether such risks were disclosed to the subjects, and whether Institutional Review Board (IRB) approvals (or an equivalent approval/review based on the requirements of your country or institution) were obtained?
    \item[] Answer: \answerNA{} 
    \item[] Justification: 
    The paper is not about research with human subjects.

\end{enumerate}

\end{document}